\documentclass[twoside,11pt]{article}
\pdfminorversion=7 

%

%
%
%

\usepackage{jmlr2e}

\usepackage{amsmath} 
\usepackage{enumitem}
\setlist[enumerate,1]{label=\normalfont{(\Roman*)},leftmargin=2em}
\setlist[itemize]{leftmargin=1em}

\usepackage{silence}
\WarningFilter{caption}{Unknown document class}
\usepackage{caption}
\usepackage{subcaption}

\usepackage{algorithm}
\usepackage{algorithmic}

\usepackage[table]{xcolor}
\usepackage{booktabs}
\usepackage{multirow}

\newcommand{\ie}{i.e.}


\renewcommand{\Re}{\mathfrak{P}}


\usepackage{lastpage}


\ShortHeadings{Contextual DRO with Causal and Continuous Structure}{Zhang and Wang}
\firstpageno{1}

\begin{document}

\title{Contextual Distributionally Robust Optimization with Causal and Continuous Structure: An Interpretable and Tractable Approach}

\author{\name Fenglin Zhang \email zhangfl819@outlook.com \\
\addr School of Artificial Intelligence\\
       The Chinese University of Hong Kong, Shenzhen \\
       Shenzhen, 518172, China \\
\name Jie Wang\thanks{Corresponding author.} \email jwang@cuhk.edu.cn \\
\addr School of Artificial Intelligence, School of Data Science \\
       The Chinese University of Hong Kong, Shenzhen \\
       Shenzhen, 518172, China } 

\editor{}

\maketitle

\begin{abstract}
We propose a framework for contextual distributionally robust optimization (DRO) that considers the causal and continuous structure of the underlying distribution and develops interpretable and tractable decision rules. We first introduce the causal Sinkhorn discrepancy (CSD), an entropy-regularized causal Wasserstein distance that encourages continuous transport plans while preserving the causal consistency. We then formulate a contextual DRO model with a CSD-based ambiguity set, termed Causal Sinkhorn DRO (Causal-SDRO), and derive its strong dual reformulation where the worst-case distribution is characterized as a mixture of Gibbs distributions. To obtain the (infinite-dimensional) optimal policy, we propose a soft regression forest (SRF) decision rule: it preserves the interpretability of classical decision trees while being fully parametric, differentiable, and Lipschitz smooth, enabling intrinsic interpretation from both global and local perspectives. To solve the Causal-SDRO with parametric decision rules, we develop an efficient stochastic compositional gradient algorithm that converges to an $\varepsilon$-stationary point at a rate of $\mathcal{O}(\varepsilon^{-4})$, matching the convergence rate of standard stochastic gradient descent. Finally, we validate our method through numerical experiments on synthetic and real-world datasets, demonstrating its superior performance and interpretability.
\end{abstract}

\begin{keywords}
  contextual distributionally robust optimization, causal Sinkhorn discrepancy, soft regression forest, stochastic compositional optimization, interpretable decision-making
\end{keywords}

\section{Introduction}

Contextual stochastic optimization (CSO) is a widely applied method in real-world engineering and business decision-making \citep{sadana2025survey}. 
It assumes that decision-makers have access to historical data, including uncertain parameters and associated contextual information~\citep[called covariates or features,][]{chenreddy2022data}.
Such contextual information enables a more precise characterization of uncertain parameters, leading to better decisions~\citep{ban2019big, bertsimas2020predictive}. 
The goal of CSO is to seek an optimal policy that maps covariates to decisions, thereby avoiding the need to solve optimization models repeatedly for every new covariate~\citep{sadana2025survey}. 

In practice, the CSO model may suffer from misspecification. 
This issue arises from statistical errors due to limited sample sizes and distributional shifts between training and testing environments, resulting in suboptimal out-of-sample performance and even fragility~\citep{bennouna2022holistic, liu2024newsvendor}. 
To hedge against this uncertainty, the contextual distributionally robust optimization (DRO) method has received much attention in recent literature~\citep{chen2019selecting, wang2021distributionally, srivastava2021data, esteban2022distributionally, kannan2024residuals, nguyen2025robustifying, sim2025analytics}. 
This approach seeks the optimal robust decision that minimizes the worst-case risk over an \emph{ambiguity set} containing all plausible joint distributions of covariates and uncertain parameters.
Unlike traditional DRO, which takes into account the ambiguity of uncertain parameters but ignores that of covariates, contextual DRO considers both to avoid suboptimal, overly conservative, or even infeasible solutions~\citep{ban2019big, zhu2022joint}. 

For contextual DRO, extensive literature constructs ambiguity sets based on optimal transport~\citep{nguyen2020distributionally, wang2021distributionally, yang2022decision, xie2024adjusted, kannan2024residuals, nguyen2025robustifying}. 
A critical yet often overlooked aspect in contextual DRO is the causal information structure: future covariates are conditionally independent of historical uncertain parameters given the history. 
For instance, in a newsvendor setting, while daily temperature (covariate) influences demand (uncertain parameter), if the historical temperature is known, the historical demand and the future temperature are conditionally independent, as they cannot affect each other. 
To characterize this structure, \citet{yang2022decision} propose a contextual DRO model with an ambiguity set constructed using causal Wasserstein distance that takes into account this causal relation.
This model hedges against a discrete worst-case distribution that remains causally consistent, thereby avoiding causally implausible robustness scenarios. 

Another critical observation is that the underlying distribution of CSO is typically continuous in practical applications, such as continuous temperature and demand distributions. A discrete worst-case distribution from the aforementioned DRO framework~\citep{yang2022decision} may lead to overly conservative decisions.
Therefore, it remains an open question to \emph{develop a contextual DRO model that simultaneously captures the causal structure and absolute continuity of the worst-case distribution}. 
\citet{wang2025sinkhorn} recently develop a new DRO framework based on the Sinkhorn discrepancy to characterize the continuity of underlying distributions. 
This framework, referred to as Sinkhorn DRO, incorporates entropic regularization into the Wasserstein distance, thereby excluding all discrete distributions in the ambiguity set. 
The variants of Sinkhorn DRO have also been explored in literature~\citep{azizian2023exact, azizian2023regularization, blanchet2023unifying, birrell2025optimal} and has wide applications in hypothesis testing~\citep{wang2022data, yang2023distributionally, wang2024non}, experimental design~\citep{dapogny2023entropy, jiang2025sinkhorn}, machine learning~\citep{shen2023wasserstein, masud2023multivariate, cescon2025data, ouasfi2025toward}, etc. 

While the Sinkhorn DRO is capable of providing continuous worst-case distributions, the infinite-dimensional nature of policy optimization imposes a computational challenge. 
To address this, recent literature has developed both parametric and non-parametric decision rule approaches to seek effective approximate policies. 
For parametric rules, \citet{ban2019big} consider affine decision rules for the CSO.
Although computationally efficient, it may lead to suboptimal decisions because it often fails to capture complex and nonlinear relations between covariates and decisions. 
Later~\citet{bertsimas2022data}, \citet{qi2023practical}, \citet{han2025deep}, and~\citet{liu2025neural} propose kernel-based and deep-learning-based decision rules, respectively, to approximate the complicated function space. 
These methods have superior empirical performance, but are often difficult to interpret. 
Alternatively, non-parametric rules developed in~\citet{zhang2024optimal} and \citet{nguyen2025robustifying} offer better interpretability, but they are computationally inefficient as the sample sizes increase and are only applicable to special contextual DRO problems. 
Inspired by literature, we aim to answer the following question:
\begin{quote}
\textit{
\centering
How to develop a decision rule approach with interpretability and computational tractability for solving general contextual DRO problems?
}   
\end{quote}
The tree-based family, including decision tree and random forest, has been developed for general CSO models for estimating conditional distributions of uncertain parameters to improve the interpretability~\citep{bertsimas2020predictive, kallus2023stochastic, elmachtoub2020decision}. 
However, few studies employ tree-based models for decision rule optimization. 
This is primarily because constructing an optimal tree is NP-Hard, and its non-differentiable structure precludes efficient end-to-end policy optimization~\citep{notz2024explainable, aghaei2025strong}. 
It is desirable to explore a computationally tractable tree-based model for decision rule optimization that preserves interpretability. 

In this paper, we develop a new contextual DRO model with an ambiguity set based on causal and entropy-regularized Wasserstein distance, which retains the causal and continuous structure of the underlying distribution, referred to as Causal Sinkhorn DRO (Causal-SDRO).  
To efficiently approximate optimal policies, we propose a parametric decision rule based on the Soft Regression Forest (SRF), which ensembles several differentiable soft decision trees to prescribe end-to-end and interpretable decisions. 
Our main contributions are summarized as follows. \begin{enumerate}
    \item 
    To model the ambiguity set of Causal-SDRO, we introduce the causal Sinkhorn discrepancy, which is a variant of Wasserstein distance that combines the causal property from~\citet{yang2022decision} and the continuity of transport plans from~\citet{wang2025sinkhorn}.
    We further derive the strong dual reformulation and the expression of the worst-case distribution for the inner problem of Causal-SDRO under general assumptions.   
    \item We propose a Soft Regression Forest (SRF) decision rule, which approximates optimal policies within arbitrary measurable function spaces. 
    The proposed SRF retains the intrinsic interpretability of traditional non-differentiable decision trees while offering a parametric, differentiable, and Lipschitz smooth decision rule. 
    We demonstrate the interpretability of this decision rule, grounded in its structural transparency, stability, and robustness, and introduce intrinsic interpretation measures from both global and local perspectives. 
    \item We reformulate Causal-SDRO with parametric decision rules as a multi-level stochastic compositional optimization. 
    We first consider the sample average approximation~(SAA) approach, which approximates the target problem as deterministic optimization.
    Its theoretical sample complexity is $\mathcal{O}(\delta^{-2})$ to control the approximation error within $\delta$ with high probability.
    However, due to the multi-level compositional structure of the problem, it is still computationally challenging to solve the SAA problem.
    Instead, we develop a stochastic compositional gradient algorithm that converges to an $\varepsilon$-stationary point at a rate of $\mathcal{O}(\varepsilon^{-4})$, which is at the same order as standard stochastic gradient descent~\citep{ghadimi2016mini}. 
    \item We validate the proposed approach through numerical experiments on three applications: a feature-based newsvendor problem, a feature-based two-stage inventory substitution problem, and a real-world data-driven portfolio selection problem.
    Throughout the experiments, our methods are both computationally efficient and exhibit interpretability compared with existing baselines.
\end{enumerate}

The remainder of this paper is organized as follows. 
The next two subsections in this section review the related literature and introduce conventions and notations throughout this paper. 
Section~\ref{sec-detup} presents the necessary definitions and formulates the Causal-SDRO model. 
Section~\ref{sec-CSDRO} derives the strong dual formulation and the worst-case distribution of the Causal-SDRO model given a fixed decision rule. 
Section~\ref{sec-srf} proposes the interpretable soft regression forest decision rule. 
Section~\ref{sec-algo} discusses several methods for solving the Causal-SDRO model with parametric decision rules. 
Section~\ref{sec-results} reports the numerical results of our methods on three contextual DRO applications.  
Section~\ref{sec-conclusion} concludes the paper.  
Proofs and additional analyses are provided in the online appendix to this paper. 

\subsection{Related Literature}

In this subsection, we review existing literature related to our work. 

\textit{On data-driven prescriptive analytics. } Our study is rooted in the data-driven decision-making paradigm, which leverages data to prescribe decisions in optimization problems under uncertain~\citep{bertsimas2020predictive, sim2025analytics}. This approach, leveraging rich covariates to improve decision-making with uncertain parameters, is called Contextual Stochastic Optimization~\citep{sadana2025survey}. 
To address this problem, existing studies have developed various frameworks. 
\citet{bertsimas2020predictive} propose a data-driven framework based on weighted sample average approximation. This method estimates the conditional distribution by generating data-driven weights via machine learning models (for example, $k$-nearest neighbor method and decision trees) and prescribes decisions by minimizing the reweighted empirical cost. 
Building on this, \citet{kallus2023stochastic} propose a stochastic optimization forest method, which calculates weights by optimizing the downstream decision quality rather than prediction accuracy. 
\citet{elmachtoub2022smart} develop a smart predict-then-optimize framework, which integrates learning and optimization by introducing a decision-oriented loss function. 
\citet{qi2025integrated} present an integrated conditional estimation-optimization framework based on the downstream objective. 
Distinct from the aforementioned approaches that still involve an intermediate estimation process, our work focuses on the decision rule approach, which directly maps covariates to final decisions~\citep{liyanage2005practical}. 
Notable examples include the linear and kernel-based decision rules for newsvendor problems by~\citet{ban2019big} and the deep-learning-based rules for inventory management by~\citet{qi2023practical}. 

\textit{On contextual Distributionally Robust Optimization (DRO). }
To hedge against the distributional shift issue in CSO, contextual DRO has received much attention with various types of ambiguity sets.
Early approaches focused on $\phi$-divergence~\citep{srivastava2021data, zhou2023sample, poursoltani2023robust} and moment-based~\citep{perakis2023robust} ambiguity sets, primarily due to their computational tractability. 
In recent literature, the optimal-transport-based ambiguity sets are widely used for contextual DRO modeling~\citep{JMLR:v22:19-1023, qi2022distributionally, esteban2022distributionally, xie2024adjusted, zhang2024optimal, kannan2024residuals, nguyen2025robustifying}, due to their data-driven nature and satisfactory out-of-sample guarantees.
\citet{yang2022decision} and \citet{wang2025sinkhorn} provided variants of the Wasserstein distance to model the worst-case distributions with causal or continuous structure. Our work takes account into both structures for contextual DRO model and develops an efficient and interpretable decision rule approach.

\textit{On decision rule approach for contextual DRO. } 
Decision rule approaches seek an optimal policy within a pre-specified function class that makes end-to-end decisions based on covariates.
Existing research has developed several parametric and non-parametric decision rule approaches for solving contextual DRO. 
\citet{yang2022decision} reformulate the causal optimal transport-based DRO as a conic program for affine decision rules. 
However, it may not be tractable for more general parametric decision rules.
Under such a case, \citet{hu2023contextual} reformulate this problem as a large-scale bilevel program, whereas it is still computationally challenging.
For non-parametric decision rules, finite-dimensional convex reformulations of contextual DRO have been provided~\citep{yang2022decision, esteban2022distributionally, fu2024distributionally, zhang2024optimal, nguyen2025robustifying} for special problem structures or with special choices of the Wasserstein distance and its variants.
In this paper, we propose a Soft Regression Forest (SRF) decision rule for general contextual DRO, aiming to balance the trade-off between interpretability and computational tractability. 

\textit{On trustworthy and interpretable decision-making. } 
In machine learning, interpretability generally refers to the ability to explain or to present in understandable terms to humans~\citep{doshi2017towards, bertsimas2019price}. 
As mentioned above, kernel or deep-learning-based methods for contextual optimization perform well but lack trustworthiness and interpretability~\citep{oroojlooyjadid2020applying, bertsimas2022data}. 
Therefore, in high-stakes applications, such as healthcare and finance, those methods may pose risks~\citep{rudin2019stop, forel2023explainable}. 
Instead of explaining deep learning models~\citep{lundberg2020local}, extensive literature is dedicated to applying or developing inherently trustworthy models for decision-making~\citep{forel2023explainable}, such as decision tree and forest methods, which have been considered interpretable and used in some decision-making processes due to their explicit ``if-then'' logical structures~\citep{bertsimas2017optimal, bertsimas2021voice, demirovic2022murtree, aghaei2025strong}. 
These methods have also been applied in CSO~\citep{elmachtoub2020decision, bertsimas2020predictive, kallus2023stochastic, notz2024explainable}. 
Distinct from existing tree-based methods, where trees are constructed via greedy heuristics due to non-differentiability and NP-hardness, the proposed SRF decision rule is differentiable and can be efficiently trained by gradient-based methods, while maintaining the intrinsic interpretability. 

\subsection{Conventions and Notations}

For integer $K \in \mathbb{Z}_+$, define $\left [ K \right ] \triangleq  \left \{ 1,\cdots,K \right \} $. 
We denote random vectors by bold upper case letters (for example, $\boldsymbol{X}, \boldsymbol{Y}$) and their realizations by bold lower case letters (for example, $\boldsymbol{x}, \boldsymbol{y}$). 
For a measurable set $\mathcal{Y}$, denote $\mathcal{M}(\mathcal{Y})$ as the set of measures on $\mathcal{Y}$, and $\mathcal{P}(\mathcal{Y})$ as the set of probability measures on $\mathcal{Y}$. 
Denote $\mathbb{P} \otimes \mathbb{Q}$ as the product measure of two probability measures $\mathbb{P}$ and $\mathbb{Q}$. 
Given a probability distribution $\mathbb{P}$ and a measure $\mu$, we denote by $\textnormal{supp}\, \mathbb{P}$ the support of $\mathbb{P}$, and write $\mathbb{P} \ll \mu$ if $\mathbb{P}$ is absolutely continuous with respect to $\mu$. 
Let the logarithm function $\log$ take with base $e$.
A function $f : \mathcal{X} \to \mathbb{R}^n$ is said to be $L$-Lipschitz continuous, if there exists a constant $L > 0$ such that $\|f(\boldsymbol{x}_1) - f(\boldsymbol{x}_2)\| \le L\|\boldsymbol{x}_1 - \boldsymbol{x}_2\|$ for any $\boldsymbol{x}_1, \boldsymbol{x} \in \mathcal{X}$. 
A function $f: \mathcal{X} \to \mathbb{R}^n$ is said to be $S$-Lipschitz smooth if it is continuously differentiable and its Jacobian $J_{\boldsymbol{f}}(\boldsymbol{x})$ is $S$-Lipschitz continuous. 
Let $\mathbb{V}_{\boldsymbol{\xi}}(t(\cdot; \boldsymbol{\xi}))$ denote the variance of the random variable (or random vector) $t(\cdot; \boldsymbol{\xi})$.
Let $(\cdot)^+$ denote the component-wise positive part operator.
For a vector $\boldsymbol{w} \in \mathbb{R}^d$, we denote $[ \boldsymbol{w} ]_k$ as its $k$-th element for any $k \in [d]$. We use $ \left| \cdot \right|$ to represent the cardinality of a set. 

\section{The Causal-SDRO Model} \label{sec-detup}

In this section, we introduce the Causal-SDRO model. 
Consider a CSO model where a decision rule $f: \mathcal{X} \to \mathcal{Z}$ maps a covariate vector $\boldsymbol{x}$ for a compact covariate space $\mathcal{X} \subseteq \mathbb{R}^{d_x}$ to a decision $\boldsymbol{z} \in \mathcal{Z} \subseteq \mathbb{R}^{d_z}$. 
The goal is to $\mathcal{Z} \subseteq \mathbb{R}^{d_z}$ to minimize the expectation of the measurable loss function $\Psi: \mathcal{Z}\times\mathcal{Y} \to \mathbb{R}\cup \left\{\infty\right\}$ with respect to uncertain parameters $\boldsymbol{y} \in \mathcal{Y} \subseteq \mathbb{R}^{d_y}$. 
Consequently, the CSO is formulated as
\begin{equation} \nonumber
    \inf_{f \in \mathcal{F}}\quad \mathbb{E}_{(\boldsymbol{x}, \boldsymbol{y}) \sim \widehat{\mathbb{P}}} \Big[\Psi(f \left ( \boldsymbol{x} \right ) , \boldsymbol{y})\Big], 
\end{equation}
where $\mathcal{F}$ is a space of measurable functions, and $\widehat{\mathbb{P}}$ represents the empirical joint distribution of $\boldsymbol{x}$ and $\boldsymbol{y}$. 
Based on CSO, contextual DRO assumes the unknown true distribution $\mathbb{P}$ lies within an ambiguity set $\Re\, ({\widehat{\mathbb{P}}})$ constructed based on $\widehat{\mathbb{P}}$. The objective is to identify a robust decision rule that minimizes the worst-case expected loss 
\begin{equation}\label{DRCO}\nonumber
 \inf_{f \in \mathcal{F}} \max_{\mathbb{P} \in \, \Re\, ({\widehat{\mathbb{P}}})}\quad \mathbb{E}_{(\boldsymbol{x}, \boldsymbol{y}) \sim\mathbb{P}}\Big[\Psi(f \left ( \boldsymbol{x} \right ) , \boldsymbol{y})\Big]. 
\end{equation}

To construct our ambiguity set, we first recall the causal transport distance~\citep{yang2022decision}, which incorporates the causal structure into the optimal transport framework. 
\begin{definition}\label{def-causal-trans}
    \textnormal{\textbf{\citep[Causal Transport Distance,][]{yang2022decision}.}} Let distributions $\mathbb{P}, \mathbb{Q} \in \mathcal{P}(\mathcal{X}\times\mathcal{Y})$. A joint distribution $\gamma \in \Gamma(\mathbb{P}, \mathbb{Q})$ is termed a \textit{causal transport plan} if, for $((\widehat{\boldsymbol{X}}, \widehat{\boldsymbol{Y}}), (\boldsymbol{X}, \boldsymbol{Y})) \sim \gamma$, the random variable $\boldsymbol{X}$ is conditionally independent of $\widehat{\boldsymbol{Y}}$ given $\widehat{\boldsymbol{X}}$, denoted as 
\begin{equation}\nonumber
    \boldsymbol{X} \perp \widehat{\boldsymbol{Y}} \mid \widehat{\boldsymbol{X}}.
\end{equation}
Let $\Gamma_c(\mathbb{P}, \mathbb{Q})$ be the set of all causal transport plans within $\Gamma(\mathbb{P}, \mathbb{Q})$. For $p \in [1, \infty)$, the $p$-causal transport distance between $\mathbb{P}$ and $\mathbb{Q}$ is defined as
\begin{equation}\nonumber
    C_p(\mathbb{P}, \mathbb{Q}) := \left( \inf_{\gamma \in \Gamma_c(\mathbb{P}, \mathbb{Q})} \mathbb{E}_{((\widehat{\boldsymbol{x}},\widehat{\boldsymbol{y}}), (\boldsymbol{x},\boldsymbol{y}))\sim\gamma} \Big[ c_p((\widehat{\boldsymbol{x}},\widehat{\boldsymbol{y}}), (\boldsymbol{x},\boldsymbol{y})) \Big] \right)^{1/p}, 
\end{equation}
where $c_p((\widehat{\boldsymbol{x}},\widehat{\boldsymbol{y}}), (\boldsymbol{x},\boldsymbol{y})) = \|\boldsymbol{x} - \widehat{\boldsymbol{x}}\|^p + \|\boldsymbol{y} - \widehat{\boldsymbol{y}}\|^p$is a transport cost function. 
$\hfill\Diamond $
\end{definition}

The ambiguity set of causal transport distance-based DRO includes both discrete and continuous distributions, whereas it typically hedges against a discrete one, which may not be realistic in practice, as the true distribution is often continuous in many applications. 
We introduce the \emph{Causal Sinkhorn Discrepancy (CSD)} below, by incorporating entropy regularization, which excludes all discrete distributions in the ambiguity set. 
\begin{definition}\label{def-causal-sh}
    \textnormal{\textbf{ (Causal Sinkhorn Discrepancy, CSD). }} 
    Consider distributions $\mathbb{P}, \mathbb{Q} \in \mathcal{P}(\mathcal{X}\times\mathcal{Y})$, and let measures $\nu_{\mathcal{X}} \in \mathcal{M}(\mathcal{X}), \nu_{\mathcal{Y}} \in \mathcal{M}(\mathcal{Y})$ be reference measures such that $\mathbb{Q} \ll \nu_{\mathcal{X}} \otimes \nu_{\mathcal{Y}}$.  
    For regularization parameter $\epsilon \ge 0$ and $p \in [1, \infty)$, the $p$-CSD between two distributions $\mathbb{P}$ and $\mathbb{Q}$ is defined as
\begin{equation}\nonumber
    R_p(\mathbb{P}, \mathbb{Q}) := \left( \inf_{\gamma \in \Gamma_c(\mathbb{P}, \mathbb{Q})} \mathbb{E}_{((\widehat{\boldsymbol{x}},\widehat{\boldsymbol{y}}), (\boldsymbol{x},\boldsymbol{y}))\sim\gamma} \Big[ c_p((\widehat{\boldsymbol{x}},\widehat{\boldsymbol{y}}), (\boldsymbol{x},\boldsymbol{y})) \Big]  + \epsilon \cdot H\left( \gamma \mid \mathbb{P} \otimes \left ( \nu_{\mathcal{X}}\otimes \nu_{\mathcal{Y} } \right ) \right)\right)^{1/p}, 
\end{equation}
where $\Gamma_c(\mathbb{P}, \mathbb{Q})$ is the causal transport plan set, and the relative entropy of $\gamma$ with respect to the measure $\mathbb{P}\otimes \left ( \nu_{\mathcal{X}}\otimes \nu_{\mathcal{Y}}\right )$ is given by
\begin{equation}\nonumber
    H\left( \gamma \mid \mathbb{P} \otimes \left ( \nu_{\mathcal{X}}\otimes \nu_{\mathcal{Y} } \right ) \right) = \mathbb{E}_{((\widehat{\boldsymbol{x}},\widehat{\boldsymbol{y}}), (\boldsymbol{x},\boldsymbol{y}))\sim\gamma}\Big [ \log\, \left ( \frac{\mathrm{d}\gamma ( (\widehat{\boldsymbol{x}}, \widehat{\boldsymbol{y}}), (\boldsymbol{x},\boldsymbol{y})) }{\mathrm{d}\mathbb{P}(\widehat{\boldsymbol{x}}, \widehat{\boldsymbol{y}})\mathrm{d} \nu_{\mathcal{X}}(\boldsymbol{x})\mathrm{d} \nu_{\mathcal{Y}}(\boldsymbol{y})}  \right )  \Big ] , 
\end{equation}  where $\frac{\mathrm{d}\gamma ( (\widehat{\boldsymbol{x}}, \widehat{\boldsymbol{y}}), (\boldsymbol{x},\boldsymbol{y})) }{\mathrm{d}\mathbb{P}(\widehat{\boldsymbol{x}}, \widehat{\boldsymbol{y}})\mathrm{d} \nu_{\mathcal{X}}(\boldsymbol{x})\mathrm{d} \nu_{\mathcal{Y}}(\boldsymbol{y})}$ stands for the density ratio of $\gamma$ with respect to $\mathbb{P} \otimes \left ( \nu_{\mathcal{X}}\otimes \nu_{\mathcal{Y}}\right )$ evaluated at $\left ( \widehat{\boldsymbol{x}}, \widehat{\boldsymbol{y}}, \boldsymbol{x}, \boldsymbol{y} \right ) $. $\hfill\Diamond $
\end{definition} 

Unlike deterministic transport plans derived from the Wasserstein distance, Sinkhorn transport plans are probabilistic.
Specifically, entropy regularization ($\epsilon > 0$) penalizes deterministic transport plans, yielding smooth plans that map each source point to a probability distribution over the target space rather than a single point. 
\begin{figure}
    \centering
    \includegraphics[width=\linewidth]{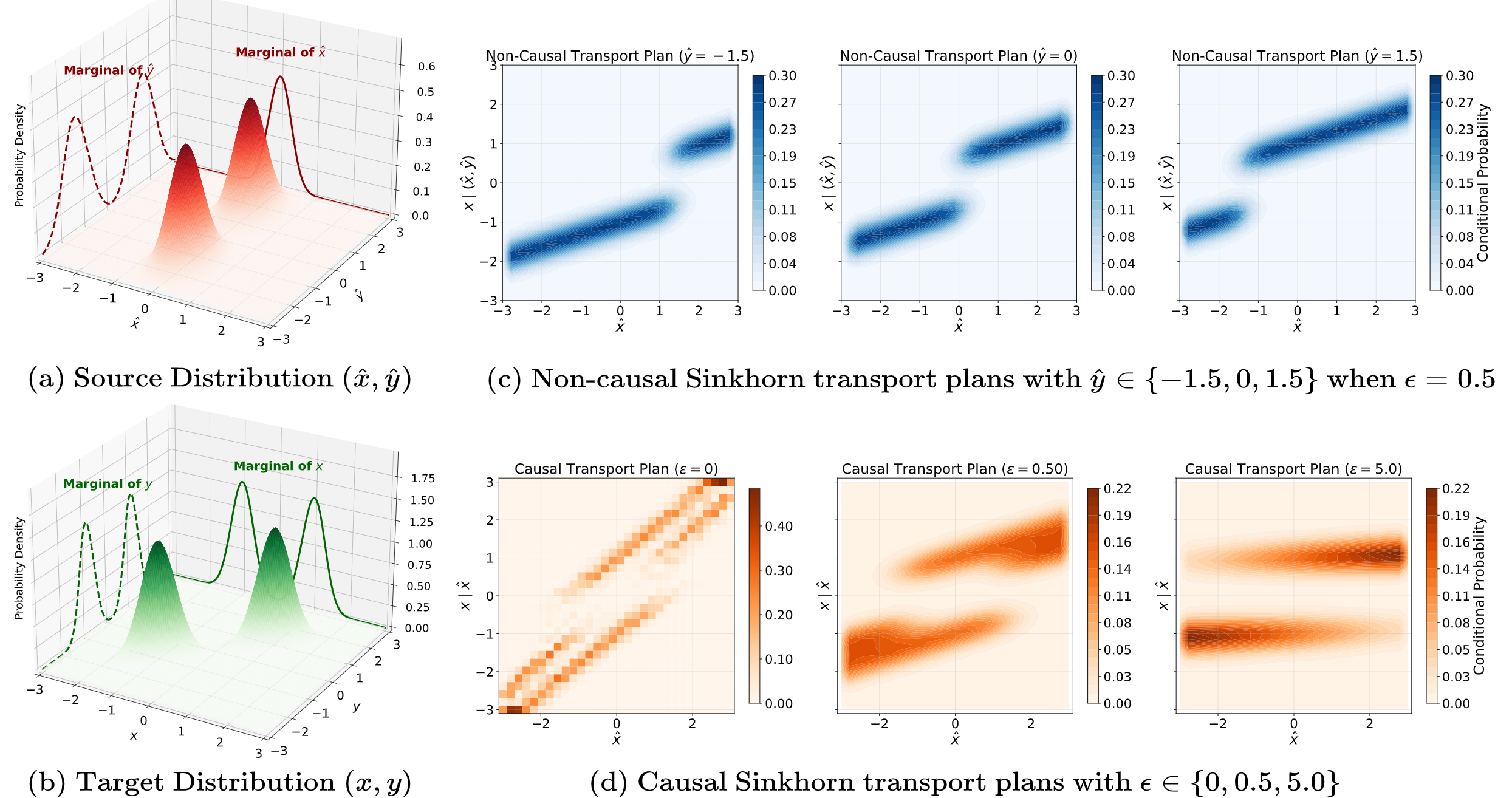}
  \caption{Visualization for causal and non-causal Sinkhorn transport plans. }
  \label{fig:transport_plan}
\end{figure}

Based on CSD, we study the following contextual DRO model, where the outer minimization seeks optimal decision rules and the inner maximization seeks the worst-case distribution in the ambiguity set constructed by CSD around the empirical distribution $\widehat{\mathbb{P}}$: 
\begin{equation}\label{causal-sdro}
\inf _{f \in \mathcal{F}}\,\,  \max _{\mathbb{P} \in \, \Re\, ({\widehat{\mathbb{P}}})} \quad \mathbb{E}_{(\boldsymbol{x}, \boldsymbol{y}) \sim \mathbb{P}}\Big[\Psi(f(\boldsymbol{x}), \boldsymbol{y})\Big], \tag{\text{Causal-SDRO}}
\end{equation}
where 
\begin{equation}\nonumber 
    \, \Re\, ({\widehat{\mathbb{P}}}) = \left\{ \mathbb{P} \in \mathcal{P}(\mathcal{X} \times \mathcal{Y}): R_{p}(\widehat{\mathbb{P}}, \mathbb{P})^p \leq \rho^p \right\}. 
\end{equation}
This Causal Sinkhorn Distributionally Robust Optimization (Causal-SDRO) model hedges against a continuous worst-case distribution (see the discussion in Section~\ref{subsec-CSDRO-worstcase}) while preserving the causal information structure, and thereby avoiding overly conservative and causally inconsistent decisions. 

Figure~\ref{fig:transport_plan} visualizes and compares the causal and non-causal transport plans for illustrative source $(\widehat{x}, \widehat{y})$ and target $(x,y)$ distributions supported on $[-3, 3]^2$. 
The source shown in Figure~\ref{fig:transport_plan}(a) comprises $\widehat{x} \sim \mathcal{N}(0, 0.1)$ and $\widehat{y}$ from an equiprobable mixture of $\mathcal{N}(\pm 1.5, 0.3)$. The target shown in Figure~\ref{fig:transport_plan}(b) is a mixture of two bivariate Gaussians centered at $(-1, -1)$ and $(1, 1)$ with positive correlation.  
Figure~\ref{fig:transport_plan}(c) illustrates non-causal Sinkhorn transport plans, where the mapping from $\widehat{x}$ to $x$ explicitly depends on $\widehat{y}$. 
Specifically, when $\widehat{y} = -1.5$ (left panel), the source mode at $\widehat{x} = 0$ is transported primarily to the target's bottom-left peak ($x \approx -1$) to minimize transport cost. Similarly, when $\widehat{y} = 1.5$ (right panel), the mass is directed toward the closer upper-right peak ($x \approx 1$).
In contrast, their corresponding causal Sinkhorn transport plan (Figure~\ref{fig:transport_plan}(d), middle panel) enforces conditional independence, and consequently, it appears as an aggregate of the non-causal plans. 
Figure~\ref{fig:transport_plan}(d) further demonstrates the difference between the causal Wasserstein ($\epsilon=0$) and the causal Sinkhorn transport plans ($\epsilon=0.5$ and $5$). 
As $\epsilon$ increases, the causal Sinkhorn transport plans converge towards the product of marginal distributions of $\widehat{x}$ and $x$. 

In the following, we introduce several practical applications of Causal-SDRO. 
\begin{example}\label{example-news}
    \textnormal{\textbf{(Feature-based Newsvendor Problem). }} Consider a newsvendor who sells $d_z$ kinds of products. Let $\boldsymbol{h} \in \mathbb{R}^{d_z}$ and $\boldsymbol{b} \in \mathbb{R}^{d_z}$ to represent the holding and stock-out cost. The newsvendor loss function $\Psi_{\text{Newsvendor}}: \mathcal{Z} \times \mathcal{Y} \to \mathbb{R}$ for given uncertain demand $\boldsymbol{y} \in \mathcal{Y} \subseteq \mathbb{R}^{d_y} (d_y=d_z)$ is defined as 
    \begin{equation}\nonumber
        \Psi_{\text{Newsvendor}}(\boldsymbol{z},\boldsymbol{y}) := \boldsymbol{h}^{\top}\Big(\boldsymbol{z}-\boldsymbol{y}\Big)^+ + \boldsymbol{b}^{\top}\Big(\boldsymbol{y}-\boldsymbol{z}\Big)^+.
    \end{equation}
    Consider features $\boldsymbol{x} \in \mathcal{X} \subseteq \mathbb{R}^{d_x}$ (for example, season and weather), the problem is given by 
    \begin{equation}\nonumber
        \inf _{f \in \mathcal{F}}\,\, \max _{\mathbb{P} \in \, \Re\, ({\widehat{\mathbb{P}}})} \quad \mathbb{E}_{(\boldsymbol{x}, \boldsymbol{y}) \sim \mathbb{P}}\Big[\boldsymbol{h}^{\top}\Big(f(\boldsymbol{x})-\boldsymbol{y}\Big)^+ + \boldsymbol{b}^{\top}\Big(\boldsymbol{y}-f(\boldsymbol{x})\Big)^+\Big].
    \end{equation}  
    This problem will be revisited in Sections~\ref{subsec-CSDRO-worstcase} and~\ref{subsec-results-news}.  $\hfill \clubsuit$ 
\end{example}

\begin{example}\label{example-supply}
    \textnormal{\textbf{(Feature-based Inventory Substitution Problem). }} 
    This problem, as a variant of the supply chain substitution problem in~\citet{chen2019stochastic}, is a two-stage optimization problem. Consider a firm selling $d_z$ types of products to satisfy customer demands, the products are indexed by $i \in [d_z]$ where a lower index implies a higher quality. There is a demand class corresponding to each product, indexed by $j \in [d_y]$ ($d_z=d_y$). If any demand class $j$ cannot be satisfied, products with higher quality $i < j$ can substitute for demand $j$ at an extra cost $s_{i,j}$. Before the real demand is observed, the ``wait-and-see" decision for the firm is to decide the prepared inventory level $\boldsymbol{z} \in \mathcal{Z} \subseteq \mathbb{R}^{d_z}$ (suppose that the initial inventory level is zero) with cost $\boldsymbol{c} \in \mathbb{R}^{d_z}$. After knowing the demand $\boldsymbol{y} \in \mathcal{Y} \subseteq \mathbb{R}^{d_y}$, the ``here-and-now" decision is to allocate the inventory to each demand class, targeting the lowest total cost. Let decision variable $w_{i,j} \ge 0$ denote the quality of product $i$ that substitutes $j$, while $h_i \ge 0$ and $b_j \ge 0$ denote the unit holding cost of product $i$ and shortage cost of demand $j$, respectively. Taking features $\boldsymbol{x} \in \mathcal{X} \subseteq \mathbb{R}^{d_x}$ (for example, demographic data) into account, this two-stage contextual DRO problem is given by 
    \begin{equation}\nonumber
        \inf _{f \in \mathcal{F}}\,\,  \max _{\mathbb{P} \in \, \Re\, ({\widehat{\mathbb{P}}})} \quad \boldsymbol{c}^{\top}f(\boldsymbol{x}) + \mathbb{E}_{(\boldsymbol{x}, \boldsymbol{y}) \sim \mathbb{P}}\Big[\Psi_{\text{Inventory}}(f(\boldsymbol{x}), \boldsymbol{y})\Big]
    \end{equation}  
    where the inventory substitution loss function $\Psi_{\text{Inventory}}:~\mathcal{Z}\times\mathcal{Y}\to\mathbb{R}$ is defined as
    \begin{align*}
       \Psi_{\text{Inventory}}(\boldsymbol{z}, \boldsymbol{y}): =  \min & \quad \sum_{j=1}^{d_y} \sum_{i=1}^{j} s_{i,j}w_{i,j} + \sum_{i=1}^{d_z}  h_i u_i + \sum_{j=1}^{d_y} b_j u_j^{\prime} \\ 
       \text{s.t. } & \quad  \sum_{j=i}^{d_y}w_{i,j} +u_i = z_{i}, & \forall i \in \left[d_z\right], \\
        & \quad  \sum_{i=1}^{j}w_{i,j} +u_j^{\prime} = y_j, & \forall j \in \left[d_y\right], \\
        & \quad u_i , u_j^{\prime}, w_{i,j} \ge 0, & \forall i \in \left[d_z\right], j \in \left[d_y\right],
    \end{align*}
    where $u_i$ represents the leftover inventory of product $i$, $u_j^{\prime}$ represents the shortage of demand $j$, and the three terms in $\Psi_{\text{Inventory}} (\boldsymbol{z}, \boldsymbol{y})$ represent the total purchasing cost, total holding cost, and total shortage cost, respectively. 
    We conduct numerical experiments for this problem in Section~\ref{subsec-results-inventory}
$\hfill \clubsuit$ 
\end{example} 

\begin{example}\label{example-real}
\textnormal{\textbf{(Data-driven Portfolio Selection Problem). } } Conditioned on a covariate $\boldsymbol{x} \in \mathcal{X} \subseteq \mathbb{R}^{d_x}$ (for example, macroeconomic indicators), the portfolio manager determines a portfolio allocation strategy across various assets that minimizes the worst-case conditional risk-return tradeoff~\citep{nguyen2025robustifying}. 
In this data-driven portfolio selection problem, the random vector $\boldsymbol{y} \in \mathcal{Y} \subseteq \mathbb{R}^{d_y}$ denotes the assets’ future return, and the risk can be described by variance, conditional value-at-risk, etc. We choose the variance of return as the measure of risk in this example, leading to the following model 
\begin{equation}\nonumber
    \inf _{f \in \mathcal{F}}\,\,  \max _{\mathbb{P} \in \, \Re\, ({\widehat{\mathbb{P}}})} \quad \mathbb{E}_{(\boldsymbol{x}, \boldsymbol{y}) \sim \mathbb{P}}\Big[\Psi_{\text{Portfolio}}(f(\boldsymbol{x}), \boldsymbol{y})\Big], 
\end{equation}
where the portfolio loss function $\Psi_{\text{Portfolio}}:~\mathcal{Z}\times\mathcal{Y}\to\mathbb{R}$ is defined as
\begin{equation}\label{portfolio-loss-function} \nonumber
    \Psi_{\text{Portfolio}}(\boldsymbol{z}, \boldsymbol{y}) := \left\{ - \omega \cdot  \sum_{i=1}^{d_y} y_i z_i + \left ( \sum_{i=1}^{d_y} y_i z_i - z_0 \right )^2 \, \Bigg | \, \begin{array}{lr}
        \sum_{i=1}^{d_y} z_i = 1,   \\
        z_0 \ge \min_{i \in [d_y]} \{ y_i z_i \}, \\
         z_0 \le \max_{i \in [d_y]} \{ y_i z_i \},   \\
        z_i \ge 0, \quad \quad \forall i \in [d_y] 
    \end{array}\right\} , 
\end{equation}
where the parameter $\omega$ balances the trade-off between the portfolio return (the first term) and the associated risk (the second term), and for decision variables $\boldsymbol{z} = \left ( z_0, z_1, \cdots, z_{d_y} \right )^{\top} \in \mathcal{Z} \subseteq  \mathbb{R}^{d_y+1}$, $z_0$ represents the expected portfolio return while $z_i$ represents the portfolio on the asset $i$ for each $i \in [d_y]$. We will solve this problem on real data in Section~\ref{subsec-results-portfolio}. 
$\hfill \clubsuit$ 
\end{example}

\section{Duality Reformulation for Causal-SDRO}\label{sec-CSDRO}

In this section, we establish the strong duality and characterize the worst-case distribution for the inner maximization problem of~\eqref{causal-sdro}, assuming a fixed decision rule $f \in \mathcal{F}$.  
The primal problem is defined as
\begin{equation}\label{eq-primal}
    v_{\mathrm{P}}:= \max _{\mathbb{P} \in \mathcal{P}(\mathcal{X} \times \mathcal{Y})} \Big \{ \mathbb{E}_{(\boldsymbol{x}, \boldsymbol{y}) \sim \mathbb{P}}\Big[\Psi(f(\boldsymbol{x}), \boldsymbol{y})\Big]: R_{p}(\widehat{\mathbb{P}}, \mathbb{P})^p \leq \rho^p \Big\}.
\end{equation}
We derive the corresponding dual problem $v_{\mathrm{D}}$ as 
\begin{subequations}\label{causal-sdro-dual:whole}
\begin{equation}\label{causal-sdro-dual}
    v_{\mathrm{D}} := \inf_{\lambda \ge 0} \left\{ \lambda\rho^p + \mathbb{E}_{\widehat{\boldsymbol{x}} \sim \widehat{\mathbb{P}}_{\widehat{\boldsymbol{X}}}} \left[ \lambda\epsilon \log\, 
        \int_{\mathcal{X}} \exp\left( \frac{g(\widehat{\boldsymbol{x}}, \boldsymbol{x}, \lambda)}{\lambda\epsilon} \right) \mathrm{d} \nu_{\mathcal{X}}(\boldsymbol{x}) \right] \right\},
\end{equation}
where 
\begin{equation}\label{g-function}
    g(\widehat{\boldsymbol{x}}, \boldsymbol{x}, \lambda) = \mathbb{E}_{\widehat{\boldsymbol{y}} \sim  \widehat{\mathbb{P}}_{\widehat{\boldsymbol{Y}}|\widehat{\boldsymbol{X}}=\widehat{\boldsymbol{x}}}}\left[ \lambda\epsilon \log\,  \int _{\mathcal{Y}} \exp\left( \frac{\Psi(f(\boldsymbol{x}),\boldsymbol{y}) - \lambda c_p((\widehat{\boldsymbol{x}},\widehat{\boldsymbol{y}}), (\boldsymbol{x},\boldsymbol{y}))}{\lambda\epsilon} \right) \mathrm{d} \nu_{\mathcal{Y}}(\boldsymbol{y}) \right].
\end{equation}     
\end{subequations}

Next, we reformulate the dual problem $v_{\mathrm{D}}$ as a stochastic optimization with nested expectation structure such that, except for $\widehat{\mathbb{P}}_{\widehat{\boldsymbol{Y}}|\widehat{\boldsymbol{X}}=\widehat{\boldsymbol{x}}}$, all random vectors are mutually independent.
Define the kernel probability distributions~(they are Laplace or Gaussian distributions when $p\in\{1,2\}$) for the random vectors $\boldsymbol{\xi}_1$ and $\boldsymbol{\xi}_2$ as 
\begin{subequations}\label{QW-distri}
    \begin{equation}\label{Q-distri}
    \mathrm{d} Q_{\epsilon } \left ( \boldsymbol{\xi}_1 \right ) := \frac{e^{ - \| \boldsymbol{\xi}_1\|^p / \epsilon} }{\int_{\mathbb{R}^{d_x}} e^{ - \| \boldsymbol{u}\|^p / \epsilon} \mathrm{d} \nu_{\mathcal{X} }\left (  \boldsymbol{u} \right )}  \mathrm{d} \nu_{\mathcal{X} }\left (  \boldsymbol{\xi}_1 \right ),  
\end{equation}
\begin{equation}\label{W-distri}
     \mathrm{d} W_{\epsilon } \left (  \boldsymbol{\xi}_2 \right ) := \frac{e^{ - \| \boldsymbol{\xi}_2\|^p / \epsilon} }{\int_{\mathbb{R}^{d_y}} e^{ - \| \boldsymbol{u}\|^p / \epsilon} \mathrm{d} \nu_{\mathcal{Y} }\left (  \boldsymbol{u} \right ) }  \mathrm{d} \nu_{\mathcal{Y} }\left (  \boldsymbol{\xi}_2 \right ),
\end{equation}
\end{subequations}
and a constant
\begin{equation}
    \bar{\rho} := \rho^p + \epsilon \cdot \mathbb{E}_{\widehat{\boldsymbol{x}} \sim \widehat{\mathbb{P}}_{\widehat{\boldsymbol{X}}}} \left[ \log\, \int_{\mathbb{R}^{d_x}} e^{ - \| \boldsymbol{u}\|^p / \epsilon} \mathrm{d} \nu_{\mathcal{X} }\left (  \boldsymbol{u} \right ) \right] + \epsilon \cdot \mathbb{E}_{(\widehat{\boldsymbol{x}}, \widehat{\boldsymbol{y}}) \sim \widehat{\mathbb{P}}} \left[ \log\, \int_{\mathbb{R}^{d_y}} e^{ - \| \boldsymbol{u}\|^p / \epsilon} \mathrm{d} \nu_{\mathcal{Y} }\left (  \boldsymbol{u} \right ) \right]. 
\end{equation}
Then, the problem~\eqref{causal-sdro-dual:whole} can be reformulated as 
\begin{subequations}\label{Eq:MCO:whole}
\begin{equation}\label{MCO_1}
    v_{\mathrm{D}} = \inf_{\lambda \ge 0} \left\{ \lambda \bar{\rho} + \mathbb{E}_{\widehat{\boldsymbol{x}} \sim \widehat{\mathbb{P}}_{\widehat{\boldsymbol{X}}}} \left[ \lambda\epsilon \log\, \mathbb{E}_{ \boldsymbol{\xi}_1 \sim Q_{ \epsilon }}\left[ \exp\left( \frac{g^{\prime}(\widehat{\boldsymbol{x}},  \boldsymbol{\xi}_1, \lambda)}{\lambda\epsilon} \right) \right]\right] \right\}, 
\end{equation}
where 
\begin{equation}\label{MCO_2}
    g^{\prime}(\widehat{\boldsymbol{x}},  \boldsymbol{\xi}_1, \lambda) = \mathbb{E}_{\widehat{\boldsymbol{y}} \sim  \widehat{\mathbb{P}}_{\widehat{\boldsymbol{Y}}|\widehat{\boldsymbol{X}}=\widehat{\boldsymbol{x}}}}\left[ \lambda\epsilon \log\, \mathbb{E}_{ \boldsymbol{\xi}_2 \sim W_{\epsilon}}\left[ \exp\left( \frac{\Psi(f(\widehat{\boldsymbol{x}}+ \boldsymbol{\xi}_1),\widehat{\boldsymbol{y}}+ \boldsymbol{\xi}_2)}{\lambda\epsilon} \right) \right] \right].
\end{equation}     
\end{subequations} 

In the following, Section~\ref{subsec-CSDRO-dual} presents the main result of strong duality and related discussions.  Section~\ref{subsec-CSDRO-worstcase} presents the worst-case distribution of the problem~\eqref{eq-primal} and compares it with existing DRO models.

\subsection{Main Results for the Dual Formulation}\label{subsec-CSDRO-dual}

In this part, we first present the strong duality theorem that $v_{\mathrm{P}}=v_{\mathrm{D}}$, and next provide related discussions.
We consider the following assumptions.  
\begin{assumption}\label{assum-1}
    \begin{enumerate}
    \item Both $\mathcal{X}$ and $\mathcal{Z}$ are measurable sets, and the loss function $\Psi: \mathcal{Z}\times\mathcal{Y} \to \mathbb{R}\cup \left\{\infty\right\}$ and decision rule $f: \mathcal{X} \to \mathcal{Z}$ are measurable. \label{assum-1-3}
    \item For every joint distribution $\gamma$ on $(\mathcal{X}\times \mathcal{Y}) \times (\mathcal{X}\times \mathcal{Y})$ with first marginal distribution $\widehat{\mathbb{P}}$, it has a regular conditional distribution $\gamma_{(\widehat{\boldsymbol{x}},\widehat{\boldsymbol{y}})}$ given the value of the first marginal equals $(\widehat{\boldsymbol{x}}, \widehat{\boldsymbol{y}})$. \label{assum-1-4}
        \item The transport cost function $c_p((\widehat{\boldsymbol{x}},\widehat{\boldsymbol{y}}), (\boldsymbol{x},\boldsymbol{y}))$ is measurable, and for $\widehat{\mathbb{P}} \otimes \nu_{\mathcal{X}}\otimes \nu_{\mathcal{Y}}$-almost every $(\widehat{\boldsymbol{x}},\widehat{\boldsymbol{y}}, \boldsymbol{x},\boldsymbol{y})$, it holds that $0 \le c_p((\widehat{\boldsymbol{x}},\widehat{\boldsymbol{y}}), (\boldsymbol{x},\boldsymbol{y})) < \infty$. \label{assum-1-1}     
        \item 
        For any $\delta>0$, it holds that $\int_{\mathbb{R}^{d_x}} e^{ - \delta\| \boldsymbol{u}\|^p } \mathrm{d} \nu_{\mathcal{X} }\left (  \boldsymbol{u} \right )<\infty$ and $\int_{\mathbb{R}^{d_y}} e^{ - \delta\| \boldsymbol{u}\|^p  } \mathrm{d} \nu_{\mathcal{Y} }\left (  \boldsymbol{u} \right )<\infty$. \label{assum-1-2}
    \end{enumerate}
\end{assumption}
Assumption~\ref{assum-1}\ref{assum-1-3} ensures that the expectation over $\Psi(f(\boldsymbol{x}), \boldsymbol{y})$ is well-defined.
Assumption~\ref{assum-1}\ref{assum-1-4} ensures that each optimal transport plan can be decomposed into several conditional optimal transport plans.
Assumptions~\ref{assum-1}\ref{assum-1-1} and \ref{assum-1}\ref{assum-1-2} ensure that the optimal value of Causal-SDRO is well-defined. 
Based on Assumption~\ref{assum-1}\ref{assum-1-2}, we introduce the light-tail condition on function $\Psi$ in the following Condition~\ref{condit-1} to distinguish the cases $v_{\mathrm{D}} < \infty$ and $v_{\mathrm{D}} = \infty$. We provide sufficient conditions to easily verify whether Condition~\ref{condit-1} holds in Appendix~\ref{ecsec-condition1}. 
\begin{condition}\label{condit-1}
    There exists $\lambda>0$ such that  $\mathbb{E}_{ \boldsymbol{\xi}_2 \sim W_{\epsilon}}\left[ \exp\left( \frac{\Psi(f(\widehat{\boldsymbol{x}}+ \boldsymbol{\xi}_1),\widehat{\boldsymbol{y}}+ \boldsymbol{\xi}_2)}{\lambda\epsilon} \right) \right] < \infty$ for $\widehat{\mathbb{P}}\otimes\nu_{\mathcal{X}}$-almost every $(\widehat{\boldsymbol{x}}, \widehat{\boldsymbol{y}}, \boldsymbol{x})$. 
\end{condition}

We call the constraint $R_{p}(\widehat{\mathbb{P}}, \mathbb{P})^p \leq \rho^p$ in the primal problem~\eqref{eq-primal} the CSD constraint in the following. Based on Assumption~\ref{assum-1} and Condition~\ref{condit-1}, the following strong duality theorem holds. 
\begin{theorem}\label{theo-strong-duality}
    \textnormal{\textbf{(Strong Duality).}} Under Assumption~\ref{assum-1}, the following results hold. 
    \begin{enumerate}
        \item The primal problem $v_{\mathrm{P}}$ is feasible if and only if $\bar{\rho} \ge 0$. \label{theo-strong-duality-1}
    \item Additionally, assume that $\bar{\rho}\ge0$ is bounded above such that the CSD constraint is binding, then
    \begin{itemize}
        \item If Condition~\ref{condit-1} holds, then $v_{\mathrm{P}} = v_{\mathrm{D}} < \infty$; 
        \item Otherwise, $v_{\mathrm{P}} = v_{\mathrm{D}} = \infty$. 
    \end{itemize} \label{theo-strong-duality-2}
    \end{enumerate}
\end{theorem}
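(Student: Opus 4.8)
The plan is to recast the inner maximization~\eqref{eq-primal} as an optimization over disintegrated transport kernels, dualize its single scalar constraint by a Lagrangian argument combined with the Gibbs variational principle, and then reconstruct a worst-case coupling explicitly.

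\emph{Step 1 (primal reformulation and feasibility).} By Assumption~\ref{assum-1}\ref{assum-1-4}, every $\gamma$ with first marginal $\widehat{\mathbb P}$ disintegrates as $\gamma = \widehat{\mathbb P}(\mathrm d\widehat{\boldsymbol x},\mathrm d\widehat{\boldsymbol y})\,\pi_{\widehat{\boldsymbol x},\widehat{\boldsymbol y}}(\mathrm d\boldsymbol x)\,\sigma_{\widehat{\boldsymbol x},\widehat{\boldsymbol y},\boldsymbol x}(\mathrm d\boldsymbol y)$, and the causal requirement $\boldsymbol X\perp\widehat{\boldsymbol Y}\mid\widehat{\boldsymbol X}$ says precisely that the $\boldsymbol x$-kernel depends on $\widehat{\boldsymbol x}$ alone, $\pi_{\widehat{\boldsymbol x},\widehat{\boldsymbol y}}=\pi_{\widehat{\boldsymbol x}}$. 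Under this disintegration the transport cost (measurable and finite by Assumption~\ref{assum-1}\ref{assum-1-1}) and the relative entropy both split additively; since the objective of~\eqref{eq-primal} depends on $\gamma$ only through its second marginal, $v_{\mathrm P}$ becomes the supremum of $\mathbb E_{\widehat{\mathbb P}}\mathbb E_{\pi}\mathbb E_{\sigma}[\Psi(f(\boldsymbol x),\boldsymbol y)]$ over probability kernels $\pi,\sigma$ subject to the single scalar constraint $\mathbb E_{\widehat{\mathbb P}}\mathbb E_{\pi}\mathbb E_{\sigma}[\,c_p + \epsilon\log\tfrac{\mathrm d\pi}{\mathrm d\nu_{\mathcal X}} + \epsilon\log\tfrac{\mathrm d\sigma}{\mathrm d\nu_{\mathcal Y}}\,]\le\rho^p$. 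For Part~\ref{theo-strong-duality-1}, feasibility is equivalent to nonemptiness of this constraint set, i.e.\ to the minimal attainable CSD being at most $\rho^p$; minimizing $\mathbb E_{\pi}[\|\boldsymbol x-\widehat{\boldsymbol x}\|^p]+\epsilon H(\pi\mid\nu_{\mathcal X})$ pointwise in $\widehat{\boldsymbol x}$ via the Gibbs variational principle (the minimizer being $\mathrm d\pi^\star\propto e^{-\|\boldsymbol x-\widehat{\boldsymbol x}\|^p/\epsilon}\,\mathrm d\nu_{\mathcal X}$, well-defined by Assumption~\ref{assum-1}\ref{assum-1-2}), and likewise for $\sigma$, shows that the minimal CSD equals $-\epsilon\,\mathbb E_{\widehat{\boldsymbol x}}[\log\!\int e^{-\|\boldsymbol u\|^p/\epsilon}\mathrm d\nu_{\mathcal X}(\boldsymbol u)]-\epsilon\,\mathbb E_{(\widehat{\boldsymbol x},\widehat{\boldsymbol y})}[\log\!\int e^{-\|\boldsymbol u\|^p/\epsilon}\mathrm d\nu_{\mathcal Y}(\boldsymbol u)]$, so that ``$\le\rho^p$'' rearranges exactly to $\bar\rho\ge0$.

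\emph{Step 2 (weak duality).} For $\lambda\ge0$ and any feasible kernels, Lagrangian relaxation gives $\mathbb E[\Psi]\le\lambda\rho^p+\mathbb E_{\widehat{\mathbb P}}\mathbb E_{\pi}\mathbb E_{\sigma}\big[\Psi(f(\boldsymbol x),\boldsymbol y)-\lambda c_p-\lambda\epsilon\log\tfrac{\mathrm d\pi}{\mathrm d\nu_{\mathcal X}}-\lambda\epsilon\log\tfrac{\mathrm d\sigma}{\mathrm d\nu_{\mathcal Y}}\big]$. Maximizing the right-hand side over the kernels in two nested stages and applying the Gibbs variational principle each time — first over $\sigma_{\widehat{\boldsymbol x},\widehat{\boldsymbol y},\boldsymbol x}$, which reproduces the inner log-integral in~\eqref{g-function} and hence $g(\widehat{\boldsymbol x},\boldsymbol x,\lambda)$ after averaging over $\widehat{\boldsymbol y}\sim\widehat{\mathbb P}_{\widehat{\boldsymbol Y}\mid\widehat{\boldsymbol X}=\widehat{\boldsymbol x}}$, then over $\pi_{\widehat{\boldsymbol x}}$, which reproduces $\lambda\epsilon\log\!\int_{\mathcal X}\exp(g/\lambda\epsilon)\,\mathrm d\nu_{\mathcal X}$ — recovers exactly the dual objective~\eqref{causal-sdro-dual}, and infimizing over $\lambda\ge0$ yields $v_{\mathrm P}\le v_{\mathrm D}$.

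\emph{Step 3 (strong duality and the finiteness dichotomy).} Write $\phi(\lambda)$ for the dual objective. As a pointwise supremum over $\mathbb P$ of the affine-in-$\lambda$ maps $\lambda\mapsto\mathbb E_{\mathbb P}[\Psi(f(\boldsymbol x),\boldsymbol y)]-\lambda(R_p(\widehat{\mathbb P},\mathbb P)^p-\rho^p)$, $\phi$ is convex, and the hypothesis that $\bar\rho\ge0$ is bounded above with the CSD constraint binding forces the infimum to be attained at an interior $\lambda^\star\in(0,\infty)$. At $\lambda^\star$ the two nested Gibbs kernels — $\sigma^\star$ with potential $\Psi(f(\boldsymbol x),\cdot)-\lambda^\star c_p$ and $\pi^\star$ with potential $g(\widehat{\boldsymbol x},\cdot,\lambda^\star)$ — are, by the measurability in Assumption~\ref{assum-1}\ref{assum-1-3}, jointly measurable in their conditioning variables, so by Assumption~\ref{assum-1}\ref{assum-1-4} they glue into an admissible causal plan $\gamma^\star$; first-order optimality $0\in\partial\phi(\lambda^\star)$ states that $\gamma^\star$ saturates the CSD constraint, hence is primal feasible, while the Lagrangian value at $(\gamma^\star,\lambda^\star)$ equals simultaneously $\mathbb E_{\gamma^\star}[\Psi]$ and $\phi(\lambda^\star)=v_{\mathrm D}$, giving $v_{\mathrm P}\ge v_{\mathrm D}$. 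For the dichotomy: if Condition~\ref{condit-1} holds, its witnessing $\lambda$ makes the innermost exponential integral finite $\widehat{\mathbb P}\otimes\nu_{\mathcal X}$-a.e., and Assumption~\ref{assum-1}\ref{assum-1-2} then renders the nested integrals defining $\phi(\lambda)$ finite, so $v_{\mathrm D}<\infty$; if it fails, that integral diverges on a set of positive measure for every $\lambda>0$, forcing $g\equiv+\infty$ there and $\phi\equiv+\infty$ on $(0,\infty)$, whence $v_{\mathrm P}=v_{\mathrm D}=\infty$. The main obstacle is Step~3: legitimizing the interchange of $\inf_\lambda$ with the layered expectations and integrals, and carrying out the measurable selection of the two Gibbs kernels \emph{in the correct nested order} so that $\gamma^\star$ is genuinely a joint law. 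An alternative that localizes this difficulty is to invoke the non-causal Sinkhorn-DRO duality of~\citet{wang2025sinkhorn} twice — on the $\boldsymbol y$-subproblem conditional on $(\widehat{\boldsymbol x},\widehat{\boldsymbol y},\boldsymbol x)$ and on the $\boldsymbol x$-subproblem conditional on $\widehat{\boldsymbol x}$ — and reassemble the conditional identities by the tower property, still with careful treatment of the boundary $\lambda\downarrow0$ and of the degenerate slice $\lambda=0$ where the entropic term collapses.
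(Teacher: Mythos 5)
Your proposal follows essentially the same route as the paper's proof: feasibility via the Gibbs/KL reformulation of the CSD constraint (equivalent to the paper's rewriting of the constraint as an averaged KL divergence to the kernel $\mathcal{K}_{(\widehat{\boldsymbol{x}},\widehat{\boldsymbol{y}}),\epsilon}$, so that non-negativity gives $\bar\rho\ge 0$), weak duality by Lagrangian relaxation plus two nested applications of the Gibbs/Fenchel variational identity after the causal disintegration, and strong duality by constructing the nested Gibbs coupling at the optimal $\lambda^*$ and verifying feasibility and optimality through the first-order condition. The interchange-of-supremum-and-expectation and measurable-selection issues you flag as the main obstacle are exactly what the paper resolves with its measurability lemma and the interchangeability principle of \citet{shapiro2021lectures}, so no substantive gap remains.
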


The proof of Theorem~\ref{theo-strong-duality} is provided in Appendix~\ref{ecsec-strong-dual}. We present several remarks regarding Theorem~\ref{theo-strong-duality}.  
\begin{remark}
    \textnormal{\textbf{(Comparison with Causal Wasserstein DRO). }} \label{remark-limit}
    If $\epsilon \to 0$, then the dual objective of the problem~\eqref{causal-sdro-dual:whole} converges to (see Appendix~\ref{ecsec-remark-limit} for detailed proof)
    \begin{equation}\label{Causal-WDRO-dual}
        \lambda \rho^{p}+\mathbb{E}_{\widehat{\mathbb{P}}_{\widehat{\boldsymbol{X}}}}\left[\sup_{\boldsymbol{x} \in \text{supp}\,\nu_{\mathcal{X}}}\left\{\mathbb{E}_{\widehat{\mathbb{P}}_{\widehat{\boldsymbol{Y}} \mid \widehat{\boldsymbol{X}}}}\left[\sup_{\boldsymbol{y} \in \text{supp}\,\nu_{\mathcal{Y}}}\Big\{\Psi(f(\boldsymbol{x}), \boldsymbol{y})-\lambda c_p((\widehat{\boldsymbol{x}},\widehat{\boldsymbol{y}}), (\boldsymbol{x},\boldsymbol{y}))\Big\} \mid \widehat{\boldsymbol{X}}\right]\right\}\right],
    \end{equation}
    which is the same as the dual formulation of the Causal Wasserstein DRO (Causal-WDRO) problem in \citet{yang2022decision}. 
    The optimization for Causal-WDRO is computationally challenging due to the nested inner supremums within expectations in \eqref{Causal-WDRO-dual}, which leads to a non-smooth stochastic min-max structure. 
    Existing algorithms for solving Causal-WDRO typically focus on special cases. For example, \citet{yang2022decision} assume a specific structure for the loss function, and \citet{hu2023contextual} assume that $\lambda$ is sufficiently large and convert it to a contextual stochastic bilevel optimization with a strongly convex lower level problem.    $\hfill \clubsuit$ 
\end{remark}

With a proper level of entropy regularization, the supremum operators in \eqref{Causal-WDRO-dual} are replaced by smooth log-sum-exp type operators, which implies that the original dual problem is replaced by a special stochastic program, as discussed in Remark~\ref{remark-scop}. 

\begin{remark}\label{remark-scop}
    \textnormal{\textbf{(Stochastic Optimization Formulation). }}
    The problem~\eqref{Eq:MCO:whole} can be rewritten as a stochastic multi-level compositional optimization problem:
\begin{equation}\nonumber
    v_{\mathrm{D}} = \inf_{\lambda \ge 0} \left\{ \lambda\bar{\rho} + \lambda\epsilon\cdot\mathbb{E}_{\widehat{\boldsymbol{x}}} \Big[ h_{1}\Big( \mathbb{E}_{ \boldsymbol{\xi}_1} \Big[ h_{2} \Big( \mathbb{E}_{ \widehat{\boldsymbol{y}} \mid \widehat{\boldsymbol{x}}} \Big[ h_{1}\Big( \mathbb{E}_{ \boldsymbol{\xi}_2} \Big[ h_{3}\left( \lambda; \widehat{\boldsymbol{x}},\widehat{\boldsymbol{y}},\boldsymbol{\xi}_1,\boldsymbol{\xi}_2 \right) \Big] \Big) \Big] \Big) \Big] \Big)\Big] \right\}, 
\end{equation}
where the functions $h_1, h_2,$ and $h_3$ are defined as
\begin{align*}
    h_1&: \mathbb{R}_+ \to \mathbb{R}, & h_1(z) &= \log\, (z), \\
    h_2&: \mathbb{R} \to \mathbb{R}_+, & h_2(z) &= e^z, \\
    h_3&: \mathbb{R}_{+} \times \mathbb{R}^{d_x} \times \mathbb{R}^{d_y} \times \mathbb{R}^{d_x} \times \mathbb{R}^{d_y} \to \mathbb{R}, & h_3(\lambda; \widehat{\boldsymbol{x}}, \widehat{\boldsymbol{y}}, \boldsymbol{\xi}_1, \boldsymbol{\xi}_2) &= \exp\left( \frac{\Psi(f(\widehat{\boldsymbol{x}}+ \boldsymbol{\xi}_1),\widehat{\boldsymbol{y}}+ \boldsymbol{\xi}_2)}{\lambda\epsilon} \right).
\end{align*} 
Existing literature has provided different variants of optimization algorithms for solving this kind of formulation, such as \citet{wang2017stochastic} and \citet{chen2021solving}. Both methods are gradient-based algorithms, introducing auxiliary variables to track the iterative update of inner expectations in the gradient computation. 
$\hfill \clubsuit$ 
\end{remark}

\begin{remark}\label{remark-soft-cons}
    \textnormal{\textbf{(Soft-constrained Causal-SDRO).  }} In the problem~\eqref{causal-sdro} with hard CSD constraint, the radius $\bar{\rho}$ is a hyperparameter to be tuned while $\lambda$ is the dual variable. However, if we regard the hard constraint as a soft one, it suffices to tune $\lambda$ as a hyperparameter. The soft-constrained Causal-SDRO problem is given by 
    \begin{equation}\label{soft-causal-sdro}
    \inf_{f \in \mathcal{F}}\, \max_{\mathbb{P} \in \, \Re\, ({\widehat{\mathbb{P}}})} \quad  \mathbb{E}_{(\boldsymbol{x}, \boldsymbol{y}) \sim \mathbb{P}}\Big[\Psi(f(\boldsymbol{x}), \boldsymbol{y})\Big] -\lambda \cdot R_{p}(\widehat{\mathbb{P}}, \mathbb{P})^p. 
        \tag{\text{Soft-Causal-SDRO}}
    \end{equation}
    For the inner problem in~\eqref{soft-causal-sdro}, we derive its dual formulation by the Fenchel duality, then the dual problem is given by
    \begin{equation}\label{soft-causal-sdro-dual}
        \inf_{f\in \mathcal{F}}\quad \mathbb{E}_{\widehat{\boldsymbol{x}} \sim \widehat{\mathbb{P}}_{\widehat{\boldsymbol{X}}}} \left[ \lambda\epsilon \log\, \int_{\mathcal{X}} \exp\left( \frac{g(\widehat{\boldsymbol{x}}, \boldsymbol{x}, \lambda)}{\lambda\epsilon} \right) \mathrm{d} \nu_{\mathcal{X}}(\boldsymbol{x}) \right], 
    \end{equation}
    where the definition of function $g$ is the same as Equation~\eqref{g-function}. 
    Under Assumption~\ref{assum-1}, it can be shown that the strong duality of the inner problem in~\eqref{soft-causal-sdro} holds~(by the similar proof process as in Theorem~\ref{theo-strong-duality}). 
    The dual problem~\eqref{soft-causal-sdro-dual} is also a stochastic multi-level compositional optimization problem. 
    The soft-constrained problem~\eqref{soft-causal-sdro} is easier to solve compared with~\eqref{causal-sdro}. $\hfill \clubsuit$ 
\end{remark}

\begin{remark}
    \textnormal{\textbf{(Connection with KL-Divergence DRO)}}. 
    Let $\mathbb{D}_{\mathrm{KL}}\Big(\mathbb{P} || \mathbb{Q}\Big)$ be the Kullback–Leibler (KL) divergence from distribution $\mathbb{P}$ to $\mathbb{Q}$. Then, the constraint $R_{p}(\widehat{\mathbb{P}}, \mathbb{P})^p \le \rho^p$ in~\eqref{causal-sdro} can be rewritten as (see Appendix~\ref{ecsec-strong-dual} for details)
    \begin{equation}\label{kl-dro-cons}
        \mathbb{E}_{(\widehat{\boldsymbol{x}},\widehat{\boldsymbol{y}})\sim\widehat{\mathbb{P}}} \Big[ \mathbb{D}_{\mathrm{KL}}\Big(\gamma_{(\widehat{\boldsymbol{x}},\widehat{\boldsymbol{y}})} || \mathcal{K}_{(\widehat{\boldsymbol{x}},\widehat{\boldsymbol{y}}),\epsilon}\Big) \Big] \le \frac{\bar{\rho}}{\epsilon}, 
    \end{equation}
    where $\gamma_{(\widehat{\boldsymbol{x}},\widehat{\boldsymbol{y}})}$ is the conditional distribution of $\gamma$ given the value of the marginal $(\widehat{\boldsymbol{x}},\widehat{\boldsymbol{y}})$, and $\mathcal{K}_{(\widehat{\boldsymbol{x}},\widehat{\boldsymbol{y}}),\epsilon}$ is a kernel probability distribution defined by kernel distributions $Q_{\epsilon}$ and $W_{\epsilon}$ in Equation~\eqref{QW-distri}: 
    \begin{equation}\nonumber
        \mathrm{d}\mathcal{K}_{(\widehat{\boldsymbol{x}},\widehat{\boldsymbol{y}}),\epsilon}(\boldsymbol{x},\boldsymbol{y}) := \mathrm{d}Q_{\epsilon}(\boldsymbol{x}) \cdot \mathrm{d} W_{\epsilon}(\boldsymbol{y}). 
    \end{equation}
    Compare with the KL-divergence-based DRO problem~\citep{ben2013robust, blanchet2023unifying} with constraint $\mathbb{D}_{\mathrm{KL}}\Big(\mathbb{P} || \widehat{\mathbb{P}} \Big) \le \rho$, in constraint~\eqref{kl-dro-cons}, the conditional distribution $\gamma_{(\widehat{\boldsymbol{x}},\widehat{\boldsymbol{y}})}$ is remained due to the causal consideration in~\eqref{causal-sdro}, and the distribution $\mathcal{K}_{(\widehat{\boldsymbol{x}},\widehat{\boldsymbol{y}}),\epsilon}$ can be viewed as a non-parametric kernel estimation constructed from $\widehat{\mathbb{P}}$. $\hfill \clubsuit$ 
\end{remark}

\subsection{Worst-Case Distribution} \label{subsec-CSDRO-worstcase}

As demonstrated in the proof of Theorem~\ref{theo-strong-duality} (see Appendix~\ref{ecsec-strong-dual} for details), Theorem~\ref{theo-worst-case-distri} characterizes the worst-case distribution of the primal problem $v_{\mathrm{P}}$.
\begin{theorem}\label{theo-worst-case-distri}
\textnormal{\textbf{(Worst-case Distribution of Causal-SDRO.}}
    Under Assumption~\ref{assum-1}, suppose the dual problem $v_{\mathrm{D}}$ has an optimal solution $\lambda^* > 0$. Then the dual optimal solution $\lambda^*$ is unique, and the density of worst-case distribution $\mathbb{P}^*$ of the primal problem $v_{\mathrm{P}}$ is given by 
    \begin{equation}\label{wcd-csdro}
        \frac{\mathrm{d}\mathbb{P}^* (\boldsymbol{x},\boldsymbol{y}) }{\mathrm{d} \nu_{\mathcal{X}}(\boldsymbol{x})\mathrm{d} \nu_{\mathcal{Y}}(\boldsymbol{y})} = \mathbb{E}_{(\widehat{\boldsymbol{x}},\widehat{\boldsymbol{y}})\sim\widehat{\mathbb{P}}}\Big[\alpha_{\widehat{\boldsymbol{x}}}\cdot \beta_{\widehat{\boldsymbol{x}}, \widehat{\boldsymbol{y}}, \boldsymbol{x}}\cdot e^{r(\widehat{\boldsymbol{x}}, \boldsymbol{x}) + s(\widehat{\boldsymbol{x}},\widehat{\boldsymbol{y}}, \boldsymbol{x},\boldsymbol{y})} \Big],
    \end{equation}
    where $\alpha_{\widehat{\boldsymbol{x}}} = \Big(\int_{\mathcal{X}} e^{r(\widehat{\boldsymbol{x}}, \boldsymbol{x})}  \mathrm{d} \nu_{\mathcal{X}}(\boldsymbol{x})\Big)^{-1}$,  $\beta_{\widehat{\boldsymbol{x}}, \widehat{\boldsymbol{y}}, \boldsymbol{x}}=\left(\int_{\mathcal{Y}} e^{s(\widehat{\boldsymbol{x}},\widehat{\boldsymbol{y}}, \boldsymbol{x},\boldsymbol{y})}\mathrm{d} \nu_{\mathcal{Y}}(\boldsymbol{y}) \right)^{-1}$,
    \begin{equation}\nonumber
        s(\widehat{\boldsymbol{x}},\widehat{\boldsymbol{y}}, \boldsymbol{x},\boldsymbol{y}) = \frac{\Psi(f(\boldsymbol{x}),\boldsymbol{y})-\lambda^* c_p((\widehat{\boldsymbol{x}},\widehat{\boldsymbol{y}}), (\boldsymbol{x},\boldsymbol{y}))}{\lambda^*\epsilon},
    \end{equation}
    and
    \begin{equation}\nonumber
        r(\widehat{\boldsymbol{x}}, \boldsymbol{x}) = \mathbb{E}_{\widehat{\boldsymbol{y}} \sim  \widehat{\mathbb{P}}_{\widehat{\boldsymbol{Y}}|\widehat{\boldsymbol{X}}=\widehat{\boldsymbol{x}}}}\Big[ \log\,  \int _{\mathcal{Y}}  e^{s(\widehat{\boldsymbol{x}},\widehat{\boldsymbol{y}}, \boldsymbol{x},\boldsymbol{y})}  \mathrm{d} \nu_{\mathcal{Y}}(\boldsymbol{y}) \Big].
    \end{equation}
\end{theorem}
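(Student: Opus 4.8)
The whole argument rests on the strong duality $v_{\mathrm P}=v_{\mathrm D}$ from Theorem~\ref{theo-strong-duality} and on the Gibbs structure already visible in the dual objective \eqref{causal-sdro-dual}. The plan is to (a) show the dual minimizer $\lambda^*$ is unique, and (b) exhibit an explicit primal-feasible distribution $\mathbb{P}^*$, built from $\lambda^*$, whose worst-case objective value equals $v_{\mathrm D}$; by weak duality this identifies $\mathbb{P}^*$ as a worst-case distribution and establishes the density formula \eqref{wcd-csdro}.

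\emph{Uniqueness of $\lambda^*$.} I would argue that the dual objective $\lambda\mapsto\lambda\rho^p+\mathbb{E}_{\widehat{\boldsymbol{x}}}\big[\lambda\epsilon\log\int_{\mathcal X}\exp(g(\widehat{\boldsymbol{x}},\boldsymbol{x},\lambda)/(\lambda\epsilon))\,\mathrm d\nu_{\mathcal X}(\boldsymbol{x})\big]$ is strictly convex on the (nonempty, by Condition~\ref{condit-1}) interval where it is finite. After using the cancellation $\lambda c_p/(\lambda\epsilon)=c_p/\epsilon$, the inner integral in $g$ becomes $\int_{\mathcal Y}\exp(\Psi(f(\boldsymbol{x}),\boldsymbol{y})/(\lambda\epsilon))\exp(-c_p/\epsilon)\,\mathrm d\nu_{\mathcal Y}$, so each layer is a perspective-type transform $s\mapsto s\log\mathbb E[e^{Z/s}]$ of a log-moment-generating function; such transforms are convex, and strictly convex along the ray unless $Z$ is almost surely constant, and the nested composition inherits this. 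Non-degeneracy of $\Psi(f(\cdot),\cdot)$ together with the strictly positive transport cost excludes the degenerate case, yielding a unique minimizer. (If the objective were only convex, the same conclusion follows from the stationarity condition at $\lambda^*>0$ and strict monotonicity of its derivative there.)

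\emph{Construction of $\mathbb{P}^*$.} I would revisit the inequalities used to prove $v_{\mathrm P}=v_{\mathrm D}$ and identify the causal plan $\gamma^*$ that makes them all tight at $\lambda=\lambda^*$. Conditioned on a source atom $(\widehat{\boldsymbol{x}},\widehat{\boldsymbol{y}})$, the entropy-penalized inner problem over the transport kernel is a relative-entropy (KL) projection, whose minimizer on the \emph{unconstrained} kernel space is proportional to $e^{s(\widehat{\boldsymbol{x}},\widehat{\boldsymbol{y}},\boldsymbol{x},\boldsymbol{y})}$ against $\nu_{\mathcal X}\otimes\nu_{\mathcal Y}$; imposing the causal restriction $\boldsymbol{X}\perp\widehat{\boldsymbol{Y}}\mid\widehat{\boldsymbol{X}}$ — a linear constraint on $\gamma$, so the feasible set stays convex — forces the $\boldsymbol{x}$-marginal of this kernel not to depend on $\widehat{\boldsymbol{y}}$, which is obtained by integrating out $\boldsymbol{y}$, taking the conditional expectation over $\widehat{\boldsymbol{y}}\mid\widehat{\boldsymbol{x}}$ (this produces $r(\widehat{\boldsymbol{x}},\boldsymbol{x})$), and renormalizing via $\alpha_{\widehat{\boldsymbol{x}}}$ and $\beta_{\widehat{\boldsymbol{x}},\widehat{\boldsymbol{y}},\boldsymbol{x}}$. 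The conditional density of $\gamma^*$ w.r.t. $\nu_{\mathcal X}\otimes\nu_{\mathcal Y}$ is then $\alpha_{\widehat{\boldsymbol{x}}}\,\beta_{\widehat{\boldsymbol{x}},\widehat{\boldsymbol{y}},\boldsymbol{x}}\,e^{r(\widehat{\boldsymbol{x}},\boldsymbol{x})+s(\widehat{\boldsymbol{x}},\widehat{\boldsymbol{y}},\boldsymbol{x},\boldsymbol{y})}$, and setting $\mathbb{P}^*$ to be its second marginal, $\mathrm d\mathbb{P}^*/\mathrm d(\nu_{\mathcal X}\otimes\nu_{\mathcal Y})=\mathbb E_{(\widehat{\boldsymbol{x}},\widehat{\boldsymbol{y}})\sim\widehat{\mathbb{P}}}[\cdots]$, reproduces \eqref{wcd-csdro}. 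I would then verify: (i) $\mathbb{P}^*\ll\nu_{\mathcal X}\otimes\nu_{\mathcal Y}$ with the stated density finite $\widehat{\mathbb P}\otimes\nu_{\mathcal X}\otimes\nu_{\mathcal Y}$-a.e., from Assumption~\ref{assum-1}\ref{assum-1-2} and Condition~\ref{condit-1}; (ii) $\gamma^*$ is a causal plan with marginals $\widehat{\mathbb P}$ and $\mathbb{P}^*$ attaining $R_p(\widehat{\mathbb P},\mathbb{P}^*)^p=\rho^p$, using complementary slackness since $\lambda^*>0$; and (iii) $\mathbb E_{\mathbb{P}^*}[\Psi(f(\boldsymbol{x}),\boldsymbol{y})]=v_{\mathrm D}$ by substituting the Gibbs form and collapsing the log-sum-exp terms, so that $v_{\mathrm P}\ge\mathbb E_{\mathbb{P}^*}[\Psi]=v_{\mathrm D}\ge v_{\mathrm P}$.

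\emph{Main obstacle.} The delicate step is (ii): verifying that $\gamma^*$ is optimal for the CSD \emph{between $\widehat{\mathbb P}$ and its own second marginal}, i.e.\ that $\mathbb{P}^*$ is genuinely primal feasible with the constraint binding, rather than merely a distribution reachable by some causal plan. This requires the KL-projection characterization of the entropic transport problem restricted to the conditional-independence subspace, together with careful bookkeeping that the two successive normalizations $\alpha_{\widehat{\boldsymbol{x}}}$ (enforcing the $\widehat{\boldsymbol{y}}$-independent $\boldsymbol{x}$-marginal) and $\beta_{\widehat{\boldsymbol{x}},\widehat{\boldsymbol{y}},\boldsymbol{x}}$ (making the $\boldsymbol{y}$-conditional a probability density) are exactly what the causal constraint demands; the regularity of the conditional kernels granted by Assumption~\ref{assum-1}\ref{assum-1-4} is what makes this disintegration-and-renormalization well-defined. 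Establishing uniqueness of $\lambda^*$ in genuinely degenerate corner cases is a secondary technical point, handled by the convexity discussion above.
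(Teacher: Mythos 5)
Your proposal follows essentially the same route as the paper: uniqueness of $\lambda^*$ via strict convexity of the dual objective in $\lambda$ (the paper computes $\nabla_\lambda^2 v(\lambda)\ge 0$ explicitly via Cauchy--Schwarz, with equality only when $\Psi$ is constant), and the worst-case law is obtained by writing down the Gibbs-form causal plan with conditional density $\alpha_{\widehat{\boldsymbol{x}}}\,\beta_{\widehat{\boldsymbol{x}},\widehat{\boldsymbol{y}},\boldsymbol{x}}\,e^{r+s}$, invoking the first-order optimality condition at $\lambda^*$ for feasibility, and closing the sandwich $v_{\mathrm P}\ge\mathbb{E}_{\mathbb{P}^*}[\Psi]=v_{\mathrm D}\ge v_{\mathrm P}$, exactly as in the strong-duality argument the paper reuses. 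One remark: your ``main obstacle'' is lighter than you fear---since any feasible causal plan upper-bounds $R_p(\widehat{\mathbb P},\mathbb{P}^*)^p$, the constructed plan together with the stationarity condition already yields $R_p(\widehat{\mathbb P},\mathbb{P}^*)^p\le\rho^p$, which is all the sandwich needs, so you never have to show that this plan attains the CSD infimum.
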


We provide the proof of Theorem~\ref{theo-worst-case-distri} in Appendix~\ref{ecsec-theo-worst}. 
Theorem~\ref{theo-worst-case-distri} reveals that the worst-case distribution $\mathbb{P}^*$ is a mixture of Gibbs distributions, and Causal-SDRO spreads probability mass continuously over the support of the reference measure, governed by the regularization parameter $\epsilon$.
This result leads to the following Corollary~\ref{corollary-worst-case}.
\begin{corollary}\label{corollary-worst-case}
    \textnormal{\textbf{(Worst-case Distribution of the Soft-constrained Causal-SDRO).}}
    Under Assumption~\ref{assum-1}, the density of worst-case distribution $\mathbb{P}_{\lambda}^*$ of the inner problem of \eqref{soft-causal-sdro} for any $\lambda>0$ is given by 
    \begin{equation}
        \frac{\mathrm{d}\mathbb{P}_{\lambda}^* (\boldsymbol{x},\boldsymbol{y}) }{\mathrm{d} \nu_{\mathcal{X}}(\boldsymbol{x})\mathrm{d} \nu_{\mathcal{Y}}(\boldsymbol{y})} = \mathbb{E}_{(\widehat{\boldsymbol{x}},\widehat{\boldsymbol{y}})\sim\widehat{\mathbb{P}}}\Big[\alpha_{\widehat{\boldsymbol{x}}}(\lambda) \cdot \beta_{\widehat{\boldsymbol{x}}, \widehat{\boldsymbol{y}}, \boldsymbol{x}}(\lambda) \cdot e^{r^{\prime}(\lambda, \widehat{\boldsymbol{x}}, \boldsymbol{x}) + s^{\prime}(\lambda, \widehat{\boldsymbol{x}},\widehat{\boldsymbol{y}}, \boldsymbol{x},\boldsymbol{y})} \Big],
    \end{equation}
    where $\alpha_{\widehat{\boldsymbol{x}}}(\lambda) = \Big(\int_{\mathcal{X}} e^{r^{\prime}(\lambda, \widehat{\boldsymbol{x}}, \boldsymbol{x})}  \mathrm{d} \nu_{\mathcal{X}}(\boldsymbol{x})\Big)^{-1}$,  $\beta_{\widehat{\boldsymbol{x}}, \widehat{\boldsymbol{y}}, \boldsymbol{x}}(\lambda) =\left(\int_{\mathcal{Y}} e^{s^{\prime}(\lambda, \widehat{\boldsymbol{x}},\widehat{\boldsymbol{y}}, \boldsymbol{x},\boldsymbol{y})}\mathrm{d} \nu_{\mathcal{Y}}(\boldsymbol{y}) \right)^{-1}$,
    \begin{equation}\nonumber
        s^{\prime}(\lambda, \widehat{\boldsymbol{x}},\widehat{\boldsymbol{y}}, \boldsymbol{x},\boldsymbol{y}) = \frac{\Psi(f(\boldsymbol{x}),\boldsymbol{y})-\lambda c_p((\widehat{\boldsymbol{x}},\widehat{\boldsymbol{y}}), (\boldsymbol{x},\boldsymbol{y}))}{\lambda\epsilon},
    \end{equation}
    and
    \begin{equation}\nonumber
        r^{\prime}(\lambda, \widehat{\boldsymbol{x}}, \boldsymbol{x}) = \mathbb{E}_{\widehat{\boldsymbol{y}} \sim  \widehat{\mathbb{P}}_{\widehat{\boldsymbol{Y}}|\widehat{\boldsymbol{X}}=\widehat{\boldsymbol{x}}}}\Big[ \log\,  \int _{\mathcal{Y}}  e^{s^{\prime}(\lambda, \widehat{\boldsymbol{x}},\widehat{\boldsymbol{y}}, \boldsymbol{x},\boldsymbol{y})}  \mathrm{d} \nu_{\mathcal{Y}}(\boldsymbol{y}) \Big].
    \end{equation}
\end{corollary}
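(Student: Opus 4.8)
The plan is to mirror the derivation behind Theorem~\ref{theo-worst-case-distri}, observing that the dual optimum $\lambda^{*}$ enters that argument only through complementary slackness for the \emph{hard} CSD constraint; in the \emph{soft}-constrained problem~\eqref{soft-causal-sdro} the multiplier is fixed exogenously at $\lambda$, so the same chain of steps applies verbatim with $\lambda^{*}$ replaced by $\lambda$ and the step that pins down the multiplier simply omitted. First I would rewrite the inner maximization of~\eqref{soft-causal-sdro} as a single supremum over causal couplings. Because the expected loss depends on $\mathbb{P}$ only through the second marginal of an admissible coupling, $\mathbb{E}_{\mathbb{P}}[\Psi(f(\boldsymbol{x}),\boldsymbol{y})]=\mathbb{E}_{\gamma}[\Psi(f(\boldsymbol{x}),\boldsymbol{y})]$, and because $R_{p}(\widehat{\mathbb{P}},\mathbb{P})^{p}$ is itself an infimum over $\Gamma_c(\widehat{\mathbb{P}},\mathbb{P})$ carrying a minus sign, the two nested optimizations collapse into
\[
\sup_{\gamma}\ \mathbb{E}_{\gamma}\Big[\Psi(f(\boldsymbol{x}),\boldsymbol{y})-\lambda\, c_p\big((\widehat{\boldsymbol{x}},\widehat{\boldsymbol{y}}),(\boldsymbol{x},\boldsymbol{y})\big)\Big]-\lambda\epsilon\, H\big(\gamma\mid\widehat{\mathbb{P}}\otimes(\nu_{\mathcal{X}}\otimes\nu_{\mathcal{Y}})\big),
\]
the supremum being over $\gamma$ with first marginal $\widehat{\mathbb{P}}$ obeying the causal constraint $\boldsymbol{X}\perp\widehat{\boldsymbol{Y}}\mid\widehat{\boldsymbol{X}}$; the desired $\mathbb{P}_{\lambda}^{*}$ is then the second marginal of the maximizing $\gamma^{*}$, consistent with the dual~\eqref{soft-causal-sdro-dual}.

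Next I would factor $\gamma$ into nested conditionals. Assumption~\ref{assum-1}\ref{assum-1-4} supplies the regular conditional $\gamma_{(\widehat{\boldsymbol{x}},\widehat{\boldsymbol{y}})}$, and the causal constraint lets me write $\gamma_{(\widehat{\boldsymbol{x}},\widehat{\boldsymbol{y}})}(\mathrm{d}\boldsymbol{x},\mathrm{d}\boldsymbol{y})=\pi_{\widehat{\boldsymbol{x}}}(\mathrm{d}\boldsymbol{x})\,\kappa_{(\widehat{\boldsymbol{x}},\widehat{\boldsymbol{y}},\boldsymbol{x})}(\mathrm{d}\boldsymbol{y})$ with the first-stage kernel $\pi_{\widehat{\boldsymbol{x}}}$ free of $\widehat{\boldsymbol{y}}$. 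The chain rule for relative entropy then decomposes $H(\gamma\mid\widehat{\mathbb{P}}\otimes\nu_{\mathcal{X}}\otimes\nu_{\mathcal{Y}})$ additively across the $\widehat{\mathbb{P}}$-, $\pi$-, and $\kappa$-layers, turning the objective into a triple expectation that I would optimize from the inside out via the Gibbs variational principle. For fixed $(\widehat{\boldsymbol{x}},\widehat{\boldsymbol{y}},\boldsymbol{x})$, maximizing $\mathbb{E}_{\kappa}[\Psi(f(\boldsymbol{x}),\boldsymbol{y})-\lambda c_p]-\lambda\epsilon\, H(\kappa\mid\nu_{\mathcal{Y}})$ yields the exponentially tilted density $\beta_{\widehat{\boldsymbol{x}},\widehat{\boldsymbol{y}},\boldsymbol{x}}(\lambda)\, e^{s'(\lambda,\widehat{\boldsymbol{x}},\widehat{\boldsymbol{y}},\boldsymbol{x},\boldsymbol{y})}$ with optimal value $\lambda\epsilon\log\int_{\mathcal{Y}}e^{s'}\,\mathrm{d}\nu_{\mathcal{Y}}$; averaging this value over $\widehat{\boldsymbol{y}}\sim\widehat{\mathbb{P}}_{\widehat{\boldsymbol{Y}}\mid\widehat{\boldsymbol{X}}=\widehat{\boldsymbol{x}}}$ produces exactly $\lambda\epsilon\, r'(\lambda,\widehat{\boldsymbol{x}},\boldsymbol{x})$, and a second application of the same principle over $\pi_{\widehat{\boldsymbol{x}}}$ gives the density $\alpha_{\widehat{\boldsymbol{x}}}(\lambda)\, e^{r'(\lambda,\widehat{\boldsymbol{x}},\boldsymbol{x})}$. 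Multiplying the two optimal kernels and integrating $(\widehat{\boldsymbol{x}},\widehat{\boldsymbol{y}})$ against $\widehat{\mathbb{P}}$ to extract the second marginal of $\gamma^{*}$ reproduces the claimed density, while strong duality of the inner problem — asserted in Remark~\ref{remark-soft-cons} and established by the argument of Theorem~\ref{theo-strong-duality} — certifies that this $\mathbb{P}_{\lambda}^{*}$ is the actual maximizer.

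The Gibbs-variational computations and the entropy chain rule are routine; the points I expect to demand most care are (i) measurable selection and interchange — checking that the layerwise optimizers $\pi_{\widehat{\boldsymbol{x}}}^{*}$ and $\kappa_{(\widehat{\boldsymbol{x}},\widehat{\boldsymbol{y}},\boldsymbol{x})}^{*}$ can be chosen measurably, that they glue into a genuine causal coupling, and that the supremum is attained; and (ii) finiteness — the displayed formula presumes the log-partition integrals are finite, which is precisely Condition~\ref{condit-1} read at the fixed multiplier $\lambda$, so I would either invoke that condition or note that otherwise the soft-constrained value is $+\infty$ and no worst-case distribution exists. Since both matters are already dispatched inside the proof of Theorem~\ref{theo-worst-case-distri}, the most economical write-up is to observe that that proof in fact derives the worst-case conditional kernels for an arbitrary admissible multiplier, and that Corollary~\ref{corollary-worst-case} is simply the specialization in which $\lambda$ is held fixed instead of being set to $\lambda^{*}$.
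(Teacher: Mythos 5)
Your proposal is correct and follows essentially the same route as the paper: the paper's proof of Theorem~\ref{theo-strong-duality} already works with the penalized value $v_{\mathrm{P}}(\lambda)$, decomposes the causal coupling and the relative entropy via the chain rule (with the tower property and the interchangeability principle), applies the Fenchel/Gibbs variational principle layer by layer to obtain the tilted kernels $\beta\,e^{s'}$ and $\alpha\,e^{r'}$, and constructs the optimizing coupling $\gamma_*$ whose second marginal gives exactly the stated density; the corollary is then the specialization at a fixed $\lambda$ rather than $\lambda^*$, with measurability handled by Lemma~\ref{lem-meas} and finiteness by Condition~\ref{condit-1}, just as you note.
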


We next compare $\mathbb{P}_{\lambda}^*$ with the worst-case distribution formulation of soft-constrained Sinkhorn DRO (SDRO), Causal Wasserstein DRO (Causal-WDRO), and KL-divergence-based DRO (KL-DRO) models in contextual settings, denoted by $\mathbb{P}_{\lambda,\textnormal{SDRO}}^*$, $\mathbb{P}_{\lambda, \textnormal{Causal-WDRO}}^*$, and $\mathbb{P}^*_{\lambda,\textnormal{KL-DRO}}$. 
We summarize the formulations of these models and worst-case distributions in Appendix~\ref{ecsec-wc-distribution}. 

\begin{figure}[!htb]
    \centering
    \includegraphics[width=0.8\linewidth]{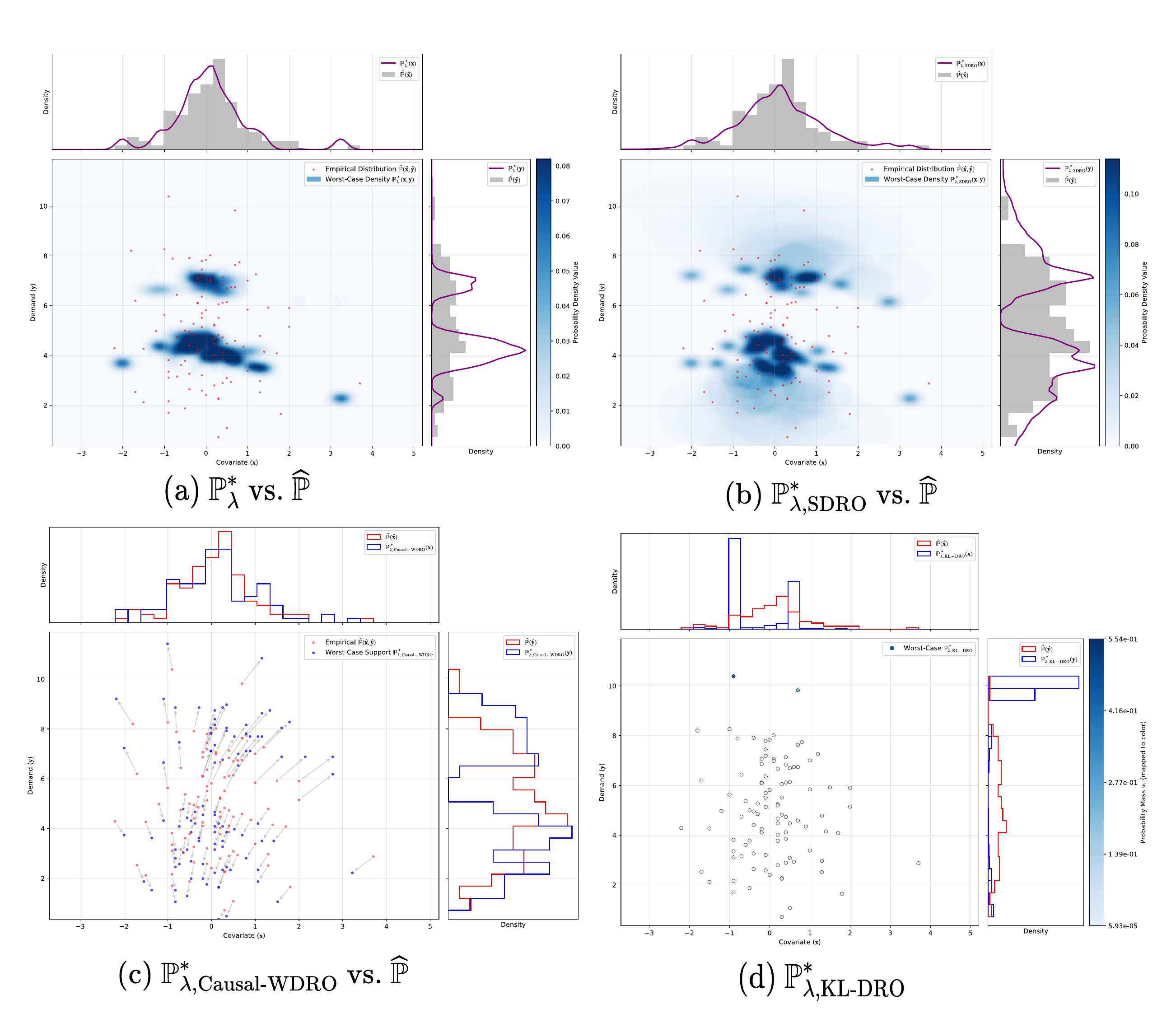}
  \caption{Structure of distributions $\widehat{\mathbb{P}}$ (red points in 2a-2c), $\mathbb{P}_{\lambda}^*$, $\mathbb{P}_{\lambda, \textnormal{SDRO}}^*$, $\mathbb{P}_{\lambda, \textnormal{Causal-WDRO}}^*$, and $\mathbb{P}_{\lambda, \textnormal{KL-DRO}}^*$  ($p=2$, $\lambda=0.5$, $\epsilon=0.05$, and sample size $N=100$)}
  \label{fig:worst-case}
\end{figure}

\setcounter{example}{0}
\begin{example}
    \textnormal{\textbf{(Revisited). }} Consider a single-product feature-based newsvendor problem with one covariate, that is, $d_x=d_y=d_z=1$. Consider a true decision rule $f=f_{\text{true}}$, in Figure~\ref{fig:worst-case}, we show the structure of distributions $\widehat{\mathbb{P}}$, $\mathbb{P}_{\lambda}^*$, $\mathbb{P}_{\lambda, \textnormal{SDRO}}^*$, $\mathbb{P}_{\lambda, \textnormal{Causal-WDRO}}^*$, and $\mathbb{P}_{\lambda, \textnormal{KL-DRO}}^*$, as well as their marginal probability density or mass. 
    In Figures~\ref{fig:worst-case}(a)-\ref{fig:worst-case}(c), historical data points (that is, empirical distribution $\widehat{\mathbb{P}}$) are marked in red. 
    Figure~\ref{fig:worst-case}(a) shows the structure of $\mathbb{P}_{\lambda}^*$, while Figure~\ref{fig:worst-case}(b) shows the corresponding structure of $\mathbb{P}_{\lambda, \textnormal{SDRO}}^*$. Comparing Figure~\ref{fig:worst-case}(a) with Figure~\ref{fig:worst-case}(b), the probability density of $\mathbb{P}^*_{\lambda}$ is more concentrated than that of $\mathbb{P}^*_{\lambda, \mathrm{SDRO}}$. This is because the causal transport constraint prevents transport plans that violate the conditional independence between $\boldsymbol{x}$ and $\boldsymbol{\widehat{y}}$ given $\boldsymbol{\widehat{x}}$, which allows Causal-SDRO to avoid overly conservative results.  
    In Figure~\ref{fig:worst-case}(c), for each point in $\mathbb{P}_{\lambda, \textnormal{Causal-WDRO}}^*$, we mark how they are transported from the empirical distribution with arrows. In Figure~\ref{fig:worst-case}(d), as $\mathbb{P}_{\lambda, \textnormal{KL-DRO}}^*$ has the same support as $\widehat{\mathbb{P}}$, we show the structure of $\mathbb{P}_{\lambda, \textnormal{KL-DRO}}^*$ by color depth, where a darker color of a point means a greater probability mass.     
     $\hfill \clubsuit$ 
\end{example}

The visualization in Example~\ref{example-news} corroborates our theoretical findings regarding the structure of worst-case distributions. 
As illustrated, the worst-case distributions for Causal-WDRO and KL-DRO are inherently discrete (supported on finite points), while for Causal-SDRO and SDRO, the entropic regularization leads to continuous worst-case distributions. Crucially, distinguishing Causal-SDRO from standard SDRO, our worst-case distribution strictly remains causally consistent, thereby avoiding causally implausible robustness scenarios. 

\section{ Soft Regression Forest Decision Rule } \label{sec-srf}

Optimizing policies in a general measurable function space is computationally challenging due to the infinite-dimensional functional optimization involved. 
Instead, we consider a parametric decision rule approach $f:~\mathcal{X}\to\mathcal{Z}$ that approximates the optimal mapping between covariates and decisions. 
In this section, we propose a parametric and interpretable Soft Regression Forest (SRF) decision rule. 
Section~\ref{subsec-srf-rule} introduces the structure of this decision rule, and Section~\ref{subsec-srf-interpret} discusses its intrinsic interpretability. 

\subsection{ Structure of the Soft Regression Forest } \label{subsec-srf-rule}

In practice, the decision-making process may follow a hierarchical and interpretable structure, such as an `if-then' structure, rather than a fixed and continuous function. 
To capture this structure, unlike the traditional deep-learning-based methods, the proposed SRF decision rule is based on the principles of soft decision trees~\citep{frosst2017distilling} and ensemble learning. Compared with the traditional \textit{hard} decision-tree-based methods, SRF is parametric, differentiable, and can be end-to-end trained by gradient-based algorithms, while maintaining the intrinsic interpretability. 

The SRF consists of an ensemble of $T$ full binary Soft Regression Trees (SRTs). For the $t$-th ($t\in \left[T\right]$) tree with depth $D(t)$, each leaf node $l\in \left[2^{D(t)}\right]$ (that is, a node has no child node) corresponds to a decision $\boldsymbol{\pi}_{l,t} \in \mathbb{R}_{+}^{d_z}$ and a unique route from the root node in the tree.
For each route of leaf node $l\in \left[2^{D(t)}\right]$ for any $t\in \left[T\right]$, denote the left-hand-side and right-hand-side node sets on the route as $\mathcal{L}(l)$ and $\mathcal{R}(l)$, respectively, and $\Lambda (l) := \mathcal{L}(l) \cup \mathcal{R}(l)$. 

Distinct from hard regression tree method that select a determined child-node at each branch, in an SRT $t$, at each internal node $j \in [2^{D(t)}-1]$ in SRT $t$, a gating function $\text{S}(\boldsymbol{w}_{j,t}^{\top}\boldsymbol{x} + b_{j,t})$ determines the probability of directing the input $\boldsymbol{x}$ to the left child, where $\text{S}(\cdot)$ represents the Sigmoid function and $\boldsymbol{w}_{j,t} \in \mathbb{R}^{d_x} $, $b_{j,t} \in \mathbb{R}$. 
Consequently, the probability that a given input covariate $\boldsymbol{x}$ reaches leaf node $l$ (corresponds to the decision $\boldsymbol{\pi}_{l,t}$) in tree $t$ is given by 
\begin{equation}\nonumber
    p_{l,t} (\boldsymbol{x}) = \prod_{i \in \mathcal{L}(l)} \text{S}(\boldsymbol{w}_{i,t}^{\top}\boldsymbol{x} + b_{i,t}) \cdot \prod_{j \in \mathcal{R}(l)} \Big(1 - \text{S}(\boldsymbol{w}_{j,t}^{\top}\boldsymbol{x} + b_{j,t}) \Big) . 
\end{equation}
The final output of the SRF is the ensemble average of the expected decisions from all trees, and thus the SRF decision rule $f_{\boldsymbol{\theta}}^{\text{SRF}}: \mathcal{X} \to \mathcal{Z}$ is explicitly defined as 
\begin{equation}
    \Big[f_{\boldsymbol{\theta}}^{\text{SRF}}(\boldsymbol{x})\Big]_k := \frac{1}{T}\sum_{t=1}^{T} \sum_{l=1}^{2^{D(t)}} p_{l,t} (\boldsymbol{x}) \cdot \Big[ \boldsymbol{\pi}_{l,t} \Big]_k , \quad \forall k \in \left [d_z\right],  \tag{\text{SRF}} 
\end{equation} 
where $[\boldsymbol{\pi}_{l,t}]_k$ represents the $k$-th decision for any $k \in \left[d_z\right]$, 
the vector $\boldsymbol{\theta} \in \Theta$ is the collection of all individual parameters $\{\boldsymbol{w}_{i,t}, \boldsymbol{b}_{i,t}, \boldsymbol{\pi}_{l,t}\}$ for each non-leaf node $i \in \Lambda (l)$, leaf node $l \in \left[2^{D(t)}\right]$ and tree $t\in \left[T\right]$. This decision rule needs to train $(d_x+1)\cdot \sum_{t=1}^{T}  (2^{D(t)}-1) +d_z\cdot \sum_{t=1}^{T} 2^{D(t)}$ parameters in total. 
\begin{figure}
    \centering
    \includegraphics[width=1.0\linewidth]{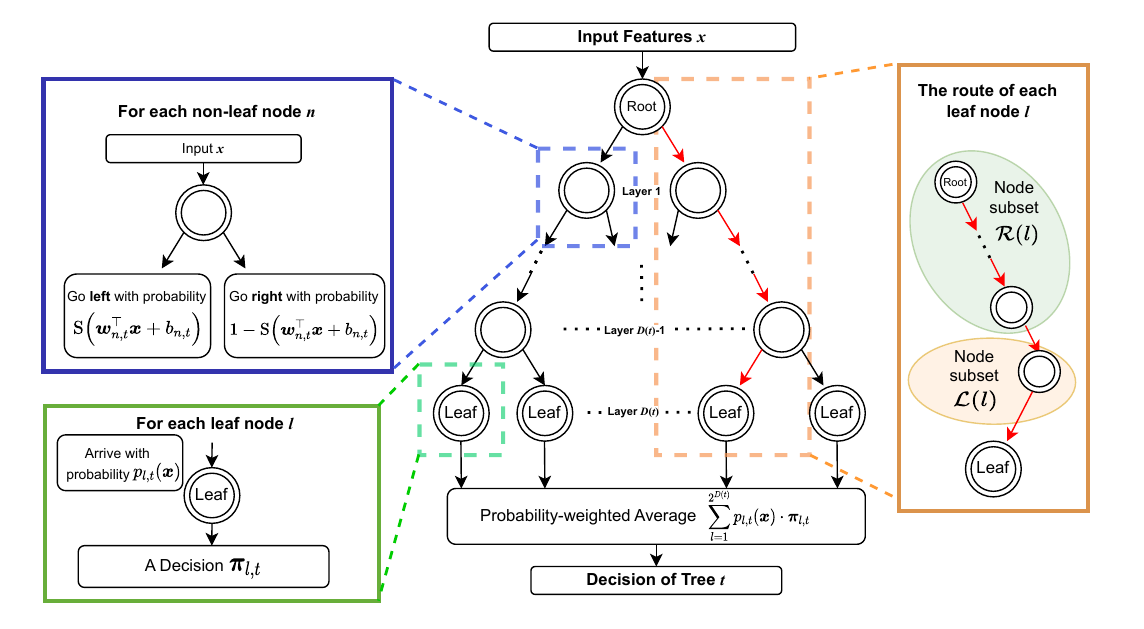}
    \caption{Structure of a soft regression tree $t$ with $D(t)$ depth}
    \label{fig: SRT}
\end{figure}

Figure~\ref{fig: SRT} shows the structure of a single SRT $t \in [T]$ with $D(t)$ layers (let the root node be in layer 0). As illustrated, on the left-hand side, we show the input and output structure of each node, while on the right-hand side, we show the route and node subsets $\mathcal{L}(l)$ and $\mathcal{R}(l)$ for each leaf node. 
The SRF employs an ensemble of SRTs to mitigate the high variance and potential overfitting risks associated with individual trees~\citep{breiman2001random}.

Compared with existing deep-learning-based decision rules, this proposed decision rule with a hierarchical structure and probabilistic decisions possesses intrinsic interpretability. 
In SRF, each SRT is similar to a distilled non-fully connected multi-layer neural network where only the important nodes are connected, which enhances the efficiency of feature representation. 

Recall that~\citet{bertsimas2020predictive} and~\citet{kallus2023stochastic} introduce tree-based models for solving CSO. 
In their framework, they use hard split decision trees to estimate the conditional local weights in the weighted sample average approximation method. 
In comparison, the proposed parametric SRF decision rule makes end-to-end decisions based on covariates and is applicable for both CSO and contextual DRO. 

\subsection{Interpretability of the Soft Regression Forest } \label{subsec-srf-interpret}

In this subsection, we demonstrate the intrinsic interpretability of SRF by its transparent structure and stability for decision-making. 

As a tree-based model, SRF inherits the transparent structure from the traditional methods. 
\begin{remark} 
\textnormal{\textbf{(Asymptotic Consistency to Hard Regression Forest). }}
    Adding a scaling parameter $\tau$ to the linear transformation part at each internal node in all SRTs, that is, $\text{S}\Big( (\boldsymbol{w}^{\top}\boldsymbol{x} + b)/{\tau}\Big)$ for all $j(t) \in [2^{D(t)}-1]$, we obtain a variant of SRF decision rule termed $f_{\theta, \tau}^{SRF}(\boldsymbol{x})$ where each leaf node $l\in[2^{D(t)}]$ in tree $t$ can be reached with probability $p_{l, t, \tau}$. 
    The proposed SRF $f_{\theta}^{SRF}(\boldsymbol{x})$ shown in Section~\ref{subsec-srf-rule} is a special case with $\tau = 1$.
    For any input covariate $\boldsymbol{x}$ not lying on any decision boundary, that is, $\{\boldsymbol{x} \in \mathcal{X} \mid \boldsymbol{w}_{j,t}^{\top}\boldsymbol{x} + \boldsymbol{b}_{j,t} \neq 0, \forall j, t\}$, when $\tau \to 0$, the structure of SRF converges to a hard regression forest, which implies that the sigmoid function takes value only in $\{0,1\}$ and thus only one deterministic leaf node can be selected as the final decision, that is, $ \sum_{i=1}^{2^{D(t)}} p_{i, t, \tau} = 1$ and $\lim_{\tau \to 0}\, p_{i, t, \tau} \in \{0,1\}$. 
    Unlike traditional univariate decision trees, all trees in the resulting hard regression forest provide multivariate splits at all nodes, which improve the accuracy and interpretability by reducing tree depth~\citep{bertsimas2017optimal, bertsimas2021voice}.  $\hfill \clubsuit$ 
\end{remark}

Although the SRT theoretically aggregates outputs across all routes, probabilities for weakly correlated routes effectively vanish as they are calculated as products of several Sigmoid functions. 
Consequently, the final decision is typically dominated by a few high-probability routes. 
This inherent sparsity enhances interpretability, enabling decision-makers to easily trace the primary routes driving the final prescription.
We provide empirical evidence for this in Section~\ref{subsec-results-portfolio}. 

As traditional decision trees allow for tracing the decision process via routes and identifying the impact of each feature, the SRT can also explicitly trace the influence of features and their interaction effects along each route. 
\begin{proposition} \label{prop-srt-derivation}
    \textnormal{ \textbf{(Traceability of Decisions in SRF). } }
    In SRF, given an input covariate $\boldsymbol{x} \in \mathbb{R}^{d_x}$, for each route selected with probability $p_{l, t} (\boldsymbol{x})$, the marginal contribution of each feature and the interaction effects among features along that route are explicitly characterized by 
    \begin{align*}\nonumber
        \frac{\partial p_{l, t} (\boldsymbol{x})}{\partial x_j} & = p_{l, t} (\boldsymbol{x}) \cdot \sum_{i \in \Lambda (l)} \psi_{i,t} \Big[ \boldsymbol{w}_{i, t} \Big]_{j},  \\
        \frac{\partial^2 p_{l, t}(\boldsymbol{x})}{\partial x_j\partial x_k} & =
            p_{l, t} (\boldsymbol{x}) \cdot \Bigg [ \Big( \sum_{i \in \Lambda (l)}\psi_{i,t} \Big[ \boldsymbol{w}_{i, t} \Big]_{j} \Big)\Big( \sum_{i \in \Lambda (l)}\psi_{i,t} \Big[ \boldsymbol{w}_{i, t} \Big]_{k} \Big) \, \, - \\
            & \quad \quad \quad \quad \quad 
            \sum_{i \in \Lambda (l)} \text{S}\Big( \boldsymbol{w}_{i,t}^{\top}\boldsymbol{x} + b_{i,t}\Big) \Big( 1- \text{S}\Big(\boldsymbol{w}_{i,t}^{\top}\boldsymbol{x} + b_{i,t}\Big)  \Big) \Big[ \boldsymbol{w}_{i, t} \Big]_{j} \Big[ \boldsymbol{w}_{i, t} \Big]_{k} \Bigg ], \\
        \forall & j, k \in [d_x], \, l\in [2^{D(t)}], \, t \in T,
    \end{align*}
    where for each $i \in \Lambda (l), l \in [2^{D(t)}], t \in T$, 
    \begin{equation}\nonumber
        \psi_{i,t} := \begin{cases} 
        1 - \text{S}\Big( \boldsymbol{w}_{i,t}^{\top}\boldsymbol{x} + b_{i,t}\Big), & \text{if route $ l$ goes left at node }i; \\ 
        -\text{S}\Big( \boldsymbol{w}_{i,t}^{\top}\boldsymbol{x} + b_{i,t}\Big),  & \text{if route $ l$ goes right at node }i. 
        \end{cases}
    \end{equation}
\end{proposition}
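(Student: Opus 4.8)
To prove Proposition~\ref{prop-srt-derivation} I would rely on logarithmic differentiation, since the leaf-reaching probability $p_{l,t}(\boldsymbol{x})$ is a finite \emph{product} of factors of the form $\text{S}(\boldsymbol{w}_{i,t}^{\top}\boldsymbol{x}+b_{i,t})$ for $i\in\mathcal{L}(l)$ and $1-\text{S}(\boldsymbol{w}_{i,t}^{\top}\boldsymbol{x}+b_{i,t})$ for $i\in\mathcal{R}(l)$. Since $\text{S}(z)\in(0,1)$ for every finite $z$ and each $z_i:=\boldsymbol{w}_{i,t}^{\top}\boldsymbol{x}+b_{i,t}$ is affine in $\boldsymbol{x}$, the map $p_{l,t}$ is strictly positive and $C^{\infty}$ on $\mathcal{X}$, so $\log p_{l,t}$ is well defined and all the differentiations below are legitimate. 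The first step is the elementary identity $\text{S}'(z)=\text{S}(z)(1-\text{S}(z))$, which gives $\partial_{x_j}\log \text{S}(z_i)=(1-\text{S}(z_i))[\boldsymbol{w}_{i,t}]_j$ and $\partial_{x_j}\log(1-\text{S}(z_i))=-\text{S}(z_i)[\boldsymbol{w}_{i,t}]_j$. Summing these over $i\in\Lambda(l)=\mathcal{L}(l)\cup\mathcal{R}(l)$ and noting that the left/right cases are exactly the two branches in the definition of $\psi_{i,t}$ yields $\partial_{x_j}\log p_{l,t}(\boldsymbol{x})=\sum_{i\in\Lambda(l)}\psi_{i,t}[\boldsymbol{w}_{i,t}]_j$; multiplying through by $p_{l,t}(\boldsymbol{x})$ gives the claimed first-order formula.

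For the Hessian I would differentiate the relation $\partial_{x_j}p_{l,t}=p_{l,t}\sum_{i\in\Lambda(l)}\psi_{i,t}[\boldsymbol{w}_{i,t}]_j$ with respect to $x_k$ by the product rule. The term coming from differentiating the prefactor $p_{l,t}$ reproduces, via the first-order formula again, the product $p_{l,t}\big(\sum_{i}\psi_{i,t}[\boldsymbol{w}_{i,t}]_j\big)\big(\sum_{i}\psi_{i,t}[\boldsymbol{w}_{i,t}]_k\big)$. The remaining term requires $\partial_{x_k}\psi_{i,t}$, and the key observation is that in \emph{both} branch cases, $\psi_{i,t}=1-\text{S}(z_i)$ and $\psi_{i,t}=-\text{S}(z_i)$, the derivative collapses to the same expression $-\text{S}(z_i)(1-\text{S}(z_i))[\boldsymbol{w}_{i,t}]_k$, so the sign of $\psi_{i,t}$ does not carry over to its derivative. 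Substituting this into the product rule and factoring out $p_{l,t}(\boldsymbol{x})$ gives precisely the stated second-order formula, with the negative ``diagonal'' correction $-\sum_{i\in\Lambda(l)}\text{S}(z_i)(1-\text{S}(z_i))[\boldsymbol{w}_{i,t}]_j[\boldsymbol{w}_{i,t}]_k$.

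The argument is essentially bookkeeping, so I do not expect a genuine obstacle; the only points needing care are the sign conventions when matching the two cases of $\psi_{i,t}$ to the log-derivatives of $\text{S}(\cdot)$ and $1-\text{S}(\cdot)$, and the (minor but pleasant) cancellation that makes $\partial_{x_k}\psi_{i,t}$ branch-independent. One should also note that, because $\log p_{l,t}$ is a \emph{sum} of per-node terms, there are no cross-node contributions at the log level, so the second derivative of $\log p_{l,t}$ is simply the sum of the per-node second derivatives; all the ``interaction'' between nodes $i$ and $i'$ enters only through the outer-product term produced by exponentiation, which is exactly what the stated formula exhibits.
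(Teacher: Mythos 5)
Your proposal is correct and follows essentially the same route as the paper's proof: logarithmic differentiation of the product form of $p_{l,t}$ to obtain the first-order formula, then the product rule on $\partial_{x_j}p_{l,t}=p_{l,t}\sum_{i\in\Lambda(l)}\psi_{i,t}[\boldsymbol{w}_{i,t}]_j$ together with the branch-independent identity $\partial_{z}\psi_{i,t}=-\text{S}(z_{i})(1-\text{S}(z_{i}))$ for the Hessian. No gaps to flag.
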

We provide the proof of Proposition~\ref{prop-srt-derivation} in Appendix~\ref{ecsec-prop-srt-derivation}. 
Although many uninterpretable deep learning models are also differentiable, their gradients are typically aggregated through opaque dense layers, obscuring the internal decision mechanism. In contrast, the SRF derivatives explicitly decompose the feature influence into specific decision nodes along the route, allowing us to exactly trace \textit{where} (at which node) and \textit{how} (direction and magnitude) a feature contributes to the decision process. 

Beyond the transparency and traceability, we next show that the mathematical smoothness of the SRF structure also contributes to interpretability. 
Define $W_{\max} := \max_{i,t} \| \boldsymbol{w}_{i,t} \|$ as the maximum norm of the internal node weights, $\Pi_{\max} := \max_{l,t} \|\boldsymbol{\pi}_{l,t}\|$ as the maximum norm of leaf vectors, and $D_{\max}$ as the maximum tree depth. 
Then, the following proposition holds.
\begin{proposition}\label{prop-srf-lip}
    \textnormal{ \textbf{(Lipschitz Continuity and Smoothness of SRF). } }
    The SRF decision rule $f_{\boldsymbol{\theta}}^{\textnormal{SRF}}: \mathcal{X} \to \mathcal{Z}$ is $L^{\textnormal{SRF}}$-Lipschitz continuous and $S^{\textnormal{SRF}}$-Lipschitz smooth on the compact set $\mathcal{X} \subseteq \mathbb{R}^{d_x}$, where 
    \begin{equation}\nonumber
    \begin{aligned}
        L^{\textnormal{SRF}} & = W_{\max} \Pi_{\max} (D_{\max}-1), \quad S^{\textnormal{SRF}} = W_{\max}^2 \Pi_{\max}(D_{\max}-1)(D_{\max}-\frac{3}{4}). 
    \end{aligned}
    \end{equation}
\end{proposition}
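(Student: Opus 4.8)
\textbf{Proof plan for Proposition~\ref{prop-srf-lip}.}

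The plan is to establish the two bounds separately, building up from the elementary Lipschitz properties of the Sigmoid gating function, then propagating through the product structure of a single route, then through the sum over leaves in one tree, and finally through the ensemble average over $T$ trees (which is an average and therefore does not inflate the constants). First I would record the two basic facts about $\mathrm{S}(\cdot)$: it is $\tfrac14$-Lipschitz (since $|\mathrm S'|=|\mathrm S(1-\mathrm S)|\le\tfrac14$) and bounded in $[0,1]$, and its derivative $\mathrm S'=\mathrm S(1-\mathrm S)$ is itself $\tfrac14$-Lipschitz-ish — more precisely $|\mathrm S''|\le\tfrac16$, but for the stated constants it will be cleaner to note $|\mathrm S'|\le\tfrac14$ and that $\mathrm S'$ has bounded variation controlled by the $\tfrac14$ bound on its building blocks. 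Each gating argument $\boldsymbol w_{j,t}^\top\boldsymbol x+b_{j,t}$ is $W_{\max}$-Lipschitz in $\boldsymbol x$, so each factor $\mathrm S(\boldsymbol w_{j,t}^\top\boldsymbol x+b_{j,t})$ (or its complement) is $\tfrac14 W_{\max}$-Lipschitz and bounded by $1$.

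For the Lipschitz (first-order) bound, I would use the product rule for Lipschitz constants: a product of $m$ functions each bounded by $1$ and each $\kappa$-Lipschitz is $m\kappa$-Lipschitz. A route to leaf $l$ in tree $t$ has $|\Lambda(l)| = D(t)$ gating factors — wait, one should be careful here: a root-to-leaf route in a full binary tree of depth $D(t)$ passes through $D(t)$ internal nodes, so $|\Lambda(l)|=D(t)$; the stated constant uses $D_{\max}-1$, which suggests the convention that the route length counts edges to the leaf minus one, or that $p_{l,t}$ is being differentiated as a function whose "effective" number of non-constant factors contributing to the worst case is $D(t)-1$. I would reconcile this with the definition in Section~\ref{subsec-srf-rule} (Figure~\ref{fig: SRT} indicates the root is in layer $0$, so a depth-$D(t)$ tree has $D(t)$ internal-node layers on each route). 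Taking the route-product $p_{l,t}$ as $(D(t)-1)\cdot\tfrac14 W_{\max}$-Lipschitz — this is where I would invoke Proposition~\ref{prop-srt-derivation}: the displayed formula $\partial_{x_j}p_{l,t}=p_{l,t}\sum_{i\in\Lambda(l)}\psi_{i,t}[\boldsymbol w_{i,t}]_j$ gives $\|\nabla_{\boldsymbol x}p_{l,t}(\boldsymbol x)\|\le p_{l,t}(\boldsymbol x)\sum_{i\in\Lambda(l)}|\psi_{i,t}|\,W_{\max}$, and since $\sum_l p_{l,t}=1$ and the $|\psi_{i,t}|$ terms telescope appropriately across the $2^{D(t)}$ leaves, the weighted sum $\sum_l p_{l,t}\sum_{i\in\Lambda(l)}|\psi_{i,t}|$ collapses to a quantity bounded by $D(t)-1$ (each of the $D(t)$ levels contributes at most $\tfrac14$ after weighting by the split probabilities, and one level is "free" because the top split contributes a full unit of probability mass split between the two subtrees — this is the source of the $-1$). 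Hence $\big\|\sum_l \nabla p_{l,t}(\boldsymbol x)[\boldsymbol\pi_{l,t}]_k\big\|\le \Pi_{\max}W_{\max}(D(t)-1)$ for each coordinate $k$, and averaging over $t$ keeps the bound at $W_{\max}\Pi_{\max}(D_{\max}-1)=L^{\mathrm{SRF}}$.

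For the smoothness (second-order) bound, I would differentiate once more, again leaning on the Hessian formula in Proposition~\ref{prop-srt-derivation}: $\partial^2_{x_jx_k}p_{l,t}$ equals $p_{l,t}$ times a first term $\big(\sum\psi_{i,t}[\boldsymbol w_{i,t}]_j\big)\big(\sum\psi_{i,t}[\boldsymbol w_{i,t}]_k\big)$ and a second (curvature) term $-\sum_i \mathrm S(\cdot)(1-\mathrm S(\cdot))[\boldsymbol w_{i,t}]_j[\boldsymbol w_{i,t}]_k$. Bounding the operator norm of the Hessian of $[f^{\mathrm{SRF}}_{\boldsymbol\theta}]_k$: the first term contributes at most $\Pi_{\max}$ times (the square of the level-sum) $\le \Pi_{\max}(D(t)-1)^2 W_{\max}^2$ after weighting, and the curvature term contributes at most $\Pi_{\max}\cdot\tfrac14\cdot(\text{number of effective levels})\cdot W_{\max}^2$. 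Combining, and using $(D-1)^2+\tfrac14(D-1)=(D-1)(D-\tfrac34)$, gives exactly $S^{\mathrm{SRF}}=W_{\max}^2\Pi_{\max}(D_{\max}-1)(D_{\max}-\tfrac34)$; the Jacobian of $f^{\mathrm{SRF}}_{\boldsymbol\theta}$ is then $S^{\mathrm{SRF}}$-Lipschitz on the compact set $\mathcal X$ because a bounded Hessian (in operator norm) on a convex set — or on a compact set via a mean-value/path argument — yields a Lipschitz gradient. I would finish by remarking that compactness of $\mathcal X$ is only needed to ensure all the intermediate quantities are finite and the constants are attained; the bounds themselves hold pointwise.

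\textbf{Main obstacle.} The delicate point is not any single differentiation but the combinatorial bookkeeping that produces the exact constants $D_{\max}-1$ and $D_{\max}-\tfrac34$ rather than the crude $D_{\max}$ one gets from naively counting $|\Lambda(l)|=D(t)$ factors. The sharpening comes from the fact that $p_{l,t}$ is not an arbitrary product of bounded $\tfrac14 W_{\max}$-Lipschitz functions but a \emph{probability} — the leaf probabilities sum to one at every subtree — so when one forms the $p_{l,t}$-weighted sums $\sum_l p_{l,t}\sum_{i\in\Lambda(l)}|\psi_{i,t}|$ and the analogous second-order quantity, there is a cancellation/telescoping at the top level of the tree that removes one unit from the level count. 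Making this precise — showing $\sum_{l=1}^{2^{D(t)}} p_{l,t}(\boldsymbol x)\,g\!\big(\Lambda(l)\big)$ telescopes level by level, with each level after the root contributing at most $\tfrac14 W_{\max}$ (first order) or $\tfrac14 W_{\max}^2$ (the curvature piece) — is the technical heart of the argument and is where I would spend the most care; once that telescoping lemma is in hand, assembling $L^{\mathrm{SRF}}$ and $S^{\mathrm{SRF}}$ and passing to the ensemble average is routine.
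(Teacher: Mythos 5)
The second-order half of your plan is essentially the paper's argument: bound $\|\nabla^2 p_{l,t}\|$ via the Hessian formula of Proposition~\ref{prop-srt-derivation}, control the rank-one term by $(D_{\max}-1)^2W_{\max}^2$ and the curvature term by $\tfrac14(D_{\max}-1)W_{\max}^2$ using $\mathrm S(1-\mathrm S)\le\tfrac14$, use $\sum_l p_{l,t}=1$, and note $(D-1)^2+\tfrac14(D-1)=(D-1)(D-\tfrac34)$. That part is fine.

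The first-order half, however, contains a genuine gap, and it is exactly at the place you flag as the ``technical heart.'' The paper does \emph{not} obtain the constant $L^{\textnormal{SRF}}=W_{\max}\Pi_{\max}(D_{\max}-1)$ through any telescoping or cancellation across levels. It simply uses the gradient formula $\nabla_{\boldsymbol x}p_{l,t}=p_{l,t}\sum_{i\in\Lambda(l)}\psi_{i,t}\boldsymbol w_{i,t}$ together with the crude bound $|\psi_{i,t}|\le 1$ (note: $\psi_{i,t}$ equals $1-\mathrm S$ or $-\mathrm S$, so the relevant bound is $1$, not the derivative bound $\tfrac14$), the count $|\Lambda(l)|\le D_{\max}-1$ on the number of internal nodes along a route, and then the normalization $\sum_{l}p_{l,t}(\boldsymbol x)=1$ to collapse the sum over the $2^{D(t)}$ leaves. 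Your proposed mechanism — each factor is $\tfrac14 W_{\max}$-Lipschitz, each level contributes at most $\tfrac14$ after weighting, and ``one level is free'' because the root splits a full unit of mass — is both unnecessary and incorrect as stated. If you actually carry out the level-by-level weighted sum you describe, the contribution of an internal node reached with probability $q_i$ is $q_i\bigl[\mathrm S_i(1-\mathrm S_i)+(1-\mathrm S_i)\mathrm S_i\bigr]=2q_i\mathrm S_i(1-\mathrm S_i)\le q_i/2$, so each level (including the root, which is not ``free'') contributes at most $\tfrac12$, yielding a bound of order $\tfrac12 D(t)\,W_{\max}\Pi_{\max}$ — a different constant that does not reproduce the stated $W_{\max}\Pi_{\max}(D_{\max}-1)$, which carries no $\tfrac14$ (or $\tfrac12$) factor at all. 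In short, the $-1$ in the proposition comes from the route-length count $|\Lambda(l)|\le D_{\max}-1$ under the paper's depth convention (the ambiguity you noticed should be resolved by that counting, not by cancellation), and the sum over exponentially many leaves is handled purely by $\sum_l p_{l,t}=1$; no telescoping lemma is needed, and the one you sketch would not deliver the claimed constants.
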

We provide the proof of Proposition~\ref{prop-srf-lip} in Appendix~\ref{ecsec-prop-srt-lip}. 
These Lipschitz properties confirm the decision stability and robustness (interpretation stability) of SRF, distinguishing it from existing uninterpretable deep learning models and post-hoc explanation methods, which are typically not Lipschitz as small input perturbations may lead to abrupt changes in decisions and explanations~\citep{alvarez2018robustness}. 

All analyses above demonstrate the structural transparency, stability, and robustness of the SRF decision rule. 
In Appendix~\ref{ecsec-interpretability}, we further introduce both global and local intrinsic interpretation measures for SRF, which depend only on the structure of SRF and avoid post-hoc explanation analyses. 
In the following Section~\ref{subsec-results-portfolio}, we confirm the practical interpretability of SRF based on its structure and the proposed intrinsic interpretation measures on the portfolio problem shown in Example~\ref{example-real} with real data. 

\section{Solving Causal-SDRO} \label{sec-algo}

In this section, we discuss the algorithms to solve the Causal-SDRO problem. 
In Section~\ref{subsec-algo-reform}, we reformulate~\eqref{soft-causal-sdro-dual} as a three-level stochastic compositional optimization. 
For this tractable formulation, we analyze the sample and computational complexity of the sample average approximation method in Section~\ref{subsec-algo-saa}, and develop a gradient-based algorithm to solve it in Section~\ref{subsec-algo-sco}. 

\subsection{Tractable Reformulation for Causal-SDRO} \label{subsec-algo-reform}

Let each decision rule $f_{\boldsymbol{\theta}}$ be parameterized by a parameter vector $\boldsymbol{\theta} \in \Theta \subseteq \mathbb{R}^{d_{\theta}}$. In this subsection, we reformulate the dual problem of~\eqref{soft-causal-sdro}, which is given by: 
\begin{subequations}\label{whole-theta-soft-causal-sdro}
    \begin{equation}\label{theta-soft-causal-sdro}
    \min_{\boldsymbol{\theta} \in \Theta}\quad F \left(\boldsymbol{\theta}\right)= \mathbb{E}_{\widehat{\boldsymbol{x}} \sim \widehat{\mathbb{P}}_{\widehat{\boldsymbol{X}}}} \left[ \lambda\epsilon \log\, \mathbb{E}_{ \boldsymbol{\xi}_1 \sim Q_{ \epsilon }}\left[ \exp\left( \frac{h(\boldsymbol{\theta};\widehat{\boldsymbol{x}},  \boldsymbol{\xi}_1, \lambda)}{\lambda\epsilon} \right) \right]  \right]
\end{equation}
where 
\begin{equation}\label{theta-soft-causal-sdro-h}
    h(\boldsymbol{\theta};\widehat{\boldsymbol{x}},  \boldsymbol{\xi}_1, \lambda) = \mathbb{E}_{\widehat{\boldsymbol{y}} \sim  \widehat{\mathbb{P}}_{\widehat{\boldsymbol{Y}}|\widehat{\boldsymbol{X}}=\widehat{\boldsymbol{x}}}}\left[ \lambda\epsilon \log\,  \mathbb{E}_{ \boldsymbol{\xi}_2 \sim W_{\epsilon}}\left[ \exp\left( \frac{\Psi(f_{\boldsymbol{\theta}}(\widehat{\boldsymbol{x}}+ \boldsymbol{\xi}_1),\widehat{\boldsymbol{y}}+ \boldsymbol{\xi}_2)}{\lambda\epsilon} \right) \right] \right].
\end{equation}
\end{subequations}

Since the nominal distribution $\widehat{\mathbb{P}}$ is the discrete empirical distribution from the training data, the conditional distribution $\widehat{\mathbb{P}}_{\widehat{\boldsymbol{y}}\mid \widehat{\boldsymbol{x}}}$ also has a finite support for any given historical covariate $\widehat{\boldsymbol{x}}$. 
As all historical data are available, the conditional probability $\widehat{p} (\widehat{\boldsymbol{y}}_i \mid \widehat{\boldsymbol{x}})$ given $\widehat{\boldsymbol{x}}$ can be estimated by the empirical frequency, that is, 
\begin{equation}\nonumber
     \widehat{p} (\widehat{\boldsymbol{y}}_i \mid \widehat{\boldsymbol{x}}) = \frac{1}{n_{\widehat{\boldsymbol{x}}}} \sum_{j=1}^{n_{\widehat{\boldsymbol{x}}}} \mathbb{I} \left ( \widehat{\boldsymbol{y}}_j =\widehat{\boldsymbol{y}}_i \right ) ,
\end{equation}
where $n_{\widehat{\boldsymbol{x}}}$ is the number of observed outcomes for $\widehat{\boldsymbol{y}}$ associated with covariate $\widehat{\boldsymbol{x}}$, and function $ \mathbb{I} \left (  \cdot \right )$ is an indicator function. 
Therefore, the conditional expectation in Equation~\eqref{theta-soft-causal-sdro-h} can be computed by 
\begin{equation}\nonumber
    h(\boldsymbol{\theta}; \widehat{\boldsymbol{x}},  \boldsymbol{\xi}_1, \lambda) = \lambda\epsilon \cdot \sum_{i=1}^{n_{\widehat{\boldsymbol{x}}}} \widehat{p} (\widehat{\boldsymbol{y}}_i \mid \widehat{\boldsymbol{x}}) \cdot \log\, \mathbb{E}_{ \boldsymbol{\xi}_2 \sim W_{\epsilon}}\left[ \exp\left( \frac{\Psi(f_{\boldsymbol{\theta}}(\widehat{\boldsymbol{x}}+ \boldsymbol{\xi}_1),\widehat{\boldsymbol{y}_i}+ \boldsymbol{\xi}_2)}{\lambda\epsilon} \right) \right] .
\end{equation} 

Then, the problem~\eqref{whole-theta-soft-causal-sdro} is equivalent to a three-level stochastic compositional optimization (SCO) problem, driven by the three independent random vectors $\widehat{\boldsymbol{x}}$, $\boldsymbol{\xi}_1$, and $\boldsymbol{\xi}_2$: 
\begin{equation}\label{dual-sco}
    \min_{\boldsymbol{\theta} \in \Theta}\quad F \left(\boldsymbol{\theta}\right)= \lambda\epsilon \cdot \mathbb{E}_{\widehat{\boldsymbol{x}} \sim \widehat{\mathbb{P}}_{\widehat{\boldsymbol{X}}}} \Big[ t_1\Big( \mathbb{E}_{ \boldsymbol{\xi}_1 \sim Q_{ \epsilon }}\Big[ t_2\Big(   \mathbb{E}_{ \boldsymbol{\xi}_2 \sim W_{\epsilon}}\Big[t_3\Big( \boldsymbol{\theta}; \widehat{\boldsymbol{x}}, \boldsymbol{\xi}_1, \widehat{\boldsymbol{y}}, \boldsymbol{\xi}_2  \Big)\Big]; \widehat{\boldsymbol{x}}, \boldsymbol{\xi}_1\Big)\Big] ; \widehat{\boldsymbol{x}} \Big) \Big] 
    \tag{SCO}
\end{equation}
where
\begin{equation}\label{t-functions}
    \begin{aligned}
    t_1&: \mathbb{R}_+ \to \mathbb{R}, & t_1(z;\widehat{\boldsymbol{x}}) & = \log\, (z), \\
    t_2&: \mathbb{R}^{n_{\widehat{\boldsymbol{x}}}} \to \mathbb{R}_+, & t_2 (\boldsymbol{v}; \widehat{\boldsymbol{x}}, \boldsymbol{\xi}_1) &=  \exp\Big(\sum_{i=1}^{n_{\widehat{\boldsymbol{x}}}} \widehat{p} (\widehat{\boldsymbol{y}}_i \mid \widehat{\boldsymbol{x}}) \cdot \log\, \left( v_i \right) \Big), \\
    t_3&: \mathbb{R}^{d_{\theta}} \to \mathbb{R}^{n_{\widehat{\boldsymbol{x}}}}, & \Big[t_3 (\boldsymbol{\theta}; \widehat{\boldsymbol{x}}, \boldsymbol{\xi}_1, \widehat{\boldsymbol{y}}, \boldsymbol{\xi}_2)\Big]_i &= \exp\left( \frac{\Psi(f_{\boldsymbol{\theta}}(\widehat{\boldsymbol{x}}+ \boldsymbol{\xi}_1),\widehat{\boldsymbol{y}}_i+ \boldsymbol{\xi}_2)}{\lambda\epsilon} \right), \forall i \in \left[n_{\widehat{\boldsymbol{x}}}\right],
\end{aligned}
\end{equation}
where the set of vectors $\{\widehat{\boldsymbol{y}}_i\}_{i=1}^{n_{\widehat{\boldsymbol{x}}}}$ is implicitly defined by a covariate $\widehat{\boldsymbol{x}}$. For brevity, we denote functions 
\begin{equation}\label{t-function-brief}
    t_1 (z):=t_1 (z; \widehat{\boldsymbol{x}}), \quad t_2(\boldsymbol{v}):=t_2(\boldsymbol{v}; \widehat{\boldsymbol{x}}, \boldsymbol{\xi}_1), \quad t_3 (\boldsymbol{\theta}):=t_3 (\boldsymbol{\theta}; \widehat{\boldsymbol{x}}, \boldsymbol{\xi}_1, \widehat{\boldsymbol{y}}, \boldsymbol{\xi}_2), 
\end{equation} and define 
\begin{equation}\label{phi-functions}
    \phi^{(0)}(\boldsymbol{\theta}) := \mathbb{E}_{\widehat{\boldsymbol{x}}}\Big[t_1 (\phi^{(1)}(\boldsymbol{\theta}))\Big], \quad \phi^{(1)}(\boldsymbol{\theta}) := \mathbb{E}_{\boldsymbol{\xi}_1}\Big[t_2 (\phi^{(2)}(\boldsymbol{\theta}))\Big], \quad \phi^{(2)}(\boldsymbol{\theta}) := \mathbb{E}_{\boldsymbol{\xi}_2}\Big[t_3 (\boldsymbol{\theta})\Big]. 
\end{equation}
In the following subsections, we introduce several assumptions for this problem in Assumption~\ref{assumption-2}, which are commonly used by related literature, for example, \citet{hu2020sample} and \citet{shapiro2021lectures}. 

\begin{assumption}\label{assumption-2}
We assume that
    \begin{enumerate}
    \item (Bounded Diameter). The decision set $\Theta \subseteq \mathbb{R}^{d_\theta}$ has a positive finite diameter $D_{\Theta}>0$, that is, for any $\boldsymbol{\theta}_1, \boldsymbol{\theta}_2 \in \Theta$, 
    $\|\boldsymbol{\theta}_1 - \boldsymbol{\theta}_2\| \le D_{\Theta}$. \label{assumption-2-theta}
    \item (Lipschitz Continuity). For any fixed $\boldsymbol{x}$ and $\boldsymbol{y}$ and given parameteric decision rule, the loss function $L(\boldsymbol{\theta}; \boldsymbol{x}, \boldsymbol{y}) := \Psi(f_{\boldsymbol{\theta}}(\boldsymbol{x}), \boldsymbol{y})$ is $L_{\boldsymbol{\theta}}$-Lipschitz continuous with respect to $\boldsymbol{\theta}$. 
    \label{assumption-2-lip-con}
    \item (Bounded Cost). The loss function $\Psi(\boldsymbol{z}, \boldsymbol{y})$ satisfies $0\le \Psi(\boldsymbol{z}, \boldsymbol{y}) \le B$ for any $\boldsymbol{z} \in \mathcal{Z}$ and $\boldsymbol{y} \in \mathcal{Y}$. \label{assumption-2-bound}
    \end{enumerate}
\end{assumption}

Assumption~\ref{assumption-2}\ref{assumption-2-theta} on the diameter of the decision space is used for sample complexity analysis. 
Assumption~\ref{assumption-2}\ref{assumption-2-lip-con} is crucial for deriving the convergence rate of the gradient-based algorithms. 
From Assumption~\ref{assumption-2}, we have the following Proposition~\ref{propos-lip-con}. 
\begin{proposition} \label{propos-lip-con}
\textnormal{\textbf{(Properties of the Problem~\ref{dual-sco})}.}
    Under Assumption~\ref{assumption-2}, functions $t_1$, $t_2$, and $t_3$ in Equation~\eqref{t-functions}: 
    \begin{enumerate}
        \item  are $L_1$-, $L_2$-, and $L_3$-Lipschitz continuous;
        \item  are $S_1$-, $S_2$-, and $S_3$-Lipschitz smooth;
        \item have bounded stochastic gradients in expectation, \ie, 
        \begin{equation}\nonumber
            \mathbb{E}\Big[|\nabla t_1(z) |^2\Big] \le C_1^2, \, \mathbb{E}\Big[\|\nabla t_2 (\boldsymbol{v}) \|^2\Big] \le C_2^2, \, \mathbb{E}\Big[\|\nabla t_3 (\boldsymbol{\theta}) \|^2\Big] \le C_3^2;
        \end{equation}
        \item have finite variances, \ie, $\sigma_1^2 = \sup_{\boldsymbol{\theta}} \mathbb{V}_{\widehat{\boldsymbol{x}}} \Big( t_1\Big(\phi^{(1)}(\boldsymbol{\theta})\Big) \Big)$, $\sigma_2^2 = \sup_{\boldsymbol{\theta}, \widehat{\boldsymbol{x}}} \mathbb{V}_{\boldsymbol{\xi}_1} \Big( t_2\Big(\phi^{(2)}(\boldsymbol{\theta})\Big)\Big)$, and $\sigma_3^2 = \sup_{\boldsymbol{\theta}, \widehat{\boldsymbol{x}}, \boldsymbol{\xi}_1, \widehat{\boldsymbol{y}}} \mathbb{V}_{\boldsymbol{\xi}_2} \Big( t_3\Big( \boldsymbol{\theta} \Big)\Big)$ are all finite;
    \end{enumerate}
    where 
    \begin{equation}\nonumber
    \begin{aligned}
            L_1 & = S_1 = C_1 =1; \quad \quad \quad
            L_2  = C_2 = \exp\Big(B/\lambda\epsilon\Big), \quad S_2 = \sqrt{2} L_2; \\ 
            L_3 & = \frac{1}{\lambda\epsilon} \exp\Big(B/\lambda\epsilon\Big),\quad  S_3 = \frac{1}{\lambda\epsilon}L_3, \quad C_3 = L_3 L_{\boldsymbol{\theta}} \sqrt{\mathbb{E}_{\widehat{\boldsymbol{x}}\sim\widehat{\mathbb{P}}_{\widehat{\boldsymbol{X}}}}\Big[ n_{\widehat{\boldsymbol{x}}}\Big]};
    \end{aligned}
    \end{equation}
    where $\mathbb{E}_{\widehat{\boldsymbol{x}}\sim \widehat{\mathbb{P}}_{\widehat{\boldsymbol{X}}}}\Big[ n_{\widehat{\boldsymbol{x}}}\Big]$ is a finite positive constant since the expectation is over the finite support of the empirical distribution $\widehat{\mathbb{P}}_{\widehat{\boldsymbol{X}}}$. 
\end{proposition}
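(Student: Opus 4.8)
The plan is to exploit the bounded-cost hypothesis $0\le\Psi\le B$ from Assumption~\ref{assumption-2}\ref{assumption-2-bound} to trap every intermediate quantity in a compact interval on which the three maps and their first two derivatives are elementary to bound, and then to read off the four claims by direct differentiation. First I would propagate two-sided bounds through the nested expectations in~\eqref{phi-functions}: since $\Psi\ge0$ and $\lambda\epsilon>0$, each coordinate $[t_3(\boldsymbol\theta)]_i=\exp(\Psi(f_{\boldsymbol\theta}(\widehat{\boldsymbol x}+\boldsymbol\xi_1),\widehat{\boldsymbol y}_i+\boldsymbol\xi_2)/\lambda\epsilon)$ lies in $[1,e^{B/\lambda\epsilon}]$; averaging over $\boldsymbol\xi_2$ keeps $\phi^{(2)}(\boldsymbol\theta)$ in $[1,e^{B/\lambda\epsilon}]^{n_{\widehat{\boldsymbol x}}}$; since $t_2$ is a $\widehat p(\cdot\mid\widehat{\boldsymbol x})$-weighted geometric mean it returns a value in $[1,e^{B/\lambda\epsilon}]$, so does $\phi^{(1)}(\boldsymbol\theta)$ after averaging over $\boldsymbol\xi_1$, and $t_1=\log$ maps this to $[0,B/\lambda\epsilon]$. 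The one structural observation the whole argument rests on is that the argument of $t_1$ never drops below $1$, which is precisely what rescues $\log$ from being non-Lipschitz near the origin.

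With these domains fixed I would compute derivatives directly. For $t_1$, $t_1'(z)=1/z$ and $t_1''(z)=-1/z^2$ are both bounded by $1$ on $z\ge1$, giving $L_1=S_1=C_1=1$. For $t_2(\boldsymbol v)=\exp(\sum_i\widehat p_i\log v_i)$ with $\widehat p_i:=\widehat p(\widehat{\boldsymbol y}_i\mid\widehat{\boldsymbol x})$ one has $\partial_i t_2=t_2(\boldsymbol v)\,\widehat p_i/v_i$, so with $\boldsymbol a=(\widehat p_i/v_i)_i$ the gradient is $t_2(\boldsymbol v)\,\boldsymbol a$ and the Hessian is $t_2(\boldsymbol v)\big(\boldsymbol a\boldsymbol a^\top-\mathrm{diag}(\widehat p_i/v_i^2)\big)$; using $\sum_i\widehat p_i=1$, $\widehat p_i\ge0$, $v_i\ge1$ gives $\|\boldsymbol a\|\le1$, hence $\|\nabla t_2\|\le e^{B/\lambda\epsilon}=L_2=C_2$, while the Frobenius estimate $\|\boldsymbol a\boldsymbol a^\top-\mathrm{diag}(\widehat p_i/v_i^2)\|_F^2\le\|\boldsymbol a\boldsymbol a^\top\|_F^2+\|\mathrm{diag}(\widehat p_i/v_i^2)\|_F^2\le2$ (the cross Frobenius inner product being nonnegative) gives $\|\nabla^2 t_2\|_{\mathrm{op}}\le\sqrt2\,t_2\le\sqrt2\,L_2=S_2$. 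For $t_3$, differentiating in $\boldsymbol\theta$ gives $\nabla_{\boldsymbol\theta}[t_3(\boldsymbol\theta)]_i=\tfrac1{\lambda\epsilon}[t_3(\boldsymbol\theta)]_i\,\nabla_{\boldsymbol\theta}\Psi(f_{\boldsymbol\theta}(\cdot),\widehat{\boldsymbol y}_i+\boldsymbol\xi_2)$; the componentwise outer exponential $\ell\mapsto e^{\ell/\lambda\epsilon}$ on the loss range $\ell\in[0,B]$ has Lipschitz and smoothness constants $L_3=\tfrac1{\lambda\epsilon}e^{B/\lambda\epsilon}$ and $S_3=\tfrac1{\lambda\epsilon}L_3$, while the $L_{\boldsymbol\theta}$-Lipschitz continuity of $L(\boldsymbol\theta;\boldsymbol x,\boldsymbol y)=\Psi(f_{\boldsymbol\theta}(\boldsymbol x),\boldsymbol y)$ from Assumption~\ref{assumption-2}\ref{assumption-2-lip-con} bounds each $\|\nabla_{\boldsymbol\theta}[t_3(\boldsymbol\theta)]_i\|$ by $L_3L_{\boldsymbol\theta}$.

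The in-expectation gradient bound $C_3$ then follows by summing these per-coordinate bounds: $\|\nabla_{\boldsymbol\theta}t_3(\boldsymbol\theta)\|^2\le n_{\widehat{\boldsymbol x}}(L_3L_{\boldsymbol\theta})^2$, and taking the expectation over $\widehat{\boldsymbol x}\sim\widehat{\mathbb P}_{\widehat{\boldsymbol X}}$ --- whose support is finite, so $\mathbb E_{\widehat{\boldsymbol x}}[n_{\widehat{\boldsymbol x}}]<\infty$ --- yields $C_3=L_3L_{\boldsymbol\theta}\sqrt{\mathbb E_{\widehat{\boldsymbol x}}[n_{\widehat{\boldsymbol x}}]}$. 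Finally the finite-variance claims are immediate from the first paragraph: $t_1(\phi^{(1)}(\boldsymbol\theta))$, $t_2(\phi^{(2)}(\boldsymbol\theta))$ and every coordinate of $t_3(\boldsymbol\theta)$ are bounded uniformly in $\boldsymbol\theta$ (and in the conditioning variables), so each variance is at most $\tfrac14$ the squared oscillation of its argument, and the suprema defining $\sigma_1^2,\sigma_2^2,\sigma_3^2$ stay finite. The main obstacle is not conceptual but bookkeeping: being careful about which variable each $t_k$ is differentiated against and which matrix norm is used for the vector-valued $t_3$, so that the factor $n_{\widehat{\boldsymbol x}}$ enters precisely $C_3$ and nothing else; the Hessian of the weighted geometric mean $t_2$ is the most calculation-heavy but otherwise routine piece.
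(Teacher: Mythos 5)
Your proposal is correct and follows essentially the same route as the paper's proof: trap all intermediate quantities in $[1,e^{B/\lambda\epsilon}]$ (resp.\ $[0,B/\lambda\epsilon]$ after the log) using $0\le\Psi\le B$, bound the first and second derivatives of $t_1$, $t_2$, $t_3$ directly on those compact domains (with the Frobenius-norm estimate for the Hessian of the weighted geometric mean yielding $S_2=\sqrt{2}\,e^{B/\lambda\epsilon}$, exactly as in the paper), obtain $C_3$ via the chain rule, the $L_{\boldsymbol\theta}$-Lipschitz loss, and $\mathbb{E}_{\widehat{\boldsymbol x}}[n_{\widehat{\boldsymbol x}}]<\infty$, and conclude finite variances from uniform boundedness. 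No gaps; the bookkeeping (differentiating $t_3$ in the loss vector for $L_3,S_3$ but in $\boldsymbol\theta$ for $C_3$) matches the paper's treatment.
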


We provide the proof of Proposition~\ref{propos-lip-con} in Appendix~\ref{ecsec-prop-lip}. 

\subsection{Complexity of Sample Average Approximation} \label{subsec-algo-saa}

A standard method for solving the Stochastic Compositional Optimization problem~\eqref{dual-sco} is the Sample Average Approximation (SAA), which replaces each nested expectation with its corresponding empirical average, constructed from finite samples generated by the Monte Carlo sampling technique. In this subsection, we analyze the sample and computational complexity of SAA on our problem. 

We draw $N_1$ independent and identically distributed (i.i.d.) samples $\{(\widehat{\boldsymbol{x}}^i, \widehat{\boldsymbol{y}}^i)\}_{i=1}^{N_1}$ from the nominal distribution $\widehat{\mathbb{P}}$, $N_2$ i.i.d. samples $\{\boldsymbol{\xi}_1^j\}_{j=1}^{N_2}$ from the kernel distribution $Q_{\epsilon}$, and $N_3$ i.i.d. samples $\{\boldsymbol{\xi}_2^k\}_{k=1}^{N_3}$ from the kernel distribution $W_{\epsilon}$. This leads to the following SAA formulation for~\eqref{dual-sco}:
\begin{equation}\label{f-saa}
    \min_{\boldsymbol{\theta} \in \Theta}\quad \widehat{F}_{N_1,N_2,N_3}\left ( \boldsymbol{\theta} \right ) = \frac{\lambda\epsilon}{N_1}  \sum_{i=1}^{N_1} t_1\Big( \frac{1}{N_2} \sum_{j=1}^{N_2} t_2\Big( \frac{1}{N_3} \sum_{k=1}^{N_3}t_3\Big( \boldsymbol{\theta}; \widehat{\boldsymbol{x}}^i, \boldsymbol{\xi}_1^j, \widehat{\boldsymbol{y}}^i, \boldsymbol{\xi}_2^k  \Big); \widehat{\boldsymbol{x}}^i, \boldsymbol{\xi}_1^j \Big) ;\widehat{\boldsymbol{x}} \Big). 
    \tag{SAA}
\end{equation}

Let $\boldsymbol{\theta}^*$ and $\widehat{\boldsymbol{\theta}}_{N_1,N_2,N_3}$ be the optimal solutions of the problems~\eqref{dual-sco} and~\eqref{f-saa}, respectively. 
We next analyze the number of samples required for the solution to the problem~\eqref{f-saa} to be $\delta$-optimal of the problem~\eqref{dual-sco} with high probability, that is, $\mathrm{Pr} \left ( F\left ( \widehat{\boldsymbol{\theta}}_{N_1, N_2, N_3}  \right ) -F \left ( \boldsymbol{\theta}^* \right ) \le \delta \right )\ge 1-\alpha$ for any $\delta > 0$ and $\alpha \in \left(0,1\right)$. 

Based on Assumption~\ref{assumption-2}, we derive the sample complexity of the SAA method on this problem in the following Theorem~\ref{theo-saa-sample}. 
\begin{theorem}\label{theo-saa-sample}
\textnormal{\textbf{(Sample Complexity for the Problem~\ref{f-saa})}.}
    Under Assumption~\ref{assumption-2}, the following results hold. 
    \begin{enumerate}
    \item For any $\kappa > 0 $, there exists an $\delta_1>0$ such that for any $\delta \in \left(0, \delta_1 \right) $, it holds that  \begin{equation}\nonumber
        \begin{aligned}
            & \mathrm{Pr} \left ( F\left ( \widehat{\boldsymbol{\theta}}_{N_1,N_2,N_3}  \right ) -F \left ( \boldsymbol{\theta}^* \right ) > \delta \right ) \\
            \le & \mathcal{O}\left ( 1 \right )  \left ( \frac{8L_1L_2L_3D_{\Theta}}{\delta} \right )^{d_{\theta}} \Big( N_1 N_2 n_{\widehat{\boldsymbol{x}}} \exp\Big( -\frac{N_3 \delta^2}{144(2+\kappa)\lambda^2\epsilon ^2L_1^2L_2^2\sigma_3^2} \Big) \\
        & \quad \quad \quad \quad + N_1\exp\Big( -\frac{N_2 \delta^2}{144(2+\kappa)\lambda^2\epsilon ^2L_1^2\sigma_2^2}\Big) + \exp\Big(-\frac{N_1 \delta^2}{144(2+\kappa)\lambda^2\epsilon ^2\sigma_1^2}\Big) \Big). 
        \end{aligned}
    \end{equation}  \label{theorem-saa-2}
    \item With probability at least $1-\alpha$, the solution to the problem~\eqref{f-saa} is $\delta$-optimal to the original problem~\eqref{dual-sco} if the sample sizes $N_1,N_2$, and $N_3$ satisfy that
    \begin{equation}\nonumber
        \begin{aligned}
            N_1 &> \frac{\mathcal{O}\left ( 1 \right ) \sigma_1^2}{\delta^2} \Big[ d_{\theta}\log\, \left ( \frac{8L_1L_2L_3D_{\Theta}}{\delta} \right )  +\log\, \left ( \frac{1}{\alpha}  \right )  \Big ], \\
            N_2 &> \frac{\mathcal{O}\left ( 1 \right ) L_1^2\sigma_2^2}{\delta^2} \Big[ d_{\theta}\log\, \left ( \frac{8L_1L_2L_3D_{\Theta}}{\delta}  \right ) + \log\, \left ( \frac{1}{\alpha}   \right ) +\log\,  \left ( N_1 \right )  \Big ] , \\
            \text{and } \\
            N_3 &> \frac{\mathcal{O}\left ( 1 \right ) L_1^2L_2^2\sigma_3^2}{\delta^2} \Big[ d_{\theta}\log\, \left ( \frac{8L_1L_2L_3D_{\Theta}}{\delta}  \right ) + \log\, \left ( \frac{1}{\alpha}   \right ) +\log\,  \left ( N_1N_2n_{\widehat{\boldsymbol{x}}} \right )  \Big ] .
        \end{aligned}
    \end{equation}
    Ignoring the log factors, the total sample complexity of the problem~\eqref{f-saa} for achieving a $\delta$-optimal solution is $T=N_1+N_2+N_3 = \mathcal{O}\Big( d_{\theta} / \delta^2 \Big)$. \label{theorem-saa-3} 
    \end{enumerate} 
\end{theorem}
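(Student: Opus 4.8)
The plan is to derive both parts from a single \emph{uniform convergence} estimate for the SAA objective $\widehat F_{N_1,N_2,N_3}$ over the parameter set $\Theta$, and then pass to the optimality gap by the standard sandwiching argument: on the event $\{\sup_{\boldsymbol\theta\in\Theta}|\widehat F_{N_1,N_2,N_3}(\boldsymbol\theta)-F(\boldsymbol\theta)|\le\delta/2\}$ one has $F(\widehat{\boldsymbol\theta}_{N_1,N_2,N_3})\le\widehat F_{N_1,N_2,N_3}(\widehat{\boldsymbol\theta}_{N_1,N_2,N_3})+\delta/2\le\widehat F_{N_1,N_2,N_3}(\boldsymbol\theta^*)+\delta/2\le F(\boldsymbol\theta^*)+\delta$, so that $\Pr(F(\widehat{\boldsymbol\theta}_{N_1,N_2,N_3})-F(\boldsymbol\theta^*)>\delta)\le\Pr(\sup_{\boldsymbol\theta}|\widehat F_{N_1,N_2,N_3}-F|>\delta/2)$. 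Since Proposition~\ref{propos-lip-con} certifies that $t_1,t_2,t_3$ satisfy the Lipschitz continuity, smoothness, bounded-gradient and finite-variance conditions required by SAA analyses of multi-level stochastic compositional optimization (e.g., \citet{hu2020sample}), the problem falls squarely within that framework, and what remains is to track the level-by-level error propagation together with the dependence on $d_\theta$, $\sigma_1^2$, $\sigma_2^2$, $\sigma_3^2$.

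For a \emph{fixed} $\boldsymbol\theta$, I would bound $|\widehat F_{N_1,N_2,N_3}(\boldsymbol\theta)-F(\boldsymbol\theta)|$ by telescoping through the three hybrid estimators obtained by replacing the outermost, then the middle, then the innermost expectation in~\eqref{dual-sco} by its empirical average. The innermost deviation $\frac1{N_3}\sum_k t_3(\boldsymbol\theta;\widehat{\boldsymbol x}^i,\boldsymbol\xi_1^j,\widehat{\boldsymbol y}^i,\boldsymbol\xi_2^k)-\phi^{(2)}(\boldsymbol\theta)$ is, componentwise, an average of bounded i.i.d.\ terms with variance at most $\sigma_3^2$; a Bernstein inequality gives a deviation bound of the form $\exp(-cN_3\delta^2/\sigma_3^2)$ once $\delta<\delta_1$, which is exactly where the threshold $\delta_1$ and the slack $2+\kappa$ enter: the correction $\tfrac23 B\delta$ in Bernstein's denominator is dominated by $\kappa\sigma_3^2$ precisely for $\delta\lesssim\kappa\sigma_3^2/B$, and in the same small-$\delta$ regime the $\mathcal O(1/N_2+1/N_3)$ bias incurred because $t_1,t_2$ are nonlinear is of lower order and is likewise absorbed. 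A union bound over the $N_1$ outer samples, the $N_2$ middle samples, and the $n_{\widehat{\boldsymbol x}}$ coordinates of $t_3$ produces the prefactor $N_1N_2n_{\widehat{\boldsymbol x}}$. Propagating the innermost error through the $L_2$-Lipschitz $t_2$ and then the $L_1$-Lipschitz $t_1$ rescales $\delta$ by $1/(L_1L_2)$ and $1/L_1$ respectively; the middle and outer levels are treated identically with variances $\sigma_2^2,\sigma_1^2$ and prefactors $N_1$ and $1$, and the factor $\lambda\epsilon$ in front of $F$ accounts for the $\lambda^2\epsilon^2$ in all exponents.

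To upgrade the pointwise estimate to a uniform one, I would take an $\eta$-net of $\Theta$ of cardinality at most $(cD_\Theta/\eta)^{d_\theta}$, which exists since $\Theta$ has diameter $D_\Theta$ by Assumption~\ref{assumption-2}\ref{assumption-2-theta}, apply the fixed-$\boldsymbol\theta$ bound with a union bound over the net, and control the oscillation between net points using that both $F$ and $\widehat F_{N_1,N_2,N_3}$ are Lipschitz in $\boldsymbol\theta$ with constant proportional to $L_1L_2L_3$ (composition of Lipschitz maps together with Assumption~\ref{assumption-2}\ref{assumption-2-lip-con}). Choosing $\eta\asymp\delta/(L_1L_2L_3)$ converts the covering number into the stated factor $(8L_1L_2L_3D_\Theta/\delta)^{d_\theta}$, the constant $8$ absorbing the $\delta/2$, $\delta/4$, $\delta/8$ splits across the optimality transfer, the chaining slack, and the three levels. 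Collecting the three level-wise contributions yields Part~\ref{theorem-saa-2}.

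Part~\ref{theorem-saa-3} then follows by forcing the right-hand side of Part~\ref{theorem-saa-2} below $\alpha$: it suffices that each of the three exponential terms be at most $\alpha/3$, and solving $\mathcal O(1)\,(8L_1L_2L_3D_\Theta/\delta)^{d_\theta}\cdot(\text{prefactor})\cdot\exp(-N_i\delta^2/(c_i\sigma_i^2))\le\alpha/3$ for $N_i$ gives, after taking logarithms, exactly the three displayed sample-size conditions, with the $\log N_1$ and $\log(N_1N_2n_{\widehat{\boldsymbol x}})$ terms originating from the union-bound prefactors; summing $N_1+N_2+N_3$ and discarding logarithmic factors leaves $\mathcal O(d_\theta/\delta^2)$. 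The main obstacle, I expect, is the careful bookkeeping at the innermost level: because the nested SAA estimator is genuinely biased, one must simultaneously show that this bias is $\mathcal O(1/N_2+1/N_3)$ and hence negligible relative to $\delta$ once $\delta<\delta_1$, and push the $n_{\widehat{\boldsymbol x}}$-dimensional vector concentration through the composition $t_1\circ t_2$ while keeping all constants explicit enough to recover the precise exponents $144(2+\kappa)\lambda^2\epsilon^2$ and the precise prefactors in the statement.
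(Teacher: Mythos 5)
Your proposal matches the paper's own proof essentially step for step: an $\eta$-net with $\eta\asymp\delta/(L_1L_2L_3)$ over $\Theta$, a three-level telescoping through hybrid estimators with Lipschitz propagation of the level-wise empirical deviations, union bounds yielding the $N_1N_2n_{\widehat{\boldsymbol{x}}}$ and $N_1$ prefactors, the $\delta/2$ optimality-transfer split giving the constants $144$ and $8$, and then forcing each exponential term below $\alpha/3$ to extract the sample sizes. The only cosmetic differences are that the paper obtains the $(2+\kappa)$ slack and the threshold $\delta_1$ from Cram\'er's rate-function bound and its vector extension (Lemmas~\ref{lem-cramer} and~\ref{lem-concentration}) rather than from Bernstein's inequality, and the separate bias control you anticipate at the inner levels is unnecessary, since the telescoping plus Lipschitz bounds already absorb it.
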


The proof of Theorem~\ref{theo-saa-sample} is provided in Appendix~\ref{ecsec-theo-saa}. 
For the problem~\eqref{f-saa}, the computational complexity of using the gradient descent (GD) algorithm is at least $\mathcal{O}\Big( d_{\theta}^3/\delta^6 \Big)$, since a single iteration requires $N_1 \times N_2 \times N_3$ gradient updates. Similarly, the computational complexity of using the unbiased stochastic gradient descent (SGD) algorithm is at least $\mathcal{O}\Big( d_{\theta}^2/\delta^4 \Big)$, which is still computationally challenging. This motivates us to develop an efficient algorithm for solving this SCO problem.  

\subsection{Stochastic Compositional Algorithm} \label{subsec-algo-sco} 

In this subsection, we introduce a gradient-based algorithm for the problem~\eqref{dual-sco}. 
Before introducing the gradient algorithm for the stochastic compositional optimization problem, we first show the inherent challenge of applying the standard stochastic gradient descent (SGD) method to the problem~\eqref{dual-sco}. The true gradient of function $F(\boldsymbol{\theta})$ at point $\boldsymbol{\theta}^{k}$, the decision vector at iteration $k$, is given by
\begin{equation}\nonumber 
\begin{aligned} 
    \nabla F(\boldsymbol{\theta}^{k}) = &  \nabla  t_1\Big(\phi^{(1)}(\boldsymbol{\theta}^{k})\Big) \cdot \nabla  t_2\Big( \phi^{(2)}(\boldsymbol{\theta}^{k})\Big) \cdot  \nabla t_3 (\boldsymbol{\theta}^{k}), 
\end{aligned} 
\end{equation}  
where functions $t_1$, $t_2$, $t_3$, $\phi^{(1)}$, and $\phi^{(2)}$ are defined in Equations~\eqref{t-function-brief}and~\eqref{phi-functions}. 
Given the samples $(\widehat{\boldsymbol{x}}^{k},\widehat{\boldsymbol{y}}^k)$, $\boldsymbol{\xi}_1^{k}$, and $\boldsymbol{\xi}_2^{k}$ drawn at iteration $k$, for brevity, we define 
\begin{equation}\nonumber
    t_1^k(z) := t_1\Big(z; \widehat{\boldsymbol{x}}^{k}\Big), \quad t_2^k(\boldsymbol{v}) := t_2\Big( \boldsymbol{v}; \widehat{\boldsymbol{x}}^{k}, \boldsymbol{\xi}_1^{k}\Big),  \text{ and  } t_3^k(\boldsymbol{\theta}) := t_3\Big( \boldsymbol{\theta}; \widehat{\boldsymbol{x}}^{k}, \boldsymbol{\xi}_1^{k}, \widehat{\boldsymbol{y}}^{k}, \boldsymbol{\xi}_2^{k}\Big). 
\end{equation}
The SGD method replaces $t_2(\phi^{(2)}(\boldsymbol{\theta}^{k}))$ by $t_2^k( t_3^k ( \boldsymbol{\theta}^{k} ) )$, and $t_1 ( \phi^{(1)}(\boldsymbol{\theta}^{k}) )$ by $t_1^k ( t_2^k ( t_3^k ( \boldsymbol{\theta}^{k} ) ) )$ in each iteration. That is, it simplifies the computation of the true gradient $\nabla F(\boldsymbol{\theta})$ by replacing the expected values with stochastic estimates computed from single random samples. 
However, as functions $t_1$, $t_2$, and $t_3$ are all non-linear, the stochastic gradient of the SGD method, denoted as $\Big(\nabla F(\boldsymbol{\theta}^{k})\Big)_{\text{SGD}}$, is biased, that is, 
\begin{equation}\nonumber
    \begin{aligned}
        \mathbb{E}\Big[ \Big(\nabla F(\boldsymbol{\theta}^{k})\Big)_{\text{SGD}}\Big] & =   \mathbb{E}_{\widehat{\boldsymbol{x}}^{k}, \widehat{\boldsymbol{y}}^{k}, \boldsymbol{\xi}_1^{k},\boldsymbol{\xi}_2^{k}} \Big[
        \nabla t_1^k\Big( t_2^k\Big(   t_3^k\Big( \boldsymbol{\theta}^{k}  \Big)\Big)\Big) 
        \cdot \nabla  t_2^k\Big(  t_3^k\Big( \boldsymbol{\theta}^{k} \Big)\Big) 
        \cdot \nabla  t_3^k (\boldsymbol{\theta}^{k})
        \Big] \neq \mathbb{E}\Big[\nabla F(\boldsymbol{\theta}^{k})\Big].
    \end{aligned}
\end{equation}
Since the bias of the standard SGD method is uncontrollable, it cannot be used to solve the problem~\eqref{dual-sco} directly. 

Therefore, we provide a Stochastically Corrected Stochastic Compositional gradient method~\citep[SCSC,][]{chen2021solving} to solve the problem~\eqref{dual-sco}, which controls the bias using momentum gradient updates. This method provides estimators for the expectations in $\nabla F(\boldsymbol{\theta})$ at each iteration. 
Specifically, in each iteration $k$ with samples $(\widehat{\boldsymbol{x}}^{k},\widehat{\boldsymbol{y}}^k)$, $\boldsymbol{\xi}_1^{k}$, and $\boldsymbol{\xi}_2^{k}$, 
functions $\phi^{(1)}$ and $\phi^{(2)}$ are estimated by $y_1^{k}$ and $\boldsymbol{y}_2^{k}$, respectively, and thereby the parameters of decision rule are updated by 
\begin{equation}\label{theta-update}
    \boldsymbol{\theta}^{k+1} := \boldsymbol{\theta}^{k} - \alpha_k \cdot \nabla t_1^k(y_1^{k}) \cdot \nabla t_2^k (\boldsymbol{y}_2^{k}) \cdot \nabla t_3^k (\boldsymbol{\theta}^{k}) ,
\end{equation}
where
\begin{equation}\label{y1-update}
    y_1^{k+1} = \left ( 1-\beta _k\right ) \cdot \Big ( y_1^{k} + t_2^k (\boldsymbol{y}_2^{k+1})-t_2^k (\boldsymbol{y}_2^{k}) \Big )  + \beta_k \cdot t_2^k (\boldsymbol{y}_2^{k+1}), 
\end{equation}
and 
\begin{equation}\label{y2-update}
    \boldsymbol{y}_2^{k+1} = \left ( 1-\beta _k\right ) \cdot \Big ( \boldsymbol{y}_2^{k} +t_3^k (\boldsymbol{\theta}^{k})-t_3^k (\boldsymbol{\theta}^{k-1}) \Big )  + \beta_k \cdot  t_3^k (\boldsymbol{\theta}^{k}). 
\end{equation}
Compared to the stochastic compositional gradient descent method proposed by~\citet{wang2017stochastic} which update the $y_1$ and $\boldsymbol{y}_2$ by 
\begin{equation}\nonumber
    y_1^{k+1} = \left ( 1-\beta _k\right)\cdot y_1^{k}  + \beta_k \cdot t_2^k (\boldsymbol{\theta}^{k}), 
\end{equation}
and 
\begin{equation}\nonumber
    \boldsymbol{y}_2^{k+1} = \left ( 1-\beta _k\right ) \cdot \boldsymbol{y}_2^{k} + \beta_k \cdot t_3^k (\boldsymbol{\theta}^{k}),
\end{equation}
the SCSC method adds a correction on $\boldsymbol{y}^{k}$ to avoid the information lag as $\boldsymbol{y}^{k}$ is updated by the outdated $\boldsymbol{\theta}^{k-1}$. 
The pseudo-code of the SCSC is provided in Algorithm~\ref{algo-SCSC-ML}. 
\begin{algorithm}[!ht]
\caption{ SCSC for the problem~\eqref{dual-sco} }\label{algo-SCSC-ML}
    1: Initialize $\boldsymbol{\theta}^{0}$, $y_1^{0}$, $\boldsymbol{y}_2^{0}$, stepsizes $\alpha_0$, $\beta_0$ \\
    2: \textbf{for} $k = 1, \cdots, K$, \textbf{do} \\
    3: \quad select $(\widehat{\boldsymbol{x}}^{k},\widehat{\boldsymbol{y}}^k), \boldsymbol{\xi}_1^{k}, \boldsymbol{\xi}_2^{k}$ randomly; \\
    4: \quad compute $t_2^k (\boldsymbol{y}_2^{k}), \nabla t_2^k (\boldsymbol{y}_2^{k})$ and $t_3^k (\boldsymbol{\theta}^{k}), \nabla t_3^k (\boldsymbol{\theta}^{k})$; \\
    5: \quad update $y_1^{k+1}$ and $\boldsymbol{y}_2^{k+1}$ by Equations~\eqref{y1-update} and~\eqref{y2-update}; \\
    6: \quad compute $\nabla t_1^k(y_1^{k})$ and $\nabla t_2^k (\boldsymbol{y}_2^{k})$; \\
    7: \quad update $\boldsymbol{\theta}^{k+1}$ by Equation~\eqref{theta-update}; \\
    8: \textbf{end for}
\end{algorithm}

We analyze the convergence of SCSC based on the following assumptions. 
\begin{assumption}\label{assum-oracle}
    \textnormal{\textbf{(Unbiased Oracle).}} We assume that the sampling oracle satisfies that for each $k \in \left[K\right]$,
    \begin{enumerate}
        \item $\mathbb{E}_{\boldsymbol{\xi}_2^{k}}\Big[t_3^k(\boldsymbol{\theta})\Big] = t_3(\boldsymbol{\theta}), \mathbb{E}_{\boldsymbol{\xi}_1^{k}}\Big[t_2^k(\boldsymbol{y}_2)\Big] = t_2(\boldsymbol{y}_2)$, and $\mathbb{E}_{\widehat{\boldsymbol{x}}^{k}}\Big[t_1^k(y_1)\Big] = t_1(y_1)$;
        \item $\mathbb{E}_{\widehat{\boldsymbol{x}}^{k}, \widehat{\boldsymbol{y}}^{k}, \boldsymbol{\xi}_1^{k}, \boldsymbol{\xi}_2^{k}}\Big[ \nabla t_1^k(y_1) \nabla t_2^k (\boldsymbol{y}_2) \nabla t_3^k (\boldsymbol{\theta}) \Big] = \mathbb{E}_{\widehat{\boldsymbol{x}}, \widehat{\boldsymbol{y}}, \boldsymbol{\xi}_1, \boldsymbol{\xi}_2}\Big[ \nabla t_1(y_1) \nabla t_2 (\boldsymbol{y}_2) \nabla t_3 (\boldsymbol{\theta}) \Big]$.
    \end{enumerate}
\end{assumption}

Assumption~\ref{assum-oracle} is standard in stochastic compositional optimization~\citep{chen2021solving}, and is analogous to the unbiasedness assumption for stochastic non-compositional problems. 
According to~\citet{chen2021solving}, we have the following convergence results. 
\begin{theorem}\label{theo-scsc-conver}
    \textnormal{\textbf{(Convergence of SCSC for the Problem~\ref{dual-sco}).}} Under Assumptions~\ref{assumption-2} and~\ref{assum-oracle}, if we choose the step-sizes as $ \alpha_k = \frac{2\beta_k}{A_1^2+A_2^2} = \frac{1}{\sqrt{K}}$, then 
    \begin{enumerate}
        \item the iterates $\{\boldsymbol{\theta}^k\}$ of the Algorithm~\ref{algo-SCSC-ML} satisfy: 
        \begin{equation}\nonumber
            \frac{\sum_{k=0}^{K-1} \mathbb{E}\Big[\|\nabla F(\boldsymbol{\theta}^k)\|^2\Big]}{K} \le \frac{C_{\textnormal{const}}}{\sqrt{K}}, 
        \end{equation}where $A_1, A_2, C_{\textnormal{const}}$ are constants that depend on the initial setting of the algorithm and constants $C_1, C_2, C_3, S_1, S_2, S_3$; \label{theo-scsc-conver-1}
        \item to obtain an $\varepsilon$-stationary point of function $F$, \ie, a point $\widehat{\boldsymbol{\theta}}$ satisfying $\mathbb{E}\Big[\|\nabla F(\widehat{\boldsymbol{\theta}})\|^2\Big] \le \varepsilon^2$, the number of iterations required, the sample complexity, and the gradient complexity of functions $t_1$, $t_2$, and $t_3$  are all at the order of $\mathcal{O}(\varepsilon^{-4})$, and this result is nearly optimal. \label{theo-scsc-conver-2}
    \end{enumerate}
\end{theorem}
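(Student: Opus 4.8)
The plan is to follow the analysis framework of \citet{chen2021solving} for the stochastically corrected stochastic compositional gradient method, adapted to the three-level structure of~\eqref{dual-sco}. First I would establish that the objective $F$ is Lipschitz smooth. Since $\phi^{(0)}(\boldsymbol{\theta}) = \mathbb{E}_{\widehat{\boldsymbol{x}}}[t_1(\phi^{(1)}(\boldsymbol{\theta}))]$, $\phi^{(1)}(\boldsymbol{\theta}) = \mathbb{E}_{\boldsymbol{\xi}_1}[t_2(\phi^{(2)}(\boldsymbol{\theta}))]$, and $\phi^{(2)}(\boldsymbol{\theta}) = \mathbb{E}_{\boldsymbol{\xi}_2}[t_3(\boldsymbol{\theta})]$ is a composition of maps that are Lipschitz continuous and Lipschitz smooth by Proposition~\ref{propos-lip-con}, the standard chain-rule estimates for compositions show $F = \lambda\epsilon\cdot\phi^{(0)}$ is $L_F$-smooth with $L_F$ an explicit polynomial in $L_1,L_2,L_3,S_1,S_2,S_3$. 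This yields the descent inequality $F(\boldsymbol{\theta}^{k+1}) \le F(\boldsymbol{\theta}^{k}) + \langle \nabla F(\boldsymbol{\theta}^{k}), \boldsymbol{\theta}^{k+1}-\boldsymbol{\theta}^{k}\rangle + \tfrac{L_F}{2}\|\boldsymbol{\theta}^{k+1}-\boldsymbol{\theta}^{k}\|^2$, and the gradient error satisfies $\| (\nabla F(\boldsymbol{\theta}^{k}))_{\mathrm{SCSC}} - \nabla F(\boldsymbol{\theta}^{k}) \| \le L_1 L_2 L_3 (L_3\|e_1^k\| + \|\boldsymbol{e}_2^k\|)$ up to constants, where $e_1^k := y_1^{k} - \phi^{(1)}(\boldsymbol{\theta}^{k})$ and $\boldsymbol{e}_2^k := \boldsymbol{y}_2^{k} - \phi^{(2)}(\boldsymbol{\theta}^{k})$ are the tracking errors.

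The second step controls these tracking errors. The point of the corrected updates~\eqref{y1-update}--\eqref{y2-update} is that the realized increments $t_2^k(\boldsymbol{y}_2^{k+1})-t_2^k(\boldsymbol{y}_2^{k})$ and $t_3^k(\boldsymbol{\theta}^{k})-t_3^k(\boldsymbol{\theta}^{k-1})$ cancel the leading-order bias coming from the parameter drift $\|\boldsymbol{\theta}^{k}-\boldsymbol{\theta}^{k-1}\| = \mathcal{O}(\alpha_{k-1})$, so that squaring and invoking the $(1-\beta_k)$-contraction together with the finite-variance and bounded-gradient bounds of Proposition~\ref{propos-lip-con} gives recursions of the form $\mathbb{E}\|\boldsymbol{e}_2^{k+1}\|^2 \le (1-\beta_k)\mathbb{E}\|\boldsymbol{e}_2^{k}\|^2 + \mathcal{O}(\beta_k^2\sigma_3^2) + \mathcal{O}(\alpha_k^2 C_3^2/\beta_k)$, and analogously for $e_1^k$ with the $\boldsymbol{e}_2$ error feeding in through the Lipschitz constant of $t_2$. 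The coupling $\alpha_k = \Theta(\beta_k)$ makes all residual terms $\mathcal{O}(\alpha_k^2)$. I would then form the Lyapunov function $\mathbb{V}^k := \mathbb{E}[F(\boldsymbol{\theta}^{k})] + c_1\mathbb{E}\|e_1^k\|^2 + c_2\mathbb{E}\|\boldsymbol{e}_2^k\|^2$ with $c_1,c_2$ chosen (in terms of $L_1,L_2,L_3$ and $L_F$) so that the tracking-error contractions dominate the cross terms, and show $\mathbb{V}^{k+1} \le \mathbb{V}^{k} - \tfrac{\alpha_k}{2}\mathbb{E}\|\nabla F(\boldsymbol{\theta}^{k})\|^2 + \mathcal{O}(\alpha_k^2)$, where the $\mathcal{O}(\alpha_k^2)$ constant aggregates $\sigma_1^2,\sigma_2^2,\sigma_3^2,C_1,C_2,C_3$.

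The third step telescopes this from $k=0$ to $K-1$. Because Assumption~\ref{assumption-2}\ref{assumption-2-bound} gives $\Psi\ge 0$, one checks $t_3\ge 1 \Rightarrow t_2\ge 1 \Rightarrow \phi^{(1)}\ge 1 \Rightarrow F\ge 0$, so $F$ is bounded below; summing yields $\sum_{k=0}^{K-1}\alpha_k\mathbb{E}\|\nabla F(\boldsymbol{\theta}^{k})\|^2 \le 2(\mathbb{V}^0 - \inf F) + \mathcal{O}(1)\sum_k \alpha_k^2$. With the constant choice $\alpha_k = 1/\sqrt{K}$ (and $\beta_k = (A_1^2+A_2^2)/(2\sqrt{K})$) the right-hand side is $\mathcal{O}(\sqrt{K})$ while the left is $\tfrac{1}{\sqrt K}\sum_k\mathbb{E}\|\nabla F(\boldsymbol{\theta}^{k})\|^2$, which rearranges to part~\ref{theo-scsc-conver-1} with an explicit $C_{\mathrm{const}}$. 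Part~\ref{theo-scsc-conver-2} follows immediately: forcing $C_{\mathrm{const}}/\sqrt{K}\le\varepsilon^2$ needs $K=\mathcal{O}(\varepsilon^{-4})$, each iteration consumes $\mathcal{O}(1)$ fresh samples and $\mathcal{O}(1)$ evaluations of each $t_i$ and $\nabla t_i$, and near-optimality is the matching $\Omega(\varepsilon^{-4})$ lower bound for stationary points of smooth nonconvex stochastic objectives~\citep{ghadimi2016mini}.

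I expect the main obstacle to be the tracking-error analysis of the second step: unlike the two-level case, the error of $\boldsymbol{y}_2^k$ relative to $\phi^{(2)}(\boldsymbol{\theta}^k)$ must be propagated through the $\exp/\log$ nonlinearities of $t_1$ and $t_2$ into the error of $y_1^k$, and one must verify carefully that the realized-increment corrections in~\eqref{y1-update}--\eqref{y2-update} (which reference the outdated iterate $\boldsymbol{\theta}^{k-1}$) really leave only an $\mathcal{O}(\alpha_k^2)$ residual rather than an $\mathcal{O}(\alpha_k)$ bias; tracking the constants $A_1, A_2, c_1, c_2, C_{\mathrm{const}}$ back to the quantities of Proposition~\ref{propos-lip-con} is the routine but delicate bookkeeping.
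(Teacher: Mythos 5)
Your route is, at bottom, the same analysis the paper uses: the paper proves part (I) simply by invoking Theorem 3 of \citet{chen2021solving} and then does the short complexity count for part (II), citing the lower bound of \citet{arjevani2023lower} for near-optimality; you are reconstructing that SCSC analysis (smoothness of $F$ via Proposition~\ref{propos-lip-con}, descent lemma, tracking errors $e_1^k,\boldsymbol{e}_2^k$, Lyapunov function, telescoping) rather than citing it, which is a legitimate and more self-contained way to get the same statement.

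However, as written your key tracking-error recursion is the wrong one, and the issue you flag as ``the main obstacle'' is exactly where it breaks. You state
$\mathbb{E}\|\boldsymbol{e}_2^{k+1}\|^2 \le (1-\beta_k)\mathbb{E}\|\boldsymbol{e}_2^{k}\|^2 + \mathcal{O}(\beta_k^2\sigma_3^2) + \mathcal{O}(\alpha_k^2 C_3^2/\beta_k)$
and then claim the coupling $\alpha_k = \Theta(\beta_k)$ makes all residuals $\mathcal{O}(\alpha_k^2)$. It does not: $\alpha_k^2/\beta_k = \Theta(\alpha_k)$ under that coupling, so with $\alpha_k = 1/\sqrt{K}$ the residual sums to $\Theta(\sqrt{K})$ and the telescoped bound only gives $\frac{1}{K}\sum_k \mathbb{E}\|\nabla F(\boldsymbol{\theta}^k)\|^2 = \mathcal{O}(1)$, not $\mathcal{O}(1/\sqrt{K})$. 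The $1/\beta_k$ drift term is the SCGD-type bound of \citet{wang2017stochastic}, which is precisely why SCGD needs two timescales and has a slower rate. The whole point of the corrected updates~\eqref{y1-update}--\eqref{y2-update} is that, after subtracting the realized increments, the remaining drift and noise terms are zero-mean or controlled by the Lipschitz constants, yielding a recursion of the form $\mathbb{E}\|\boldsymbol{e}_2^{k+1}\|^2 \le (1-\beta_k)\mathbb{E}\|\boldsymbol{e}_2^{k}\|^2 + \mathcal{O}(\beta_k^2\sigma_3^2) + \mathcal{O}(L_3^2\,\mathbb{E}\|\boldsymbol{\theta}^{k+1}-\boldsymbol{\theta}^{k}\|^2)$ with \emph{no} $1/\beta_k$ factor, i.e.\ a genuine $\mathcal{O}(\alpha_k^2)$ residual; the analogous statement must be propagated through $t_2$ into $e_1^k$ for the three-level case. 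With that corrected recursion your Lyapunov and telescoping steps do deliver part (I), and part (II) follows as you say. One further fix: near-optimality cannot be concluded from \citet{ghadimi2016mini}, which is a matching \emph{upper} bound for non-compositional SGD; you need the $\Omega(\varepsilon^{-4})$ lower bound of \citet{arjevani2023lower}, which is what the paper cites.
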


The proof of Theorem~\ref{theo-scsc-conver} is provided in Appendix~\ref{ecsec-theo-scsc}. 
Theorem~\ref{theo-scsc-conver} also shows that the convergence rate of SCSC is $\mathcal{O}(k^{-1/2})$, which is on the same order as SGD's rate for the stochastic non-compositional nonconvex problems.  

\section{Applications and Numerical Results}\label{sec-results}

In this section, we validate the efficiency of the proposed approach across three applications: the newsvendor problem (Example~\ref{example-news}) in Section~\ref{subsec-results-news}, the inventory substitution problem (Example~\ref{example-supply}) in Section~\ref{subsec-results-inventory}, and the portfolio selection problem with real data (Example~\ref{example-real}) in Section~\ref{subsec-results-portfolio}. 
Additionally, we demonstrate the interpretability of the SRF decision rule for the portfolio problem in Section~\ref{subsec-results-portfolio-interpret}. 

In all experiments, we train the decision rule by solving the soft-constrained Causal-SDRO model. Following Remark~\ref{remark-soft-cons}, we treat the penalty coefficient $\lambda$ as a tunable hyperparameter instead of the radius $\bar{\rho}$. 
We take $\ell_p$-norm as transportation cost for the causal Sinkhorn discrepancy, where $p \in \{1, 2\}$. For brevity, we denote the resulting model as \textit{$p$-Causal-SDRO}, and solve it using the SCSC algorithm described in Section~\ref{subsec-algo-sco}.
We take $T=20, D(t)=\left \lceil \log_{2} \, d_x \right \rceil + 1$ for each $t\in \left[T\right]$ for the SRF decision rule.
The experiments are coded in Python 3.8 and conducted on a personal computer equipped with an Intel Core i9-13900HX CPU, 32 GB of RAM, and an Nvidia GeForce RTX 4060 GPU. All GPU computations are performed using PyTorch 2.0.1 (utilizing CUDA 11.8). 

For benchmark comparison, we also examine the performance of a two-layer neural network (2NN) decision rule, which is a learning-based parametric decision rule but lacks interpretability. 
Let $m$ be the dimension of the hidden layer in the 2NN. Then, the 2NN decision rule $f_{\boldsymbol{\theta}}^{\text{2NN}}: \mathcal{X} \to \mathcal{Z}$, parametrized by $\boldsymbol{\theta} \in \Theta$ that collects all parameters $\Big\{\boldsymbol{a}^k \in \mathbb{R}^m, \boldsymbol{b}^k \in \mathbb{R}^m, \boldsymbol{w}_i^k \in \mathbb{R}^{d_{x}} \Big\}$ for all $i \in \left [m\right]$ and $k \in \left [d_z\right]$, is given by 
\begin{equation}
\Big[ f_{\boldsymbol{\theta}}^{\text{2NN}}(\boldsymbol{x})\Big]_{k} := \frac{1}{m} \sum_{i=1}^{m} a_i^k \cdot \text{ReLu} \Big ( (\boldsymbol{w}_i^k)^{\top}\boldsymbol{x} +b_i^k\Big ), \quad \forall k \in \left [d_z\right], \tag{\text{2NN}}
\end{equation}
where $\text{ReLu}: \mathbb{R} \to \mathbb{R}_+$ represents the ReLu activation function, while $a_i^k$ and $b_i^k$ represent the $i$-th element in vectors $\boldsymbol{a}^k$ and $\boldsymbol{b}^k$, respectively, for any $i \in \left[m\right]$. This decision rule requires training a total of $m\cdot d_z(d_x+2)$ parameters. 
According to~\citet{ma2018priori}, over-parameterized 2NNs can effectively approximate optimal policies within the Barron space with dimension-independent convergence rates.
We take $m=64\times d_x$ for the 2NN decision rule to ensure its number of trainable parameters approximates that of SRF.

We evaluate the out-of-sample performance using the coefficient of prescriptiveness \citep{bertsimas2020predictive}. 
Let $\mathcal{S}$ denote an independent testing data set sampled from the true joint distribution, disjoint from the training data. 
We define the average out-of-sample loss as
\begin{equation}\nonumber
    \mathcal{H}\left ( \boldsymbol{\theta} \right )  = \frac{1}{\left | \mathcal{S} \right |} \sum_{(\boldsymbol{x}, \boldsymbol{y}) \in \mathcal{S}} \Psi \Big ( f_{\boldsymbol{\theta}} (\boldsymbol{x}), \boldsymbol{y} \Big ) . 
\end{equation}
where $ \left| \mathcal{S} \right|$ represents the cardinality of set $\mathcal{S}$. 
Let $\mathcal{H}^*$ represent the oracle loss under perfect information, and then the coefficient of prescriptiveness is given by 
\begin{equation}\label{out-of-sample-performance-value}
    \text{Prescriptiveness}(\boldsymbol{\theta}) = \Big(1- \frac{\mathcal{H}\left ( \boldsymbol{\theta} \right ) - \mathcal{H}^*}{\mathcal{H}\left ( \boldsymbol{\theta}^{\text{ERM}} \right ) - \mathcal{H}^*} \Big ) \times 100\%, 
\end{equation}
where $\boldsymbol{\theta}^{\text{ERM}}$ denotes the parameter trained by the empirical risk minimization (ERM) model, that is, 
\begin{equation}\nonumber
    \boldsymbol{\theta}^{\text{ERM}} \in \arg \min_{ \boldsymbol{\theta} \in \Theta }\,\, \frac{1}{N} \sum_{i = 1}^{N} \Psi \Big ( f_{\boldsymbol{\theta}} (\widehat{\boldsymbol{x}}_i), \widehat{\boldsymbol{y}}_i \Big ). 
\end{equation}  
A higher value indicates a better out-of-sample performance of decision rules. 
In the experiments in Section~\ref{subsec-results-news} and~\ref{subsec-results-inventory}, we set a testing data set size of $\left | \mathcal{S} \right | = 10^5$. 

\subsection{Feature-based Newsvendor Problem} \label{subsec-results-news}

In this subsection, we consider a feature-based newsvendor problem, adopting an experimental setup similar to that of~\citet{yang2022decision}, where the demand $y \in \mathbb{R}_+$ depends on the covariate $\boldsymbol{x}$ in a nonlinear way: 
\begin{equation} \label{data-newsvendor} \nonumber
    y = f_{\text{true}}(\boldsymbol{\beta}^\top \boldsymbol{x}) + \varsigma,  \quad \text{where } f_{\text{true}}(\lambda) = c\Big[\sin(2\lambda) + 2\exp(-16\lambda^2) + 1\Big]. 
\end{equation}
Here, $\varsigma \sim \mathcal{N}(0, 1)$ represents an independent Gaussian noise, the coefficient vector $\boldsymbol{\beta} \in \mathbb{R}^{d_x}$ is generated by taking each component sampled from the uniform distribution $\mathcal{U}([-0.1, 0.1])$, the covariate  $\boldsymbol{x} \in \mathbb{R}^{d_x}$ is generated from a multivariate normal distribution with zero mean and covariance matrix $\Sigma$ with $\Sigma_{ij} = 0.5^{|i-j|}$ for each $i,j\in[d_x]$, the constant $c=1.7$.
To ensure non-negativity, the demand $y$ is simulated via the acceptance and rejection method. 
We conduct experiments across observed historical sample size $N \in \left\{ 200, 400, 800, 1000 \right\}$, feature dimension $d_x \in \left\{ 5, 10, 20, 50 \right\}$. The unit holding and stock-out costs are set to $h = 0.6$ and $b = 1.0$, respectively. 

\begin{figure}[!htb]
    \centering
    \includegraphics[width=\linewidth]{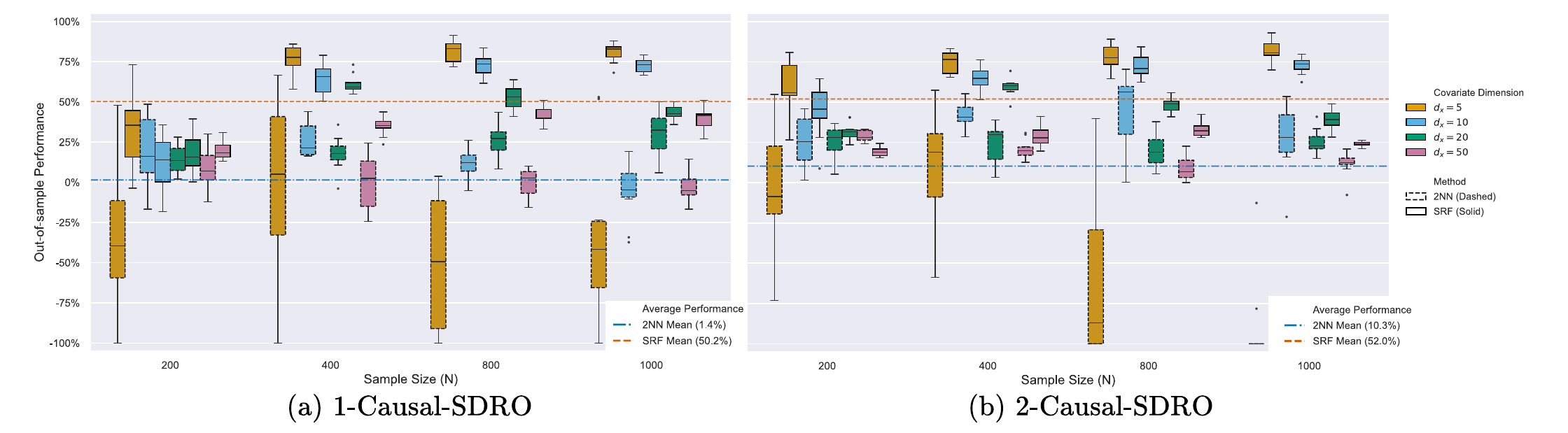}
    \caption{Out-of-sample performance of the decision rules on the newsvendor problem }
    \label{fig:news-box} 
\end{figure}

Figure~\ref{fig:news-box} compares the out-of-sample performance across different sample sizes and feature dimensions between SRF and 2NN decision rules.  
In these plots, the results of SRF are distinguished by boxes with solid borders.
As illustrated, the proposed SRF decision rule outperforms the 2NN benchmark and the ERM baseline across nearly all tested instances. 
Under 2-Causal-SDRO, the SRF provides positive out-of-sample performance on all instances. 
Quantitatively, the SRF achieves average prescriptiveness scores of 50.2\% (1-Causal-SDRO) and 52.0\% (2-Causal-SDRO), marking a significant advantage over the 2NN, which yields only 1.4\% and 10.3\%, respectively. 
Beyond average performance, the box plots reveal that the SRF exhibits significantly lower variance (indicated by shorter interquartile ranges) compared to the 2NN, highlighting the stability of our approach. 
Notably, the SRF achieves its peak performance at $d_x = 5$ across all sample sizes, whereas the 2NN performs worst in this setting. 
These results show that the proposed SRF rule is highly effective for decision-making tasks, even when historical data is limited.

\begin{figure}[!htb]
  \centering
    \begin{subfigure}{.48\linewidth}
    \centering 
    \includegraphics[width=\linewidth]
      {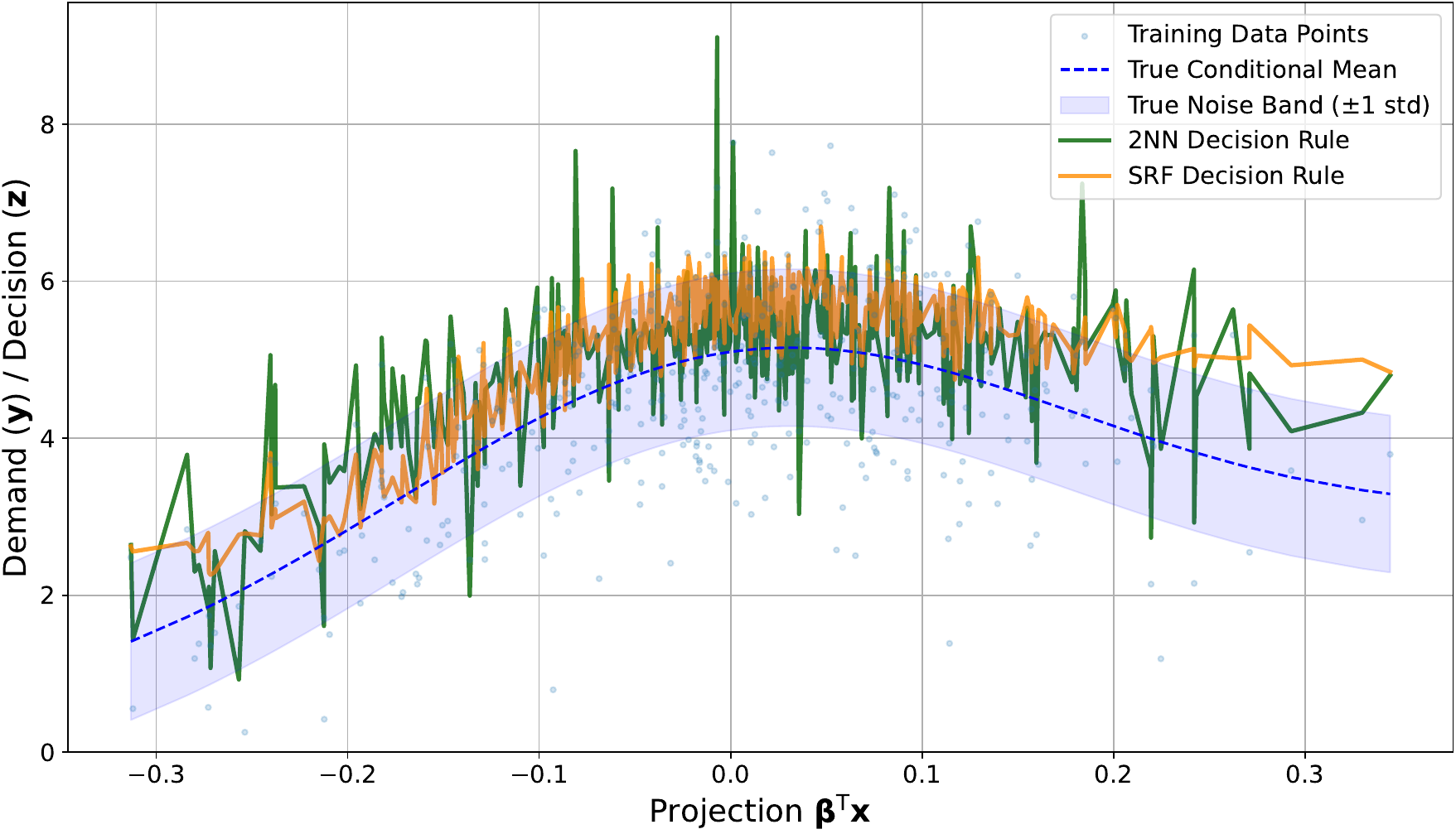}
    \caption{$N=400$}
    \label{fig:news-disfit-400}
  \end{subfigure}
  \hfill
    \begin{subfigure}{.48\linewidth}
    \centering
    \includegraphics[width=\linewidth]
      {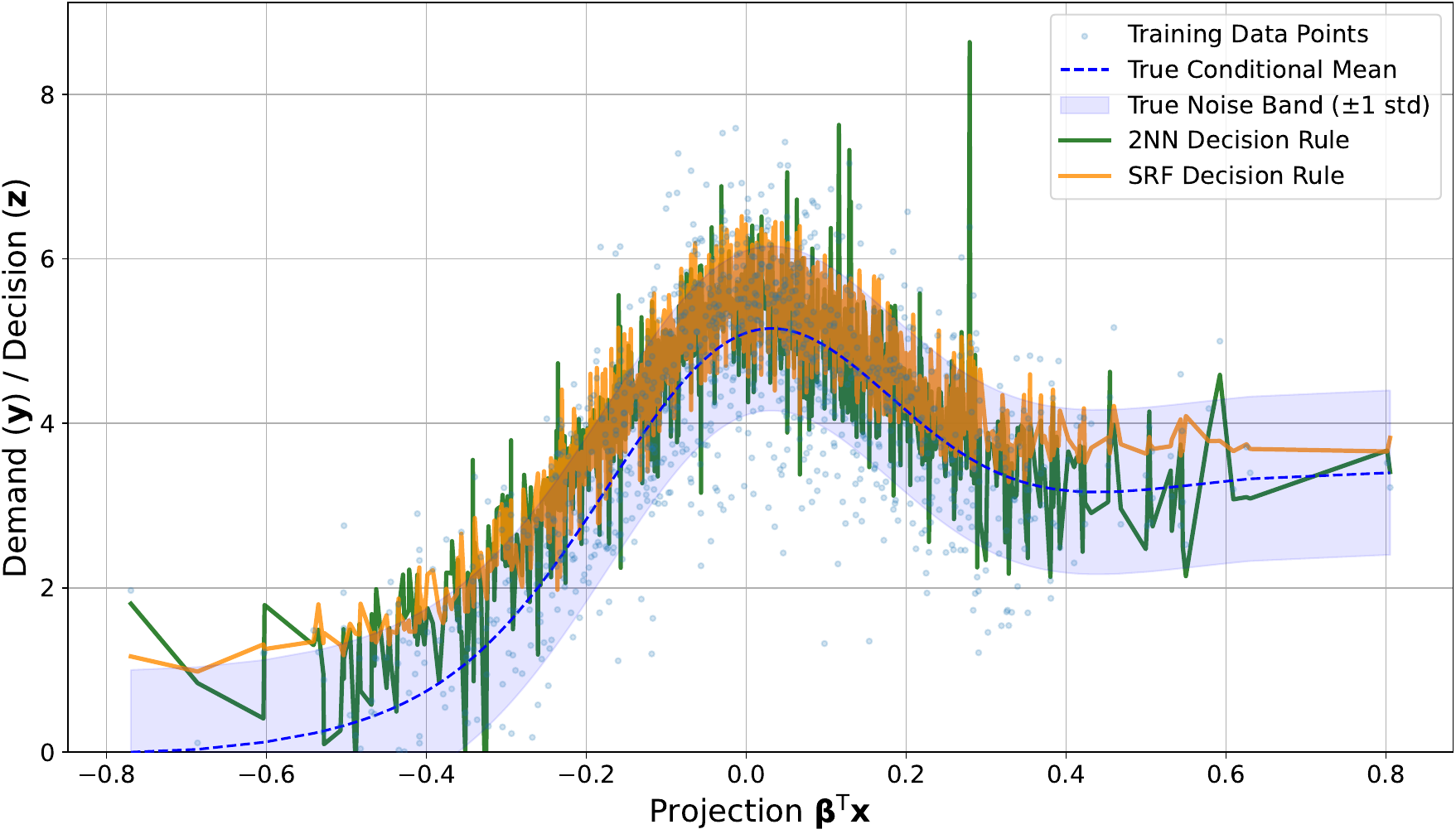}
    \caption{$N=1000$}
    \label{fig:news-disfit-1000}
  \end{subfigure}
  \caption{True distribution vs. Trained decision rules for 2-Causal-SDRO ($d_x=10$)}
  \label{fig:news-disfit}
\end{figure}

Figure~\ref{fig:news-disfit} visualizes the fitted 2NN and SRF decision rules against the true conditional mean (blue dashed line) for sample size $N\in\{400, 1000\}$. The 2NN decision rule (green line) exhibits high variance, resulting in overfitting to the observed noise. In contrast, the SRF decision rule (orange line) yields a stable fit that captures the underlying shape of the true function well. 
Note that the SRF curve lies consistently above the conditional mean. 
This alignment correctly reflects that the unit holding cost is lower than the unit stock-out cost ($h < b$), thereby decision-makers prefer maintaining higher inventory levels to mitigate stock-out risks. 

\begin{figure}[!htb]
  \centering
  \begin{subfigure}{.48\linewidth}
    \centering
    \includegraphics[width=\linewidth]
      {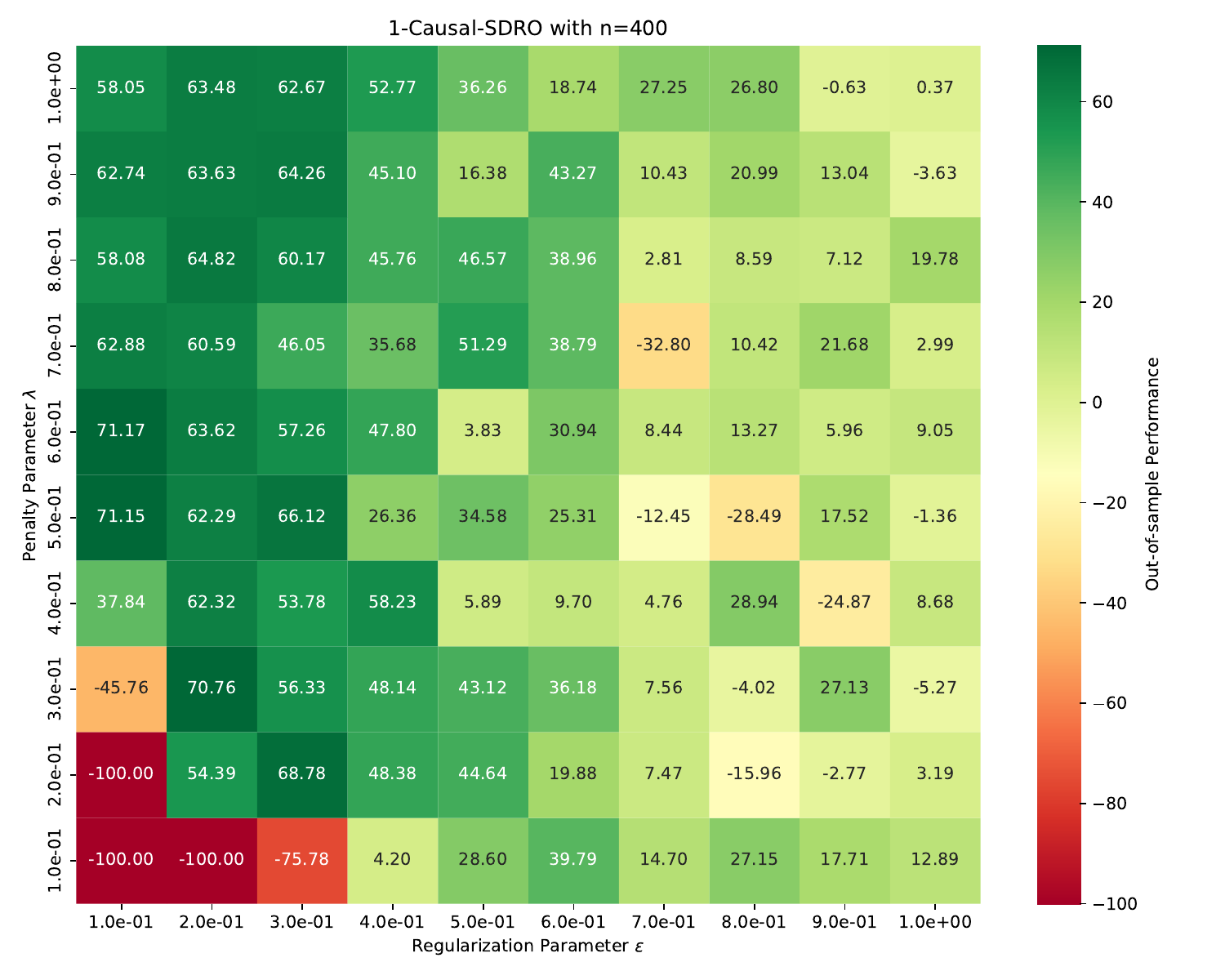}
    \caption{1-Causal-SDRO}
    \label{fig:news-cv-plot-p1}
  \end{subfigure}
  \hfill
    \begin{subfigure}{.48\linewidth}
    \centering
    \includegraphics[width=\linewidth]
      {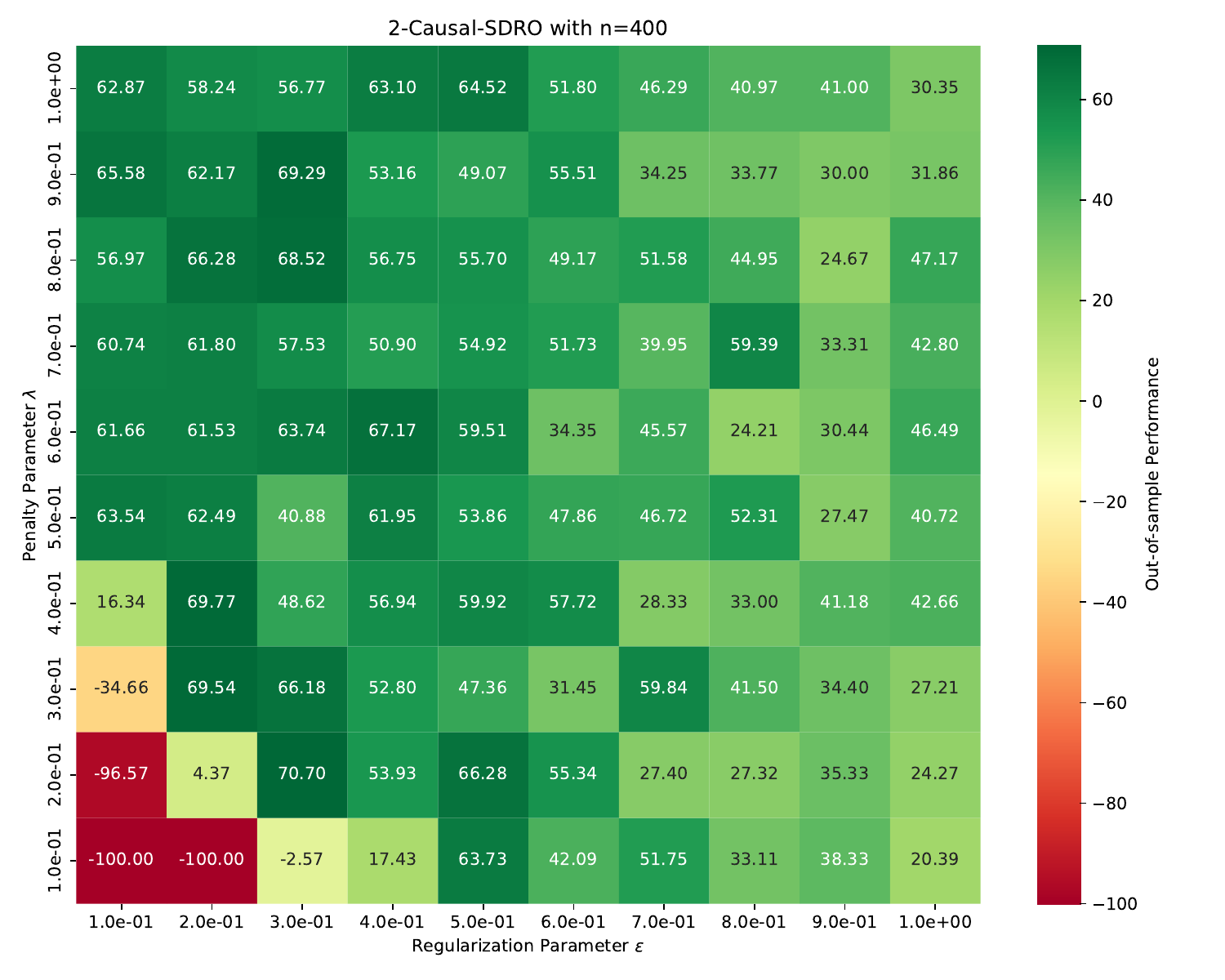}
    \caption{2-Causal-SDRO}
    \label{fig:news-cv-plot-p2}
  \end{subfigure}
  \caption{Out-of-sample performance of the newsvendor problem with different parameters ($N = 400, d_x = 10$)}
  \label{fig:news-cv}
\end{figure}

Figure~\ref{fig:news-cv} reports the out-of-sample performance of the proposed method for the Causal-SDRO model across different parameter combinations, including penalty parameter $\lambda$, regularization parameter $\epsilon$, and norm $p$, taking instances where $N=400$ and $d_x=10$ as examples. 
As shown in these plots, though both models achieve positive out-of-sample performance on almost all instances, the 2-Causal-SDRO illustrates a higher out-of-sample performance on most parameter combinations. 
These results indicate that performance improves by moderately increasing $\lambda$ and decreasing $\epsilon$. 
This is because a small $\lambda$ leads to excessive conservatism, while a large $\lambda$ reduces the model to ERM. Similarly, an insufficient $\epsilon$ fails to adequately characterize the continuity of the underlying distribution, while an excessive $\epsilon$ dilutes the correlation between covariates and uncertain parameters. 

\begin{figure}
    \centering
    \includegraphics[width=\linewidth]{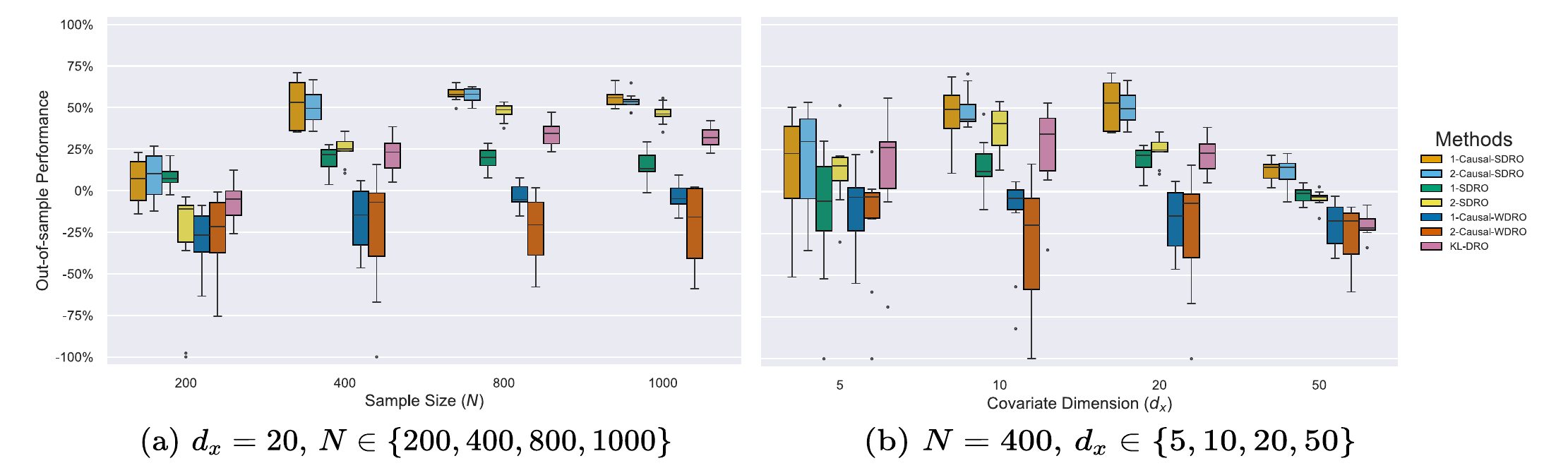}
  \caption{Comparison of different DRO models on the newsvendor problem }
  \label{fig:news-dro-box} 
\end{figure}

Figure~\ref{fig:news-dro-box} compares the out-of-sample performance of the proposed Causal-SDRO model and existing DRO models, including SDRO, Causal-WDRO, and KL-DRO, as shown in Appendix~\ref{ecsec-wc-distribution}, across different sample sizes and feature dimensions. 
Figure~\ref{fig:news-dro-box}(a) reports the scenario where the feature dimension $d_x = 20$ is fixed and sample size $N \in \{200, 400, 800, 1000\}$, and Figure~\ref{fig:news-dro-box}(b) reports the scenario where the sample size $N=400$ is fixed and feature dimension $d_x \in \{5, 10, 20, 50\}$. 
The parameters of these DRO models are determined by cross-validation. 
As illustrated, the Causal-SDRO models achieve a better average out-of-sample performance than other DRO models and a clear improvement over the ERM model. 

\subsection{ Feature-based Inventory Substitution Problem } \label{subsec-results-inventory}

In this subsection, we conduct a numerical study on the feature-based inventory substitution problem introduced in Example~\ref{example-supply}. This application serves as a representative two-stage contextual DRO problem.

The inventory substitution problem with a soft CSD constraint is equivalent to the following stochastic compositional optimization problem~(Detailed derivations are provided in Appendix~\ref{ecsec-prop-inventory}):
\begin{equation}\nonumber
    \min_{\boldsymbol{\theta} \in \Theta}\quad F \left(\boldsymbol{\theta}\right)= \lambda\epsilon \cdot \mathbb{E}_{\widehat{\boldsymbol{x}} \sim \widehat{\mathbb{P}}_{\widehat{\boldsymbol{X}}}} \Big[ t_1\Big( \mathbb{E}_{ \boldsymbol{\xi}_1 \sim Q_{ \epsilon }}\Big[ t_2\Big(   \mathbb{E}_{ \boldsymbol{\xi}_2 \sim W_{\epsilon}}\Big[t_3^{\prime}\Big( \boldsymbol{\theta}; \widehat{\boldsymbol{x}}, \boldsymbol{\xi}_1, \widehat{\boldsymbol{y}}, \boldsymbol{\xi}_2  \Big)\Big]; \widehat{\boldsymbol{x}}, \boldsymbol{\xi}_1\Big)\Big] ; \widehat{\boldsymbol{x}} \Big) \Big] 
\end{equation}
where functions $t_1$ and $t_2$ are defined in~\eqref{t-functions}, and the inner function $t_3^{\prime}$ involves the dual of the second-stage recourse problem, denoted by $\Psi_{\text{Inventory}}^*$:
\begin{equation}\nonumber
    \Big[t_3^{\prime} (\boldsymbol{\theta}; \widehat{\boldsymbol{x}}, \boldsymbol{\xi}_1, \widehat{\boldsymbol{y}}, \boldsymbol{\xi}_2)\Big]_i = \exp\left( \frac{\Psi_{\text{Inventory}}^*(f_{\boldsymbol{\theta}}(\widehat{\boldsymbol{x}}+ \boldsymbol{\xi}_1),\widehat{\boldsymbol{y}}_i+ \boldsymbol{\xi}_2)}{\lambda\epsilon} \right), \quad \forall i \in \left[n_{\widehat{\boldsymbol{x}}}\right],
\end{equation}
and 
\begin{equation}\nonumber
\begin{aligned}
\Psi_{\text{Inventory}}^*(f(\boldsymbol{x}), \boldsymbol{y}) := \max_{\boldsymbol{\eta} \in \mathbb{R}^{d_z}, \boldsymbol{\upsilon} \in \mathbb{R}^{d_y}} 
 \Bigg\{ 
\sum_{i=1}^{d_z} & [f(\boldsymbol{x})]_i (\eta_i + c_i)  + \sum_{j=1}^{d_y} y_j \nu_j \quad\\
 & \Bigg| \quad  \begin{array}{lr}
    \eta_i \le h_i, & \forall i \in [d_z], \\
    \nu_j \le b_j, & \forall j \in [d_y], \\
    \eta_i + \nu_j \le s_{i,j}, & \forall  j \in \left\{i, i+1, \cdots, d_y\right\}, i \in \left[d_z\right]
\end{array}
\Bigg\}. 
\end{aligned}
\end{equation} 
We compute the optimal value of $\Psi_{\text{Inventory}}^*(f(\boldsymbol{x}), \boldsymbol{y})$ and its associated gradient with respect to $f(\boldsymbol{x})$ using off-the-shelf linear programming solver Gurobi (version 12.0.1). 

We examine a scenario with $d_z = d_y = 3$ products, varying feature dimensions $d_x \in \{3, 5, 10, 20\}$ and sample sizes $N \in \{100, 200, 400, 800\}$. 
Let the conditional demand distributions of the products be exponential and Gamma distributions parametrized by covariates:
\begin{equation}\nonumber 
    \begin{aligned}
        y_1 \mid \boldsymbol{x} \sim \text{Exp} ( e^{\boldsymbol{\beta}^{\top}\boldsymbol{x}}), \quad  y_2 \mid \boldsymbol{x} \sim \text{Gamma}(2,e^{\boldsymbol{\beta}^{\top}\boldsymbol{x}}), \quad y_3 \mid \boldsymbol{x} \sim \text{Gamma}(4, e^{\boldsymbol{\beta}^{\top}\boldsymbol{x}}), 
    \end{aligned}
\end{equation}
where $\boldsymbol{\beta} \in \mathbb{R}^{d_x}$ is sampled from $\mathcal{U}([-0.1, 0.1])$, and $\boldsymbol{x} \in \mathbb{R}^{d_x}$ is constructed by the procedure in Section~\ref{subsec-results-news}. 
We specify the cost parameters as 
\begin{equation}\nonumber
    \boldsymbol{h} = \begin{bmatrix}
 1 \\
 0.7 \\
 0.6
\end{bmatrix}, \quad 
\boldsymbol{b} = \begin{bmatrix}
 1.8 \\
 1.6 \\
 1.2
\end{bmatrix}, \quad 
\boldsymbol{S} = 
\begin{pmatrix}
 0 & 1.7 & 2 \\
 \infty & 0 & 1.5 \\
 \infty & \infty & 0
\end{pmatrix},
\end{equation}
and $\boldsymbol{c} = \boldsymbol{0}$. Note that in the substitution cost matrix $\boldsymbol{S}$, $S_{i,j} = \infty $ when $i > j$ as lower-quality products cannot substitute for higher-quality ones. 
\begin{figure}
    \centering
    \includegraphics[width=\linewidth]{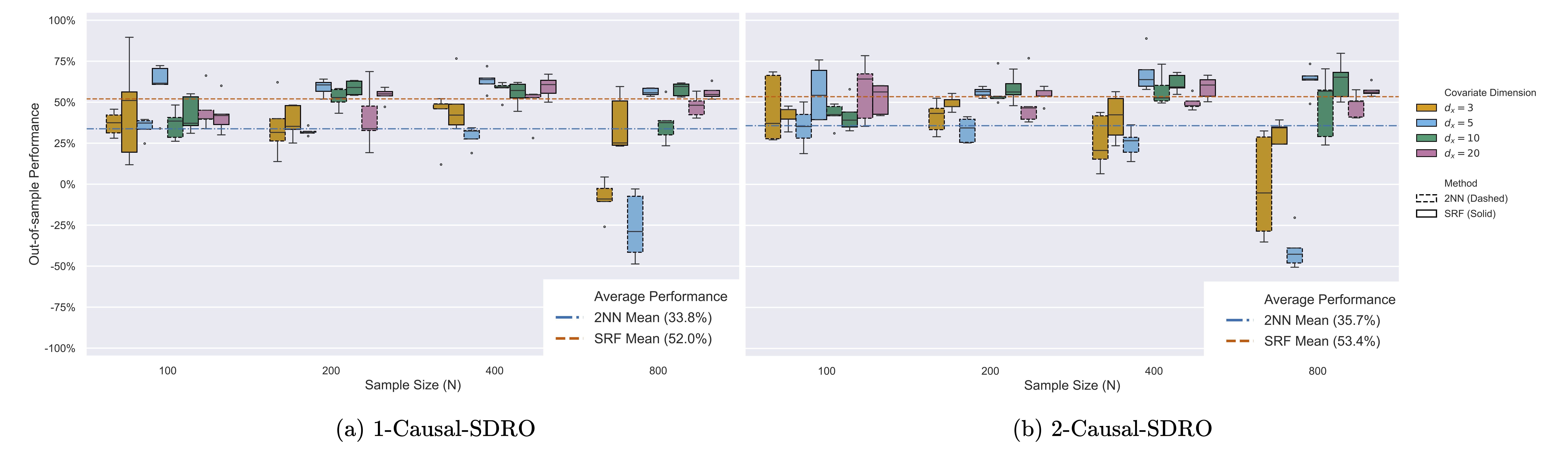}
  \caption{Out-of-sample performance of the decision rules on the inventory substitution problem }
  \label{fig:inv-box} 
\end{figure}

Figure~\ref{fig:inv-box} reports the out-of-sample performance of the proposed approach across varying $p, N,$ and $d_x$. 
The SRF decision rule demonstrates superior efficacy, achieving average prescriptiveness scores of 48.5\% (1-Causal-SDRO) and 49.4\% (2-Causal-SDRO), while consistently maintaining positive out-of-sample performance across all instances. 
In comparison, the 2NN benchmark yields only 33.8\% (1-Causal-SDRO) and 35.7\% (2-Causal-SDRO). 
These results validate that our proposed approach generalizes effectively to two-stage contextual DRO problems. 

Detailed results regarding the parameter sensitivity of Causal-SDRO and its further comparisons with other DRO benchmarks are provided in Appendix~\ref{ecsec-cv-figures}. 

\subsection{ Data-driven Portfolio Selection Problem } \label{subsec-results-portfolio}

In this subsection, we consider a data-driven portfolio selection problem using real-world market data adapted from \citet{nguyen2025robustifying}. This data set contains the historical asset returns of S\&P500 constituents from January 1, 2017, to March 31, 2023. All of the selected 399 assets have been in the S\&P500 since 2010. The features include five publicly available market indices: (i) Volatility Index (VIX), (ii) 10-year Treasury Yield Index (TNX), (iii) Crude Oil Index (CL=F), (iv) S\&P 500 (GSPC), and (v) Dow Jones Index (DJI) to construct the covariate $\boldsymbol{x} \in \mathbb{R}^{5}$. 
These features capture the macro market environment and economic conditions, and it is reasonable to assume they are exogenous to the historical returns of individual assets given historical values, thereby satisfying the causal structure in our model. 

In the following, Section~\ref{subsec-results-portfolio-performance} shows the performance of our approach on the portfolio problem, and Section~\ref{subsec-results-portfolio-interpret} shows its intrinsic interpretability from an empirical perspective. 

\subsubsection{Performance of the Proposed Approach} \label{subsec-results-portfolio-performance}

We implement a rolling-horizon experiment to validate the performance of the proposed approach. For the first trade day of each month between January 2021 and December 2022, we randomly sample $d_y = 50$ assets from the universe to form the stock pool. 
We make portfolio decisions based on an empirical distribution formed by the prior two-year window data on the covariates and asset returns, targeting the best return over the subsequent 60-day holding period. 

Let $r_{i,j}$ denote the return of asset $i$ on day $j$ within the testing horizon ($j \in [60]$). We compare the following methods: 
\begin{enumerate}
    \item The post-hoc testing (PT) model. This benchmark provides a theoretically optimal objective value (that is, $\mathcal{H}^*$) when the information of the future 60 days is completely known: 
    \begin{equation}\nonumber
        \min_{\boldsymbol{z} \in \mathcal{Z}}\quad \frac{1}{60}\sum_{j=1}^{60} \Big[-\omega \cdot  \sum_{i=1}^{d_y} r_{i,j} z_i + \left ( \sum_{i=1}^{d_y} r_{i,j} z_i - z_0 \right )^2 \Big]. 
    \end{equation}
    \item The equal-weighted (EW) model. This model provides equal weights to each selected asset, that is, the investment amount for all assets is $1/d_y$. 
    \item The unconditional mean-variance (MV) model. This model is a traditional portfolio model that makes a decision based on the empirical distribution $\widehat{\mathbb{P}}_{\widehat{\boldsymbol{Y}}}$:
    \begin{equation}\nonumber
        \min_{\boldsymbol{z} \in \mathcal{Z}}\quad \mathbb{E}_{\boldsymbol{y} \sim \widehat{\mathbb{P}}_{\widehat{\boldsymbol{Y}}}}\Big[ \Psi_{\text{Portfolio}}(\boldsymbol{z}, \boldsymbol{y}) \Big], 
    \end{equation}
    where the portfolio loss function $\Psi_{\text{Portfolio}}$ is defined in Example~\ref{example-real}. 
    \item The conditional mean-variance (CMV) model. This is a contextual stochastic optimization (CSO) approach that trains a decision rule $f \in \mathcal{F}$ to minimize the empirical conditional risk: 
    \begin{equation}\nonumber
        \inf_{f \in \mathcal{F}} \quad \mathbb{E}_{(\boldsymbol{x}, \boldsymbol{y}) \sim \widehat{\mathbb{P}}}\Big[\Psi_{\text{Portfolio}}(f(\boldsymbol{x}), \boldsymbol{y})\Big]. 
    \end{equation}
    \item Our conditional $p$-Causal-SDRO mean-variance ($p$-Causal-SDRO) model shown in Example~\ref{example-real}. 
\end{enumerate}

Let $\tilde{r}_t$ denote the realized daily return of the portfolio on day $t$ within the 60-day holding period ($t \in [60]$). We compute and report the following performance metrics:
(1) The mean return $\text{Mean}(\{\tilde{r}_t\}_{t \in [60]})$, and the standard deviation of return $\text{stdDev}(\{\tilde{r}_t\}_{t\in [60]})$. (2) The portfolio loss value shown in Example~\ref{example-real}. (3) The annualized Sharpe ratio $\sqrt{252} \times \text{Mean}(\{\tilde{r}_t\}_{t\in [60]}) / \text{stdDev}(\{\tilde{r}_t\}_{t\in [60]})$. (4) The Conditional Value-at-Risk (CVaR) at the $5\%$ level, which quantifies the expected loss in the worst-case scenarios. (5) The out-of-sample performance for all decision rules following Equation~\eqref{out-of-sample-performance-value}.  

\begin{table}[!htb]
  \centering
  \caption{Average performance for mean-variance methods}
    \begin{tabular}{c|cccccc}
    \toprule
    \textbf{$\omega$} & \textbf{Methods} & \textbf{Average Loss}  $\downarrow$ & \textbf{Sharpe} $\uparrow$ & \textbf{Mean} $\uparrow$ & \textbf{stdDev} $\downarrow$ & \textbf{CVaR} $\downarrow$ \\
    \midrule
    \multirow{6}[2]{*}{\textbf{1}} 
          & PT    & 0.468  & 4.125  & 0.176  & 0.784  & 1.971  \\
          & EW    & 1.425  & 1.163  & 0.068  & 1.195  & 2.450  \\
          & MV    & 1.009  & \textbf{1.578 } & 0.072  & 1.008  & 1.971  \\
          & CMV   & 0.984  & 1.156  & 0.061  & 1.003  & 2.108  \\
          & 1-Causal-SDRO & 0.950  & 1.352  & 0.067  & 0.985  & 2.004  \\
          & 2-Causal-SDRO & \textbf{0.925 } & 1.559  & \textbf{0.081 } & \textbf{0.978 } & \textbf{1.965 } \\
    \midrule
    \multirow{6}[2]{*}{\textbf{3}} 
          & PT    & 0.016  & 5.189  & 0.263  & 0.883  & 1.543  \\
          & EW    & 1.288  & 1.163  & 0.068  & 1.195  & 2.450  \\
          & MV    & 0.936  & \textbf{1.512 } & 0.069  & 1.037  & \textbf{2.033 } \\
          & CMV   & \textbf{0.865 } & 1.367  & 0.069  & \textbf{1.016 } & 2.089  \\
          & 1-Causal-SDRO & 0.929  & 1.387  & \textbf{0.072 } & 1.044  & 2.109  \\
          & 2-Causal-SDRO & 0.895  & 1.260  & 0.065  & 1.022  & 2.058  \\
    \midrule
    \multirow{6}[2]{*}{\textbf{5}} 
          & PT    & -0.569  & 5.383  & 0.315  & 0.991  & 1.705  \\
          & EW    & 1.152  & 1.163  & 0.068  & 1.195  & 2.450  \\
          & MV    & 0.890  & \textbf{1.445 } & 0.066  & 1.073  & \textbf{2.103 } \\
          & CMV   & 0.836  & 1.293  & 0.063  & \textbf{1.045 } & 2.132  \\
          & 1-Causal-SDRO & \textbf{0.800 } & 1.353  & \textbf{0.074 } & 1.060  & 2.127  \\
          & 2-Causal-SDRO & 0.851  & 1.107  & 0.059  & 1.053  & 2.147  \\
    \midrule
    \multirow{6}[2]{*}{\textbf{7}} 
          & PT    & -1.234  & 5.360  & 0.345  & 1.075  & 1.818  \\
          & EW    & 1.016  & 1.163  & \textbf{0.068 } & 1.195  & 2.450  \\
          & MV    & 0.874  & \textbf{1.367 } & 0.061  & 1.111  & \textbf{2.176 } \\
          & CMV   & 0.895  & 0.917  & 0.046  & 1.087  & 2.258  \\
          & 1-Causal-SDRO & \textbf{0.775 } & 1.192  & 0.062  & 1.085  & 2.222  \\
          & 2-Causal-SDRO & 0.823  & 1.203  & 0.058  & \textbf{1.084 } & 2.310  \\
    \midrule
    \multirow{6}[2]{*}{\textbf{9}} 
          & PT    & -1.943  & 5.261  & 0.361  & 1.135  & 1.897  \\
          & EW    & 0.879  & 1.163  & 0.068  & 1.195  & 2.450  \\
          & MV    & 0.861  & \textbf{1.317 } & 0.058  & 1.147  & \textbf{2.236 } \\
          & CMV   & 0.752  & 1.141  & 0.052  & \textbf{1.081 } & 2.265  \\
          & 1-Causal-SDRO & 0.698  & 1.115  & 0.062  & 1.102  & 2.337  \\
          & 2-Causal-SDRO & \textbf{0.618 } & 1.253  & \textbf{0.074 } & 1.113  & 2.330  \\
    \bottomrule
    \end{tabular}%
  \label{tab:porfolio-results}
    \begin{minipage}{\textwidth}\centering
    \footnotesize 
    \vspace{0.4em}
    \textit{Note}. Bold values represent the best performance except for the PT model for each value of $\omega$.
  \end{minipage}
\end{table}

We employ the proposed SRF as the parametric decision rule for both the CMV and Causal-SDRO models. Table~\ref{tab:porfolio-results} shows the experimental results across varying risk aversion levels $\omega \in \{1, 3, 5, 7, 9\}$, representing different tradeoffs between portfolio mean return and variance. 
As illustrated, the Causal-SDRO model achieves the lowest average loss among all implementable baselines (excluding the PT oracle) for the four cases where $\omega \in \{1, 5, 7, 9\}$. 
The only exception occurs at $\omega = 3$, where the CMV yields a slightly smaller average loss and yet Causal-SDRO delivers the highest mean portfolio return. 
Across almost all values of $\omega$, both CMV and Causal-SDRO outperform the unconditional MV model, underscoring the value of incorporating covariates for decision-making. 
In practice, although decision-makers cannot obtain all covariates,  our results demonstrate that our approach maintains robust performance even when the observed features may be imperfect. 

\begin{figure}[!htb]
  \centering
    \includegraphics[width=0.9\linewidth]{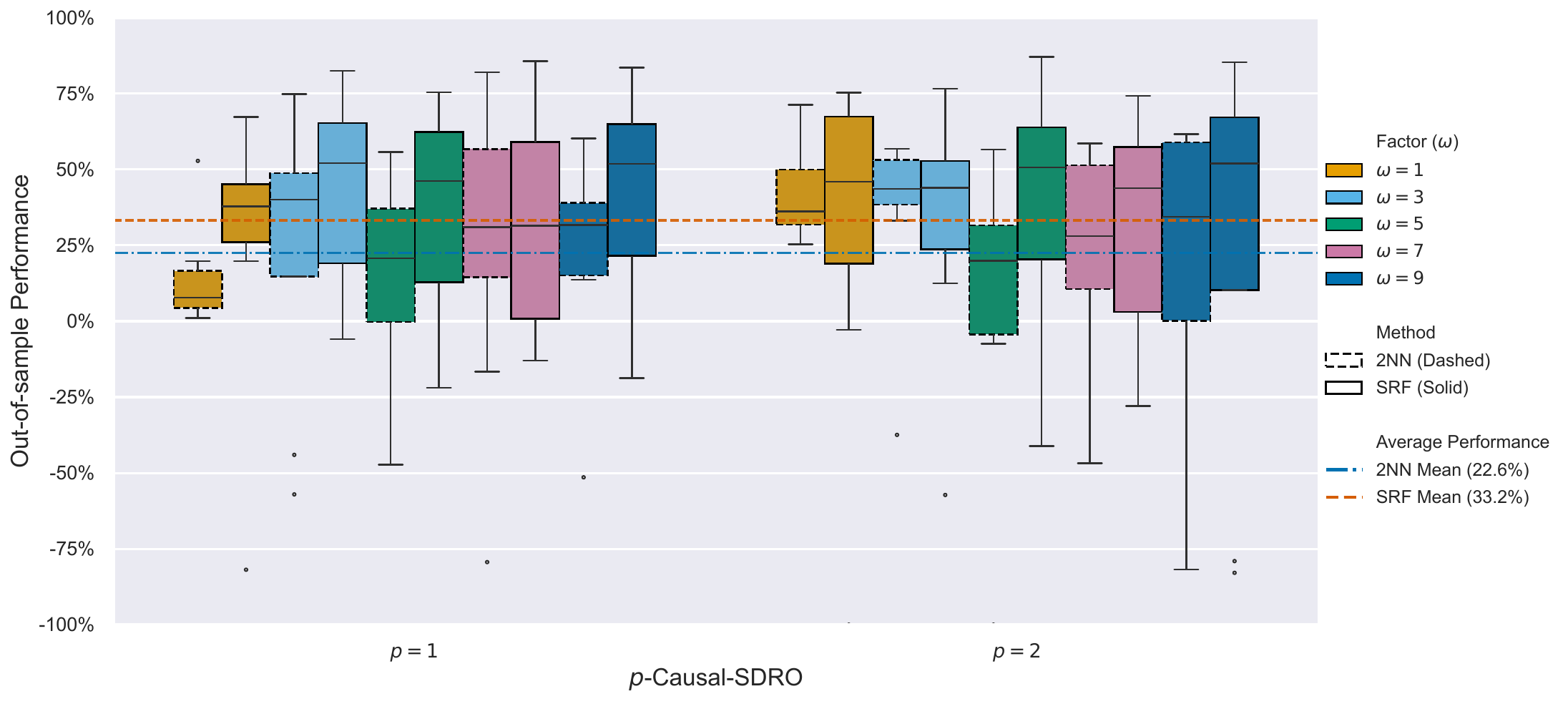}
  \caption{Out-of-sample performance of the decision rules on the portfolio problem }
  \label{fig:port-box} 
\end{figure}

Figure~\ref{fig:port-box} compares the out-of-sample performance of decision rules on the portfolio selection problem. 
As illustrated, the SRF decision rule outperforms the 2NN benchmark, achieving an average score of 33.2\% compared to 22.6\%. Furthermore, the SRF consistently maintains a higher median performance across all tested values of $p$ and $\omega$.

Consistent with the findings in the newsvendor setting, Causal-SDRO maintains its robust performance across varying parameters and outperforms existing DRO benchmarks. Please see Appendix~\ref{ecsec-cv-figures} for details. 

\subsubsection{Interpretability of the Proposed SRF} \label{subsec-results-portfolio-interpret}

To further understand the intrinsic interpretability, we examine a simple Soft Regression Tree (SRT) with three layers, trained on the mean-variance portfolio problem with $\omega=5$. 
This SRT is trained using covariates and asset returns from the preceding two-year rolling window. 

Figure~\ref{fig:interpre-a-tree} illustrates the structure of the trained SRT, where normalized feature weights are displayed at each internal node. Consider two input covariates $\boldsymbol{x}_1, \boldsymbol{x}_2 \in \mathbb{R}^{5}$. The decision of $\boldsymbol{x}_1$ becomes $\boldsymbol{\pi}_8$ with weight probability (approximately) $1$. 
The model maps $\boldsymbol{x}_2$ to decisions $\boldsymbol{\pi}_7$ and $\boldsymbol{\pi}_4$ with weight probabilities $0.996$ and $0.004$, respectively.
These results confirm that the final decision is dominated by a few high-probability routes. 
\begin{figure}[!htb]
  \centering
    \includegraphics[width=0.9\linewidth]{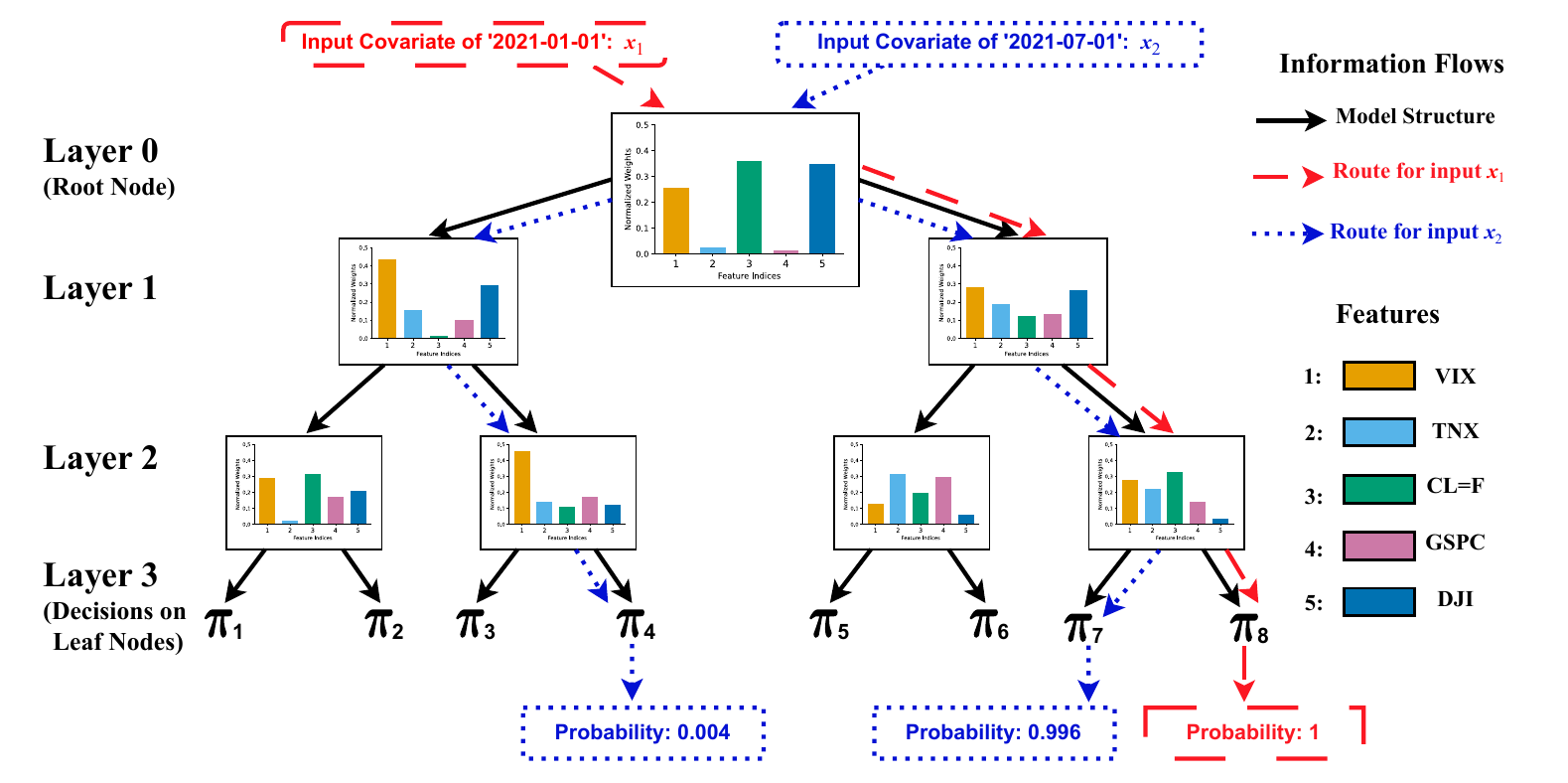}
  \caption{The structure of the trained SRT with three layers. Here, the solid black lines denote the model structure, while the red and blue dashed lines illustrate the decision routes and their corresponding selection probabilities for two input covariates $\boldsymbol{x}_1$ and $\boldsymbol{x}_2$, respectively.}
  \label{fig:interpre-a-tree} 
\end{figure}

We next demonstrate the intrinsic interpretability of this SRT from both global and local perspectives. 
The global interpretation provides a holistic view of the model by quantifying the contribution of each feature to the overall decision-making process, while the local shows how an individual decision is derived given a specific covariate~\citep{dwivedi2023explainable}. 
In Appendix~\ref{ecsec-interpretability}, we introduce a global feature importance measure and a local feature attribution measure for SRF based on differentiability. 
In the following, we analyze the SRT using these interpretation measures and compare them with traditional posh-hoc explanation methods. 

\begin{figure}
    \centering
    \includegraphics[width=0.70\linewidth]{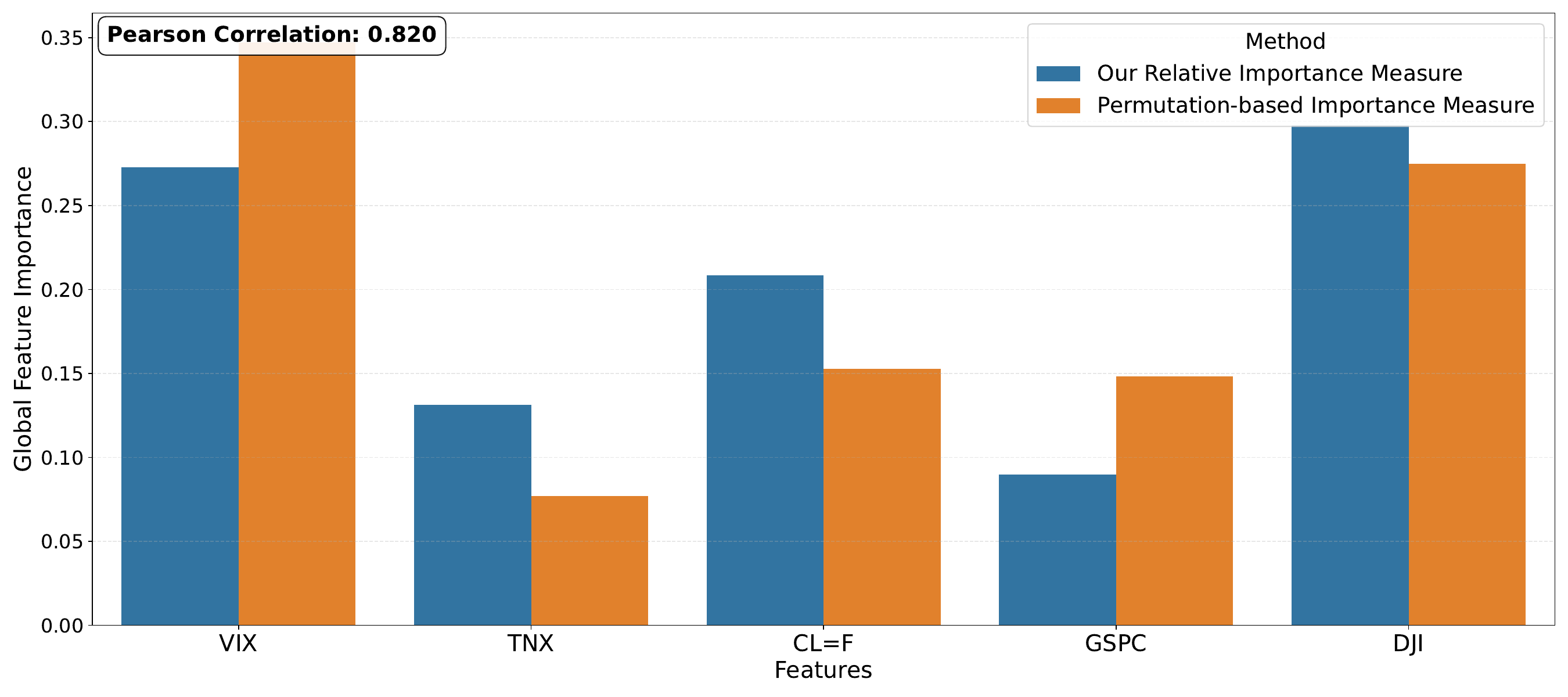}
    \caption{Global feature importance comparison for the trained SRT} 
    \label{fig: SRT-global}
\end{figure}

\begin{figure}[!htb]
    \centering
    \includegraphics[width=\linewidth]{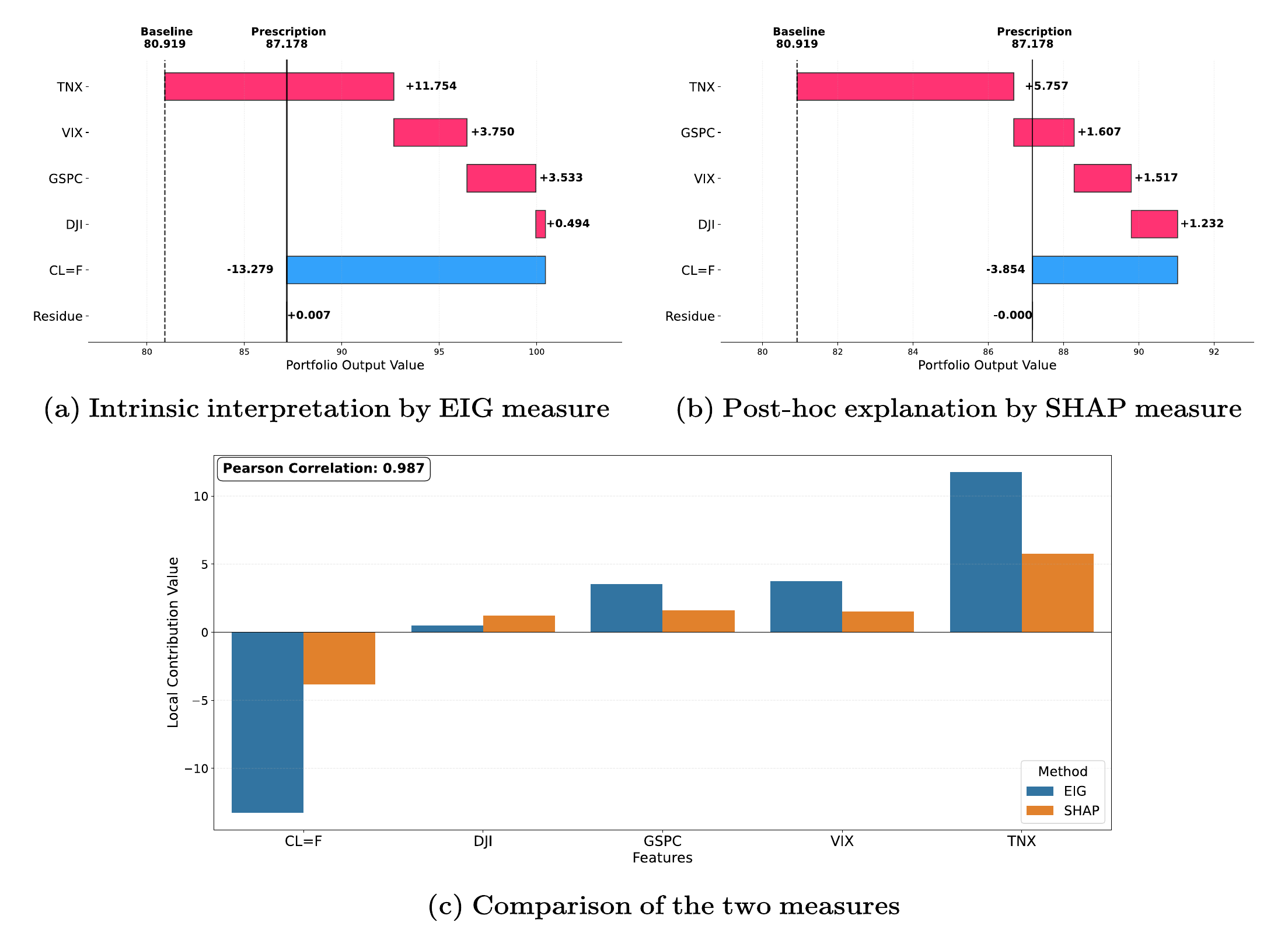}
  \caption{ Local feature attribution comparison for the trained SRT }
  \label{fig:local-interpret}
\end{figure}

Globally, our feature importance measure is defined based on the average marginal sensitivity of the decision with respect to each feature over the training set. 
Figure~\ref{fig: SRT-global} visualizes the relative feature importance of this SRT using the proposed measure in Appendix~\ref{ecsec-interpretability} and a perturbation-based measure ~\citep{hastie2009elements}. 
The difference between the two measures is that our measure shows the feature importance in an intrinsic way that depends only on the derivatives of SRF and avoids post-hoc perturbation analyses. 
As illustrated, these measures are highly correlated with a Pearson correlation coefficient of 0.820. This result confirms that the intrinsic interpretation of SRF is consistent with established global post-hoc explanations. 

Locally, our intrinsic interpretation measure, referred to as the empirical integrated gradient (EIG), decomposes the decision into the sum of feature contributions defined by integrated gradients~\citep{sundararajan2017axiomatic}. 
Figure~\ref{fig:local-interpret} visualizes the feature attributions derived from the proposed EIG measure and a traditional post-hoc explanation \citep[SHAP measure,][]{lundberg2020local} for this SRT given a specific covariate. 
Figures~\ref{fig:local-interpret}(a) and~\ref{fig:local-interpret}(c) display the waterfall plots for EIG and SHAP measures, respectively, where red and blue bars indicate positive and negative contributions, respectively.  
The prescription and baseline value are calculated by $\sum_{k=1}^{d_z}[ f_{\boldsymbol{\theta}}^{\text{SRF}}(\boldsymbol{x}) ]_{k}$ and $\sum_{i=1}^{N} \sum_{k=1}^{d_z} [ f_{\boldsymbol{\theta}}^{\text{SRF}}(\boldsymbol{x}^{i}) ]_{k} / N$, respectively. 
Figure~\ref{fig:local-interpret}(c) directly compares the EIG and SHAP values across all features. 
As illustrated, the high correlation coefficient of 0.987 between these measures reflects the consistency between our local intrinsic interpretation and the local post-hoc explanation. 

\section{Conclusion}\label{sec-conclusion}

In this paper, we consider the causal and continuous structure of the underlying distribution for contextual DRO.
We develop a new framework termed Causal-SDRO, which builds the ambiguity set using the entropy-regularized causal Wasserstein distance, excluding discrete and causally implausible distributions. 
To maintain interpretability and computational tractability, we develop a soft regression forest (SRF) decision rule.  
The SRF possesses universal approximation capabilities to approach optimal policies within arbitrary measurable function spaces and maintains the interpretability of tree-based models, enabling intrinsic interpretation from both global and local perspectives. 
To solve the resulting model, we present a gradient-based algorithm with a convergence rate at the order of $\mathcal{O}(\varepsilon^{-4})$, which is nearly optimal. 
The proposed approach empirically outperforms the baselines in both decision out-of-sample performance and interpretability. 

There are several promising directions to explore. 
First, it is interesting to incorporate additional information to design ambiguity sets that contain more plausible distributions for contextual DRO.
Moreover, it is important to develop accelerated algorithms for SRF-based optimization. 
Finally, it is promising to apply our framework for practical applications that require safety, robustness, and interpretability.




\vskip 0.2in
\bibliography{references}

\begin{thebibliography}{81}
\providecommand{\natexlab}[1]{#1}
\providecommand{\url}[1]{\texttt{#1}}
\expandafter\ifx\csname urlstyle\endcsname\relax
  \providecommand{\doi}[1]{doi: #1}\else
  \providecommand{\doi}{doi: \begingroup \urlstyle{rm}\Url}\fi

\bibitem[Aghaei et~al.(2025)Aghaei, G{\'o}mez, and Vayanos]{aghaei2025strong}
Sina Aghaei, Andr{\'e}s G{\'o}mez, and Phebe Vayanos.
\newblock Strong optimal classification trees.
\newblock \emph{Operations Research}, 73\penalty0 (4):\penalty0 2223--2241,
  2025.

\bibitem[Alvarez-Melis and Jaakkola(2018)]{alvarez2018robustness}
David Alvarez-Melis and Tommi~S Jaakkola.
\newblock On the robustness of interpretability methods.
\newblock \emph{arXiv preprint arXiv:1806.08049}, 2018.

\bibitem[Ancona et~al.(2017)Ancona, Ceolini, {\"O}ztireli, and
  Gross]{ancona2017towards}
Marco Ancona, Enea Ceolini, Cengiz {\"O}ztireli, and Markus Gross.
\newblock Towards better understanding of gradient-based attribution methods
  for deep neural networks.
\newblock \emph{arXiv preprint arXiv:1711.06104}, 2017.

\bibitem[Arjevani et~al.(2023)Arjevani, Carmon, Duchi, Foster, Srebro, and
  Woodworth]{arjevani2023lower}
Yossi Arjevani, Yair Carmon, John~C Duchi, Dylan~J Foster, Nathan Srebro, and
  Blake Woodworth.
\newblock Lower bounds for non-convex stochastic optimization.
\newblock \emph{Mathematical Programming}, 199\penalty0 (1):\penalty0 165--214,
  2023.

\bibitem[Azizian et~al.(2023{\natexlab{a}})Azizian, Iutzeler, and
  Malick]{azizian2023exact}
Wa{\"\i}ss Azizian, Franck Iutzeler, and J{\'e}r{\^o}me Malick.
\newblock Exact generalization guarantees for (regularized) {W}asserstein
  distributionally robust models.
\newblock \emph{Advances in Neural Information Processing Systems},
  36:\penalty0 14584--14596, 2023{\natexlab{a}}.

\bibitem[Azizian et~al.(2023{\natexlab{b}})Azizian, Iutzeler, and
  Malick]{azizian2023regularization}
Wa{\"\i}ss Azizian, Franck Iutzeler, and J{\'e}r{\^o}me Malick.
\newblock Regularization for {W}asserstein distributionally robust
  optimization.
\newblock \emph{ESAIM: Control, Optimisation and Calculus of Variations},
  29:\penalty0 33, 2023{\natexlab{b}}.

\bibitem[Ban and Rudin(2019)]{ban2019big}
Gah-Yi Ban and Cynthia Rudin.
\newblock The big data newsvendor: Practical insights from machine learning.
\newblock \emph{Operations Research}, 67\penalty0 (1):\penalty0 90--108, 2019.

\bibitem[Ben-Tal et~al.(2013)Ben-Tal, Den~Hertog, De~Waegenaere, Melenberg, and
  Rennen]{ben2013robust}
Aharon Ben-Tal, Dick Den~Hertog, Anja De~Waegenaere, Bertrand Melenberg, and
  Gijs Rennen.
\newblock Robust solutions of optimization problems affected by uncertain
  probabilities.
\newblock \emph{Management Science}, 59\penalty0 (2):\penalty0 341--357, 2013.

\bibitem[Bennouna and Van~Parys(2022)]{bennouna2022holistic}
Amine Bennouna and Bart Van~Parys.
\newblock Holistic robust data-driven decisions.
\newblock \emph{arXiv preprint arXiv: 2207.09560}, 2022.

\bibitem[Bertsimas and Dunn(2017)]{bertsimas2017optimal}
Dimitris Bertsimas and Jack Dunn.
\newblock Optimal classification trees.
\newblock \emph{Machine Learning}, 106\penalty0 (7):\penalty0 1039--1082, 2017.

\bibitem[Bertsimas and Kallus(2020)]{bertsimas2020predictive}
Dimitris Bertsimas and Nathan Kallus.
\newblock From predictive to prescriptive analytics.
\newblock \emph{Management Science}, 66\penalty0 (3):\penalty0 1025--1044,
  2020.

\bibitem[Bertsimas and Koduri(2022)]{bertsimas2022data}
Dimitris Bertsimas and Nihal Koduri.
\newblock Data-driven optimization: A reproducing kernel {H}ilbert space
  approach.
\newblock \emph{Operations Research}, 70\penalty0 (1):\penalty0 454--471, 2022.

\bibitem[Bertsimas and Stellato(2021)]{bertsimas2021voice}
Dimitris Bertsimas and Bartolomeo Stellato.
\newblock The voice of optimization.
\newblock \emph{Machine Learning}, 110\penalty0 (2):\penalty0 249--277, 2021.

\bibitem[Bertsimas et~al.(2019)Bertsimas, Delarue, Jaillet, and
  Martin]{bertsimas2019price}
Dimitris Bertsimas, Arthur Delarue, Patrick Jaillet, and Sebastien Martin.
\newblock The price of interpretability.
\newblock \emph{arXiv preprint arXiv:1907.03419}, 2019.

\bibitem[Birrell and Ebrahimi(2025)]{birrell2025optimal}
Jeremiah Birrell and Reza Ebrahimi.
\newblock Optimal transport regularized divergences: Application to adversarial
  robustness.
\newblock \emph{SIAM Journal on Mathematics of Data Science}, 7\penalty0
  (4):\penalty0 1801--1827, 2025.

\bibitem[Blackwell and Ryll-Nardzewski(1963)]{blackwell1963non}
David Blackwell and Czes{\l}aw Ryll-Nardzewski.
\newblock Non-existence of everywhere proper conditional distributions.
\newblock \emph{The Annals of Mathematical Statistics}, 34\penalty0
  (1):\penalty0 223--225, 1963.

\bibitem[Blanchet et~al.(2023)Blanchet, Kuhn, Li, and
  Taskesen]{blanchet2023unifying}
Jose Blanchet, Daniel Kuhn, Jiajin Li, and Bahar Taskesen.
\newblock Unifying distributionally robust optimization via optimal transport
  theory.
\newblock \emph{arXiv preprint arXiv:2308.05414}, 2023.

\bibitem[Breiman(2001)]{breiman2001random}
Leo Breiman.
\newblock Random forests.
\newblock \emph{Machine Learning}, 45\penalty0 (1):\penalty0 5--32, 2001.

\bibitem[Casella and Berger(2024)]{casella2024statistical}
George Casella and Roger Berger.
\newblock \emph{Statistical inference}.
\newblock Chapman and Hall/CRC, 2024.

\bibitem[Cescon et~al.(2025)Cescon, Martin, and
  Ferrari-Trecate]{cescon2025data}
Riccardo Cescon, Andrea Martin, and Giancarlo Ferrari-Trecate.
\newblock Data-driven distributionally robust control based on {S}inkhorn
  ambiguity sets.
\newblock \emph{arXiv preprint arXiv:2503.20703}, 2025.

\bibitem[Chen and Paschalidis(2019)]{chen2019selecting}
Ruidi Chen and Ioannis Paschalidis.
\newblock Selecting optimal decisions via distributionally robust
  nearest-neighbor regression.
\newblock \emph{Advances in Neural Information Processing Systems}, 32, 2019.

\bibitem[Chen et~al.(2021)Chen, Sun, and Yin]{chen2021solving}
Tianyi Chen, Yuejiao Sun, and Wotao Yin.
\newblock Solving stochastic compositional optimization is nearly as easy as
  solving stochastic optimization.
\newblock \emph{IEEE Transactions on Signal Processing}, 69:\penalty0
  4937--4948, 2021.

\bibitem[Chen and Gao(2019)]{chen2019stochastic}
Xin Chen and Xiangyu Gao.
\newblock Stochastic optimization with decisions truncated by positively
  dependent random variables.
\newblock \emph{Operations Research}, 67\penalty0 (5):\penalty0 1321--1327,
  2019.

\bibitem[Chenreddy et~al.(2022)Chenreddy, Bandi, and Delage]{chenreddy2022data}
Abhilash~Reddy Chenreddy, Nymisha Bandi, and Erick Delage.
\newblock Data-driven conditional robust optimization.
\newblock \emph{Advances in Neural Information Processing Systems},
  35:\penalty0 9525--9537, 2022.

\bibitem[Cohn(2013)]{cohn2013measure}
Donald~L Cohn.
\newblock \emph{Measure theory}, volume~1.
\newblock Springer, 2013.

\bibitem[Dapogny et~al.(2023)Dapogny, Iutzeler, Meda, and
  Thibert]{dapogny2023entropy}
Charles Dapogny, Franck Iutzeler, Andrea Meda, and Boris Thibert.
\newblock Entropy-regularized {W}asserstein distributionally robust shape and
  topology optimization.
\newblock \emph{Structural and Multidisciplinary Optimization}, 66\penalty0
  (3):\penalty0 42, 2023.

\bibitem[Demirovi{\'c} et~al.(2022)Demirovi{\'c}, Lukina, Hebrard, Chan,
  Bailey, Leckie, Ramamohanarao, and Stuckey]{demirovic2022murtree}
Emir Demirovi{\'c}, Anna Lukina, Emmanuel Hebrard, Jeffrey Chan, James Bailey,
  Christopher Leckie, Kotagiri Ramamohanarao, and Peter~J Stuckey.
\newblock Murtree: Optimal decision trees via dynamic programming and search.
\newblock \emph{Journal of Machine Learning Research}, 23\penalty0
  (26):\penalty0 1--47, 2022.

\bibitem[Doshi-Velez and Kim(2017)]{doshi2017towards}
Finale Doshi-Velez and Been Kim.
\newblock Towards a rigorous science of interpretable machine learning.
\newblock \emph{arXiv preprint arXiv:1702.08608}, 2017.

\bibitem[Dwivedi et~al.(2023)Dwivedi, Dave, Naik, Singhal, Omer, Patel, Qian,
  Wen, Shah, Morgan, et~al.]{dwivedi2023explainable}
Rudresh Dwivedi, Devam Dave, Het Naik, Smiti Singhal, Rana Omer, Pankesh Patel,
  Bin Qian, Zhenyu Wen, Tejal Shah, Graham Morgan, et~al.
\newblock Explainable {AI (XAI)}: Core ideas, techniques, and solutions.
\newblock \emph{ACM Computing Surveys}, 55\penalty0 (9):\penalty0 1--33, 2023.

\bibitem[Elmachtoub and Grigas(2022)]{elmachtoub2022smart}
Adam~N Elmachtoub and Paul Grigas.
\newblock Smart “predict, then optimize”.
\newblock \emph{Management Science}, 68\penalty0 (1):\penalty0 9--26, 2022.

\bibitem[Elmachtoub et~al.(2020)Elmachtoub, Liang, and
  McNellis]{elmachtoub2020decision}
Adam~N Elmachtoub, Jason Cheuk~Nam Liang, and Ryan McNellis.
\newblock Decision trees for decision-making under the predict-then-optimize
  framework.
\newblock In \emph{International Conference on Machine Learning}, pages
  2858--2867. PMLR, 2020.

\bibitem[Esteban-P{\'e}rez and Morales(2022)]{esteban2022distributionally}
Adri{\'a}n Esteban-P{\'e}rez and Juan~M Morales.
\newblock Distributionally robust stochastic programs with side information
  based on trimmings.
\newblock \emph{Mathematical Programming}, 195\penalty0 (1):\penalty0
  1069--1105, 2022.

\bibitem[Forel et~al.(2023)Forel, Parmentier, and Vidal]{forel2023explainable}
Alexandre Forel, Axel Parmentier, and Thibaut Vidal.
\newblock Explainable data-driven optimization: from context to decision and
  back again.
\newblock In \emph{International Conference on Machine Learning}, pages
  10170--10187. PMLR, 2023.

\bibitem[Frogner et~al.(2021)Frogner, Claici, Chien, and
  Solomon]{JMLR:v22:19-1023}
Charlie Frogner, Sebastian Claici, Edward Chien, and Justin Solomon.
\newblock Incorporating unlabeled data into distributionally robust learning.
\newblock \emph{Journal of Machine Learning Research}, 22\penalty0
  (56):\penalty0 1--46, 2021.

\bibitem[Frosst and Hinton(2017)]{frosst2017distilling}
Nicholas Frosst and Geoffrey Hinton.
\newblock Distilling a neural network into a soft decision tree.
\newblock \emph{arXiv preprint arXiv:1711.09784}, 2017.

\bibitem[Fu et~al.(2024)Fu, Li, and Zhang]{fu2024distributionally}
Mingyang Fu, Xiaobo Li, and Lianmin Zhang.
\newblock Distributionally robust newsvendor under stochastic dominance with a
  feature-based application.
\newblock \emph{Manufacturing $\&$ Service Operations Management}, 26\penalty0
  (5):\penalty0 1962--1977, 2024.

\bibitem[Ghadimi et~al.(2016)Ghadimi, Lan, and Zhang]{ghadimi2016mini}
Saeed Ghadimi, Guanghui Lan, and Hongchao Zhang.
\newblock Mini-batch stochastic approximation methods for nonconvex stochastic
  composite optimization.
\newblock \emph{Mathematical Programming}, 155\penalty0 (1):\penalty0 267--305,
  2016.

\bibitem[Han et~al.(2025)Han, Hu, and Shen]{han2025deep}
Jinhui Han, Ming Hu, and Guohao Shen.
\newblock Deep neural newsvendor.
\newblock \emph{Management Science}, 2025.

\bibitem[Hastie et~al.(2009)Hastie, Tibshirani, Friedman,
  et~al.]{hastie2009elements}
Trevor Hastie, Robert Tibshirani, Jerome Friedman, et~al.
\newblock The elements of statistical learning, 2009.

\bibitem[Hu et~al.(2020)Hu, Chen, and He]{hu2020sample}
Yifan Hu, Xin Chen, and Niao He.
\newblock Sample complexity of sample average approximation for conditional
  stochastic optimization.
\newblock \emph{SIAM Journal on Optimization}, 30\penalty0 (3):\penalty0
  2103--2133, 2020.

\bibitem[Hu et~al.(2023)Hu, Wang, Xie, Krause, and Kuhn]{hu2023contextual}
Yifan Hu, Jie Wang, Yao Xie, Andreas Krause, and Daniel Kuhn.
\newblock Contextual stochastic bilevel optimization.
\newblock \emph{Advances in Neural Information Processing Systems},
  36:\penalty0 78412--78434, 2023.

\bibitem[Hu and Hong(2013)]{hu2013kullback}
Zhaolin Hu and L~Jeff Hong.
\newblock Kullback-{L}eibler divergence constrained distributionally robust
  optimization.
\newblock \emph{Available at Optimization Online}, 1\penalty0 (2):\penalty0 9,
  2013.

\bibitem[Jiang and Mao(2025)]{jiang2025sinkhorn}
Guohui Jiang and Tiantian Mao.
\newblock Sinkhorn distributionally robust conditional quantile prediction with
  fixed design.
\newblock \emph{Entropy}, 27\penalty0 (6):\penalty0 557, 2025.

\bibitem[Kallus and Mao(2023)]{kallus2023stochastic}
Nathan Kallus and Xiaojie Mao.
\newblock Stochastic optimization forests.
\newblock \emph{Management Science}, 69\penalty0 (4):\penalty0 1975--1994,
  2023.

\bibitem[Kannan et~al.(2024)Kannan, Bayraksan, and
  Luedtke]{kannan2024residuals}
Rohit Kannan, G{\"u}zin Bayraksan, and James~R Luedtke.
\newblock Residuals-based distributionally robust optimization with covariate
  information.
\newblock \emph{Mathematical Programming}, 207\penalty0 (1):\penalty0 369--425,
  2024.

\bibitem[Kleywegt et~al.(2002)Kleywegt, Shapiro, and Homem-de
  Mello]{kleywegt2002sample}
Anton~J Kleywegt, Alexander Shapiro, and Tito Homem-de Mello.
\newblock The sample average approximation method for stochastic discrete
  optimization.
\newblock \emph{SIAM Journal on Optimization}, 12\penalty0 (2):\penalty0
  479--502, 2002.

\bibitem[Liu et~al.(2024)Liu, Chen, Wang, and Wang]{liu2024newsvendor}
Feng Liu, Zhi Chen, Ruodu Wang, and Shuming Wang.
\newblock Newsvendor under mean-variance ambiguity and misspecification.
\newblock \emph{arXiv preprint arXiv:2405.07008}, 2024.

\bibitem[Liu et~al.(2025)Liu, Xu, Liu, Gao, and Li]{liu2025neural}
Zhangyi Liu, Zhongling Xu, Feng Liu, Rui Gao, and Shuang Li.
\newblock Neural decision rule for constrained contextual stochastic
  optimization.
\newblock In \emph{NeurIPS 2025 Workshop MLxOR: Mathematical Foundations and
  Operational Integration of Machine Learning for Uncertainty-Aware
  Decision-Making}, 2025.

\bibitem[Liyanage and Shanthikumar(2005)]{liyanage2005practical}
Liwan~H Liyanage and J~George Shanthikumar.
\newblock A practical inventory control policy using operational statistics.
\newblock \emph{Operations Research Letters}, 33\penalty0 (4):\penalty0
  341--348, 2005.

\bibitem[Lundberg et~al.(2020)Lundberg, Erion, Chen, DeGrave, Prutkin, Nair,
  Katz, Himmelfarb, Bansal, and Lee]{lundberg2020local}
Scott~M Lundberg, Gabriel Erion, Hugh Chen, Alex DeGrave, Jordan~M Prutkin,
  Bala Nair, Ronit Katz, Jonathan Himmelfarb, Nisha Bansal, and Su-In Lee.
\newblock From local explanations to global understanding with explainable {AI}
  for trees.
\newblock \emph{Nature Machine Intelligence}, 2\penalty0 (1):\penalty0 56--67,
  2020.

\bibitem[Ma et~al.(2018)Ma, Wu, and E]{ma2018priori}
Chao Ma, Lei Wu, and Weinan E.
\newblock A priori estimates of the population risk for two-layer neural
  networks.
\newblock \emph{arXiv preprint arXiv:1810.06397}, 2018.

\bibitem[Masud et~al.(2023)Masud, Werenski, Murphy, and
  Aeron]{masud2023multivariate}
Shoaib~Bin Masud, Matthew Werenski, James~M Murphy, and Shuchin Aeron.
\newblock Multivariate soft rank via entropy-regularized optimal transport:
  Sample efficiency and generative modeling.
\newblock \emph{Journal of Machine Learning Research}, 24\penalty0
  (160):\penalty0 1--65, 2023.

\bibitem[Nguyen et~al.(2020)Nguyen, Zhang, Blanchet, Delage, and
  Ye]{nguyen2020distributionally}
Viet~Anh Nguyen, Fan Zhang, Jose Blanchet, Erick Delage, and Yinyu Ye.
\newblock Distributionally robust local non-parametric conditional estimation.
\newblock \emph{Advances in Neural Information Processing Systems},
  33:\penalty0 15232--15242, 2020.

\bibitem[Nguyen et~al.(2025)Nguyen, Zhang, Wang, Blanchet, Delage, and
  Ye]{nguyen2025robustifying}
Viet~Anh Nguyen, Fan Zhang, Shanshan Wang, Jose Blanchet, Erick Delage, and
  Yinyu Ye.
\newblock Robustifying conditional portfolio decisions via optimal transport.
\newblock \emph{Operations Research}, 73\penalty0 (5):\penalty0 2801--2829,
  2025.

\bibitem[Notz and Pibernik(2024)]{notz2024explainable}
Pascal~M Notz and Richard Pibernik.
\newblock Explainable subgradient tree boosting for prescriptive analytics in
  operations management.
\newblock \emph{European Journal of Operational Research}, 312\penalty0
  (3):\penalty0 1119--1133, 2024.

\bibitem[Oroojlooyjadid et~al.(2020)Oroojlooyjadid, Snyder, and
  Tak{\'a}{\v{c}}]{oroojlooyjadid2020applying}
Afshin Oroojlooyjadid, Lawrence~V Snyder, and Martin Tak{\'a}{\v{c}}.
\newblock Applying deep learning to the newsvendor problem.
\newblock \emph{IISE Transactions}, 52\penalty0 (4):\penalty0 444--463, 2020.

\bibitem[Ouasfi et~al.(2025)Ouasfi, Jena, Marchand, and
  Boukhayma]{ouasfi2025toward}
Amine Ouasfi, Shubhendu Jena, Eric Marchand, and Adnane Boukhayma.
\newblock Toward robust neural reconstruction from sparse point sets.
\newblock In \emph{Proceedings of the Computer Vision and Pattern Recognition
  Conference}, pages 6552--6562, 2025.

\bibitem[Perakis et~al.(2023)Perakis, Sim, Tang, and Xiong]{perakis2023robust}
Georgia Perakis, Melvyn Sim, Qinshen Tang, and Peng Xiong.
\newblock Robust pricing and production with information partitioning and
  adaptation.
\newblock \emph{Management Science}, 69\penalty0 (3):\penalty0 1398--1419,
  2023.

\bibitem[Poursoltani et~al.(2023)Poursoltani, Delage, and
  Georghiou]{poursoltani2023robust}
Mehran Poursoltani, Erick Delage, and Angelos Georghiou.
\newblock Robust data-driven prescriptiveness optimization.
\newblock \emph{arXiv preprint arXiv:2306.05937}, 2023.

\bibitem[Qi et~al.(2022)Qi, Cao, and Shen]{qi2022distributionally}
Meng Qi, Ying Cao, and Zuo-Jun Shen.
\newblock Distributionally robust conditional quantile prediction with fixed
  design.
\newblock \emph{Management Science}, 68\penalty0 (3):\penalty0 1639--1658,
  2022.

\bibitem[Qi et~al.(2023)Qi, Shi, Qi, Ma, Yuan, Wu, and Shen]{qi2023practical}
Meng Qi, Yuanyuan Shi, Yongzhi Qi, Chenxin Ma, Rong Yuan, Di~Wu, and Zuo-Jun
  Shen.
\newblock A practical end-to-end inventory management model with deep learning.
\newblock \emph{Management Science}, 69\penalty0 (2):\penalty0 759--773, 2023.

\bibitem[Qi et~al.(2025)Qi, Grigas, and Shen]{qi2025integrated}
Meng Qi, Paul Grigas, and Zuo-Jun Shen.
\newblock Integrated conditional estimation-optimization.
\newblock \emph{Operations Research}, 2025.

\bibitem[Rudin(2019)]{rudin2019stop}
Cynthia Rudin.
\newblock Stop explaining black box machine learning models for high stakes
  decisions and use interpretable models instead.
\newblock \emph{Nature Machine Intelligence}, 1\penalty0 (5):\penalty0
  206--215, 2019.

\bibitem[Sadana et~al.(2025)Sadana, Chenreddy, Delage, Forel, Frejinger, and
  Vidal]{sadana2025survey}
Utsav Sadana, Abhilash Chenreddy, Erick Delage, Alexandre Forel, Emma
  Frejinger, and Thibaut Vidal.
\newblock A survey of contextual optimization methods for decision-making under
  uncertainty.
\newblock \emph{European Journal of Operational Research}, 320\penalty0
  (2):\penalty0 271--289, 2025.

\bibitem[Shapiro et~al.(2021)Shapiro, Dentcheva, and
  Ruszczynski]{shapiro2021lectures}
Alexander Shapiro, Darinka Dentcheva, and Andrzej Ruszczynski.
\newblock \emph{Lectures on stochastic programming: modeling and theory}.
\newblock SIAM, 2021.

\bibitem[Shen et~al.(2023)Shen, Xu, and Zavlanos]{shen2023wasserstein}
Yi~Shen, Pan Xu, and Michael~M Zavlanos.
\newblock Wasserstein distributionally robust policy evaluation and learning
  for contextual bandits.
\newblock \emph{arXiv preprint arXiv:2309.08748}, 2023.

\bibitem[Shun and McCullagh(1995)]{shun1995laplace}
Zhenming Shun and Peter McCullagh.
\newblock Laplace approximation of high dimensional integrals.
\newblock \emph{Journal of the Royal Statistical Society Series B: Statistical
  Methodology}, 57\penalty0 (4):\penalty0 749--760, 1995.

\bibitem[Sim et~al.(2025)Sim, Tang, Zhou, and Zhu]{sim2025analytics}
Melvyn Sim, Qinshen Tang, Minglong Zhou, and Taozeng Zhu.
\newblock The analytics of robust satisficing: predict, optimize, satisfice,
  then fortify.
\newblock \emph{Operations Research}, 73\penalty0 (5):\penalty0 2708--2728,
  2025.

\bibitem[Srivastava et~al.(2021)Srivastava, Wang, Hanasusanto, and
  Ho]{srivastava2021data}
Prateek~R Srivastava, Yijie Wang, Grani~A Hanasusanto, and Chin~Pang Ho.
\newblock On data-driven prescriptive analytics with side information: A
  regularized {N}adaraya-{W}atson approach.
\newblock \emph{arXiv preprint arXiv:2110.04855}, 2021.

\bibitem[Sundararajan et~al.(2017)Sundararajan, Taly, and
  Yan]{sundararajan2017axiomatic}
Mukund Sundararajan, Ankur Taly, and Qiqi Yan.
\newblock Axiomatic attribution for deep networks.
\newblock In \emph{International Conference on Machine Learning}, pages
  3319--3328. PMLR, 2017.

\bibitem[Wang and Xie(2022)]{wang2022data}
Jie Wang and Yao Xie.
\newblock A data-driven approach to robust hypothesis testing using {S}inkhorn
  uncertainty sets.
\newblock \emph{arXiv preprint arXiv:2202.04258}, 2022.

\bibitem[Wang et~al.(2024)Wang, Gao, and Xie]{wang2024non}
Jie Wang, Rui Gao, and Yao Xie.
\newblock Non-convex robust hypothesis testing using {S}inkhorn uncertainty
  sets.
\newblock \emph{arXiv preprint arXiv:2403.14822}, 2024.

\bibitem[Wang et~al.(2025)Wang, Gao, and Xie]{wang2025sinkhorn}
Jie Wang, Rui Gao, and Yao Xie.
\newblock Sinkhorn distributionally robust optimization.
\newblock \emph{Operations Research}, 2025.

\bibitem[Wang et~al.(2017)Wang, Fang, and Liu]{wang2017stochastic}
Mengdi Wang, Ethan~X Fang, and Han Liu.
\newblock Stochastic compositional gradient descent: algorithms for minimizing
  compositions of expected-value functions.
\newblock \emph{Mathematical Programming}, 161\penalty0 (1):\penalty0 419--449,
  2017.

\bibitem[Wang et~al.(2021)Wang, Chen, and Wang]{wang2021distributionally}
Tianyu Wang, Ningyuan Chen, and Chun Wang.
\newblock Distributionally robust prescriptive analytics with {W}asserstein
  distance.
\newblock \emph{arXiv preprint arXiv:2106.05724}, 2021.

\bibitem[Xie and Huo(2024)]{xie2024adjusted}
Yiling Xie and Xiaoming Huo.
\newblock Adjusted {W}asserstein distributionally robust estimator in
  statistical learning.
\newblock \emph{Journal of Machine Learning Research}, 25\penalty0
  (148):\penalty0 1--40, 2024.

\bibitem[Yang et~al.(2022)Yang, Zhang, Chen, Gao, and Hu]{yang2022decision}
Jincheng Yang, Luhao Zhang, Ningyuan Chen, Rui Gao, and Ming Hu.
\newblock Decision-making with side information: A causal transport robust
  approach.
\newblock \emph{Optimization Online}, 2022.

\bibitem[Yang and Li(2023)]{yang2023distributionally}
Shu-Bo Yang and Zukui Li.
\newblock Distributionally robust chance-constrained optimization with
  {S}inkhorn ambiguity set.
\newblock \emph{AIChE Journal}, 69\penalty0 (10):\penalty0 e18177, 2023.

\bibitem[Zhang et~al.(2024)Zhang, Yang, and Gao]{zhang2024optimal}
Luhao Zhang, Jincheng Yang, and Rui Gao.
\newblock Optimal robust policy for feature-based newsvendor.
\newblock \emph{Management Science}, 70\penalty0 (4):\penalty0 2315--2329,
  2024.

\bibitem[Zhou and Liu(2023)]{zhou2023sample}
Zhengyu Zhou and Weiwei Liu.
\newblock Sample complexity for distributionally robust learning under
  chi-square divergence.
\newblock \emph{Journal of Machine Learning Research}, 24\penalty0
  (230):\penalty0 1--27, 2023.

\bibitem[Zhu et~al.(2022)Zhu, Xie, and Sim]{zhu2022joint}
Taozeng Zhu, Jingui Xie, and Melvyn Sim.
\newblock Joint estimation and robustness optimization.
\newblock \emph{Management Science}, 68\penalty0 (3):\penalty0 1659--1677,
  2022.

\end{thebibliography}
\appendix

\section{Technical Analyses and Proofs} 

\subsection{Analysis for Condition~\ref{condit-1}}\label{ecsec-condition1}

We present the following sufficient conditions to verify whether the Condition~\ref{condit-1} holds. 

\begin{proposition}\label{ec-prop-condit-1}
    Condition~\ref{condit-1} holds if there exist $\lambda > 0$ and a constant $\alpha \in \left[0, 1\right)$ so that for $\widehat{\mathbb{P}}\otimes\nu_{\mathcal{X}}\otimes\nu_{\mathcal{Y}}$-almost every $(\widehat{\boldsymbol{x}}, \widehat{\boldsymbol{y}}, \boldsymbol{x}, \boldsymbol{y})$, it holds that 
    \begin{equation} \label{suff-condition-1}
        \Psi(f(\boldsymbol{x}), \boldsymbol{y}) \le \alpha \cdot \lambda c_p((\widehat{\boldsymbol{x}}, \widehat{\boldsymbol{y}}), (\boldsymbol{x}, \boldsymbol{y})) + M(\widehat{\boldsymbol{x}}, \widehat{\boldsymbol{y}}) , 
    \end{equation}
    where $M(\widehat{\boldsymbol{x}}, \widehat{\boldsymbol{y}})$ is a measurable function satisfying $\mathbb{E}_{\widehat{\boldsymbol{y}}\sim\widehat{P}_{\widehat{\boldsymbol{Y}}|\widehat{\boldsymbol{X}}=\widehat{\boldsymbol{x}}}}[e^{M(\widehat{\boldsymbol{x}}, \widehat{\boldsymbol{y}})/(\lambda\epsilon)}] < \infty$ for $\widehat{\mathbb{P}}_{\widehat{\boldsymbol{X}}}$-almost every $\widehat{\boldsymbol{x}}$. 
\end{proposition}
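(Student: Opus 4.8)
The plan is to verify Condition~\ref{condit-1} directly from the sufficient inequality~\eqref{suff-condition-1} by bounding the exponential moment inside Condition~\ref{condit-1}. Fix the $\lambda>0$ and $\alpha\in[0,1)$ provided by the hypothesis. For $\widehat{\mathbb{P}}\otimes\nu_{\mathcal{X}}$-almost every $(\widehat{\boldsymbol{x}},\widehat{\boldsymbol{y}},\boldsymbol{x})$, I would substitute the pointwise bound~\eqref{suff-condition-1} into the integrand $\exp\!\big(\Psi(f(\widehat{\boldsymbol{x}}+\boldsymbol{\xi}_1),\widehat{\boldsymbol{y}}+\boldsymbol{\xi}_2)/(\lambda\epsilon)\big)$. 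Writing $\boldsymbol{x}=\widehat{\boldsymbol{x}}+\boldsymbol{\xi}_1$ and $\boldsymbol{y}=\widehat{\boldsymbol{y}}+\boldsymbol{\xi}_2$, the transport cost is $c_p((\widehat{\boldsymbol{x}},\widehat{\boldsymbol{y}}),(\boldsymbol{x},\boldsymbol{y})) = \|\boldsymbol{\xi}_1\|^p + \|\boldsymbol{\xi}_2\|^p$, so the bound becomes
\begin{equation}\nonumber
\exp\!\left(\frac{\Psi(f(\widehat{\boldsymbol{x}}+\boldsymbol{\xi}_1),\widehat{\boldsymbol{y}}+\boldsymbol{\xi}_2)}{\lambda\epsilon}\right) \le \exp\!\left(\frac{\alpha\lambda(\|\boldsymbol{\xi}_1\|^p+\|\boldsymbol{\xi}_2\|^p) + M(\widehat{\boldsymbol{x}},\widehat{\boldsymbol{y}})}{\lambda\epsilon}\right) = e^{M(\widehat{\boldsymbol{x}},\widehat{\boldsymbol{y}})/(\lambda\epsilon)}\, e^{\alpha\|\boldsymbol{\xi}_1\|^p/\epsilon}\, e^{\alpha\|\boldsymbol{\xi}_2\|^p/\epsilon}.
\end{equation}

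Next I would take the expectation over $\boldsymbol{\xi}_2\sim W_\epsilon$. Recalling the definition~\eqref{W-distri}, the density of $W_\epsilon$ carries a factor $e^{-\|\boldsymbol{\xi}_2\|^p/\epsilon}$, so
\begin{equation}\nonumber
\mathbb{E}_{\boldsymbol{\xi}_2\sim W_\epsilon}\!\left[e^{\alpha\|\boldsymbol{\xi}_2\|^p/\epsilon}\right] = \frac{\int_{\mathbb{R}^{d_y}} e^{-(1-\alpha)\|\boldsymbol{u}\|^p/\epsilon}\,\mathrm{d}\nu_{\mathcal{Y}}(\boldsymbol{u})}{\int_{\mathbb{R}^{d_y}} e^{-\|\boldsymbol{u}\|^p/\epsilon}\,\mathrm{d}\nu_{\mathcal{Y}}(\boldsymbol{u})},
\end{equation}
and since $1-\alpha>0$, Assumption~\ref{assum-1}\ref{assum-1-2} (applied with $\delta=(1-\alpha)/\epsilon$ in the numerator and $\delta=1/\epsilon$ in the denominator) guarantees both integrals are finite and the denominator is strictly positive, so this ratio is a finite constant depending only on $\alpha,\epsilon,\nu_{\mathcal{Y}}$. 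Therefore, for the fixed $\lambda$ and for a.e.\ $(\widehat{\boldsymbol{x}},\widehat{\boldsymbol{y}},\boldsymbol{x})$,
\begin{equation}\nonumber
\mathbb{E}_{\boldsymbol{\xi}_2\sim W_\epsilon}\!\left[\exp\!\left(\frac{\Psi(f(\widehat{\boldsymbol{x}}+\boldsymbol{\xi}_1),\widehat{\boldsymbol{y}}+\boldsymbol{\xi}_2)}{\lambda\epsilon}\right)\right] \le e^{M(\widehat{\boldsymbol{x}},\widehat{\boldsymbol{y}})/(\lambda\epsilon)}\, e^{\alpha\|\boldsymbol{\xi}_1\|^p/\epsilon}\cdot \text{(finite constant)} < \infty,
\end{equation}
which is exactly Condition~\ref{condit-1}.

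I do not expect a serious obstacle here; the argument is essentially a bookkeeping exercise with the Gaussian/Laplace-type tails. The one point requiring mild care is the handling of the ``almost every'' qualifiers: the bound~\eqref{suff-condition-1} holds $\widehat{\mathbb{P}}\otimes\nu_{\mathcal{X}}\otimes\nu_{\mathcal{Y}}$-a.e., so after integrating out $\boldsymbol{\xi}_2$ (equivalently $\boldsymbol{y}$) against $\nu_{\mathcal{Y}}$ one obtains a statement valid $\widehat{\mathbb{P}}\otimes\nu_{\mathcal{X}}$-a.e.\ in $(\widehat{\boldsymbol{x}},\widehat{\boldsymbol{y}},\boldsymbol{x})$ — matching the quantifier in Condition~\ref{condit-1} — and the measurability of $M$ together with the stated integrability $\mathbb{E}_{\widehat{\boldsymbol{y}}}[e^{M/(\lambda\epsilon)}]<\infty$ is needed only if one wants the stronger integrated conclusion, but it is not required for the pointwise-in-$(\widehat{\boldsymbol{x}},\boldsymbol{x})$ finiteness that Condition~\ref{condit-1} demands. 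I would close by remarking that the $M$-integrability hypothesis is what additionally ensures $v_{\mathrm{D}}<\infty$ via Theorem~\ref{theo-strong-duality}, tying the sufficient condition back to the finiteness dichotomy.
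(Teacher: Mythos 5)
Your proof is correct and follows essentially the same route as the paper's: substitute the pointwise bound~\eqref{suff-condition-1} into the exponential, use $c_p = \|\boldsymbol{\xi}_1\|^p + \|\boldsymbol{\xi}_2\|^p$, and invoke Assumption~\ref{assum-1}\ref{assum-1-2} with the factor $e^{-(1-\alpha)\|\cdot\|^p/\epsilon}$ to conclude finiteness of the $W_\epsilon$-expectation; the only cosmetic difference is that you integrate in the $\boldsymbol{\xi}_2$ variable while the paper writes the same integral in the $\boldsymbol{y}$ variable.
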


\begin{proof}\textbf{of Proposition~\ref{ec-prop-condit-1}. }
In Equation~\eqref{QW-distri}, we set $\boldsymbol{\xi}_1 = \boldsymbol{x} - \widehat{\boldsymbol{x}}$ and $\boldsymbol{\xi}_2 = \boldsymbol{y} - \widehat{\boldsymbol{y}}$. When relation~\eqref{suff-condition-1} holds, for $\widehat{\mathbb{P}}\otimes\nu_{\mathcal{X}}$-almost every $(\widehat{\boldsymbol{x}}, \widehat{\boldsymbol{y}}, \boldsymbol{x})$, we have 
    \begin{equation}\nonumber
        \begin{aligned}
            \mathbb{E}& _{ \boldsymbol{\xi}_2 \sim W_{\epsilon}} \left[ \exp\left( \frac{\Psi(f(\widehat{\boldsymbol{x}}+ \boldsymbol{\xi}_1),\widehat{\boldsymbol{y}}+ \boldsymbol{\xi}_2)}{\lambda\epsilon} \right) \right] \\ 
            & = \int_{ \boldsymbol{y} \sim \nu_{\mathcal{Y}}} \frac{e^{ - \| \boldsymbol{y} - \widehat{\boldsymbol{y}}\|^p / \epsilon} }{\int_{\mathbb{R}^{d_y}} e^{ - \| \boldsymbol{u} - \widehat{\boldsymbol{y}} \|^p / \epsilon} \mathrm{d} \nu_{\mathcal{Y} }\left (  \boldsymbol{u} \right ) } \cdot  \exp\left( \frac{\Psi(f(\widehat{\boldsymbol{x}}+ \boldsymbol{\xi}_1),\boldsymbol{y})}{\lambda\epsilon} \right) \mathrm{d} \nu_{\mathcal{Y} }\left (  \boldsymbol{y} \right ) \\
            & \le \int_{ \boldsymbol{y} \sim \nu_{\mathcal{Y}}}  \frac{e^{ - \| \boldsymbol{y} - \widehat{\boldsymbol{y}}\|^p / \epsilon} }{\int_{\mathbb{R}^{d_y}} e^{ - \| \boldsymbol{u} - \widehat{\boldsymbol{y}} \|^p / \epsilon} \mathrm{d} \nu_{\mathcal{Y} }\left (  \boldsymbol{u} \right ) } \cdot  \exp\left( \frac{\alpha \cdot \lambda c_p((\widehat{\boldsymbol{x}}, \widehat{\boldsymbol{y}}), (\boldsymbol{x}, \boldsymbol{y})) + M(\widehat{\boldsymbol{x}}, \widehat{\boldsymbol{y}})}{\lambda\epsilon} \right) \mathrm{d} \nu_{\mathcal{Y} }\left (  \boldsymbol{y} \right ) \\
            & = \frac{e^{ \alpha \| \boldsymbol{x} - \widehat{\boldsymbol{x}}\|^p / \epsilon + M(\widehat{\boldsymbol{x}}, \widehat{\boldsymbol{y}}) / \lambda\epsilon} }{\int_{\mathbb{R}^{d_y}} e^{ - \| \boldsymbol{u} - \widehat{\boldsymbol{y}} \|^p / \epsilon} \mathrm{d} \nu_{\mathcal{Y} }\left (  \boldsymbol{u} \right ) } \cdot \int_{ \boldsymbol{y} \sim \nu_{\mathcal{Y}}} \exp\left( -\frac{(1-\alpha) \| \boldsymbol{y} - \widehat{\boldsymbol{y}}\|^p }{\epsilon} \right) \mathrm{d} \nu_{\mathcal{Y} }\left (  \boldsymbol{y} \right ) \\
            & < \infty,
        \end{aligned}
    \end{equation}
    where the first inequality is by~\eqref{suff-condition-1}, and the second inequality is by Assumptions~\ref{assum-1}\ref{assum-1-1} and~\ref{assum-1-2}, and the fact that $e^{ M(\widehat{\boldsymbol{x}}, \widehat{\boldsymbol{y}}) / \lambda\epsilon} < \infty$.    
\end{proof}

\subsection{Proof of Theorem~\ref{theo-strong-duality} in Section~\ref{subsec-CSDRO-dual}} \label{ecsec-strong-dual}

To prove the strong duality in Theorem~\ref{theo-strong-duality}, we first develop the following Lemma~\ref{lem-meas} and a weak duality result Lemma~\ref{lem-weak-duality}. 

\begin{lemma}\label{lem-meas}
    \textnormal{\textbf{(Several Measurable Functions). }} Assume that Assumption~\ref{assum-1} holds, then the following results hold. 
    \begin{enumerate}
        \item Define the function $w_1(\widehat{\boldsymbol{x}}, \widehat{\boldsymbol{y}}, \boldsymbol{x}, \lambda):  \mathcal{X} \times \mathcal{Y} \times \mathcal{X} \times \mathbb{R}_+\to \mathbb{R}$ as
    \begin{equation}\nonumber
\begin{aligned}
 w{_1}(\widehat{\boldsymbol{x}}, \widehat{\boldsymbol{y}}, \boldsymbol{x}, \lambda) := \sup_{\gamma_{\boldsymbol{Y} \mid \widehat{\boldsymbol{X}}, \widehat{\boldsymbol{Y}}, \boldsymbol{X}}} \, \mathbb{E}_{\gamma_{\boldsymbol{Y} \mid \widehat{\boldsymbol{X}}, \widehat{\boldsymbol{Y}}, \boldsymbol{X}}}
 \Bigg[ & \Psi(f(\boldsymbol{x}),\boldsymbol{y})-  \lambda  c_p((\widehat{\boldsymbol{x}},\widehat{\boldsymbol{y}}), (\boldsymbol{x},\boldsymbol{y})) \\
 & -  \lambda \epsilon\log\, \left ( \frac{\mathrm{d}\gamma_{\boldsymbol{Y} \mid  \widehat{\boldsymbol{X}}, \widehat{\boldsymbol{Y}}, \boldsymbol{X}}(\boldsymbol{y}\mid \widehat{\boldsymbol{x}}, \widehat{\boldsymbol{y}}, \boldsymbol{x})}{\mathrm{d} \nu_{\mathcal{Y}}(\boldsymbol{y})}  \right ) \Bigg].
\end{aligned}
    \end{equation}
    This function is jointly measurable with respect to $(\widehat{\boldsymbol{x}}, \widehat{\boldsymbol{y}}, \boldsymbol{x})$ regardless of the choice of $\lambda \ge 0$. 
    \item Define the function $g(\widehat{\boldsymbol{x}}, \boldsymbol{x}, \lambda):  \mathcal{X} \times \mathcal{X} \times \mathbb{R}_+\to \mathbb{R}$ as
    \begin{equation}\nonumber
    \begin{aligned}
        g(\widehat{\boldsymbol{x}}, \boldsymbol{x}, \lambda) &:= \mathbb{E}_{\widehat{\boldsymbol{y}} \sim \widehat{\mathbb{P}}_{\widehat{Y}|\widehat{X}=\widehat{\boldsymbol{x}}}} \Big[ w_1(\widehat{\boldsymbol{x}}, \widehat{\boldsymbol{y}}, \boldsymbol{x},\lambda) \Big] \\
        &  = \mathbb{E}_{\widehat{\boldsymbol{y}} \sim  \widehat{\mathbb{P}}_{\widehat{\boldsymbol{Y}}|\widehat{\boldsymbol{X}}=\widehat{\boldsymbol{x}}}}\left[ \lambda\epsilon \log\, \int _{\mathcal{Y}} \exp\left( \frac{\Psi(f(\boldsymbol{x}),\boldsymbol{y}) - \lambda c_p((\widehat{\boldsymbol{x}},\widehat{\boldsymbol{y}}), (\boldsymbol{x},\boldsymbol{y}))}{\lambda\epsilon} \right) \mathrm{d} \nu_{\mathcal{Y}}(\boldsymbol{y}) \right].
    \end{aligned}
    \end{equation}
    This function is jointly measurable with respect to $(\widehat{\boldsymbol{x}}, \boldsymbol{x})$ regardless of the choice of $\lambda \ge 0$. 
    \item Define the function $w_2(\widehat{\boldsymbol{x}}, \lambda):  \mathcal{X} \times \mathbb{R}_+\to \mathbb{R}$ as 
    \begin{equation}\nonumber
        w_2(\widehat{\boldsymbol{x}}, \lambda) := \sup_{\gamma \in \Gamma_c(\mathbb{P}, \mathbb{Q})} \Bigg\{  \mathbb{E}_{\gamma_{\boldsymbol{X} \mid  \widehat{\boldsymbol{X}}}}\Bigg[g(\widehat{\boldsymbol{x}}, \boldsymbol{x}, \lambda) - \lambda \epsilon \log\, \left ( \frac{\mathrm{d}\gamma_{\boldsymbol{X} \mid \widehat{\boldsymbol{X}}}(\boldsymbol{x} \mid \widehat{\boldsymbol{x}})}{\mathrm{d} \nu_{\mathcal{X}}(\boldsymbol{x})}  \right )\bigg] \Bigg\}.
    \end{equation}
    This function is measurable with respect to $\widehat{\boldsymbol{x}}\sim \widehat{\mathbb{P}}_{\widehat{\boldsymbol{X}}}$ regardless of the choice of $\lambda \ge 0$.
    \end{enumerate}
\end{lemma}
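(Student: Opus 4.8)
The plan is to prove the three measurability claims in Lemma~\ref{lem-meas} sequentially, since each function is built from the previous one by taking a conditional expectation and then a supremum. The key recurring tool is the variational (Donsker--Varadhan) representation of relative entropy: for a reference measure $\mu$ and a measurable function $\varphi$, we have
\begin{equation}\nonumber
\sup_{\pi \ll \mu}\Big\{ \mathbb{E}_{\pi}[\varphi] - \mathbb{E}_{\pi}\big[\log(\mathrm{d}\pi/\mathrm{d}\mu)\big] \Big\} = \log \int e^{\varphi}\,\mathrm{d}\mu,
\end{equation}
so that each supremum over a conditional transport kernel collapses to a closed-form log-sum-exp (log-integral-exp) expression. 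This is exactly why the three functions $w_1$, $g$, $w_2$ in the lemma have the stated explicit forms, and measurability of the explicit forms is what must be verified.

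First I would handle $w_1$. Applying the Donsker--Varadhan identity pointwise in $(\widehat{\boldsymbol{x}},\widehat{\boldsymbol{y}},\boldsymbol{x})$ with $\mu = \nu_{\mathcal{Y}}$ and $\varphi(\boldsymbol{y}) = \big(\Psi(f(\boldsymbol{x}),\boldsymbol{y}) - \lambda c_p((\widehat{\boldsymbol{x}},\widehat{\boldsymbol{y}}),(\boldsymbol{x},\boldsymbol{y}))\big)/(\lambda\epsilon)$ reduces $w_1$ to $\lambda\epsilon\log\int_{\mathcal{Y}} e^{\varphi}\,\mathrm{d}\nu_{\mathcal{Y}}$, which is the second displayed expression in part~1 (with the case $\lambda=0$ interpreted as the essential supremum limit, or handled separately via Assumption~\ref{assum-1}\ref{assum-1-1}). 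Then measurability follows from: (i) $\Psi(f(\boldsymbol{x}),\boldsymbol{y})$ is measurable by Assumption~\ref{assum-1}\ref{assum-1-3} (composition of measurable maps), (ii) $c_p$ is measurable by Assumption~\ref{assum-1}\ref{assum-1-1}, hence the integrand $e^{\varphi}$ is a measurable function of $(\widehat{\boldsymbol{x}},\widehat{\boldsymbol{y}},\boldsymbol{x},\boldsymbol{y})$, (iii) Tonelli's theorem (the integrand is nonnegative) gives that $(\widehat{\boldsymbol{x}},\widehat{\boldsymbol{y}},\boldsymbol{x}) \mapsto \int_{\mathcal{Y}} e^{\varphi}\,\mathrm{d}\nu_{\mathcal{Y}}(\boldsymbol{y})$ is measurable with values in $[0,\infty]$, and (iv) $\log$ is continuous on $(0,\infty]$. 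Next, $g$ is obtained from $w_1$ by integrating out $\widehat{\boldsymbol{y}}$ against the regular conditional distribution $\widehat{\mathbb{P}}_{\widehat{\boldsymbol{Y}}\mid\widehat{\boldsymbol{X}}=\widehat{\boldsymbol{x}}}$, which exists by Assumption~\ref{assum-1}\ref{assum-1-4}; joint measurability of $(\widehat{\boldsymbol{x}},\boldsymbol{x})\mapsto g(\widehat{\boldsymbol{x}},\boldsymbol{x},\lambda)$ then follows from the fact that a regular conditional kernel is a measurable family of measures, so integrating a jointly measurable function against it yields a measurable function (this is the standard ``integral of a Carath\'eodory/jointly measurable function against a Markov kernel is measurable'' fact). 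Finally, $w_2$ is again obtained by the Donsker--Varadhan identity, this time with $\mu=\nu_{\mathcal{X}}$ and $\varphi(\boldsymbol{x}) = g(\widehat{\boldsymbol{x}},\boldsymbol{x},\lambda)/(\lambda\epsilon)$, giving $w_2(\widehat{\boldsymbol{x}},\lambda) = \lambda\epsilon\log\int_{\mathcal{X}} e^{g(\widehat{\boldsymbol{x}},\boldsymbol{x},\lambda)/(\lambda\epsilon)}\,\mathrm{d}\nu_{\mathcal{X}}(\boldsymbol{x})$, and measurability in $\widehat{\boldsymbol{x}}$ follows exactly as in step~1, using joint measurability of $g$ from step~2 plus Tonelli.

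The main obstacle I anticipate is not any single estimate but the careful bookkeeping needed to justify interchanging the supremum over transport kernels with the pointwise Donsker--Varadhan formula, and in particular verifying that the causal constraint $\boldsymbol{X}\perp\widehat{\boldsymbol{Y}}\mid\widehat{\boldsymbol{X}}$ is compatible with this decomposition --- i.e., that maximizing over $\gamma\in\Gamma_c(\mathbb{P},\mathbb{Q})$ really does factor into an outer supremum over $\gamma_{\boldsymbol{X}\mid\widehat{\boldsymbol{X}}}$ (which by causality does not depend on $\widehat{\boldsymbol{y}}$) and an inner supremum over $\gamma_{\boldsymbol{Y}\mid\widehat{\boldsymbol{X}},\widehat{\boldsymbol{Y}},\boldsymbol{X}}$, with no measurable-selection pathologies when the inner suprema are substituted back. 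Here Assumption~\ref{assum-1}\ref{assum-1-4} on the existence of regular conditional distributions is exactly what makes the disintegration rigorous, and the nonnegativity of the integrands (so Tonelli rather than Fubini applies) is what lets us avoid integrability side-conditions at this stage --- those are deferred to Condition~\ref{condit-1} in the later finiteness analysis. I would also need a brief remark that everything is stated ``regardless of the choice of $\lambda\ge0$'': for $\lambda>0$ the arguments above are clean, and the $\lambda=0$ boundary case is handled by noting that the $\epsilon\log$-integral term drops out and $w_1$ reduces to an essential supremum of a jointly measurable function against $\nu_{\mathcal{Y}}$, which is still measurable by the same measurable-projection reasoning.
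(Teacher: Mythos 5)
Your proposal is correct and takes essentially the same route as the paper: Fenchel/Donsker--Varadhan duality collapses the suprema defining $w_1$ and $w_2$ into explicit $\log$-integral-$\exp$ expressions, after which measurability follows from measurability of $\Psi(f(\cdot),\cdot)$ and $c_p$, Tonelli for the nonnegative integrand, continuity of $\exp$ and $\log$, kernel integration against the regular conditional distribution for $g$, and the essential-supremum interpretation at $\lambda=0$. The only differences are cosmetic: the paper delegates the $\lambda=0$ case of $w_1$ and the measurability of $w_2$ to external lemmas from the Sinkhorn-DRO literature rather than re-deriving them, and when defining $g$ at $\lambda=0$ it explicitly splits $w_1$ into its positive and negative parts (using that $\Psi$ is bounded below) to rule out an $\infty-\infty$ ambiguity --- a bookkeeping step you gloss over but which your kernel-integration argument absorbs via the standard positive/negative decomposition.
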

\begin{proof}\textbf{of Lemma~\ref{lem-meas}. }
We prove the measurability of the three functions in sequence.
\begin{enumerate}
    \item \textbf{For function $w_1(\widehat{\boldsymbol{x}}, \widehat{\boldsymbol{y}}, \boldsymbol{x}, \lambda)$}, we consider the following two cases. 
    \begin{itemize}
        \item When $\lambda = 0$, according to Lemma EC.2 in the E-companion of~\citet{wang2025sinkhorn}, it holds that 
        \begin{equation}\nonumber
            w_1(\widehat{\boldsymbol{x}}, \widehat{\boldsymbol{y}}, \boldsymbol{x}, 0) = \text{ess}\, \sup_{\nu_{\mathcal{Y}}} \quad \Psi(f(\boldsymbol{x}),\boldsymbol{y}).
        \end{equation}
        By Assumption~\ref{assum-1}, the loss function $\Psi(f(\boldsymbol{x}),\boldsymbol{y})$ is measurable. Since the essential supremum of a measurable function with respect to one variable is a measurable function of the remaining variables~\citep{blackwell1963non}, the function $w_1(\widehat{\boldsymbol{x}}, \widehat{\boldsymbol{y}}, \boldsymbol{x}, 0)$ is jointly measurable with respect to $(\widehat{\boldsymbol{x}}, \widehat{\boldsymbol{y}}, \boldsymbol{x})$. 
        \item When $\lambda > 0$, using the Fenchel duality, we have 
        \begin{equation}\nonumber
            w_1(\widehat{\boldsymbol{x}}, \widehat{\boldsymbol{y}}, \boldsymbol{x}, \lambda) = \lambda\epsilon \log\, \int _{\mathcal{Y}} \exp\left( \frac{\Psi(f(\boldsymbol{x}),\boldsymbol{y}) - \lambda c_p((\widehat{\boldsymbol{x}},\widehat{\boldsymbol{y}}), (\boldsymbol{x},\boldsymbol{y}))}{\lambda\epsilon} \right) \mathrm{d} \nu_{\mathcal{Y}}(\boldsymbol{y}). 
        \end{equation}
        According to Assumption~\ref{assum-1}, function $c_p$ is measurable. Since the difference of two measurable functions is measurable, the function $\Psi(f(\boldsymbol{x}),\boldsymbol{y}) -  \lambda c_p((\widehat{\boldsymbol{x}},\widehat{\boldsymbol{y}}), (\boldsymbol{x},\boldsymbol{y}))$ is measurable with respect to $(\widehat{\boldsymbol{x}},\widehat{\boldsymbol{y}}, \boldsymbol{x}, \boldsymbol{y})$. 
        As the composition of a measurable function with a continuous function is measurable, the function $ \exp\left( \frac{\Psi(f(\boldsymbol{x}),\boldsymbol{y}) - \lambda c_p((\widehat{\boldsymbol{x}},\widehat{\boldsymbol{y}}), (\boldsymbol{x},\boldsymbol{y}))}{\lambda\epsilon} \right)$ is measurable. According to Tonelli's Theorem~\citep[Proposition 5.2.1][]{cohn2013measure}, integrating a non-negative, jointly measurable function with respect to one variable (here, $\boldsymbol{y}$) yields a function that is measurable with respect to the remaining variables. Therefore, the function $\int _{\mathcal{Y}} \exp\left( \frac{\Psi(f(\boldsymbol{x}),\boldsymbol{y}) - \lambda c_p((\widehat{\boldsymbol{x}},\widehat{\boldsymbol{y}}), (\boldsymbol{x},\boldsymbol{y}))}{\lambda\epsilon} \right) \mathrm{d} \nu_{\mathcal{Y}}(\boldsymbol{y})$ is measurable, and its compositional with the continuous function $\log(\cdot)$ is also measurable, that is, the function $ w_1(\widehat{\boldsymbol{x}}, \widehat{\boldsymbol{y}}, \boldsymbol{x}, \lambda)$ is jointly measurable with respect to $(\widehat{\boldsymbol{x}}, \widehat{\boldsymbol{y}}, \boldsymbol{x})$ when $\lambda > 0$. 
    \end{itemize}
    Hence, function $ w_1(\widehat{\boldsymbol{x}}, \widehat{\boldsymbol{y}}, \boldsymbol{x}, \lambda)$ is jointly measurable with respect to $(\widehat{\boldsymbol{x}}, \widehat{\boldsymbol{y}}, \boldsymbol{x})$ regardless of the choice of $\lambda \ge 0$. 

    \item \textbf{For function $g(\widehat{\boldsymbol{x}}, \boldsymbol{x}, \lambda)$}, we also consider the following two cases. 
    \begin{itemize}
        \item When $\lambda > 0$, $w_1$ is positive-valued function, which implies that $w_1$ is measurable. On a probability space, any finite-valued measurable function is integrable. Therefore, by the Tonelli's Theorem, the function $g(\widehat{\boldsymbol{x}}, \boldsymbol{x}, \lambda)$ is jointly measurable with respect to $(\widehat{\boldsymbol{x}}, \boldsymbol{x})$ . 
        \item When $\lambda = 0$, for the optimization problem to be well-posed, the loss function $\Psi$ must be bounded below, which implies that $w_1(\widehat{\boldsymbol{x}}, \widehat{\boldsymbol{y}}, \boldsymbol{x}, 0)$ is also bounded below. Let the lower bound of function $w_1(\widehat{\boldsymbol{x}}, \widehat{\boldsymbol{y}}, \boldsymbol{x}, 0)$ be $M \in \mathbb{R}$. We decompose the function $w_1(\widehat{\boldsymbol{x}}, \widehat{\boldsymbol{y}}, \boldsymbol{x}, 0)$ as two non-negative measurable functions
        \begin{equation}\nonumber
            w_1^+(\widehat{\boldsymbol{x}}, \widehat{\boldsymbol{y}}, \boldsymbol{x}, 0) = \max \left\{0, w_1(\widehat{\boldsymbol{x}}, \widehat{\boldsymbol{y}}, \boldsymbol{x}, 0)\right\} \in \left[0,+\infty\right],
        \end{equation}and 
        \begin{equation}\nonumber
            w_1^-(\widehat{\boldsymbol{x}}, \widehat{\boldsymbol{y}}, \boldsymbol{x}, 0) = - \min \left\{0, w_1(\widehat{\boldsymbol{x}}, \widehat{\boldsymbol{y}}, \boldsymbol{x}, 0)\right\} \in \left[0,  -\min\{0, M\}\right],
        \end{equation} such that 
        \begin{equation}\nonumber
        \begin{aligned}
            g(\widehat{\boldsymbol{x}}, \boldsymbol{x}, 0) & = \mathbb{E}_{\widehat{\boldsymbol{y}} \sim \widehat{\mathbb{P}}_{\widehat{Y}|\widehat{X}=\widehat{\boldsymbol{x}}}} \Big[ w_1(\widehat{\boldsymbol{x}}, \widehat{\boldsymbol{y}}, \boldsymbol{x}, 0) \Big] \\ & = \mathbb{E}_{\widehat{\boldsymbol{y}} \sim \widehat{\mathbb{P}}_{\widehat{Y}|\widehat{X}=\widehat{\boldsymbol{x}}}} \Big[ w_1^+(\widehat{\boldsymbol{x}}, \widehat{\boldsymbol{y}}, \boldsymbol{x}, 0) \Big]  - \mathbb{E}_{\widehat{\boldsymbol{y}} \sim \widehat{\mathbb{P}}_{\widehat{Y}|\widehat{X}=\widehat{\boldsymbol{x}}}} \Big[ w_1^-(\widehat{\boldsymbol{x}}, \widehat{\boldsymbol{y}}, \boldsymbol{x}, 0) \Big].
        \end{aligned}
        \end{equation}
        Based on Tonelli's Theorem, the expectations of both $w_1^+$ and $w_1^-$ with respect to $\widehat{\boldsymbol{y}}$ are well-defined measurable functions of $(\widehat{\boldsymbol{x}}, \boldsymbol{x})$. 
        Since $\mathbb{E}_{\widehat{\boldsymbol{y}} \sim \widehat{\mathbb{P}}_{\widehat{Y}|\widehat{X}=\widehat{\boldsymbol{x}}}} \Big[ w_1^-(\widehat{\boldsymbol{x}}, \widehat{\boldsymbol{y}}, \boldsymbol{x}, 0) \Big]$ is a finite measurable function, this subtraction is well-defined (it avoids the $\infty - \infty$ form). The difference of two measurable functions is measurable. Thus, function $g(\widehat{\boldsymbol{x}}, \boldsymbol{x}, 0)$ is also jointly measurable with respect to $(\widehat{\boldsymbol{x}}, \boldsymbol{x})$. 
    \end{itemize}
     Hence, function $g(\widehat{\boldsymbol{x}}, \boldsymbol{x}, \lambda)$ is jointly measurable with respect to $(\widehat{\boldsymbol{x}}, \boldsymbol{x})$ regardless of the choice of $\lambda \ge 0$. 
    \item \textbf{For function $w_2(\widehat{\boldsymbol{x}}, \lambda)$},  according to Lemma EC.3 in the E-companion of \citet{wang2025sinkhorn}, it's measurable with respect to $\widehat{\boldsymbol{x}}\sim \widehat{\mathbb{P}}_{\widehat{\boldsymbol{X}}}$ regardless of the choice of $\lambda \ge 0$.
    \end{enumerate} This completes the proof. 
\end{proof} 

\begin{lemma}\label{lem-weak-duality}
    \textnormal{\textbf{(Weak Duality).}} Under Assumption~\ref{assum-1}, it holds that $v_{\mathrm{P}} \le v_{\mathrm{D}}$. 
\end{lemma}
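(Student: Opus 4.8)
The plan is to prove weak duality by Lagrangian relaxation of the CSD constraint, followed by a disintegration of the resulting penalized inner problem that exploits the causal factorization of the transport plan together with the Gibbs variational principle.

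First I would pass from distributions to transport plans: since $\mathbb{P}\in\Re(\widehat{\mathbb{P}})$ exactly when there is $\gamma\in\Gamma_c(\widehat{\mathbb{P}},\mathbb{P})$ with $\mathbb{E}_\gamma[c_p]+\epsilon H(\gamma\mid\widehat{\mathbb{P}}\otimes(\nu_{\mathcal{X}}\otimes\nu_{\mathcal{Y}}))\le\rho^p$, and since $\mathbb{E}_{\mathbb{P}}[\Psi(f(\boldsymbol{x}),\boldsymbol{y})]=\mathbb{E}_\gamma[\Psi(f(\boldsymbol{x}),\boldsymbol{y})]$ for any such $\gamma$ (the integrand depends only on the $(\boldsymbol{x},\boldsymbol{y})$-coordinates), it follows that for each feasible $\mathbb{P}$, each $\lambda\ge0$, and each $\eta>0$ one may choose a causal plan $\gamma$ with first marginal $\widehat{\mathbb{P}}$ and $\mathbb{E}_\gamma[c_p]+\epsilon H(\gamma\mid\cdots)<\rho^p+\eta$; then, as $\lambda\ge0$ and the bracket $\mathbb{E}_\gamma[c_p]+\epsilon H(\gamma\mid\cdots)-\rho^p-\eta$ is negative,
\[
\mathbb{E}_{\mathbb{P}}[\Psi]=\mathbb{E}_\gamma[\Psi]\ \le\ \lambda(\rho^p+\eta)+\mathbb{E}_\gamma[\Psi-\lambda c_p]-\lambda\epsilon H(\gamma\mid\cdots).
\]
Finiteness of the entropy here is automatic because $c_p\ge0$, and the case $v_{\mathrm{D}}=\infty$ (or the infeasible case $v_{\mathrm{P}}=-\infty$) makes the claim vacuous, so one may proceed as if everything is finite.

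The core step is to bound the penalty term $\mathbb{E}_\gamma[\Psi-\lambda c_p]-\lambda\epsilon H(\gamma\mid\widehat{\mathbb{P}}\otimes(\nu_{\mathcal{X}}\otimes\nu_{\mathcal{Y}}))$ by $\mathbb{E}_{\widehat{\boldsymbol{x}}\sim\widehat{\mathbb{P}}_{\widehat{\boldsymbol{X}}}}\big[w_2(\widehat{\boldsymbol{x}},\lambda)\big]$. I would first invoke Assumption~\ref{assum-1}\ref{assum-1-4} to disintegrate $\gamma$ over $\widehat{\mathbb{P}}$, and then use the causal constraint $\boldsymbol{X}\perp\widehat{\boldsymbol{Y}}\mid\widehat{\boldsymbol{X}}$ to factor the conditional kernel into $\gamma_{\boldsymbol{X}\mid\widehat{\boldsymbol{X}}}$ (depending on $\widehat{\boldsymbol{x}}$ only) followed by $\gamma_{\boldsymbol{Y}\mid\widehat{\boldsymbol{X}},\widehat{\boldsymbol{Y}},\boldsymbol{X}}$. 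Two applications of the chain rule for relative entropy (valid for the $\sigma$-finite reference $\nu_{\mathcal{X}}\otimes\nu_{\mathcal{Y}}$), together with moving the $\widehat{\boldsymbol{y}}$-average inside the $\boldsymbol{x}$-integral (legitimate since $\gamma_{\boldsymbol{X}\mid\widehat{\boldsymbol{X}}}$ is independent of $\widehat{\boldsymbol{y}}$ and the integrand is jointly measurable by Lemma~\ref{lem-meas}), rewrite the penalty term as
\[
\begin{aligned}
\mathbb{E}_{\widehat{\boldsymbol{x}}}\Big[\mathbb{E}_{\boldsymbol{x}\sim\gamma_{\boldsymbol{X}\mid\widehat{\boldsymbol{X}}}}\Big[&\mathbb{E}_{\widehat{\boldsymbol{y}}\mid\widehat{\boldsymbol{x}}}\big[\mathbb{E}_{\boldsymbol{y}}[\Psi-\lambda c_p]-\lambda\epsilon H(\gamma_{\boldsymbol{Y}\mid\cdots}\mid\nu_{\mathcal{Y}})\big]\Big]\\
&-\lambda\epsilon H(\gamma_{\boldsymbol{X}\mid\widehat{\boldsymbol{X}}}\mid\nu_{\mathcal{X}})\Big].
\end{aligned}
\]
Bounding from the inside out: the innermost bracket is at most $w_1(\widehat{\boldsymbol{x}},\widehat{\boldsymbol{y}},\boldsymbol{x},\lambda)$ by the definition of $w_1$ as a supremum, which for $\lambda>0$ equals $\lambda\epsilon\log\int_{\mathcal{Y}}\exp\!\big((\Psi(f(\boldsymbol{x}),\boldsymbol{y})-\lambda c_p)/(\lambda\epsilon)\big)\,\mathrm{d}\nu_{\mathcal{Y}}(\boldsymbol{y})$ by the Donsker--Varadhan / Gibbs variational principle; taking $\mathbb{E}_{\widehat{\boldsymbol{y}}\mid\widehat{\boldsymbol{x}}}$ yields $g(\widehat{\boldsymbol{x}},\boldsymbol{x},\lambda)$; and $\mathbb{E}_{\boldsymbol{x}\sim\gamma_{\boldsymbol{X}\mid\widehat{\boldsymbol{X}}}}[g(\widehat{\boldsymbol{x}},\boldsymbol{x},\lambda)]-\lambda\epsilon H(\gamma_{\boldsymbol{X}\mid\widehat{\boldsymbol{X}}}\mid\nu_{\mathcal{X}})$ is, again by Gibbs, at most $w_2(\widehat{\boldsymbol{x}},\lambda)=\lambda\epsilon\log\int_{\mathcal{X}}\exp\!\big(g(\widehat{\boldsymbol{x}},\boldsymbol{x},\lambda)/(\lambda\epsilon)\big)\,\mathrm{d}\nu_{\mathcal{X}}(\boldsymbol{x})$. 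Plugging into the Lagrangian bound gives $\mathbb{E}_{\mathbb{P}}[\Psi]\le\lambda\rho^p+\mathbb{E}_{\widehat{\boldsymbol{x}}}[w_2(\widehat{\boldsymbol{x}},\lambda)]+\lambda\eta$, whose first two terms are exactly the dual objective in~\eqref{causal-sdro-dual} evaluated at $\lambda$.

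To conclude, I would let $\eta\downarrow0$, then take the supremum over feasible $\mathbb{P}$ and the infimum over $\lambda\ge0$, yielding $v_{\mathrm{P}}\le v_{\mathrm{D}}$. The boundary value $\lambda=0$ is handled identically, reading $\lambda\epsilon\log\int\exp(\,\cdot\,/(\lambda\epsilon))\,\mathrm{d}\nu$ as its $\lambda\downarrow0$ limit, the essential supremum (so $w_1(\,\cdot\,,0)=\operatorname{ess\,sup}_{\nu_{\mathcal{Y}}}\Psi(f(\boldsymbol{x}),\boldsymbol{y})$, etc.), with each Gibbs inequality degenerating to $\mathbb{E}_\mu[h]\le\operatorname{ess\,sup}_\nu h$ for $\mu\ll\nu$ (and the conditionals of $\gamma$ are absolutely continuous w.r.t.\ the reference because the entropy is finite). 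I expect the main obstacle to be technical rather than conceptual: the careful bookkeeping of the three nested conditional expectations and the Fubini-type interchange of the $\widehat{\boldsymbol{y}}$-expectation with the $\boldsymbol{x}$-integral, where the joint measurability of $w_1$, $g$, $w_2$ from Lemma~\ref{lem-meas} is precisely what is required, together with the avoidance of the $\infty-\infty$ ambiguity, which is controlled since $\Psi$ is bounded below (as needed for well-posedness) and $c_p\ge0$.
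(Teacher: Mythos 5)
Your argument is correct, and its skeleton is the same as the paper's: Lagrangian relaxation of the CSD constraint, the causal chain-rule decomposition of the relative entropy (using $\gamma_{\widehat{\boldsymbol{Y}}\mid\widehat{\boldsymbol{X}},\boldsymbol{X}}=\widehat{\mathbb{P}}_{\widehat{\boldsymbol{Y}}\mid\widehat{\boldsymbol{X}}}$), and two nested applications of the Gibbs/Fenchel variational bound producing exactly the functions $w_1$, $g$, $w_2$ of Lemma~\ref{lem-meas}. The one genuine difference is how the supremum over distributions is processed. The paper keeps the penalized value $v_{\mathrm{P}}(\lambda)=\sup_{\mathbb{P}}\{\mathbb{E}_{\mathbb{P}}[\Psi]-\lambda R_p(\widehat{\mathbb{P}},\mathbb{P})^p\}$ as a supremum over causal plans, invokes the interchangeability principle (Section 9.3.4 of Shapiro et al.) to push the suprema inside the conditional expectations, and only then applies Fenchel duality; you instead fix, for each feasible $\mathbb{P}$, an $\eta$-near-optimal causal plan (correctly accounting for the possibility that the CSD infimum is not attained) and bound its penalty term conditional kernel by conditional kernel using only the one-sided Donsker--Varadhan inequality. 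Your route is slightly more elementary—it needs no interchange of supremum and expectation or measurable-selection argument, only the joint measurability of $w_1,g,w_2$ for the Fubini step—at the modest cost of the $\eta$-slack bookkeeping and of establishing absolute continuity of the conditionals (which, as you note, follows from finiteness of the entropy for near-feasible plans when $\epsilon>0$); the paper's route yields an exact nested-supremum representation of $v_{\mathrm{P}}(\lambda)$ that it reuses later, whereas yours delivers only the inequality needed here.
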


\begin{proof}\textbf{of Lemma~\ref{lem-weak-duality}. }
For the primal problem of \eqref{causal-sdro}, let $\lambda \ge 0$ be the Lagrangian multiplier of the causal Sinkhorn ball constraint, and define $v_{\mathrm{P}}(\lambda)$ as the optimal value of the soft constrained DRO problem with penalty parameter $\lambda$:
\begin{equation}
    v_{\mathrm{P}}(\lambda) := \max_{\mathbb{P} \in \mathcal{P}\left(\mathcal{X}\times\mathcal{Y}\right)} \Big[\mathbb{E}_{(\boldsymbol{x},\boldsymbol{y})\sim\mathbb{P}}[\Psi(f(\boldsymbol{x}),\boldsymbol{y})] - \lambda R_{p}(\widehat{\mathbb{P}}, \mathbb{P})^p\Big].
\end{equation}
Based on the Lagrangian weak duality, it holds that
\begin{equation}\label{vp-le-vd}
    \begin{aligned}
        v_{\mathrm{P}} &= \max _{\mathbb{P} \in \mathcal{P}(\mathcal{X} \times \mathcal{Y})} \Big \{ \mathbb{E}_{(\boldsymbol{x}, \boldsymbol{y}) \sim \mathbb{P}}\Big[\Psi(f(\boldsymbol{x}), \boldsymbol{y})\Big]: R_{p}(\widehat{\mathbb{P}}, \mathbb{P})^p \leq \rho^p \Big\} \\
        & = \max _{\mathbb{P} \in \mathcal{P}(\mathcal{X} \times \mathcal{Y})} \inf_{\lambda \ge 0} \Big \{ \mathbb{E}_{(\boldsymbol{x}, \boldsymbol{y}) \sim \mathbb{P}}\Big[\Psi(f(\boldsymbol{x}), \boldsymbol{y})\Big] -\lambda \Big( R_{p}(\widehat{\mathbb{P}}, \mathbb{P})^p - \rho^p \Big) \Big\} \\
        & \le \inf_{\lambda \ge 0} \max _{\mathbb{P} \in \mathcal{P}(\mathcal{X} \times \mathcal{Y})} \Big \{ \mathbb{E}_{(\boldsymbol{x}, \boldsymbol{y}) \sim \mathbb{P}}\Big[\Psi(f(\boldsymbol{x}), \boldsymbol{y})\Big] -\lambda \Big( R_{p}(\widehat{\mathbb{P}}, \mathbb{P})^p - \rho^p \Big) \Big\} \\
        & = \inf_{\lambda \ge 0}\Big\{ \lambda\rho^p + v_{\mathrm{P}}(\lambda) \Big\},
    \end{aligned}
\end{equation}
where the inequality holds because of the min-max inequality. 

Next, we reformulate the term $v_{\mathrm{P}}(\lambda)$.
For the formulation of CSD 
\begin{equation}\label{ec-eq-Rp}
    \begin{aligned}
        R_{p}(\widehat{\mathbb{P}}, \mathbb{P})^p &= \inf_{\gamma \in \Gamma_c(\widehat{\mathbb{P}}, \mathbb{P})} \mathbb{E}_{((\widehat{\boldsymbol{x}},\widehat{\boldsymbol{y}}), (\boldsymbol{x},\boldsymbol{y}))\sim\gamma} \Big[ c_p((\widehat{\boldsymbol{x}},\widehat{\boldsymbol{y}}), (\boldsymbol{x},\boldsymbol{y})) + \epsilon H\left( \gamma \mid \widehat{\mathbb{P}} \otimes \left ( \nu_{\mathcal{X}}\otimes \nu_{\mathcal{Y} } \right ) \right)\Big], 
    \end{aligned}
\end{equation}
where the second item can be reformulated as 
\begin{equation}\label{Eq:relative:entropy:expand}
    \begin{aligned}
        H & \left( \gamma \mid \widehat{\mathbb{P}} \otimes \left ( \nu_{\mathcal{X}}\otimes \nu_{\mathcal{Y} } \right ) \right) 
        \\ &= \log\, \left ( \frac{\mathrm{d}\gamma (  (\widehat{\boldsymbol{x}},\widehat{\boldsymbol{y}}), (\boldsymbol{x},\boldsymbol{y})) }{\mathrm{d}\widehat{\mathbb{P}}(\widehat{\boldsymbol{x}},\widehat{\boldsymbol{y}})\mathrm{d} \nu_{\mathcal{X}}(\boldsymbol{x})\mathrm{d} \nu_{\mathcal{Y}}(\boldsymbol{y})}  \right ) \\ 
        &= \log\, \left ( \frac{\mathrm{d}\widehat{\mathbb{P}}_{\widehat{\boldsymbol{X}}}(\widehat{\boldsymbol{x}})\mathrm{d}\gamma_{\boldsymbol{X} \mid \widehat{\boldsymbol{X}}}(\boldsymbol{x} \mid \widehat{\boldsymbol{x}}) \mathrm{d}\gamma_{\widehat{\boldsymbol{Y}}\mid \widehat{\boldsymbol{X}}, \boldsymbol{X}}(\widehat{\boldsymbol{y}}\mid \widehat{\boldsymbol{x}}, \boldsymbol{x}) \mathrm{d}\gamma_{\boldsymbol{Y} \mid  \widehat{\boldsymbol{X}}, \widehat{\boldsymbol{Y}}, \boldsymbol{X}}(\boldsymbol{y}\mid \widehat{\boldsymbol{x}}, \widehat{\boldsymbol{y}}, \boldsymbol{x})}{\mathrm{d}\widehat{\mathbb{P}}_{\widehat{\boldsymbol{X}}}(\widehat{\boldsymbol{x}}) \mathrm{d}\widehat{\mathbb{P}}_{\widehat{\boldsymbol{Y}} \mid \widehat{\boldsymbol{X}}}(\widehat{\boldsymbol{y}}\mid\widehat{\boldsymbol{x}})\mathrm{d} \nu_{\mathcal{X}}(\boldsymbol{x})\mathrm{d} \nu_{\mathcal{Y}}(\boldsymbol{y})}  \right ) 
        \\ & = \log\, \left ( \frac{\mathrm{d}\widehat{\mathbb{P}}_{\widehat{\boldsymbol{X}}}(\widehat{\boldsymbol{x}})\mathrm{d}\gamma_{\boldsymbol{X} \mid \widehat{\boldsymbol{X}}}(\boldsymbol{x} \mid \widehat{\boldsymbol{x}}) \mathrm{d}\widehat{\mathbb{P}}_{\widehat{\boldsymbol{Y}} \mid \widehat{\boldsymbol{X}}}(\widehat{\boldsymbol{y}}\mid \widehat{\boldsymbol{x}}) \mathrm{d}\gamma_{\boldsymbol{Y} \mid  \widehat{\boldsymbol{X}}, \widehat{\boldsymbol{Y}}, \boldsymbol{X}}(\boldsymbol{y}\mid \widehat{\boldsymbol{x}}, \widehat{\boldsymbol{y}}, \boldsymbol{x})}{\mathrm{d}\widehat{\mathbb{P}}_{\widehat{\boldsymbol{X}}}(\widehat{\boldsymbol{x}}) \mathrm{d}\widehat{\mathbb{P}}_{\widehat{\boldsymbol{Y}} \mid \widehat{\boldsymbol{X}}}(\widehat{\boldsymbol{y}}\mid\widehat{\boldsymbol{x}})\mathrm{d} \nu_{\mathcal{X}}(\boldsymbol{x})\mathrm{d} \nu_{\mathcal{Y}}(\boldsymbol{y})}  \right ) 
        \\ & = \log\, \left ( \frac{\mathrm{d}\gamma_{\boldsymbol{X} \mid \widehat{\boldsymbol{X}}}(\boldsymbol{x} \mid \widehat{\boldsymbol{x}}) \mathrm{d}\gamma_{\boldsymbol{Y} \mid  \widehat{\boldsymbol{X}}, \widehat{\boldsymbol{Y}}, \boldsymbol{X}}(\boldsymbol{y}\mid \widehat{\boldsymbol{x}}, \widehat{\boldsymbol{y}}, \boldsymbol{x})}{\mathrm{d} \nu_{\mathcal{X}}(\boldsymbol{x})\mathrm{d} \nu_{\mathcal{Y}}(\boldsymbol{y})}  \right ) 
        \\ &= \log\, \left ( \frac{\mathrm{d}\gamma_{\boldsymbol{X} \mid \widehat{\boldsymbol{X}}}(\boldsymbol{x} \mid \widehat{\boldsymbol{x}})}{\mathrm{d} \nu_{\mathcal{X}}(\boldsymbol{x})}  \right )+\log\, \left ( \frac{\mathrm{d}\gamma_{\boldsymbol{Y} \mid  \widehat{\boldsymbol{X}}, \widehat{\boldsymbol{Y}}, \boldsymbol{X}}(\boldsymbol{y} \mid \widehat{\boldsymbol{x}}, \widehat{\boldsymbol{y}}, \boldsymbol{x})}{\mathrm{d} \nu_{\mathcal{Y}}(\boldsymbol{y})}  \right ), 
    \end{aligned}
\end{equation} where the second equality is due to the chain rule decomposition of the densities for both the joint measure $\gamma$ and the empirical distribution $\widehat{\mathbb{P}}$, and $\gamma_{\widehat{\boldsymbol{X}}}(\widehat{\boldsymbol{x}}) =   \widehat{\mathbb{P}}_{\widehat{\boldsymbol{X}}}(\widehat{\boldsymbol{x}})$. The third equality in relation~\eqref{Eq:relative:entropy:expand} holds since $\gamma_{\widehat{\boldsymbol{Y}}\mid \widehat{\boldsymbol{X}}, \boldsymbol{X}}(\widehat{\boldsymbol{y}}\mid \widehat{\boldsymbol{x}}, \boldsymbol{x}) = \gamma_{\widehat{\boldsymbol{Y}} \mid \widehat{\boldsymbol{X}}}(\widehat{\boldsymbol{y}}\mid \widehat{\boldsymbol{x}}) = \widehat{\mathbb{P}}_{\widehat{\boldsymbol{Y}} \mid \widehat{\boldsymbol{X}}}(\widehat{\boldsymbol{y}}\mid \widehat{\boldsymbol{x}})$ under the causal optimal transport setting. 
By the tower property, we have 
\begin{equation}\label{ec-eq-tower}
    \mathbb{E}_{\gamma}[\cdot] = \mathbb{E}_{\widehat{\mathbb{P}}_{\widehat{\boldsymbol{X}}}}\left[\mathbb{E}_{\gamma_{\boldsymbol{X} \mid  \widehat{\boldsymbol{X}}}}\left[\mathbb{E}_{\widehat{\mathbb{P}}_{\widehat{\boldsymbol{Y}} \mid \widehat{\boldsymbol{X}}}}\left[\mathbb{E}_{\gamma_{\boldsymbol{Y} \mid \widehat{\boldsymbol{X}}, \widehat{\boldsymbol{Y}}, \boldsymbol{X}}}[\cdot \mid \widehat{\boldsymbol{X}}, \widehat{\boldsymbol{Y}}, \boldsymbol{X}] \mid \widehat{\boldsymbol{X}}, \boldsymbol{X}\right] \mid \widehat{\boldsymbol{X}}\right]\right].
\end{equation}
From relations~\eqref{Eq:relative:entropy:expand} and~\eqref{ec-eq-tower}, the expectation term on the RHS of the Equation~\eqref{ec-eq-Rp} can be rewritten as 
\begin{equation}\nonumber
    \begin{aligned}
         & \mathbb{E}_{((\widehat{\boldsymbol{x}},\widehat{\boldsymbol{y}}), (\boldsymbol{x},\boldsymbol{y}))\sim\gamma} \Big[ c_p((\widehat{\boldsymbol{x}},\widehat{\boldsymbol{y}}), (\boldsymbol{x},\boldsymbol{y})) + \epsilon H\left( \gamma \mid \widehat{\mathbb{P}} \otimes \left ( \nu_{\mathcal{X}}\otimes \nu_{\mathcal{Y} } \right ) \right) \Big] \\ & = \mathbb{E}_{\widehat{\mathbb{P}}_{\widehat{\boldsymbol{X}}}}\Bigg[\mathbb{E}_{\gamma_{\boldsymbol{X} \mid  \widehat{\boldsymbol{X}}}}\Bigg[\mathbb{E}_{\widehat{\mathbb{P}}_{\widehat{\boldsymbol{Y}} \mid \widehat{\boldsymbol{X}}}}\Bigg[\mathbb{E}_{\gamma_{\boldsymbol{Y} \mid \widehat{\boldsymbol{X}}, \widehat{\boldsymbol{Y}}, \boldsymbol{X}}}[c_p((\widehat{\boldsymbol{x}},\widehat{\boldsymbol{y}}), (\boldsymbol{x},\boldsymbol{y})) \\ 
         & \quad \quad \quad \quad \quad \quad \quad \quad \quad \quad \quad + \epsilon H\left( \gamma \mid \widehat{\mathbb{P}} \otimes \left ( \nu_{\mathcal{X}}\otimes \nu_{\mathcal{Y} } \right ) \right) \mid \widehat{\boldsymbol{X}}, \widehat{\boldsymbol{Y}}, \boldsymbol{X}] \mid \widehat{\boldsymbol{X}}, \boldsymbol{X}\Bigg] \mid \widehat{\boldsymbol{X}}\Bigg]\Bigg] 
         \\ & = \mathbb{E}_{\widehat{\mathbb{P}}_{\widehat{\boldsymbol{X}}}}\Bigg[\mathbb{E}_{\gamma_{\boldsymbol{X} \mid  \widehat{\boldsymbol{X}}}}\Bigg[\mathbb{E}_{\widehat{\mathbb{P}}_{\widehat{\boldsymbol{Y}} \mid \widehat{\boldsymbol{X}}}}\Big[\mathbb{E}_{\gamma_{\boldsymbol{Y} \mid \widehat{\boldsymbol{X}}, \widehat{\boldsymbol{Y}}, \boldsymbol{X}}}[  c_p((\widehat{\boldsymbol{x}},\widehat{\boldsymbol{y}}), (\boldsymbol{x},\boldsymbol{y})) 
         \\ &  + \epsilon\log\, \left ( \frac{\mathrm{d}\gamma_{\boldsymbol{Y} \mid  \widehat{\boldsymbol{X}}, \widehat{\boldsymbol{Y}}, \boldsymbol{X}}(\boldsymbol{y}\mid \widehat{\boldsymbol{x}}, \widehat{\boldsymbol{y}}, \boldsymbol{x})}{\mathrm{d} \nu_{\mathcal{Y}}(\boldsymbol{y})}  \right ) \mid \widehat{\boldsymbol{X}}, \widehat{\boldsymbol{Y}}, \boldsymbol{X}] \mid \widehat{\boldsymbol{X}}, \boldsymbol{X}\Big] + \epsilon \log\, \left ( \frac{\mathrm{d}\gamma_{\boldsymbol{X} \mid \widehat{\boldsymbol{X}}}(\boldsymbol{x} \mid \widehat{\boldsymbol{x}})}{\mathrm{d} \nu_{\mathcal{X}}(\boldsymbol{x})}  \right )\mid \widehat{\boldsymbol{X}}\bigg]\Bigg], 
    \end{aligned}
\end{equation}
Therefore, we have
\[
\begin{aligned}
v_{\mathrm{P}}(\lambda) &= 
\sup_{\gamma \in \Gamma_c(\mathbb{P}, \mathbb{Q})} \Bigg\{  \mathbb{E}_{\widehat{\mathbb{P}}_{\widehat{\boldsymbol{X}}}}\Bigg[\mathbb{E}_{\gamma_{\boldsymbol{X} \mid  \widehat{\boldsymbol{X}}}}\Bigg[\mathbb{E}_{\widehat{\mathbb{P}}_{\widehat{\boldsymbol{Y}} \mid \widehat{\boldsymbol{X}}}}\Bigg[\mathbb{E}_{\gamma_{\boldsymbol{Y} \mid \widehat{\boldsymbol{X}}, \widehat{\boldsymbol{Y}}, \boldsymbol{X}}}\Bigg[ \Psi(f(\boldsymbol{x}),\boldsymbol{y}) -  \lambda c_p((\widehat{\boldsymbol{x}},\widehat{\boldsymbol{y}}), (\boldsymbol{x},\boldsymbol{y})) \\ &  -  \lambda\epsilon\log\, \left ( \frac{\mathrm{d}\gamma_{\boldsymbol{Y} \mid  \widehat{\boldsymbol{X}}, \widehat{\boldsymbol{Y}}, \boldsymbol{X}}(\boldsymbol{y}\mid \widehat{\boldsymbol{x}}, \widehat{\boldsymbol{y}}, \boldsymbol{x})}{\mathrm{d} \nu_{\mathcal{Y}}(\boldsymbol{y})}  \right ) \mid \widehat{\boldsymbol{X}}, \widehat{\boldsymbol{Y}}, \boldsymbol{X} \Bigg] \mid \widehat{\boldsymbol{X}}, \boldsymbol{X}\Bigg]   - \lambda\epsilon \log\, \left ( \frac{\mathrm{d}\gamma_{\boldsymbol{X} \mid \widehat{\boldsymbol{X}}}(\boldsymbol{x} \mid \widehat{\boldsymbol{x}})}{\mathrm{d} \nu_{\mathcal{X}}(\boldsymbol{x})}  \right )\mid \widehat{\boldsymbol{X}}\bigg]\Bigg] \Bigg\}   \Bigg\}. 
\end{aligned}
\]
Similar to relation~\eqref{ec-eq-tower}, the optimization for $\gamma$ can be decomposed to optimize $\gamma_{\boldsymbol{X} \mid  \widehat{\boldsymbol{X}}}$ and $\gamma_{\boldsymbol{Y} \mid \widehat{\boldsymbol{X}}, \widehat{\boldsymbol{Y}}, \boldsymbol{X}}$ (distributions $\widehat{\mathbb{P}}_{\widehat{\boldsymbol{X}}}$ and $\widehat{\mathbb{P}}_{\widehat{\boldsymbol{Y}} \mid \widehat{\boldsymbol{X}}}$ are determined). 
Then, using the interchangeability principle in Section 9.3.4 of \citet{shapiro2021lectures}, we have 
\[
\begin{aligned}
v_{\mathrm{P}}(\lambda) & = 
\mathbb{E}_{\widehat{\mathbb{P}}_{\widehat{\boldsymbol{X}}}}\Bigg[ \sup_{\gamma_{\boldsymbol{X} \mid  \widehat{\boldsymbol{X}}}} \, \mathbb{E}_{\gamma_{\boldsymbol{X} \mid  \widehat{\boldsymbol{X}}}}\Bigg[\mathbb{E}_{\widehat{\mathbb{P}}_{\widehat{\boldsymbol{Y}} \mid \widehat{\boldsymbol{X}}}}\Bigg[\sup_{\gamma_{\boldsymbol{Y} \mid \widehat{\boldsymbol{X}}, \widehat{\boldsymbol{Y}}, \boldsymbol{X}}} \, \mathbb{E}_{\gamma_{\boldsymbol{Y} \mid \widehat{\boldsymbol{X}}, \widehat{\boldsymbol{Y}}, \boldsymbol{X}}}\Bigg[ \Psi(f(\boldsymbol{x}),\boldsymbol{y}) -  \lambda c_p((\widehat{\boldsymbol{x}},\widehat{\boldsymbol{y}}), (\boldsymbol{x},\boldsymbol{y})) \\ &  -  \lambda \epsilon\log\, \left ( \frac{\mathrm{d}\gamma_{\boldsymbol{Y} \mid  \widehat{\boldsymbol{X}}, \widehat{\boldsymbol{Y}}, \boldsymbol{X}}(\boldsymbol{y}\mid \widehat{\boldsymbol{x}}, \widehat{\boldsymbol{y}}, \boldsymbol{x})}{\mathrm{d} \nu_{\mathcal{Y}}(\boldsymbol{y})}  \right ) \mid \widehat{\boldsymbol{X}}, \widehat{\boldsymbol{Y}}, \boldsymbol{X} \Bigg] \mid \widehat{\boldsymbol{X}}\Bigg]   - \lambda \epsilon \log\, \left ( \frac{\mathrm{d}\gamma_{\boldsymbol{X} \mid \widehat{\boldsymbol{X}}}(\boldsymbol{x} \mid \widehat{\boldsymbol{x}})}{\mathrm{d} \nu_{\mathcal{X}}(\boldsymbol{x})}  \right )\mid \widehat{\boldsymbol{X}}\bigg]\Bigg] \\
& = \mathbb{E}_{\widehat{\mathbb{P}}_{\widehat{\boldsymbol{X}}}}\Bigg[ \underbrace{\sup_{\gamma_{\boldsymbol{X} \mid  \widehat{\boldsymbol{X}}}} \, \mathbb{E}_{\gamma_{\boldsymbol{X} \mid  \widehat{\boldsymbol{X}}}}\Bigg[\underbrace{\mathbb{E}_{\widehat{\mathbb{P}}_{\widehat{\boldsymbol{Y}} \mid \widehat{\boldsymbol{X}}}}\Bigg[w_1(\widehat{\boldsymbol{x}}, \widehat{\boldsymbol{y}}, \boldsymbol{x}, \lambda) \mid \widehat{\boldsymbol{X}}\Bigg]}_{g(\widehat{\boldsymbol{x}}, \boldsymbol{x}, \lambda)}   - \lambda \epsilon \log\, \left ( \frac{\mathrm{d}\gamma_{\boldsymbol{X} \mid \widehat{\boldsymbol{X}}}(\boldsymbol{x} \mid \widehat{\boldsymbol{x}})}{\mathrm{d} \nu_{\mathcal{X}}(\boldsymbol{x})}  \right )\mid \widehat{\boldsymbol{X}}\bigg] \Bigg]}_{w_2(\widehat{\boldsymbol{x}}, \lambda)}, 
\end{aligned}
\]
where functions $w_1(\widehat{\boldsymbol{x}}, \widehat{\boldsymbol{y}}, \boldsymbol{x}, \lambda)$, $g(\widehat{\boldsymbol{x}}, \boldsymbol{x}, \lambda)$, and $w_2(\widehat{\boldsymbol{x}}, \lambda)$ are all measurable for any $\lambda \ge 0$ according to Lemma~\ref{lem-meas}. 
By the Fenchel duality, we have 
\begin{equation}\nonumber
    \sup_{\mathbb{P}} \Big\{ \mathbb{E}_{\boldsymbol{y}\sim \mathbb{P}}[f(\boldsymbol{y})] - \epsilon H(\mathbb{P}|\mathbb{Q}) \Big\} = \epsilon \log\,  \int  \exp\left( \frac{f(\boldsymbol{y})}{\epsilon} \right) \mathrm{d} \mathbb{Q}(\boldsymbol{y}) ,
\end{equation}
and it follows that
\begin{equation}\nonumber
    v_{\mathrm{P}}(\lambda) \le  \mathbb{E}_{\widehat{\boldsymbol{x}} \sim \widehat{\mathbb{P}}_{\widehat{\boldsymbol{X}}}} \left[ \lambda\epsilon \log\,  \int_{\mathcal{X}} \exp\left( \frac{g(\widehat{\boldsymbol{x}}, \boldsymbol{x}, \lambda)}{\lambda\epsilon} \right) \mathrm{d} \nu_{\mathcal{X}}(\boldsymbol{x})\right]. 
\end{equation}
Hence, according to Equation~\eqref{vp-le-vd}, we have
\begin{equation}\nonumber
    \begin{aligned}
         v_{\mathrm{P}} & \le \inf_{\lambda \ge 0} \Big \{
        \lambda\rho^p + v_{\mathrm{P}}(\lambda) \Big\} \\
        & \le \inf_{\lambda \ge 0} \left\{ \lambda\rho^p + \mathbb{E}_{\widehat{\boldsymbol{x}} \sim \widehat{\mathbb{P}}_{\widehat{\boldsymbol{X}}}} \left[ \lambda\epsilon \log\,  \int_{\mathcal{X}} \exp\left( \frac{g(\widehat{\boldsymbol{x}}, \boldsymbol{x}, \lambda)}{\lambda\epsilon} \right) \mathrm{d} \nu_{\mathcal{X}}(\boldsymbol{x})\right] \right\}
        \\ & = v_{\mathrm{D}}.
    \end{aligned}
\end{equation}
This completes the proof of the weak duality.
\end{proof}

In the following, we complete the proof of the strong duality theorem. 
\begin{proof}\textbf{of Theorem~\ref{theo-strong-duality}. }
    To prove Theorem~\ref{theo-strong-duality}\ref{theo-strong-duality-1}, we first rewrite the constraint of the primal problem~\eqref{causal-sdro} as 
    \begin{equation}\label{re-constraint}
        \mathbb{E}_{((\widehat{\boldsymbol{x}},\widehat{\boldsymbol{y}}), (\boldsymbol{x},\boldsymbol{y}))\sim\gamma} \Big[ c_p((\widehat{\boldsymbol{x}}, \widehat{\boldsymbol{y}}), (\boldsymbol{x},\boldsymbol{y}))  + \epsilon \log\, \left ( \frac{\mathrm{d}\gamma ( (\widehat{\boldsymbol{x}}, \widehat{\boldsymbol{y}}), (\boldsymbol{x},\boldsymbol{y})) }{\mathrm{d}\widehat{\mathbb{P}}(\widehat{\boldsymbol{x}}, \widehat{\boldsymbol{y}})\mathrm{d} \nu_{\mathcal{X}}(\boldsymbol{x})\mathrm{d} \nu_{\mathcal{Y}}(\boldsymbol{y})}  \right )\Big] \le \rho^p.
    \end{equation}
    Based on Assumption~\ref{assum-1}\ref{assum-1-4}, the relation~\eqref{re-constraint} can be reformulated as 
    \begin{equation}\label{re-constraint-2}
        \mathbb{E}_{(\widehat{\boldsymbol{x}},\widehat{\boldsymbol{y}})\sim\widehat{\mathbb{P}}}\mathbb{E}_{(\boldsymbol{x},\boldsymbol{y})\sim\gamma_{(\widehat{\boldsymbol{x}},\widehat{\boldsymbol{y}})}} \Big[ c_p((\widehat{\boldsymbol{x}}, \widehat{\boldsymbol{y}}), (\boldsymbol{x},\boldsymbol{y}))  + \epsilon \log\, \left ( \frac{\mathrm{d}\gamma ( (\widehat{\boldsymbol{x}}, \widehat{\boldsymbol{y}}), (\boldsymbol{x},\boldsymbol{y})) }{\mathrm{d}\widehat{\mathbb{P}}(\widehat{\boldsymbol{x}}, \widehat{\boldsymbol{y}})\mathrm{d} \nu_{\mathcal{X}}(\boldsymbol{x})\mathrm{d} \nu_{\mathcal{Y}}(\boldsymbol{y})}  \right )\Big] \le \rho^p.
    \end{equation}
    We define a kernel probability distribution $\mathcal{K}_{(\widehat{\boldsymbol{x}},\widehat{\boldsymbol{y}}),\epsilon}(\boldsymbol{x},\boldsymbol{y})$ using the kernel distributions $Q_{\epsilon}$ and $W_{\epsilon}$ from Equations~\eqref{Q-distri} and~\eqref{W-distri}:
    \begin{equation}\nonumber
        \mathrm{d}\mathcal{K}_{(\widehat{\boldsymbol{x}},\widehat{\boldsymbol{y}}),\epsilon}(\boldsymbol{x},\boldsymbol{y}) := \mathrm{d}Q_{\epsilon}(\boldsymbol{x}) \cdot \mathrm{d} W_{\epsilon}(\boldsymbol{y}) = \frac{e^{ - c_p((\widehat{\boldsymbol{x}},\widehat{\boldsymbol{y}}), (\boldsymbol{x},\boldsymbol{y})) / \epsilon} }{Z(\widehat{\boldsymbol{x}},\widehat{\boldsymbol{y}}) } \cdot  \mathrm{d} \nu_{\mathcal{X} }\left (  \boldsymbol{x} \right ) \mathrm{d} \nu_{\mathcal{Y} }\left (  \boldsymbol{y} \right ),
    \end{equation}
    where $Z(\widehat{\boldsymbol{x}},\widehat{\boldsymbol{y}}) = \int_{\mathbb{R}^{d_x}} e^{ - \| \boldsymbol{u}\|^p / \epsilon} \mathrm{d} \nu_{\mathcal{X} }\left (  \boldsymbol{u} \right ) \cdot \int_{\mathbb{R}^{d_y}} e^{ - \| \boldsymbol{u}\|^p / \epsilon} \mathrm{d} \nu_{\mathcal{Y} }\left (  \boldsymbol{u} \right ) $. Therefore, we have 
    \begin{equation}\nonumber
        \log\, \left(\frac{\mathrm{d}\mathcal{K}_{(\widehat{\boldsymbol{x}},\widehat{\boldsymbol{y}}),\epsilon}(\boldsymbol{x},\boldsymbol{y})}{\mathrm{d}\nu_{\mathcal{X}}(\boldsymbol{x})\cdot \mathrm{d}\nu_{\mathcal{Y}}(\boldsymbol{y})}\right) = -\frac{c_p((\widehat{\boldsymbol{x}},\widehat{\boldsymbol{y}}),(\boldsymbol{x},\boldsymbol{y}))}{\epsilon} - \log\, Z(\widehat{\boldsymbol{x}}, \widehat{\boldsymbol{y}}). 
    \end{equation}
    We decompose the logarithm term in the constraint
    \begin{equation}\nonumber
        \begin{aligned}
            \log\, \left(\frac{\mathrm{d}\gamma_{(\widehat{\boldsymbol{x}},\widehat{\boldsymbol{y}})}(\boldsymbol{x},\boldsymbol{y})}{\mathrm{d}\nu_{\mathcal{X}}(\boldsymbol{x})\mathrm{d}\nu_{\mathcal{Y}}(\boldsymbol{y})}\right) &= \log\, \left(\frac{\mathrm{d}\gamma_{(\widehat{\boldsymbol{x}},\widehat{\boldsymbol{y}})}(\boldsymbol{x},\boldsymbol{y})}{\mathrm{d}\mathcal{K}_{(\widehat{\boldsymbol{x}},\widehat{\boldsymbol{y}}),\epsilon}(\boldsymbol{x},\boldsymbol{y})}\right) + \log\, \left(\frac{\mathrm{d}\mathcal{K}_{(\widehat{\boldsymbol{x}},\widehat{\boldsymbol{y}}),\epsilon}(\boldsymbol{x},\boldsymbol{y})}{\mathrm{d}\nu_{\mathcal{X}}(\boldsymbol{x})\mathrm{d}\nu_{\mathcal{Y}}(\boldsymbol{y})}\right) \\
            & = \mathbb{D}_{\mathrm{KL}}\Big(\gamma_{(\widehat{\boldsymbol{x}},\widehat{\boldsymbol{y}})} || \mathcal{K}_{(\widehat{\boldsymbol{x}},\widehat{\boldsymbol{y}}),\epsilon}\Big) -\frac{c_p((\widehat{\boldsymbol{x}},\widehat{\boldsymbol{y}}),(\boldsymbol{x},\boldsymbol{y}))}{\epsilon} - \log\, Z(\widehat{\boldsymbol{x}}, \widehat{\boldsymbol{y}}),
        \end{aligned}
    \end{equation}
    where $\mathbb{D}_{\mathrm{KL}}\Big(\gamma_{(\widehat{\boldsymbol{x}},\widehat{\boldsymbol{y}})} || \mathcal{K}_{(\widehat{\boldsymbol{x}},\widehat{\boldsymbol{y}}),\epsilon}\Big)$ represents the KL-divergence from distribution $\gamma_{(\widehat{\boldsymbol{x}},\widehat{\boldsymbol{y}})}$ to $\mathcal{K}_{(\widehat{\boldsymbol{x}},\widehat{\boldsymbol{y}}),\epsilon}$. 
    Thus, we reformulate~\eqref{re-constraint-2} as the following equivalent constraint in terms of the KL-divergence
    \begin{equation}\label{re-constraint-kl}
            \epsilon \cdot \mathbb{E}_{(\widehat{\boldsymbol{x}},\widehat{\boldsymbol{y}})\sim\widehat{\mathbb{P}}} \Big[ \mathbb{D}_{\mathrm{KL}}\Big(\gamma_{(\widehat{\boldsymbol{x}},\widehat{\boldsymbol{y}})} || \mathcal{K}_{(\widehat{\boldsymbol{x}},\widehat{\boldsymbol{y}}),\epsilon}\Big) \Big] \le \rho^p + \epsilon \cdot \mathbb{E}_{(\widehat{\boldsymbol{x}},\widehat{\boldsymbol{y}})\sim\widehat{\mathbb{P}}}\Big[\log\, Z(\widehat{\boldsymbol{x}}, \widehat{\boldsymbol{y}})\Big] =\bar{\rho},
    \end{equation}
    which implies that the constraint of the problem~\eqref{causal-sdro} is equivalent to constraint~\eqref{re-constraint-kl},

    It is direct to show Theorem~\ref{theo-strong-duality}\ref{theo-strong-duality-1} according to the reformulated constraint~\eqref{re-constraint-kl} and the non-negativity of the KL-divergence.  

    For Theorem~\ref{theo-strong-duality}\ref{theo-strong-duality-2}, we first consider there exists $\lambda>0$ satisfying Condition~\ref{condit-1}.    
    According to Lemma~\ref{lem-weak-duality}, we already have $v_{\mathrm{P}} \le v_{\mathrm{D}}$. Thus, we next prove $v_{\mathrm{P}} \ge v_{\mathrm{D}}$. Denote the optimal solution of $v_{\mathrm{D}}$ is $\lambda^*$, and the optimal distribution of $v_{\mathrm{P}}$ is $\mathbb{P}^*$. Suppose $\bar{\rho}\ge 0$ is bounded above such that the CSD constraint is binding, that is, $R_p\left(\widehat{\mathbb{P}},\mathbb{P}^*\right)=\rho$ and $\epsilon \cdot \mathbb{E}_{(\widehat{\boldsymbol{x}},\widehat{\boldsymbol{y}})\sim\widehat{\mathbb{P}}} \Big[ \mathbb{D}_{\mathrm{KL}}\Big(\gamma_{(\widehat{\boldsymbol{x}},\widehat{\boldsymbol{y}})} || \mathcal{K}_{(\widehat{\boldsymbol{x}},\widehat{\boldsymbol{y}}),\epsilon}\Big) \Big] = \bar{\rho}$. Therefore, there always exists $\lambda^* >0$.

    Since the dual problem is convex in $\lambda$, the $\lambda^*$ satisfies the following first-order optimality condition
    \begin{equation}\label{first-order-opt}
        \rho^p + \epsilon \mathbb{E}_{\widehat{\boldsymbol{x}} \sim \widehat{\mathbb{P}}_{\widehat{\boldsymbol{X}}}} \Big[ \log\,  \int_{\mathcal{X}} e^{r(\widehat{\boldsymbol{x}}, \boldsymbol{x})}  \mathrm{d} \nu_{\mathcal{X}}(\boldsymbol{x}) \Big] = \frac{1}{\lambda^*} \mathbb{E}_{\widehat{\boldsymbol{x}} \sim \widehat{\mathbb{P}}_{\widehat{\boldsymbol{X}}}}\Big[  \frac{\int_{\mathcal{X}} e^{r(\widehat{\boldsymbol{x}}, \boldsymbol{x})} t(\widehat{\boldsymbol{x}}, \boldsymbol{x})  \cdot \mathrm{d} \nu_{\mathcal{X}}(\boldsymbol{x})}{\int_{\mathcal{X}} e^{r(\widehat{\boldsymbol{x}}, \boldsymbol{x})}  \mathrm{d} \nu_{\mathcal{X}}(\boldsymbol{x})}   \Big] ,
    \end{equation}
    where 
    \begin{equation}\label{ec-func-rst}\nonumber
    \begin{aligned}
        & r(\widehat{\boldsymbol{x}}, \boldsymbol{x}) = \frac{g(\widehat{\boldsymbol{x}}, \boldsymbol{x}, \lambda^*)}{\lambda^* \epsilon} = \mathbb{E}_{\widehat{\boldsymbol{y}} \sim  \widehat{\mathbb{P}}_{\widehat{\boldsymbol{Y}}|\widehat{\boldsymbol{X}}=\widehat{\boldsymbol{x}}}}\left[ \log\,  \int _{\mathcal{Y}}  e^{s(\widehat{\boldsymbol{x}},\widehat{\boldsymbol{y}}, \boldsymbol{x},\boldsymbol{y})}  \mathrm{d} \nu_{\mathcal{Y}}(\boldsymbol{y}) \right], \\
        & t(\widehat{\boldsymbol{x}}, \boldsymbol{x}) =   \mathbb{E}_{\widehat{\boldsymbol{y}} \sim  \widehat{\mathbb{P}}_{\widehat{\boldsymbol{Y}}|\widehat{\boldsymbol{X}}=\widehat{\boldsymbol{x}}}}\Big[ \frac{\int _{\mathcal{Y}}  e^{ s(\widehat{\boldsymbol{x}},\widehat{\boldsymbol{y}}, \boldsymbol{x},\boldsymbol{y}) } \cdot \Psi(f(\boldsymbol{x}),\boldsymbol{y})  \cdot \mathrm{d} \nu_{\mathcal{Y}}(\boldsymbol{y})}{\int _{\mathcal{Y}}  e^{s(\widehat{\boldsymbol{x}},\widehat{\boldsymbol{y}}, \boldsymbol{x},\boldsymbol{y})}  \mathrm{d} \nu_{\mathcal{Y}}(\boldsymbol{y})}  \Big], \\
    \text{and} & \\
        & s(\widehat{\boldsymbol{x}},\widehat{\boldsymbol{y}}, \boldsymbol{x},\boldsymbol{y}) = \frac{\Psi(f(\boldsymbol{x}),\boldsymbol{y})-\lambda^* c_p((\widehat{\boldsymbol{x}},\widehat{\boldsymbol{y}}), (\boldsymbol{x},\boldsymbol{y}))}{\lambda^*\epsilon}.
    \end{aligned}
    \end{equation}

    We next construct a distribution $\mathbb{P}_*$, which can be proved to be feasible for the primal problem. We take the transport mapping $\gamma_*$ such that
    \begin{equation}\nonumber
        \frac{\mathrm{d}\gamma (  (\widehat{\boldsymbol{x}},\widehat{\boldsymbol{y}}), (\boldsymbol{x},\boldsymbol{y})) }{\mathrm{d}\widehat{\mathbb{P}}(\widehat{\boldsymbol{x}},\widehat{\boldsymbol{y}})\mathrm{d} \nu_{\mathcal{X}}(\boldsymbol{x})\mathrm{d} \nu_{\mathcal{Y}}(\boldsymbol{y})}\propto  e^{r(\widehat{\boldsymbol{x}}, \boldsymbol{x})} \cdot e^{s(\widehat{\boldsymbol{x}},\widehat{\boldsymbol{y}}, \boldsymbol{x},\boldsymbol{y})},
    \end{equation}
    Let $\alpha_{\widehat{\boldsymbol{x}}} = \Big(\int_{\mathcal{X}} e^{r(\widehat{\boldsymbol{x}}, \boldsymbol{x})}  \mathrm{d} \nu_{\mathcal{X}}(\boldsymbol{x})\Big)^{-1}$ and $\beta_{\widehat{\boldsymbol{x}}, \widehat{\boldsymbol{y}}, \boldsymbol{x}}=\left(\int _{\mathcal{Y}}e^{s(\widehat{\boldsymbol{x}},\widehat{\boldsymbol{y}}, \boldsymbol{x},\boldsymbol{y})}\mathrm{d} \nu_{\mathcal{Y}}(\boldsymbol{y})\right)^{-1}$, we have
    \begin{equation}\nonumber
        \frac{\mathrm{d}\gamma (  (\widehat{\boldsymbol{x}},\widehat{\boldsymbol{y}}), (\boldsymbol{x},\boldsymbol{y})) }{\mathrm{d}\widehat{\mathbb{P}}(\widehat{\boldsymbol{x}},\widehat{\boldsymbol{y}})\mathrm{d} \nu_{\mathcal{X}}(\boldsymbol{x})\mathrm{d} \nu_{\mathcal{Y}}(\boldsymbol{y})} = \alpha_{\widehat{\boldsymbol{x}}} \cdot \beta_{\widehat{\boldsymbol{x}}, \widehat{\boldsymbol{y}}, \boldsymbol{x}} \cdot e^{r(\widehat{\boldsymbol{x}}, \boldsymbol{x}) + s(\widehat{\boldsymbol{x}},\widehat{\boldsymbol{y}}, \boldsymbol{x},\boldsymbol{y})}. 
    \end{equation}

    We verify the feasibility of distribution $\mathbb{P}_*$ by the definition of the CSD, that is
    \begin{equation}\nonumber
    \begin{aligned}
        R_p(\widehat{\mathbb{P}}, \mathbb{P}_*) &= \inf_{\gamma \in \Gamma_c(\widehat{\mathbb{P}}, \mathbb{P}_*)} \mathbb{E}_{((\widehat{\boldsymbol{x}},\widehat{\boldsymbol{y}}), (\boldsymbol{x},\boldsymbol{y}))\sim\gamma} \Big[ c_p((\widehat{\boldsymbol{x}},\widehat{\boldsymbol{y}}), (\boldsymbol{x},\boldsymbol{y})) + \epsilon \log\, \left ( \frac{\mathrm{d}\gamma (  (\widehat{\boldsymbol{x}},\widehat{\boldsymbol{y}}), (\boldsymbol{x},\boldsymbol{y})) }{\mathrm{d}\widehat{\mathbb{P}}(\widehat{\boldsymbol{x}},\widehat{\boldsymbol{y}})\mathrm{d} \nu_{\mathcal{X}}(\boldsymbol{x})\mathrm{d} \nu_{\mathcal{Y}}(\boldsymbol{y})}  \right ) \Big] \\
        & = \inf_{\gamma \in \Gamma_c(\widehat{\mathbb{P}}, \mathbb{P}_*)} \mathbb{E}_{((\widehat{\boldsymbol{x}},\widehat{\boldsymbol{y}}), (\boldsymbol{x},\boldsymbol{y}))\sim\gamma} \Big[ \epsilon \log\, \left ( \frac{e^{c_p((\widehat{\boldsymbol{x}},\widehat{\boldsymbol{y}}), (\boldsymbol{x},\boldsymbol{y}))/\epsilon}\cdot \mathrm{d}\gamma (  (\widehat{\boldsymbol{x}},\widehat{\boldsymbol{y}}), (\boldsymbol{x},\boldsymbol{y})) }{\mathrm{d}\widehat{\mathbb{P}}(\widehat{\boldsymbol{x}},\widehat{\boldsymbol{y}})\mathrm{d} \nu_{\mathcal{X}}(\boldsymbol{x})\mathrm{d} \nu_{\mathcal{Y}}(\boldsymbol{y})}  \right ) \Big] \\
        & \le \mathbb{E}_{((\widehat{\boldsymbol{x}},\widehat{\boldsymbol{y}}), (\boldsymbol{x},\boldsymbol{y}))\sim\gamma_*} \Big[ \epsilon \log\, \left ( \frac{e^{c_p((\widehat{\boldsymbol{x}},\widehat{\boldsymbol{y}}), (\boldsymbol{x},\boldsymbol{y}))/\epsilon}\cdot \mathrm{d}\gamma_* (  (\widehat{\boldsymbol{x}},\widehat{\boldsymbol{y}}), (\boldsymbol{x},\boldsymbol{y})) }{\mathrm{d}\widehat{\mathbb{P}}(\widehat{\boldsymbol{x}},\widehat{\boldsymbol{y}})\mathrm{d} \nu_{\mathcal{X}}(\boldsymbol{x})\mathrm{d} \nu_{\mathcal{Y}}(\boldsymbol{y})}  \right ) \Big] \\
        & =  \mathbb{E}_{((\widehat{\boldsymbol{x}},\widehat{\boldsymbol{y}}), (\boldsymbol{x},\boldsymbol{y}))\sim\gamma_*} \Big[ \epsilon \log\, \left ( e^{c_p((\widehat{\boldsymbol{x}},\widehat{\boldsymbol{y}}), (\boldsymbol{x},\boldsymbol{y}))/\epsilon} \cdot \alpha_{\widehat{\boldsymbol{x}}} \cdot e^{r(\widehat{\boldsymbol{x}}, \boldsymbol{x})} \cdot \beta_{\widehat{\boldsymbol{x}}, \widehat{\boldsymbol{y}}, \boldsymbol{x}} \cdot e^{s(\widehat{\boldsymbol{x}},\widehat{\boldsymbol{y}}, \boldsymbol{x},\boldsymbol{y})} \right ) \Big]  \\ 
        & =  \mathbb{E}_{((\widehat{\boldsymbol{x}},\widehat{\boldsymbol{y}}), (\boldsymbol{x},\boldsymbol{y}))\sim\gamma_*} \Big[ \frac{1}{\lambda^*}\Psi(f(\boldsymbol{x}), \boldsymbol{y}) +  \epsilon r(\widehat{\boldsymbol{x}}, \boldsymbol{x}) + \epsilon \log\, \left (\alpha_{\widehat{\boldsymbol{x}}}\right ) + \epsilon \log\, \left (\beta_{\widehat{\boldsymbol{x}}, \widehat{\boldsymbol{y}}, \boldsymbol{x}}\right )  \Big] \\ 
        & =  
        \frac{1}{\lambda^*} \mathbb{E}_{\widehat{\boldsymbol{x}} \sim \widehat{\mathbb{P}}_{\widehat{\boldsymbol{X}}}}\Big[  \frac{\int_{\mathcal{X}} e^{r(\widehat{\boldsymbol{x}}, \boldsymbol{x})} t(\widehat{\boldsymbol{x}}, \boldsymbol{x})  \cdot \mathrm{d} \nu_{\mathcal{X}}(\boldsymbol{x})}{\int_{\mathcal{X}} e^{r(\widehat{\boldsymbol{x}}, \boldsymbol{x})}  \mathrm{d} \nu_{\mathcal{X}}(\boldsymbol{x})}   \Big] - \epsilon \mathbb{E}_{\widehat{\boldsymbol{x}} \sim \widehat{\mathbb{P}}_{\widehat{\boldsymbol{X}}}} \Big[ \log\,  \int_{\mathcal{X}} e^{r(\widehat{\boldsymbol{x}}, \boldsymbol{x})}  \mathrm{d} \nu_{\mathcal{X}}(\boldsymbol{x}) \Big] \\
        & = \rho^p,
    \end{aligned}
    \end{equation}
    where the inequality relation is because $\gamma_*$ is a feasible solution in $\Gamma_c(\widehat{\mathbb{P}}, \mathbb{P}_*)$, and the fourth and fifth equalities are by substituting the expression of $\gamma_*$, and the last equality is due to the first-order optimality condition~\eqref{first-order-opt}. Therefore, under Assumption~\ref{assum-1},  the distribution $\mathbb{P}_*$ is feasible for the primal problem. We show that the primal optimal value is lower bounded by the dual optimal value 
    \begin{equation}\label{strong-duality}
        \begin{aligned}
            v_{\mathrm{P}} & \ge \mathbb{E}_{(\boldsymbol{x}, \boldsymbol{y})\sim\mathbb{P}_*}\Big[\Psi(f(\boldsymbol{x}), \boldsymbol{y})\Big] \\
            & = \mathbb{E}_{((\widehat{\boldsymbol{x}},\widehat{\boldsymbol{y}}), (\boldsymbol{x},\boldsymbol{y}))\sim\gamma_*}\Big[\Psi(f(\boldsymbol{x}), \boldsymbol{y})\Big] \\
            & = \int_{(\mathcal{X}\times\mathcal{Y}) \times (\mathcal{X}\times\mathcal{Y})} \Psi(f(\boldsymbol{x}), \boldsymbol{y}) \mathrm{d}\gamma (  (\widehat{\boldsymbol{x}},\widehat{\boldsymbol{y}}), (\boldsymbol{x},\boldsymbol{y})) \cdot  \frac{\mathrm{d}\widehat{\mathbb{P}}(\widehat{\boldsymbol{x}},\widehat{\boldsymbol{y}})\mathrm{d} \nu_{\mathcal{X}}(\boldsymbol{x})\mathrm{d} \nu_{\mathcal{Y}}(\boldsymbol{y})}{\mathrm{d}\gamma (  (\widehat{\boldsymbol{x}},\widehat{\boldsymbol{y}}), (\boldsymbol{x},\boldsymbol{y})) } \cdot \frac{\mathrm{d}\gamma (  (\widehat{\boldsymbol{x}},\widehat{\boldsymbol{y}}), (\boldsymbol{x},\boldsymbol{y})) }{\mathrm{d}\widehat{\mathbb{P}}(\widehat{\boldsymbol{x}},\widehat{\boldsymbol{y}})\mathrm{d} \nu_{\mathcal{X}}(\boldsymbol{x})\mathrm{d} \nu_{\mathcal{Y}}(\boldsymbol{y})}\\
            & =  \mathbb{E}_{\widehat{\boldsymbol{x}} \sim \widehat{\mathbb{P}}_{\widehat{\boldsymbol{X}}}}\Big[  \frac{\int_{\mathcal{X}} e^{r(\widehat{\boldsymbol{x}}, \boldsymbol{x})} t(\widehat{\boldsymbol{x}}, \boldsymbol{x})  \cdot \mathrm{d} \nu_{\mathcal{X}}(\boldsymbol{x})}{\int_{\mathcal{X}} e^{r(\widehat{\boldsymbol{x}}, \boldsymbol{x})}  \mathrm{d} \nu_{\mathcal{X}}(\boldsymbol{x})}   \Big] \\
            & =  \lambda^*\rho^p + \lambda^* \epsilon \mathbb{E}_{\widehat{\boldsymbol{x}} \sim \widehat{\mathbb{P}}_{\widehat{\boldsymbol{X}}}} \left[ \log\, \int_{\mathcal{X}} \exp\left( \frac{g(\widehat{\boldsymbol{x}}, \boldsymbol{x}, \lambda^*)}{\lambda^* \epsilon} \right) \mathrm{d} \nu_{\mathcal{X}}(\boldsymbol{x})  \right]  \\
            & = v_{\mathrm{D}}.
        \end{aligned}
    \end{equation}
    According to Lemma~\ref{lem-weak-duality} and Equation~\eqref{strong-duality}, we have $v_{\mathrm{P}}\ge v_{\mathrm{D}}$ and $v_{\mathrm{P}}\le v_{\mathrm{D}}$. Thus $v_{\mathrm{P}} = v_{\mathrm{D}}$. The proof for Theorem~\ref{theo-strong-duality}\ref{theo-strong-duality-2} is completed. 
    
    If for any $\lambda > 0$, the Condition~\ref{condit-1} does not hold, it is direct to verify that the dual objective is always unbounded.
    Therefore, it also holds that $v_{\mathrm{P}} = v_{\mathrm{D}} = \infty$.    
    \end{proof} 

\subsection{Analysis for Remark~\ref{remark-limit}} \label{ecsec-remark-limit}

For function $g$ in Equation~\eqref{g-function}, when $\epsilon \to 0$, we have 
\begin{equation}\nonumber
\begin{aligned}
     & \lim_{\epsilon \to 0}\, g(\widehat{\boldsymbol{x}}, \boldsymbol{x}, \lambda, \epsilon) \\&= \lim_{\epsilon \to 0}\, \mathbb{E}_{\widehat{\boldsymbol{y}} \sim  \widehat{\mathbb{P}}_{\widehat{\boldsymbol{Y}}|\widehat{\boldsymbol{X}}=\widehat{\boldsymbol{x}}}}\left[ \lambda\epsilon \log\,  \int _{\mathcal{Y}} \exp\left( \frac{\Psi(f(\boldsymbol{x}),\boldsymbol{y}) - \lambda c_p((\widehat{\boldsymbol{x}},\widehat{\boldsymbol{y}}), (\boldsymbol{x},\boldsymbol{y}))}{\lambda\epsilon} \right) \mathrm{d} \nu_{\mathcal{Y}}(\boldsymbol{y}) \right] \\
     & = \mathbb{E}_{\widehat{\boldsymbol{y}} \sim  \widehat{\mathbb{P}}_{\widehat{\boldsymbol{Y}}|\widehat{\boldsymbol{X}}=\widehat{\boldsymbol{x}}}}\left[ \lambda \lim_{\tau \to \infty}\,  \frac{1} {\tau} \cdot \log\,  \int _{\mathcal{Y}} \exp\left( \frac{\tau\Big(\Psi(f(\boldsymbol{x}),\boldsymbol{y}) - \lambda c_p((\widehat{\boldsymbol{x}},\widehat{\boldsymbol{y}}), (\boldsymbol{x},\boldsymbol{y}))\Big)}{\lambda} \right) \mathrm{d} \nu_{\mathcal{Y}}(\boldsymbol{y}) \right] \\
     & = \mathbb{E}_{\widehat{\boldsymbol{y}} \sim  \widehat{\mathbb{P}}_{\widehat{\boldsymbol{Y}}|\widehat{\boldsymbol{X}}=\widehat{\boldsymbol{x}}}}\left[ \lambda \lim_{\tau \to \infty}\,  \nabla_{\tau} \log\,  \int _{\mathcal{Y}} \exp\left( \frac{\tau\Big(\Psi(f(\boldsymbol{x}),\boldsymbol{y}) - \lambda c_p((\widehat{\boldsymbol{x}},\widehat{\boldsymbol{y}}), (\boldsymbol{x},\boldsymbol{y}))\Big)}{\lambda} \right) \mathrm{d} \nu_{\mathcal{Y}}(\boldsymbol{y}) \right] \\
     & = \mathbb{E}_{\widehat{\boldsymbol{y}} \sim  \widehat{\mathbb{P}}_{\widehat{\boldsymbol{Y}}|\widehat{\boldsymbol{X}}=\widehat{\boldsymbol{x}}}}\left[ \lim_{\tau \to \infty}\,  \frac{\int_{\mathcal{Y}} e^{\tau\Big(\Psi(f(\boldsymbol{x}),\boldsymbol{y}) - \lambda c_p((\widehat{\boldsymbol{x}},\widehat{\boldsymbol{y}}), (\boldsymbol{x},\boldsymbol{y}))\Big)/\lambda  }\Big(\Psi(f(\boldsymbol{x}),\boldsymbol{y}) - \lambda c_p((\widehat{\boldsymbol{x}},\widehat{\boldsymbol{y}}), (\boldsymbol{x},\boldsymbol{y}))\Big) \mathrm{d} \nu_{\mathcal{Y}}(\boldsymbol{y})}{\int _{\mathcal{Y}} e^{ \tau\Big(\Psi(f(\boldsymbol{x}),\boldsymbol{y}) - \lambda c_p((\widehat{\boldsymbol{x}},\widehat{\boldsymbol{y}}), (\boldsymbol{x},\boldsymbol{y}))\Big)/\lambda } \mathrm{d} \nu_{\mathcal{Y}}(\boldsymbol{y})} \right] \\
     & = \mathbb{E}_{\widehat{\boldsymbol{y}} \sim  \widehat{\mathbb{P}}_{\widehat{\boldsymbol{Y}}|\widehat{\boldsymbol{X}}=\widehat{\boldsymbol{x}}}}\left[ \sup_{\boldsymbol{y} \in \text{supp}\, \nu_{\mathcal{Y}}}
     \,  \Big\{\Psi(f(\boldsymbol{x}),\boldsymbol{y}) - \lambda c_p((\widehat{\boldsymbol{x}},\widehat{\boldsymbol{y}}), (\boldsymbol{x},\boldsymbol{y}))\Big\} \right], 
\end{aligned}
\end{equation}
where the third equality is due to L'Hôpital's rule. Then, for the Equation~\eqref{causal-sdro-dual}, similarly we have 
\begin{equation}\nonumber
\begin{aligned}
       & \lim_{\epsilon \to 0}\,\mathbb{E}_{\widehat{\boldsymbol{x}} \sim \widehat{\mathbb{P}}_{\widehat{\boldsymbol{X}}}} \left[ \lambda\epsilon \log\, 
        \int_{\mathcal{X}} \exp\left( \frac{g(\widehat{\boldsymbol{x}}, \boldsymbol{x}, \lambda, \epsilon)}{\lambda\epsilon} \right) \mathrm{d} \nu_{\mathcal{X}}(\boldsymbol{x}) \right] \\
        = & \mathbb{E}_{\widehat{\boldsymbol{x}} \sim \widehat{\mathbb{P}}_{\widehat{\boldsymbol{X}}}} \left[ \lim_{\tau \to \infty}\, \frac{\lambda}{\tau} \log\, 
        \int_{\mathcal{X}} \exp\left( \frac{\tau \cdot g(\widehat{\boldsymbol{x}}, \boldsymbol{x}, \lambda, \frac{1}{\tau})}{\lambda} \right) \mathrm{d} \nu_{\mathcal{X}}(\boldsymbol{x}) \right] \\
        = &  \mathbb{E}_{\widehat{\boldsymbol{x}} \sim\widehat{\mathbb{P}}_{\widehat{\boldsymbol{X}}}}\left[\sup_{\boldsymbol{x} \in \text{supp}\,\nu_{\mathcal{X}}}\Big\{  \lim_{\tau \to \infty}\, g(\widehat{\boldsymbol{x}}, \boldsymbol{x}, \lambda, \frac{1}{\tau})\Big\}  \right]\\
        = &  \mathbb{E}_{\widehat{\boldsymbol{x}} \sim\widehat{\mathbb{P}}_{\widehat{\boldsymbol{X}}}}\left[\sup_{\boldsymbol{x} \in \text{supp}\,\nu_{\mathcal{X}}}\Big\{\mathbb{E}_{\widehat{\boldsymbol{y}} \sim  \widehat{\mathbb{P}}_{\widehat{\boldsymbol{Y}}|\widehat{\boldsymbol{X}}=\widehat{\boldsymbol{x}}}}\left[ \sup_{\boldsymbol{y} \in \text{supp}\, \nu_{\mathcal{Y}}}\,  \Big\{\Psi(f(\boldsymbol{x}),\boldsymbol{y}) - \lambda c_p((\widehat{\boldsymbol{x}},\widehat{\boldsymbol{y}}), (\boldsymbol{x},\boldsymbol{y}))\Big\} \right] \Big\}  \right],
\end{aligned}
\end{equation}
where the second equality is due to the properties of the Log-Sum-Exp limit, refer to Laplace's method in~\citet{shun1995laplace}. 
When $\text{supp}\,\nu_{\mathcal{X}} = \mathcal{X}$ and $ \text{supp}\,\nu_{\mathcal{Y}} = \mathcal{Y}$, the dual objective function of the Causal-SDRO converges into that of Causal-WDRO in \citet{yang2022decision}.

\subsection{Proof of Theorem~\ref{theo-worst-case-distri} in Section~\ref{subsec-CSDRO-worstcase}} \label{ecsec-theo-worst}

\begin{proof}\textbf{of Theorem~\ref{theo-worst-case-distri}. }
    In the proof of Theorem~\ref{theo-strong-duality}, we have derived a worst-case distribution of $v_{\mathrm{P}}$, that is, 
    \begin{equation}
        \frac{\mathrm{d}\gamma (  (\widehat{\boldsymbol{x}},\widehat{\boldsymbol{y}}), (\boldsymbol{x},\boldsymbol{y})) }{\mathrm{d}\widehat{\mathbb{P}}(\widehat{\boldsymbol{x}},\widehat{\boldsymbol{y}})\mathrm{d} \nu_{\mathcal{X}}(\boldsymbol{x})\mathrm{d} \nu_{\mathcal{Y}}(\boldsymbol{y})} = \alpha_{\widehat{\boldsymbol{x}}} \cdot \beta_{\widehat{\boldsymbol{x}}, \widehat{\boldsymbol{y}}, \boldsymbol{x}} \cdot e^{r(\widehat{\boldsymbol{x}}, \boldsymbol{x}) + s(\widehat{\boldsymbol{x}},\widehat{\boldsymbol{y}}, \boldsymbol{x},\boldsymbol{y})},
    \end{equation}
    where 
    \begin{equation}\nonumber
        \alpha_{\widehat{\boldsymbol{x}}} = \Big(\int_{\mathcal{X}} e^{r(\widehat{\boldsymbol{x}}, \boldsymbol{x})}  \mathrm{d} \nu_{\mathcal{X}}(\boldsymbol{x})\Big)^{-1}, \quad \quad\beta_{\widehat{\boldsymbol{x}}, \widehat{\boldsymbol{y}}, \boldsymbol{x}}=\left(\int _{\mathcal{Y}}e^{s(\widehat{\boldsymbol{x}},\widehat{\boldsymbol{y}}, \boldsymbol{x},\boldsymbol{y})}\mathrm{d} \nu_{\mathcal{Y}}(\boldsymbol{y})\right)^{-1},
    \end{equation}
    and function $r(\widehat{\boldsymbol{x}}, \boldsymbol{x})$ and $s(\widehat{\boldsymbol{x}},\widehat{\boldsymbol{y}}, \boldsymbol{x},\boldsymbol{y})$ are defined in~\eqref{ec-func-rst}. 
    We now prove that $\lambda^*$ is the unique optimal solution of the dual problem, which implies that the worst-case distribution is also unique. 

    Recall that $v(\lambda)$ denotes the objective function for the dual problem, then we have 
    \begin{equation}\nonumber
        \nabla_{\lambda} v(\lambda) = \rho^p + \epsilon \mathbb{E}_{\widehat{\boldsymbol{x}} \sim \widehat{\mathbb{P}}_{\widehat{\boldsymbol{X}}}} \Big[ \log\,  \int_{\mathcal{X}} e^{r(\lambda, \widehat{\boldsymbol{x}}, \boldsymbol{x})}  \mathrm{d} \nu_{\mathcal{X}}(\boldsymbol{x}) \Big]- \frac{1}{\lambda} \mathbb{E}_{\widehat{\boldsymbol{x}} \sim \widehat{\mathbb{P}}_{\widehat{\boldsymbol{X}}}}\Big[  \frac{\int_{\mathcal{X}} e^{r(\lambda, \widehat{\boldsymbol{x}}, \boldsymbol{x})} t(\lambda, \widehat{\boldsymbol{x}}, \boldsymbol{x})  \cdot \mathrm{d} \nu_{\mathcal{X}}(\boldsymbol{x})}{\int_{\mathcal{X}} e^{r(\lambda, \widehat{\boldsymbol{x}}, \boldsymbol{x})}  \mathrm{d} \nu_{\mathcal{X}}(\boldsymbol{x})}   \Big],
    \end{equation}
    where 
    \begin{equation}\nonumber
    \begin{aligned} 
        & r(\lambda, \widehat{\boldsymbol{x}}, \boldsymbol{x}) = \mathbb{E}_{\widehat{\boldsymbol{y}} \sim  \widehat{\mathbb{P}}_{\widehat{\boldsymbol{Y}}|\widehat{\boldsymbol{X}}=\widehat{\boldsymbol{x}}}}\Big[ \log\,  \int _{\mathcal{Y}}  e^{s(\lambda, \widehat{\boldsymbol{x}},\widehat{\boldsymbol{y}}, \boldsymbol{x},\boldsymbol{y})}  \mathrm{d} \nu_{\mathcal{Y}}(\boldsymbol{y}) \Big], \\
        & t(\lambda, \widehat{\boldsymbol{x}}, \boldsymbol{x}) = \mathbb{E}_{\widehat{\boldsymbol{y}} \sim  \widehat{\mathbb{P}}_{\widehat{\boldsymbol{Y}}|\widehat{\boldsymbol{X}}=\widehat{\boldsymbol{x}}}}\Big[ \frac{\int _{\mathcal{Y}}  e^{ s(\lambda, \widehat{\boldsymbol{x}},\widehat{\boldsymbol{y}}, \boldsymbol{x},\boldsymbol{y}) }  \Psi(f(\boldsymbol{x}),\boldsymbol{y}) \cdot  \mathrm{d} \nu_{\mathcal{Y}}(\boldsymbol{y})}{\int _{\mathcal{Y}} e^{ s(\lambda, \widehat{\boldsymbol{x}},\widehat{\boldsymbol{y}}, \boldsymbol{x},\boldsymbol{y}) }  \mathrm{d} \nu_{\mathcal{Y}}(\boldsymbol{y})}  \Big], \\
        \text{and} \quad  &    \\
       & s(\lambda, \widehat{\boldsymbol{x}},\widehat{\boldsymbol{y}}, \boldsymbol{x},\boldsymbol{y}) = \frac{\Psi(f(\boldsymbol{x}),\boldsymbol{y})-\lambda c_p((\widehat{\boldsymbol{x}},\widehat{\boldsymbol{y}}), (\boldsymbol{x},\boldsymbol{y}))}{\lambda\epsilon}.
    \end{aligned}
    \end{equation}
    For its second-order derivative function, we have
    \begin{equation}\nonumber
    \begin{aligned}
        \nabla_{\lambda}^2 v(\lambda) 
        & = \nabla_{\lambda} \Big[ \epsilon \mathbb{E}_{\widehat{\boldsymbol{x}} \sim \widehat{\mathbb{P}}_{\widehat{\boldsymbol{X}}}} \Big[ \log\, \int_{\mathcal{X}} e^{r(\lambda, \widehat{\boldsymbol{x}}, \boldsymbol{x})}  \mathrm{d} \nu_{\mathcal{X}}(\boldsymbol{x})\Big]\Big] - \nabla_{\lambda} \Big[  \frac{1}{\lambda} \cdot \mathbb{E}_{\widehat{\boldsymbol{x}} \sim \widehat{\mathbb{P}}_{\widehat{\boldsymbol{X}}}}\Big[  \frac{\int_{\mathcal{X}} e^{r\lambda, (\widehat{\boldsymbol{x}}, \boldsymbol{x})} t(\lambda, \widehat{\boldsymbol{x}}, \boldsymbol{x})  \cdot \mathrm{d} \nu_{\mathcal{X}}(\boldsymbol{x})}{\int_{\mathcal{X}} e^{r(\lambda, \widehat{\boldsymbol{x}}, \boldsymbol{x})}  \mathrm{d} \nu_{\mathcal{X}}(\boldsymbol{x})}   \Big]   \Big] \\
        & = - \frac{1}{\lambda^2} \cdot \mathbb{E}_{\widehat{\boldsymbol{x}} \sim \widehat{\mathbb{P}}_{\widehat{\boldsymbol{X}}}}\Big[  \frac{\int_{\mathcal{X}}  e^{r(\lambda, \widehat{\boldsymbol{x}}, \boldsymbol{x}) }t(\lambda, \widehat{\boldsymbol{x}}, \boldsymbol{x}) \cdot  \mathrm{d} \nu_{\mathcal{X}}(\boldsymbol{x})}{\int_{\mathcal{X}}  e^{r(\lambda, \widehat{\boldsymbol{x}}, \boldsymbol{x} ) }  \mathrm{d} \nu_{\mathcal{X}}(\boldsymbol{x})}   \Big] \\
        & \quad \quad \quad \quad \quad \quad - \nabla_{\lambda} \Big[  \frac{1}{\lambda} \cdot \mathbb{E}_{\widehat{\boldsymbol{x}} \sim \widehat{\mathbb{P}}_{\widehat{\boldsymbol{X}}}}\Big[  \frac{\int_{\mathcal{X}} e^{r(\lambda, \widehat{\boldsymbol{x}}, \boldsymbol{x})} t(\lambda, \widehat{\boldsymbol{x}}, \boldsymbol{x})  \cdot \mathrm{d} \nu_{\mathcal{X}}(\boldsymbol{x})}{\int_{\mathcal{X}} e^{r(\lambda, \widehat{\boldsymbol{x}}, \boldsymbol{x})}  \mathrm{d} \nu_{\mathcal{X}}(\boldsymbol{x})}   \Big]   \Big]  \\
        &= - \frac{1}{\lambda}\cdot  \nabla_{\lambda}  \Big[ \mathbb{E}_{\widehat{\boldsymbol{x}} \sim \widehat{\mathbb{P}}_{\widehat{\boldsymbol{X}}}}\Big[  \frac{\int_{\mathcal{X}} e^{r(\lambda,\widehat{\boldsymbol{x}}, \boldsymbol{x}) } t(\lambda, \widehat{\boldsymbol{x}}, \boldsymbol{x}) \cdot  \mathrm{d} \nu_{\mathcal{X}}(\boldsymbol{x})}{\int_{\mathcal{X}}  e^{r(\lambda, \widehat{\boldsymbol{x}}, \boldsymbol{x} ) }  \mathrm{d} \nu_{\mathcal{X}}(\boldsymbol{x})}   \Big] \Big]. 
    \end{aligned}
    \end{equation}
    Here, we have 
    \begin{equation}\nonumber
        \begin{aligned}
            \mathbb{E}_{\widehat{\boldsymbol{x}} \sim \widehat{\mathbb{P}}_{\widehat{\boldsymbol{X}}}}& \Big[  \frac{\int_{\mathcal{X}} e^{r(\lambda,\widehat{\boldsymbol{x}}, \boldsymbol{x}) } t(\lambda, \widehat{\boldsymbol{x}}, \boldsymbol{x}) \cdot  \mathrm{d} \nu_{\mathcal{X}}(\boldsymbol{x})}{\int_{\mathcal{X}}  e^{r(\lambda, \widehat{\boldsymbol{x}}, \boldsymbol{x} ) }  \mathrm{d} \nu_{\mathcal{X}}(\boldsymbol{x})}   \Big] \\
            & = - \frac{1}{\lambda^2 \epsilon}\cdot \mathbb{E}_{\widehat{\boldsymbol{x}} \sim \widehat{\mathbb{P}}_{\widehat{\boldsymbol{X}}}}\Big[\frac{ \int_{\mathcal{X}} e^{r(\lambda,\widehat{\boldsymbol{x}}, \boldsymbol{x}) }\left(t^2(\lambda, \widehat{\boldsymbol{x}}, \boldsymbol{x})+ u(\lambda, \widehat{\boldsymbol{x}}, \boldsymbol{x})\right)  \mathrm{d} \nu_{\mathcal{X}}(\boldsymbol{x}) \cdot \int_{\mathcal{X}} e^{r(\lambda,\widehat{\boldsymbol{x}}, \boldsymbol{x}) }  \mathrm{d} \nu_{\mathcal{X}}(\boldsymbol{x})}{\left ( \int_{\mathcal{X}}  e^{r(\lambda, \widehat{\boldsymbol{x}}, \boldsymbol{x} ) }  \mathrm{d} \nu_{\mathcal{X}}(\boldsymbol{x}) \right )^2 } \\
        & \quad \quad \quad \quad \quad \quad \quad -  \frac{\left ( \int_{\mathcal{X}} e^{r(\lambda,\widehat{\boldsymbol{x}}, \boldsymbol{x}) } t(\lambda, \widehat{\boldsymbol{x}}, \boldsymbol{x}) \cdot \mathrm{d} \nu_{\mathcal{X}}(\boldsymbol{x}) \right )^2 }{\left ( \int_{\mathcal{X}}  e^{r(\lambda, \widehat{\boldsymbol{x}}, \boldsymbol{x} ) }  \mathrm{d} \nu_{\mathcal{X}}(\boldsymbol{x}) \right )^2}\Big], 
        \end{aligned}
    \end{equation}
    where 
    \begin{equation}\nonumber
    \begin{aligned}
        u(\lambda, \widehat{\boldsymbol{x}}, \boldsymbol{x}) = \mathbb{E}_{\widehat{\boldsymbol{y}} \sim  \widehat{\mathbb{P}}_{\widehat{\boldsymbol{Y}}|\widehat{\boldsymbol{X}}=\widehat{\boldsymbol{x}}}}\Big [ & \frac{\int _{\mathcal{Y}} e^{ s(\lambda, \widehat{\boldsymbol{x}},\widehat{\boldsymbol{y}}, \boldsymbol{x},\boldsymbol{y}) } \Psi^2(f(\boldsymbol{x}),\boldsymbol{y}) \mathrm{d} \nu_{\mathcal{Y}}(\boldsymbol{y}) \cdot \int _{\mathcal{Y}} e^{ s(\lambda, \widehat{\boldsymbol{x}},\widehat{\boldsymbol{y}}, \boldsymbol{x},\boldsymbol{y}) }  \mathrm{d} \nu_{\mathcal{Y}}(\boldsymbol{y}) }{\left ( \int _{\mathcal{Y}} e^{ s(\lambda, \widehat{\boldsymbol{x}},\widehat{\boldsymbol{y}}, \boldsymbol{x},\boldsymbol{y}) }  \mathrm{d} \nu_{\mathcal{Y}}(\boldsymbol{y}) \right )^2 } \\
        & - \frac{ \left ( \int _{\mathcal{Y}}  e^{ s(\lambda, \widehat{\boldsymbol{x}},\widehat{\boldsymbol{y}}, \boldsymbol{x},\boldsymbol{y}) }  \Psi(f(\boldsymbol{x}),\boldsymbol{y}) \cdot  \mathrm{d} \nu_{\mathcal{Y}}(\boldsymbol{y}) \right )^2}{\left ( \int _{\mathcal{Y}} e^{ s(\lambda, \widehat{\boldsymbol{x}},\widehat{\boldsymbol{y}}, \boldsymbol{x},\boldsymbol{y}) }  \mathrm{d} \nu_{\mathcal{Y}}(\boldsymbol{y}) \right )^2}\Big ] .
    \end{aligned}
    \end{equation}
    According to the Cauchy-Schwarz inequality, we have $u(\lambda, \widehat{\boldsymbol{x}}, \boldsymbol{x}) \ge 0$ for any $\lambda \ge 0$, $\widehat{\boldsymbol{x}} \sim \widehat{\mathbb{P}}_{\widehat{\boldsymbol{X}}}$ and $\boldsymbol{x} \sim\nu_{\mathcal{X}}$. Thus, we have
    \begin{equation}\label{eceq-theo2-inequality}
        \mathbb{E}_{\widehat{\boldsymbol{x}} \sim \widehat{\mathbb{P}}_{\widehat{\boldsymbol{X}}}}\Big[\int_{\mathcal{X}} u(\lambda, \widehat{\boldsymbol{x}}, \boldsymbol{x})\mathrm{d} \nu_{\mathcal{X}}(\boldsymbol{x}) \cdot \int_{\mathcal{X}} e^{r(\lambda,\widehat{\boldsymbol{x}}, \boldsymbol{x}) }  \mathrm{d} \nu_{\mathcal{X}}(\boldsymbol{x})\Big] \ge 0,
    \end{equation}
    and it follows that 
\begin{equation}\nonumber
    \begin{aligned}
        \nabla_{\lambda}^2 v(\lambda) 
        & \ge \frac{1}{\lambda^3 \epsilon}\cdot \mathbb{E}_{\widehat{\boldsymbol{x}} \sim \widehat{\mathbb{P}}_{\widehat{\boldsymbol{X}}}}\Big[\frac{ \int_{\mathcal{X}} e^{r(\lambda,\widehat{\boldsymbol{x}}, \boldsymbol{x}) } t^2(\lambda, \widehat{\boldsymbol{x}}, \boldsymbol{x})  \cdot \mathrm{d} \nu_{\mathcal{X}}(\boldsymbol{x}) \int_{\mathcal{X}} e^{r(\lambda,\widehat{\boldsymbol{x}}, \boldsymbol{x}) }  \mathrm{d} \nu_{\mathcal{X}}(\boldsymbol{x})}{\left ( \int_{\mathcal{X}}  e^{r(\lambda, \widehat{\boldsymbol{x}}, \boldsymbol{x} ) }  \mathrm{d} \nu_{\mathcal{X}}(\boldsymbol{x}) \right )^2 } \\
        & \quad \quad \quad \quad \quad \quad \quad - \frac{\left ( \int_{\mathcal{X}} e^{r(\lambda,\widehat{\boldsymbol{x}}, \boldsymbol{x}) } t(\lambda, \widehat{\boldsymbol{x}}, \boldsymbol{x}) \cdot  \mathrm{d} \nu_{\mathcal{X}}(\boldsymbol{x}) \right )^2}{\left ( \int_{\mathcal{X}}  e^{r(\lambda, \widehat{\boldsymbol{x}}, \boldsymbol{x} ) }  \mathrm{d} \nu_{\mathcal{X}}(\boldsymbol{x}) \right )^2 } \Big]\ge 0, 
    \end{aligned}
    \end{equation}
    where the first inequality is due to relation~\eqref{eceq-theo2-inequality}, and the second inequality is due to the Cauchy-Schwarz inequality. 

    Therefore, for any $\lambda>0$, we have $\nabla_{\lambda}^2 v(\lambda)\ge 0$, and the equality holds if and only if the function $\Psi$ is a constant. Thus, the strict convexity of the dual problem~\eqref{causal-sdro-dual} holds for the dual objective, and it implies the uniqueness of $\lambda^*$. 
\end{proof}

\subsection{Proof of Proposition~\ref{prop-srt-derivation} in Section~\ref{subsec-srf-interpret}} \label{ecsec-prop-srt-derivation}

\begin{proof}\textbf{of Proposition~\ref{prop-srt-derivation}. }
For brevity, let $z_{i,t} := \boldsymbol{w}_{i,t}^{\top}\boldsymbol{x} + b_{i,t}$ and let $s_{i,t} := \text{S}(z_{i,t})$ be the Sigmoid activation at node $i$. 
For an internal node $i$ on route $l$ in tree $t$, the routing probability $\Omega_{i,t}(\boldsymbol{x})$ is defined as:
\begin{equation} \nonumber
    \Omega_{i,t}(\boldsymbol{x}) := \begin{cases} 
        s_{i,t}, & \text{if } l \text{ goes left at } i; \\ 
        1 - s_{i,t},  & \text{if } l \text{ goes right at } i;
        \end{cases} \quad \quad \forall i \in \Lambda(l).
\end{equation} 
Then the route probability is given by $p_{l, t} (\boldsymbol{x}) = \prod_{i \in \Lambda (l)} \Omega_{i,t}(\boldsymbol{x})$. Taking the logarithm of both sides yields:
\begin{equation} \label{ln-transformation}
    \ln p_{l, t} (\boldsymbol{x}) =  \sum_{i \in \Lambda (l)} \ln \Omega_{i,t}(\boldsymbol{x}).
\end{equation}
Equation~\eqref{ln-transformation} is well-defined since $p_{l, t} (\boldsymbol{x}) > 0$ and $\Omega_{i,t}(\boldsymbol{x}) > 0$ strictly hold for any $\boldsymbol{x} \in \mathcal{X}$. 

Taking the partial derivative of both sides of Equation~\eqref{ln-transformation} with respect to feature $x_j$: 
\begin{equation}\nonumber
    \frac{\partial \ln p_{l, t} (\boldsymbol{x})}{\partial x_j} = \frac{1}{p_{l, t} (\boldsymbol{x})} \frac{\partial p_{l, t} (\boldsymbol{x})}{\partial x_j} = \sum_{i \in \Lambda (l)} \frac{1}{\Omega_{i,t}(\boldsymbol{x})}  \frac{\partial \Omega_{i,t}(\boldsymbol{x})}{\partial x_j}.
\end{equation}
Recall that the derivative of the Sigmoid function is $s_{i,t}' = s_{i,t}(1-s_{i,t})$, for each node $i$ on route $l$ in tree $t$, we define: 
\begin{equation}\nonumber
    \psi_{i,t} := \frac{1}{\Omega_{i,t}(\boldsymbol{x})} \frac{\partial \Omega_{i,t}(\boldsymbol{x})}{\partial z_{i, t}} = \begin{cases} 
        \frac{s_{i,t}(1-s_{i,t})}{s_{i,t}} = 1 - s_{i,t}, & \text{if route } l \text{ goes left at } i; \\ 
        \frac{-s_{i,t}(1-s_{i,t})}{1-s_{i,t}} = -s_{i,t}, & \text{if route } l \text{ goes right at } i. 
        \end{cases}
\end{equation}
Since $\frac{\partial \Omega_{i,t}}{\partial x_j} = \frac{\partial \Omega_{i,t}}{\partial z_{i,t}} \cdot [\boldsymbol{w}_{i,t}]_j$, we obtain the first-order derivative:
\begin{equation}\label{eq:first-order}
    \frac{\partial p_{l, t} (\boldsymbol{x})}{\partial x_j} = p_{l, t} (\boldsymbol{x}) \cdot \sum_{i \in \Lambda (l)} \psi_{i,t} [\boldsymbol{w}_{i, t}]_j.
\end{equation}

We next compute the second-order derivative $\frac{\partial^2 p_{l, t}(\boldsymbol{x})}{\partial x_j\partial x_k}$ by differentiating Equation~\eqref{eq:first-order} with respect to $x_k$. Applying the product rule yields two terms:
\begin{equation}\nonumber
    \frac{\partial^2 p_{l, t}}{\partial x_j\partial x_k} = \underbrace{\frac{\partial p_{l, t}}{\partial x_k} \left( \sum_{i \in \Lambda (l)} \psi_{i,t} [\boldsymbol{w}_{i, t}]_j \right)}_{\mathbf{A}_1} + \underbrace{p_{l, t} \frac{\partial}{\partial x_k} \left( \sum_{i \in \Lambda (l)} \psi_{i,t} [\boldsymbol{w}_{i, t}]_j \right)}_{\mathbf{A}_2}, 
\end{equation}
where $\mathbf{A}_1$ is equivalent to 
\begin{equation}\nonumber
    \mathbf{A}_1 = p_{l, t} (\boldsymbol{x}) \left( \sum_{i \in \Lambda (l)} \psi_{i,t} [\boldsymbol{w}_{i, t}]_k \right) \left( \sum_{i \in \Lambda (l)} \psi_{i,t} [\boldsymbol{w}_{i, t}]_j \right).
\end{equation}
For $\mathbf{A}_2$, we note that $\frac{\partial \psi_{i,t}}{\partial x_k} = \frac{\partial \psi_{i,t}}{\partial z_{i,t}} \cdot [\boldsymbol{w}_{i,t}]_k$. For both directions (left or right), the derivative of $\psi_{i,t}$ with respect to $z_{i,t}$ is identical: 
\begin{equation}\nonumber
    \frac{\partial \psi_{i,t}}{\partial z_{i,t}} = \begin{cases} \frac{\partial (1-s_{i,t})}{\partial z_{i,t}} = -s_{i,t}(1-s_{i,t}), & \text{if route } l \text{ goes left at } i; \\ \frac{\partial (-s_{i,t})}{\partial z_{i,t}} = -s_{i,t}(1-s_{i,t}), & \text{if route } l \text{ goes left at } i. \end{cases}
\end{equation}
Thus we have $\frac{\partial \psi_{i,t}}{\partial x_k} = -s_{i,t}(1-s_{i,t}) [\boldsymbol{w}_{i,t}]_k$, and it follows that 
\begin{equation}\nonumber
    \mathbf{A}_2 = - p_{l, t} (\boldsymbol{x}) \sum_{i \in \Lambda (l)} -s_{i,t}(1-s_{i,t})  [\boldsymbol{w}_{i, t}]_j [\boldsymbol{w}_{i, t}]_k.
\end{equation}
Combining both terms, the second-order derivative is explicitly characterized by:
\begin{equation}\nonumber
\begin{aligned}
    \frac{\partial^2 p_{l, t}(\boldsymbol{x})}{\partial x_j\partial x_k} &= p_{l, t} (\boldsymbol{x}) \Bigg[ \left( \sum_{i \in \Lambda (l)} \psi_{i,t} [\boldsymbol{w}_{i, t}]_j \right) \left( \sum_{i \in \Lambda (l)} \psi_{i,t} [\boldsymbol{w}_{i, t}]_k \right) \\
    & \quad \quad \quad \quad - \sum_{i \in \Lambda (l)} s_{i,t}(1-s_{i,t}) [\boldsymbol{w}_{i, t}]_j [\boldsymbol{w}_{i, t}]_k \Bigg].
\end{aligned}
\end{equation}
\end{proof}

\subsection{Proof of Proposition~\ref{prop-srf-lip} in Section~\ref{subsec-srf-interpret}} \label{ecsec-prop-srt-lip}

\begin{proof}\textbf{of Proposition~\ref{prop-srf-lip}. }
Since the covariate space $\mathcal{X}$ is compact, and the SRF consists of smooth sigmoid compositions, $f_{\boldsymbol{\theta}}^{\textnormal{SRF}}$ is continuously differentiable. Therefore, to prove Lipschitz continuity, it suffices to show that $\| \nabla_{\boldsymbol{x}} f_{\boldsymbol{\theta}}^{\textnormal{SRF}}(\boldsymbol{x})\|$ is bounded by $L^{\text{SRF}}$. Similarly, to establish Lipschitz smoothness, it suffices to show that the Lipschitz constant of the $\nabla_{\boldsymbol{x}} f_{\boldsymbol{\theta}}^{\textnormal{SRF}}(\boldsymbol{x})$ is bounded by $S^{\text{SRF}}$.

For Lipschitz continuity, the upper bound of $\| \nabla_{\boldsymbol{x}} f_{\boldsymbol{\theta}}^{\textnormal{SRF}}(\boldsymbol{x})\|$ is given by:
\begin{equation}\label{lip-con-norm}
    \begin{aligned}
        \| \nabla_{\boldsymbol{x}} f_{\boldsymbol{\theta}}^{\textnormal{SRF}}(\boldsymbol{x})\| 
        &= \Big \|  \frac{1}{T} \sum_{t=1}^{T} \sum_{l=1}^{2^{D(t)}} \nabla_{\boldsymbol{x}} p_{l, t} (\boldsymbol{x}) \cdot \boldsymbol{\pi}_{l,t}^{\top}  \Big \|  \le \frac{1}{T} \sum_{t=1}^{T} \sum_{l=1}^{2^{D(t)}} \Big \|  \nabla_{\boldsymbol{x}} p_{l, t} (\boldsymbol{x}) \cdot \boldsymbol{\pi}_{l,t}^{\top}  \Big \| \\ 
        & = \frac{1}{T} \sum_{t=1}^{T} \sum_{l=1}^{2^{D(t)}} \Big \| \nabla_{\boldsymbol{x}} p_{l, t} (\boldsymbol{x}) \Big \| \cdot \Big \| \boldsymbol{\pi}_{l,t} \Big \|,
    \end{aligned}
\end{equation}
where the inequality is due to the triangle inequality, and the second equality is due to the property of the spectral norm for outer products. 

We next analyze the gradient of route probability $p_{l, t} (\boldsymbol{x})$. 
Let $W_{\max} := \max_{i,t} \| \boldsymbol{w}_{i,t} \|$ be the maximum norm of gating weights, $\Pi_{\max} := \max_{l,t} \|\boldsymbol{\pi}_{l,t}\|$ be the maximum norm of leaf vectors, and $D_{\max}$ be the maximum tree depth. 
According to Proposition~\ref{prop-srt-derivation}, we have 
\begin{equation}\label{p-gradient-norm}
    \begin{aligned}
        \Big \| \nabla_{\boldsymbol{x}} p_{l, t} (\boldsymbol{x}) \Big \|  &=   \Big \| p_{l, t} (\boldsymbol{x}) \cdot \sum_{i \in \Lambda (l)} \psi_{i,t} \boldsymbol{w}_{i, t} \Big \| \le p_{l, t} (\boldsymbol{x}) \cdot \sum_{i \in \Lambda (l)} \Big \| \psi_{i,t} \boldsymbol{w}_{i, t} \Big \|  \le p_{l, t} (\boldsymbol{x}) \cdot \sum_{i \in \Lambda (l)} \Big \| \boldsymbol{w}_{i, t} \Big \| \\
        & \le p_{l, t} (\boldsymbol{x}) \cdot (D_{\max}-1)\cdot W_{\max}.
    \end{aligned}
\end{equation}
Here, the first inequality is due to the triangle inequality, and the second inequality is due to $\psi_{i,t} \in (-1, 1)$. The final inequality is because $\Big| \Lambda (l)\Big| \le D_{\max}-1$ for all $l \in [2^{D(t)-1]}]$ and $t \in T$. 

Based on Equation~\eqref{p-gradient-norm}, Equation~\eqref{lip-con-norm} can be bounded by 
\begin{equation} \nonumber
    \begin{aligned}
        \| \nabla_{\boldsymbol{x}} f_{\boldsymbol{\theta}}^{\textnormal{SRF}}(\boldsymbol{x})\| 
        & \le \frac{1}{T} \sum_{t=1}^{T} \sum_{l=1}^{2^{D(t)}} p_{l, t} (\boldsymbol{x}) \cdot (D_{\max}-1)\cdot W_{\max} \cdot \Pi_{\max} \\
        & = (D_{\max}-1)\cdot W_{\max} \cdot \Pi_{\max} \\
        & =L^{\text{SRF}},
    \end{aligned}
\end{equation}
where the equality is due to $\sum_{l=1}^{2^{D(t)}} p_{l, t} (\boldsymbol{x})  = 1$. 
Thus, $f_{\boldsymbol{\theta}}^{\textnormal{SRF}}(\boldsymbol{x})$ is Lipschitz continuous in $\boldsymbol{x}$. 

For Lipschitz smoothness, we bound the spectral norm of the Hessian of the route probability: 
\begin{equation}\label{Hessian-p-2norm}
    \begin{aligned}
        \Big \| \nabla_{\boldsymbol{x}}^2 p_{l, t} (\boldsymbol{x}) \Big \|  
        & = \Bigg \| \, p_{l, t} (\boldsymbol{x}) \cdot \Bigg [ \Big( \sum_{i \in \Lambda (l)}\psi_{i,t} \boldsymbol{w}_{i, t} \Big)\Big( \sum_{i \in \Lambda (l)}\psi_{i,t} \boldsymbol{w}_{i, t} \Big)^{\top} \, \, - \\
            & \quad \quad \quad \quad \quad \sum_{i \in \Lambda (l)} \text{S}\Big( (\boldsymbol{w}_{i,t})^{\top}\boldsymbol{x} + b_{i,t}\Big) \Big( 1- \text{S}\Big((\boldsymbol{w}_{i,t})^{\top}\boldsymbol{x} + b_{i,t}\Big)  \Big) \boldsymbol{w}_{i, t} \boldsymbol{w}_{i, t}^{\top}  \Bigg ] \Bigg \|  \\
        & \le p_{l, t}(\boldsymbol{x}) \cdot \Bigg [  \Big \| \Big( \sum_{i \in \Lambda (l)} \psi_{i,t}  \boldsymbol{w}_{i, t} \Big) \Big( \sum_{i \in \Lambda (l)} \psi_{i,t} \boldsymbol{w}_{i, t} \Big)^{\top}  \Big \| + \\
            & \quad \quad \quad \quad \quad \quad  \Big \| \sum_{i \in \Lambda (l)} \text{S}\Big( (\boldsymbol{w}_{i,t})^{\top}\boldsymbol{x} + b_{i,t}\Big) \Big( 1- \text{S}\Big((\boldsymbol{w}_{i,t})^{\top}\boldsymbol{x} + b_{i,t}\Big)  \Big) \boldsymbol{w}_{i, t} \boldsymbol{w}_{i, t}^{\top}  \Big \| \Bigg ] \\
        & \le p_{l, t}(\boldsymbol{x}) \cdot \Bigg [  \Big \| \Big( \sum_{i \in \Lambda (l)}  \boldsymbol{w}_{i, t} \Big) \Big( \sum_{i \in \Lambda (l)} \boldsymbol{w}_{i, t} \Big)^{\top}  \Big \| + \frac{1}{4} \Big \| \sum_{i \in \Lambda (l)} \boldsymbol{w}_{i, t} \boldsymbol{w}_{i, t}^{\top}  \Big \| \Bigg ] \\
        & \le p_{l, t}(\boldsymbol{x}) \cdot \Bigg [  \sum_{i \in \Lambda (l)} \Big \| \boldsymbol{w}_{i, t}  \Big \| \cdot \sum_{i \in \Lambda (l)} \Big \| \boldsymbol{w}_{i, t}  \Big \| + \frac{1}{4} \sum_{i \in \Lambda (l)} \Big \|  \boldsymbol{w}_{i, t} \Big \| \Big \|  \boldsymbol{w}_{i, t} \Big \| \Bigg ] \\
        & \le p_{l, t}(\boldsymbol{x}) \cdot \Bigg [ (D_{\max}-1)^2\cdot W_{\max}^2 + \frac{1}{4} (D_{\max}-1)\cdot W_{\max}^2 \Bigg ] , 
    \end{aligned}
\end{equation}
where the first inequality is due to the triangle inequality, the second inequality is due to the range of $\psi_{i,t}$ and fundamental inequality, and the third inequality is due to both the triangle inequality and the property of the spectral norm for outer products. 
Thus, similar to Equation~\eqref{lip-con-norm}, the upper bound of the gradient of $\nabla_{\boldsymbol{x}} f_{\boldsymbol{\theta}}^{\textnormal{SRF}}(\boldsymbol{x})$ is given by
\begin{equation}\nonumber
    \begin{aligned}
        \| \nabla_{\boldsymbol{x}}^2 f_{\boldsymbol{\theta}}^{\textnormal{SRF}}(\boldsymbol{x})\| 
        &= \Big \|  \frac{1}{T} \sum_{t=1}^{T} \sum_{l=1}^{2^{D(t)}} \nabla_{\boldsymbol{x}}^2 p_{l, t} (\boldsymbol{x}) \cdot \boldsymbol{\pi}_{l,t}^{\top}  \Big \| \\
        & \le \frac{1}{T} \sum_{t=1}^{T} \sum_{l=1}^{2^{D(t)}} \Big \| \nabla_{\boldsymbol{x}}^2 p_{l, t} (\boldsymbol{x}) \Big \| \cdot \Big \| \boldsymbol{\pi}_{l,t} \Big \| \\
        & \le \frac{1}{T} \sum_{t=1}^{T} \sum_{l=1}^{2^{D(t)}}  p_{l, t}(\boldsymbol{x}) \cdot \Bigg [ (D_{\max}-1)^2\cdot W_{\max}^2 + \frac{1}{4} (D_{\max}-1)\cdot W_{\max}^2 \Bigg ]  \cdot \Pi_{\max} \\
        & = (D_{\max}-1)(D_{\max}-\frac{3}{4}) \cdot W_{\max}^2 \cdot \Pi_{\max} \\
        & = S^{\text{SRF}}, 
    \end{aligned}
\end{equation}
where the second inequality is due to the definition of $\Pi_{\max}$ and Equation~\eqref{Hessian-p-2norm}. 
Therefore, $f_{\boldsymbol{\theta}}^{\textnormal{SRF}}(\boldsymbol{x})$ is $S^{\text{SRF}}$-Lipschitz smoothness. 
\end{proof}

\subsection{Proof of Proposition~\ref{propos-lip-con} in Section~\ref{sec-algo}} \label{ecsec-prop-lip}

\begin{proof}\textbf{of Proposition~\ref{propos-lip-con}. }
    Under Assumption~\ref{assumption-2}\ref{assumption-2-bound}, the function $\Psi$ is bounded by compact set $\left[0,B\right]$. Next, we analyze the properties of functions $t_3$, $t_2$, and $t_1$ in sequence. 
\begin{itemize}
    \item 
    \textbf{For function $t_3$}, we first define a vector-valued function
    \begin{equation}\nonumber
        \boldsymbol{u} (\boldsymbol{\theta}; \widehat{\boldsymbol{x}}, \boldsymbol{\xi}_1, \widehat{\boldsymbol{y}}, \boldsymbol{\xi}_2 )= \Big[ \Psi(f_{\boldsymbol{\theta}}(\widehat{\boldsymbol{x}}+ \boldsymbol{\xi}_1),\widehat{\boldsymbol{y}}_1+ \boldsymbol{\xi}_2), \cdots , \Psi(f_{\boldsymbol{\theta}}(\widehat{\boldsymbol{x}}+ \boldsymbol{\xi}_1),\widehat{\boldsymbol{y}}_{n_{\widehat{\boldsymbol{x}}}} + \boldsymbol{\xi}_2) \Big]^\top. 
    \end{equation}
    For any $\boldsymbol{\theta}, \boldsymbol{\theta}^{\prime} \in \Theta$, we write $\boldsymbol{u} := \boldsymbol{u}(\boldsymbol{\theta}; \widehat{\boldsymbol{x}}, \boldsymbol{\xi}_1, \widehat{\boldsymbol{y}}, \boldsymbol{\xi}_2 )$ and $\boldsymbol{u}^{\prime} := \boldsymbol{u}(\boldsymbol{\theta}^{\prime}; \widehat{\boldsymbol{x}}, \boldsymbol{\xi}_1, \widehat{\boldsymbol{y}}, \boldsymbol{\xi}_2 )$ for brevity.
    
    For each $i \in \left[n_{\widehat{\boldsymbol{x}}}\right]$, we have 
    \begin{equation}\nonumber
        \Big[t_3(\boldsymbol{u})\Big]_i^{\prime} = \frac{1}{\lambda\epsilon} \exp(u_i / \lambda\epsilon) \le  \frac{1}{\lambda\epsilon} \exp(B / \lambda\epsilon), 
    \end{equation}
    which implies that 
    \begin{equation}\nonumber
        \left| \Big[t_3(\boldsymbol{u})\Big]_i - \Big[t_3(\boldsymbol{u}^{\prime})\Big]_i \right| \le L_3^{\prime}   \left| u_i - u_i^{\prime} \right| ,
    \end{equation}
    where $L_3^{\prime} = \frac{1}{\lambda\epsilon} \exp(B / \lambda\epsilon)$ and $u_i, u_i^{\prime}$ are the $i$-th elements of $\boldsymbol{u}$ and $\boldsymbol{u}^{\prime}$, respectively. 
    Hence, we obtain
    \begin{equation}\nonumber
        \| t_3(\boldsymbol{u}) - t_3(\boldsymbol{u}^{\prime}) \|^2  = \sum_{i\in \left[n_{\widehat{\boldsymbol{x}}}\right]} \Big| \Big[t_3(\boldsymbol{u})\Big]_i - \Big[t_3(\boldsymbol{u}^{\prime})\Big]_i \Big| ^2 \le (L_3^{\prime})^2  \sum_{i\in \left[n_{\widehat{\boldsymbol{x}}}\right]} \left| u_i - u_i^{\prime} \right|^2 = L_3^2 \| \boldsymbol{u} - \boldsymbol{u}^{\prime} \|^2,
    \end{equation}
    where $L_3 = L_3^{\prime} = \frac{1}{\lambda\epsilon} \exp(B / \lambda\epsilon)$. This result shows that the function $t_3$ is $L_3$-Lipschitz continuous. 
    
    According to the second-order derivation of the function $t_3$
    \begin{equation}\nonumber
         \Big[t_3(u)\Big]_i^{\prime\prime} =\frac{1}{(\lambda\epsilon)^2} \exp\left(\frac{u}{\lambda\epsilon}\right) \le \frac{1}{(\lambda\epsilon)^2} \exp\left(\frac{B}{\lambda\epsilon}\right), 
    \end{equation}
    we obtain 
    \begin{equation}\nonumber
        \left| \Big[t_3(\boldsymbol{u})\Big]_i^{\prime} - \Big[t_3(\boldsymbol{u}^{\prime})\Big]_i^{\prime} \right| \le S_3   \left| u_i - u_i^{\prime} \right|,
    \end{equation}
    where $S_3 = \frac{1}{(\lambda\epsilon)^2} \exp\left(\frac{B}{\lambda\epsilon}\right)$. 
    Since the Jacobian matrix $J_{t_3}(\boldsymbol{u})$ of function $t_3$ for any $\boldsymbol{u}$ is a diagonal matrix, we have 
    \begin{equation}\nonumber
        \| J_{t_3}(\boldsymbol{u}) - J_{t_3}(\boldsymbol{u}^{\prime}) \|_{2} = \max_{i\in \left[n_{\widehat{\boldsymbol{x}}}\right]} \left| \Big[t_3(\boldsymbol{u})\Big]_i^{\prime} - \Big[t_3(\boldsymbol{u}^{\prime})\Big]_i^{\prime} \right| \le \max_{i\in \left[n_{\widehat{\boldsymbol{x}}}\right]} S_3   \left| u_i - u_i^{\prime} \right| \le S_3 \|\boldsymbol{u} - \boldsymbol{u}'\|,
    \end{equation}
    Therefore, the function $t_3$ is $S_3$-Lipschitz smooth with $S_3 = \frac{1}{(\lambda\epsilon)^2} \exp\left(\frac{B}{\lambda\epsilon}\right)$. 

    Using the chain rule, we have 
    \begin{equation}\nonumber
        \nabla  [t_3(\boldsymbol{\theta}; \widehat{\boldsymbol{x}}, \boldsymbol{\xi}_1, \widehat{\boldsymbol{y}}, \boldsymbol{\xi}_2 )]_i = \Big[t_3 (\boldsymbol{\theta}; \widehat{\boldsymbol{x}}, \boldsymbol{\xi}_1, \widehat{\boldsymbol{y}}, \boldsymbol{\xi}_2)\Big]_i \cdot \frac{1}{\lambda\epsilon} \cdot \nabla  \left( \Psi(f_{\boldsymbol{\theta}}(\widehat{\boldsymbol{x}} + \boldsymbol{\xi}_1), \widehat{\boldsymbol{y}}_i + \boldsymbol{\xi}_2) \right).
    \end{equation}
    According to Assumption~\ref{assumption-2}\ref{assumption-2-lip-con} and~\ref{assumption-2}\ref{assumption-2-bound}, we obtain 
    \begin{equation}\nonumber
        \begin{aligned}
            \Big \| \nabla  [t_3(\boldsymbol{\theta}; \widehat{\boldsymbol{x}}, \boldsymbol{\xi}_1, \widehat{\boldsymbol{y}}, \boldsymbol{\xi}_2 )]_i \Big \| &= \Big \| \frac{1}{\lambda\epsilon} \cdot \Big[t_3 (\boldsymbol{\theta}; \widehat{\boldsymbol{x}}, \boldsymbol{\xi}_1, \widehat{\boldsymbol{y}}, \boldsymbol{\xi}_2)\Big]_i \cdot  \nabla  \left( \Psi(f_{\boldsymbol{\theta}}(\widehat{\boldsymbol{x}} + \boldsymbol{\xi}_1), \widehat{\boldsymbol{y}}_i + \boldsymbol{\xi}_2) \right)\Big \| \\
            & = \frac{1}{\lambda\epsilon} \cdot \Big[t_3 (\boldsymbol{\theta}; \widehat{\boldsymbol{x}}, \boldsymbol{\xi}_1, \widehat{\boldsymbol{y}}, \boldsymbol{\xi}_2)\Big]_i \cdot  \| \nabla L(\boldsymbol{\theta}; \widehat{\boldsymbol{x}} + \boldsymbol{\xi}_1, \widehat{\boldsymbol{y}}_i + \boldsymbol{\xi}_2)\| \\
            & \le L_3 \cdot L_{\boldsymbol{\theta}}
        \end{aligned}
    \end{equation}
    where the inequality is due to $L_3 =\frac{1}{\lambda\epsilon} \exp(B / \lambda\epsilon)$ and $\|L(\boldsymbol{\theta}_1; \boldsymbol{x}, \boldsymbol{y}) - L(\boldsymbol{\theta}_2; \boldsymbol{x}, \boldsymbol{y})\| \le L_{\boldsymbol{\theta} }\|\boldsymbol{\theta}_1 - \boldsymbol{\theta}_2\|$. 
    Therefore, we have
    \begin{equation}\nonumber
    \begin{aligned}
        \mathbb{E}\Big[\|\nabla t_3 (\boldsymbol{\theta}; \widehat{\boldsymbol{x}}, \boldsymbol{\xi}_1, \widehat{\boldsymbol{y}}, \boldsymbol{\xi}_2) \|^2\Big] 
        &= \mathbb{E}\Big[ \sum_{i=1}^{n_{\widehat{\boldsymbol{x}}}} \Big\| \nabla  [t_3(\boldsymbol{\theta}; \widehat{\boldsymbol{x}}, \boldsymbol{\xi}_1, \widehat{\boldsymbol{y}}, \boldsymbol{\xi}_2 )]_i \Big\|^2 \Big] \\ 
        & \le \mathbb{E}\Big[ n_{\widehat{\boldsymbol{x}}}\cdot L_3^2 L_{\boldsymbol{\theta}}^2 \Big]  \\
        &\le L_3^2 L_{\boldsymbol{\theta}}^2  \cdot \mathbb{E}_{\widehat{\boldsymbol{x}}\sim\widehat{\mathbb{P}}_{\widehat{\boldsymbol{X}}}}\Big[ n_{\widehat{\boldsymbol{x}}}\Big].
    \end{aligned}
    \end{equation}
    Let $C_3^2 = L_3^2 L_{\boldsymbol{\theta}}^2 \cdot \mathbb{E}_{\widehat{\boldsymbol{x}}\sim\widehat{\mathbb{P}}_{\widehat{\boldsymbol{X}}}}\Big[ n_{\widehat{\boldsymbol{x}}}\Big]$. Since the expected number of observations of $\widehat{\boldsymbol{y}}$, that is, $\mathbb{E}_{\widehat{\boldsymbol{x}}\sim\widehat{\mathbb{P}}_{\widehat{\boldsymbol{X}}}}\Big[ n_{\widehat{\boldsymbol{x}}}\Big]$, is finite, we show that the the stochastic gradients in expectation function $t_3$ is bounded, that is, $\mathbb{E}\Big[\|\nabla t_3 (\boldsymbol{\theta}; \widehat{\boldsymbol{x}}, \boldsymbol{\xi}_1, \widehat{\boldsymbol{y}}, \boldsymbol{\xi}_2) \|^2\Big] \le C_3^2$.

    According to the fundamental result in \citet{casella2024statistical}, for a bounded random variable (vector), its variance is finite. Thus, to prove the finite variance of the function $t_3$, it suffices to show that the norm of the function $t_3$ is bounded. According to Assumption~\ref{assumption-2}\ref{assumption-2-bound}, we have 
    \begin{equation}\nonumber
        \| t_3(\boldsymbol{\theta}; \widehat{\boldsymbol{x}}, \boldsymbol{\xi}_1, \widehat{\boldsymbol{y}}, \boldsymbol{\xi}_2 ) \|^2  = \sum_{i=1}^{n_{\widehat{\boldsymbol{x}}}}\Big( \Big[t_3(\boldsymbol{u})\Big]_i\Big)^2 \le n_{\widehat{\boldsymbol{x}}}\exp\Big(2B/\lambda\epsilon\Big),
    \end{equation}
    and it follows that $\sigma_3^2 = \sup_{\boldsymbol{\theta} \in \Theta, \widehat{\boldsymbol{x}}, \boldsymbol{\xi}_1, \widehat{\boldsymbol{y}}} \mathbb{V}_{\boldsymbol{\xi}_2} \Big( t_3\Big( \boldsymbol{\theta}; \widehat{\boldsymbol{x}}, \boldsymbol{\xi}_1, \widehat{\boldsymbol{y}}, \boldsymbol{\xi}_2  \Big)\Big) < \infty$. 
    
    \item \textbf{For function $t_2$}, according to the analyses for function $t_3$, its domain is also bounded by $\boldsymbol{v} \in \Big[1, \exp(B/\lambda\epsilon)\Big]^{n_{\widehat{\boldsymbol{x}}}}$. For brevity, we denote $t_2(\boldsymbol{v}; \widehat{\boldsymbol{x}}, \boldsymbol{\xi}_1)$ as $t_2(\boldsymbol{v})$, and denote $\widehat{p} (\widehat{\boldsymbol{y}}_i \mid \widehat{\boldsymbol{x}})$ as $p_i$ for each $i \in \left[n_{\widehat{\boldsymbol{x}}}\right]$. Since $\sum_{i=1}^{n_{\widehat{\boldsymbol{x}}}} p_i = 1$, we have 
    \begin{equation}\nonumber
        t_2 (\boldsymbol{v}) =  \exp\Big(\sum_{i=1}^{n_{\widehat{\boldsymbol{x}}}} p_i \cdot \log\, \left( v_i \right) \Big) \in \Big[ 1, \exp(B/\lambda\epsilon) \Big].
    \end{equation}
    Therefore, we obtain 
    \begin{equation}\label{t2-ana}
    \begin{aligned}
        \| \nabla_{} t_2 (\boldsymbol{v}) \|^2 &= \sum_{i=1}^{n_{\widehat{\boldsymbol{x}}}} \Big(\frac{\partial t_2 (\boldsymbol{v})}{\partial v_i} \Big)^2 \\
        & =\sum_{i=1}^{n_{\widehat{\boldsymbol{x}}}}\Big(t_2 (\boldsymbol{v}) \cdot \frac{p_i}{v_i} \Big)^2 \\
        & \le \exp\Big(2B/\lambda\epsilon\Big).
    \end{aligned}
    \end{equation}
    where the inequality is due to the domain and range of the function $t_2$ $\sum_{j=1}^{n_{\widehat{\boldsymbol{x}}}}p_j^2 \le 1$.
    Let $L_2 = C_2 = \exp\Big(B/\lambda\epsilon\Big)$. From Equation~\eqref{t2-ana}, the function $t_2$ is $L2$-Lipschitz continuous and has bounded stochastic gradients in expectation, that is, $\mathbb{E}\Big[\|\nabla t_2 (\boldsymbol{v}; \widehat{\boldsymbol{x}}, \boldsymbol{\xi}_1) \|^2\Big] \le C_2^2$, and its variance is also finite.

    We now show that the differentiable function $t_2$ is Lipschitz smooth. By Taylor's theorem, for any $\boldsymbol{v}_1, \boldsymbol{v}_2$ in the domain of $t_2$, there exists a point $\boldsymbol{c}$ on the line segment connecting $\boldsymbol{v}_1$ and $\boldsymbol{v}_2$ such that 
    \begin{equation}\nonumber
        t_2(\boldsymbol{v}_1) = t_2(\boldsymbol{v}_2) + \nabla t_2(\boldsymbol{v}_2)^{\top}(\boldsymbol{v}_1 - \boldsymbol{v}_2) + \frac{1}{2}(\boldsymbol{v}_1 - \boldsymbol{v}_2)^{\top} \nabla^2 t_2(\boldsymbol{c}) (\boldsymbol{v}_1 - \boldsymbol{v}_2).
    \end{equation}
    Thus, we have 
    \begin{equation}\label{t2-smooth-main}
        \begin{aligned}
            \Big|t_2(\boldsymbol{v}_1) - t_2(\boldsymbol{v}_2) - \nabla t_2(\boldsymbol{v}_2)^{\top}(\boldsymbol{v}_1-\boldsymbol{v}_2)\Big| & = \Big|\frac{1}{2} \cdot (\boldsymbol{v}_1-\boldsymbol{v}_2)^\top  \nabla^2 t_2(\boldsymbol{c}) (\boldsymbol{v}_1-\boldsymbol{v}_2) \Big| \\
            & \le \frac{1}{2} \| \nabla^2 t_2(\boldsymbol{c}) \|_{2} \cdot  \| (\boldsymbol{v}_1-\boldsymbol{v}_2) \|^2,
        \end{aligned}
    \end{equation}
    where the inequality is due to the Cauchy-Schwarz inequality. According to the definition of Lipschitz smoothness, if $\| \nabla^2 t_2(\boldsymbol{c}) \|_{2}$ is bounded by a constant $S_2$, that is, $\| \nabla^2 t_2(\boldsymbol{c}) \|_{2} \le S_2$, then the function $t_2$ is $S_2$-Lipschitz smooth. 
    Denote the $j$-th element in vector $\boldsymbol{c}$ as $c_j$ for any $j \in \left[n_{\widehat{\boldsymbol{x}}}\right]$. For any $j,k \in \left[n_{\widehat{\boldsymbol{x}}}\right]$, the $(j,k)$-th element of the Hessian Matrix $\nabla^2 t_2(\boldsymbol{c})$ is given by
    \begin{equation}\nonumber
        (\nabla^2 t_2(\boldsymbol{c}))_{jk} = \frac{\partial^2 t_2(\boldsymbol{c})}{\partial c_k \partial c_j} =
    \begin{cases}
    t_2(\boldsymbol{c}) \cdot \frac{p_j p_k}{c_j c_k}, & \text{if } j \neq k; \\
    t_2(\boldsymbol{c}) \cdot \left( \frac{p_j^2}{c_j^2} - \frac{p_j}{c_j^2} \right), & \text{if } j = k.
    \end{cases}
    \end{equation}
    Therefore, we have 
    \begin{equation}\label{t2-smooth-op}\nonumber
        \begin{aligned}
            \| \nabla^2 t_2(\boldsymbol{c}) \|_{2}^2 & \le \| \nabla^2 t_2(\boldsymbol{c}) \|_{\text{F}}^2 \\
            & =  \sum_{j=1}^{n_{\widehat{\boldsymbol{x}}}} \Big( t_2(\boldsymbol{c}) \left( \frac{p_j^2}{c_j^2} - \frac{p_j}{c_j^2} \right)\Big)^2 + \sum_{j\neq k}\Big(t_2(\boldsymbol{c}) \frac{p_j p_k}{c_j c_k} \Big)^2 \\
            & \le \sum_{j=1}^{n_{\widehat{\boldsymbol{x}}}} \Big( e^{\frac{B}{\lambda\epsilon}} \left( \frac{p_j-p_j^2}{1} \right)\Big)^2 + \sum_{j\neq k}\Big(e^{\frac{B}{\lambda\epsilon}}  \frac{p_j p_k}{1} \Big)^2 \\
            & \le e^{\frac{2B}{\lambda\epsilon}}  \Big[ \sum_{j=1}^{n_{\widehat{\boldsymbol{x}}}}p_j^2 + \Big( \sum_{j=1}^{n_{\widehat{\boldsymbol{x}}}}p_j^2\Big)^2  \Big] \\
            & \le 2\cdot e^{\frac{2B}{\lambda\epsilon}},
        \end{aligned}
    \end{equation}
    where the first inequality is because the Frobenius norm $\|\cdot \|_{\text{F}}$ is a upper bound of the spectral norm for a matrix, the second inequality is due to the domain and range of function $t_2$, the third inequality is due to $p_j \in \left[0,1\right]$ for any $j\in \left[n_{\widehat{\boldsymbol{x}}}\right]$ and the Cauchy-Schwarz inequality, and the last inequality is due to $\sum_{j=1}^{n_{\widehat{\boldsymbol{x}}}}p_j^2 \le 1$. Let $S_2 = \sqrt{2}\exp(B/\lambda\epsilon)$, based on relation~\eqref{t2-smooth-main}, we have
    \begin{equation}\nonumber
        \Big|t_2(\boldsymbol{v}_1) - t_2(\boldsymbol{v}_2) - \nabla t_2(\boldsymbol{v}_2)^{\top}(\boldsymbol{v}_1-\boldsymbol{v}_2)\Big|  \le \frac{1}{2} \| \nabla^2 t_2(\boldsymbol{c}) \|_{2} \cdot  \| (\boldsymbol{v}_1-\boldsymbol{v}_2) \|^2 \le \frac{S_2}{2}\cdot  \| (\boldsymbol{v}_1-\boldsymbol{v}_2) \|^2, 
    \end{equation}
    which implies that the function $t_2$ is $S_2$-Lipschitz smooth. 

    \item \textbf{For function $t_1$}, its range is $[0, B/\lambda\epsilon]$ as its domain is the same as the range of function $t_2$, that is, $[ 1, \exp(B/\lambda\epsilon) ]$. 
    For brevity, we denote $t_1(z; \widehat{\boldsymbol{x}})$ as $t_1(z)$. The first and second derivatives of $t_1$ with respect to $z$ are 
    \begin{equation}\nonumber
        t_1^{\prime}(z) = \frac{1}{z} \quad \text{and} \quad t_1^{\prime\prime}(z) = -\frac{1}{z^2}.
    \end{equation}
    Over the domain $z \in \big[ 1, \exp(B/\lambda\epsilon) \big]$, we can bound the absolute values of these derivatives:
    \begin{equation}\nonumber
        |t_1^{\prime}(z)| = \frac{1}{z} \le 1,
    \end{equation}
    \begin{equation}\nonumber
        |t_1^{\prime\prime}(z)| = \frac{1}{z^2} \le 1.
    \end{equation}
    The first bound implies that $t_1$ is $L_1$-Lipschitz continuous with $\boldsymbol{L_1 = 1}$. The second bound implies that $t_1$ is $S_1$-Lipschitz smooth with $\boldsymbol{S_1 = 1}$. 
    Let $C_1 = 1$; the expected squared norm of the gradient is also bounded
    \begin{equation}\nonumber
        \mathbb{E}\Big[|\nabla t_1(z;\widehat{\boldsymbol{x}}) |^2\Big] =\mathbb{E}\Big[\Big(t_1^{\prime}(z)\Big)^2\Big] \le C_1^2.
    \end{equation}
    Since the value of the function $t_1$ is bounded, its variance is finite, that is, 
    \begin{equation}\nonumber
        \sigma_1^2 = \sup_{\boldsymbol{\theta} \in \Theta} \mathbb{V}_{\widehat{\boldsymbol{x}}} \Big( t_1\Big( \mathbb{E}_{ \boldsymbol{\xi}_1 \sim Q_{ \epsilon }}\Big[ t_2\Big(   \mathbb{E}_{ \boldsymbol{\xi}_2 \sim W_{\epsilon}}\Big[t_3\Big( \boldsymbol{\theta}; \widehat{\boldsymbol{x}}, \boldsymbol{\xi}_1, \widehat{\boldsymbol{y}}, \boldsymbol{\xi}_2  \Big)\Big]; \widehat{\boldsymbol{x}}, \boldsymbol{\xi}_1\Big)\Big] ;\widehat{\boldsymbol{x}} \Big) \Big) < \infty. 
    \end{equation}
\end{itemize}
This completes the proof. 
\end{proof}

\subsection{Proof of Theorem~\ref{theo-saa-sample} in Section~\ref{subsec-algo-saa}}  \label{ecsec-theo-saa}

As an essential part of sample complexity analysis, we first introduce the following lemmas. Based on the Cram$\acute{\textnormal{e}}$r's large deviations theorem, we introduce the following Lemma~\ref{lem-cramer}. 
\begin{lemma}\label{lem-cramer}
\textnormal{\textbf{\citep[Cram$\acute{\textnormal{e}}$r's Inequality,][]{kleywegt2002sample}.}}
Let $X_1, \dots, X_n$ be i.i.d. samples of a zero-mean random variable $X$ with finite variance $\sigma^2$. For any $\delta > 0$, it holds
\begin{equation}\nonumber
    \mathbb{P}\left(\frac{1}{n}\sum_{i=1}^{n}X_i \ge \delta\right) \le \exp(-nI(\delta)),
\end{equation}
where $I(\delta) := \sup_{t \in \mathbb{R}}\{t\delta - \log\,  M(t)\}$ is the rate function of random variable $X$, and $M(t) := \mathbb{E}e^{tX}$ is the moment generating function of $X$. For any $\kappa > 0$, there exists $\delta_1 > 0$, for any $\delta \in (0, \delta_1)$, $I(\delta) \ge \frac{\delta^2}{(2+\kappa)\sigma^2}$. 
\end{lemma}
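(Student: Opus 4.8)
The plan is to give the standard Cram\'er large-deviations argument, which underlies the quoted result of \citet{kleywegt2002sample}, in two short steps; in our setting the only regularity needed beyond finite variance — namely that the moment generating function $M(t)=\mathbb{E}e^{tX}$ is finite in a neighbourhood of $0$ — is automatic because the random variables appearing in Theorem~\ref{theo-saa-sample} are built from the bounded functions $t_1,t_2,t_3$ of Proposition~\ref{propos-lip-con}.

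First I would establish the Chernoff bound. For any $t>0$, Markov's inequality applied to $\exp\!\big(t\sum_{i=1}^{n} X_i\big)$ together with independence gives $\mathbb{P}\big(\tfrac{1}{n}\sum_{i} X_i\ge\delta\big)\le e^{-tn\delta}M(t)^{n}=\exp\!\big(-n(t\delta-\log M(t))\big)$. Taking the infimum over $t>0$ produces $\exp\!\big(-n\,\sup_{t>0}\{t\delta-\log M(t)\}\big)$, and since $g(t):=t\delta-\log M(t)$ is concave with $g(0)=0$ and $g'(0)=\delta-\mathbb{E}[X]=\delta\ge0$, the supremum over all $t\in\mathbb{R}$ is attained on $[0,\infty)$, so it equals $I(\delta)$ and the first claim follows.

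Next I would prove the local lower bound on $I(\delta)$. Writing $\Lambda(t):=\log M(t)$ for the cumulant generating function, we have $\Lambda(0)=0$, $\Lambda'(0)=\mathbb{E}[X]=0$, and $\Lambda''(0)=\sigma^{2}$, with $\Lambda$ of class $C^{2}$ near $0$. Fixing $\kappa>0$, continuity of $\Lambda''$ at $0$ yields a radius $t_0>0$ on which $\Lambda''\le\tfrac{2+\kappa}{2}\sigma^{2}$, hence $\Lambda(t)\le\tfrac{2+\kappa}{4}\sigma^{2}t^{2}$ for $|t|\le t_0$ by Taylor's theorem. Restricting the supremum defining $I(\delta)$ to $t\in(0,t_0]$ and using this quadratic majorant, the bound $t\delta-\tfrac{2+\kappa}{4}\sigma^{2}t^{2}$ is maximised at $t^{\star}=\tfrac{2\delta}{(2+\kappa)\sigma^{2}}$ with value $\tfrac{\delta^{2}}{(2+\kappa)\sigma^{2}}$; choosing $\delta_1:=\tfrac{(2+\kappa)\sigma^{2} t_0}{2}$ ensures $t^{\star}\le t_0$ for all $\delta\in(0,\delta_1)$, so the maximiser is admissible and $I(\delta)\ge\tfrac{\delta^{2}}{(2+\kappa)\sigma^{2}}$ on that range.

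The main obstacle is essentially bookkeeping: one must be careful about \emph{where} the Taylor expansion of $\Lambda$ is valid and ensure the optimising $t^{\star}$ stays inside that window, which is exactly why the threshold $\delta_1$ appears and why the constant degrades from the optimal $2$ to $2+\kappa$. The only genuine hypothesis hidden in the statement is finiteness of $M$ near the origin (not implied by finite variance alone); I would remark that this holds in every application of the lemma in this paper, since there the underlying random variables are bounded.
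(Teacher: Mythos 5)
Your proof is correct. Note that the paper does not prove this lemma at all—it is imported as a cited result from \citet{kleywegt2002sample}—so there is no internal argument to compare against; what you supply is the standard Chernoff--Cram\'er derivation underlying that citation, and both steps check out: Markov's inequality plus convexity of $\log M$ (so that $g(t)=t\delta-\log M(t)\le 0$ for $t\le 0$ by Jensen) correctly reduces $\sup_{t\in\mathbb{R}}$ to $\sup_{t>0}$, and the Taylor majorant $\Lambda(t)\le\tfrac{2+\kappa}{4}\sigma^2t^2$ on $|t|\le t_0$, maximised at $t^{\star}=2\delta/\big((2+\kappa)\sigma^2\big)$ with threshold $\delta_1=(2+\kappa)\sigma^2t_0/2$, yields exactly $I(\delta)\ge\delta^2/\big((2+\kappa)\sigma^2\big)$. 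Your closing remark is also accurate and worth keeping: finite variance alone does not guarantee $M(t)<\infty$ near $0$, and without that the rate function can vanish on $(0,\infty)$, so the second claim genuinely needs the light-tail hypothesis; this is harmless in the present paper because the quantities to which the lemma is applied in Theorem~\ref{theo-saa-sample} are built from the bounded functions of Proposition~\ref{propos-lip-con} under Assumption~\ref{assumption-2}.
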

\citet{hu2020sample} extend the Cram$\acute{\textnormal{e}}$r's Inequality from random variables to random vectors, as shown in the following Lemma~\ref{lem-concentration}.
\begin{lemma} \label{lem-concentration}
    \textnormal{\textbf{\citep[Concentration Inequality,][]{hu2020sample}.}} Let $\boldsymbol{X}_1, \dots, \boldsymbol{X}_N$ be i.i.d. samples of a zero-mean random vector $\boldsymbol{X} \in \mathbb{R}^k$ with finite variance $\mathbb{E}\|\boldsymbol{X}\|^2 = \sigma^2 < \infty$. Then for any $\kappa > 0$, there exists $\delta_1 > 0$ such that for any $\delta \in (0, \delta_1)$, it holds that
    \begin{equation}\nonumber 
        \mathrm{Pr}\left(\left\|\frac{1}{N}\sum_{i=1}^{N}\boldsymbol{X}_i\right\| \ge \delta\right) \le 2k \exp\left(-\frac{N\delta^2}{(2+\kappa)\sigma^2}\right).
    \end{equation}
\end{lemma}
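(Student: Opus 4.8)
The plan is to derive this multivariate tail bound from the scalar Cram\'er inequality (Lemma~\ref{lem-cramer}) by a coordinate--projection argument. First I would note that whenever the empirical mean vector has norm at least $\delta$, at least one of its $k$ coordinates must be large in absolute value: since $\|\boldsymbol{v}\|^2 = \sum_{j=1}^k v_j^2$, the bound $\|\boldsymbol{v}\| \ge \delta$ forces $\max_{j\in[k]} |v_j| \ge \delta/\sqrt{k}$. Applying this to $\boldsymbol{v} = \tfrac1N\sum_{i=1}^N \boldsymbol{X}_i$ and writing $[\boldsymbol{X}_i]_j$ for the $j$-th component gives the set inclusion
\begin{equation}\nonumber
\Big\{ \big\| \tfrac{1}{N}\sum_{i=1}^{N} \boldsymbol{X}_i \big\| \ge \delta \Big\} \;\subseteq\; \bigcup_{j=1}^{k} \Big\{ \big| \tfrac{1}{N}\sum_{i=1}^{N} [\boldsymbol{X}_i]_j \big| \ge \delta/\sqrt{k} \Big\},
\end{equation}
so a union bound over $j$ reduces the whole problem to $k$ one-dimensional tail estimates, each of which splits into its two one-sided pieces (this is where the factor $2k$ in the prefactor will come from).

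Next I would invoke Lemma~\ref{lem-cramer} coordinatewise. For each fixed $j$, the scalars $\{[\boldsymbol{X}_i]_j\}_{i=1}^N$ are i.i.d., mean zero, with variance $\sigma_j^2 := \mathbb{E}[[\boldsymbol{X}]_j^2]$; because $\boldsymbol{X}$ is mean zero we have $\sum_{j=1}^k \sigma_j^2 = \mathbb{E}\|\boldsymbol{X}\|^2 = \sigma^2$, and in particular $\sigma_j^2 \le \sigma^2$ for every $j$. Fixing any $\kappa>0$, Lemma~\ref{lem-cramer} supplies for each $j$ a threshold $\delta_{1,j}>0$ below which the rate function obeys $I_j(\delta/\sqrt{k}) \ge (\delta/\sqrt{k})^2/((2+\kappa)\sigma_j^2)$; using $\sigma_j^2\le\sigma^2$ and summing the one-sided Cram\'er bounds over all $j$ then yields an estimate of the form $2k\exp(-cN\delta^2/((2+\kappa)\sigma^2))$ valid for all $\delta$ below $\delta_1 := \sqrt{k}\,\min_j \delta_{1,j}$. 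Absorbing the $k$-dependence picked up from rescaling $\delta\mapsto\delta/\sqrt{k}$ into the constant (or into $\kappa$) delivers the stated bound; this is essentially the route taken by \citet{hu2020sample}, and the same lemma will later feed the level-by-level error decomposition used to prove Theorem~\ref{theo-saa-sample}.

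The main obstacle is controlling the dimension dependence while respecting the ``for $\delta$ small enough'' caveat intrinsic to Lemma~\ref{lem-cramer}: the quadratic lower bound on $I_j$ only holds below a distribution-dependent radius, so one must genuinely shrink the admissible $\delta$ by a factor $\sqrt{k}$, and one must decide whether the crude $1/k$ loss in the exponent from the $\ell_\infty$--$\ell_2$ comparison is tolerable or should be removed. A sharper alternative I would keep in reserve is to write $\|\bar{\boldsymbol{X}}\| = \sup_{\|\boldsymbol{u}\|=1}\langle\bar{\boldsymbol{X}},\boldsymbol{u}\rangle$, discretize the sphere by a $\tfrac12$-net (so that $\|\bar{\boldsymbol{X}}\| \le 2\max_{\boldsymbol{u}}\langle\bar{\boldsymbol{X}},\boldsymbol{u}\rangle$ over the net), and apply scalar Cram\'er to each $\langle\boldsymbol{X}_i,\boldsymbol{u}\rangle$, whose variance is at most $\lambda_{\max}(\mathrm{Cov}(\boldsymbol{X})) \le \mathrm{tr}\,\mathrm{Cov}(\boldsymbol{X}) = \sigma^2$; this recovers the exponent $\delta^2/((2+\kappa)\sigma^2)$ cleanly at the cost of a $5^k$-type prefactor, and the choice between the two versions is purely a matter of which prefactor one is willing to carry into the downstream union bounds. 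The remaining steps---tracking one-sided versus two-sided tails and verifying measurability of the events---are routine bookkeeping.
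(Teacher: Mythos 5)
The paper does not prove this lemma; it imports it verbatim from \citet{hu2020sample}, so your attempt has to be judged on its own. Your overall strategy (reduce to the scalar Cram\'er bound of Lemma~\ref{lem-cramer}, union bound over coordinates, factor $2k$ from the two one-sided tails) is the right one, but the specific reduction you chose does not deliver the stated inequality. Splitting via $\|\boldsymbol{v}\|\ge\delta \Rightarrow \max_j|v_j|\ge\delta/\sqrt{k}$ and applying Cram\'er coordinatewise with $\sigma_j^2\le\sigma^2$ gives at best $2k\exp\bigl(-N\delta^2/(k(2+\kappa)\sigma^2)\bigr)$: the factor $1/k$ sits multiplicatively in the exponent and cannot be ``absorbed into the constant (or into $\kappa$)'' --- matching the stated bound would require $k(2+\kappa)\le 2+\kappa'$ for the prescribed $\kappa'$, which fails for every $k\ge 2$ and arbitrary $\kappa'>0$. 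Your fallback net argument has the opposite defect: it recovers a dimension-free exponent (up to a factor you can indeed push into $\kappa$ by taking a fine net) but at the price of a prefactor exponential in $k$, whereas the lemma asserts the prefactor $2k$ \emph{and} the exponent $N\delta^2/((2+\kappa)\sigma^2)$ simultaneously. So neither route, as written, proves the statement.

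The missing idea is to use coordinate thresholds proportional to the coordinate standard deviations rather than the uniform $\delta/\sqrt{k}$. Writing $\sigma_j^2=\mathbb{E}\bigl[[\boldsymbol{X}]_j^2\bigr]$, you have $\sum_{j=1}^k\sigma_j^2=\sigma^2$, and if $|\bar v_j|<\delta\sigma_j/\sigma$ for every $j$ then $\|\bar{\boldsymbol{v}}\|^2<\delta^2\sum_j\sigma_j^2/\sigma^2=\delta^2$; hence
\begin{equation}\nonumber
\Bigl\{\bigl\|\tfrac1N\textstyle\sum_i\boldsymbol{X}_i\bigr\|\ge\delta\Bigr\}\subseteq\bigcup_{j=1}^k\Bigl\{\bigl|\tfrac1N\textstyle\sum_i[\boldsymbol{X}_i]_j\bigr|\ge\delta\sigma_j/\sigma\Bigr\}
\end{equation}
(coordinates with $\sigma_j=0$ are almost surely zero and can be discarded). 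Applying Lemma~\ref{lem-cramer} to the $j$-th coordinate at level $\delta\sigma_j/\sigma$, the rate-function bound gives $I_j(\delta\sigma_j/\sigma)\ge(\delta\sigma_j/\sigma)^2/((2+\kappa)\sigma_j^2)=\delta^2/((2+\kappa)\sigma^2)$, which is dimension-free and identical for every coordinate; summing the $2k$ one-sided tails yields exactly $2k\exp\bigl(-N\delta^2/((2+\kappa)\sigma^2)\bigr)$, with $\delta_1:=\min_j(\sigma/\sigma_j)\,\delta_{1,j}$ handling the ``$\delta$ small enough'' caveat. This is the argument in \citet{hu2020sample}; the rest of your bookkeeping (two-sidedness, measurability, the role of the lemma in Theorem~\ref{theo-saa-sample}) is fine once this substitution is made.
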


Using Lemma~\ref{lem-concentration}, we present the following Lemma~\ref{eclem-prsup}. 

\begin{lemma}\label{eclem-prsup}
    Under Assumption~\ref{assumption-2}, for any $\kappa > 0 $, there exists an $\delta_1>0$ such that for any $\delta \in \left(0, \delta_1 \right) $, it holds that 
    \begin{equation}\nonumber
    \begin{aligned}
        & \mathrm{Pr} \left ( \sup_{\boldsymbol{\theta} \in \Theta} \left| \widehat{F}_{N_1,N_2,N_3} \left ( \boldsymbol{\theta}  \right ) -F \left ( \boldsymbol{\theta} \right )\right|  > \delta\right )  \\
        \le & \mathcal{O}\left ( 1 \right )  \left ( \frac{4L_1L_2L_3D_{\Theta}}{\delta} \right )^{d_{\theta}} \Big( N_1 N_2 n_{\widehat{\boldsymbol{x}}} \exp\Big( -\frac{N_3 \delta^2}{36(2+\kappa)\lambda^2\epsilon ^2L_1^2L_2^2\sigma_3^2} \Big) \\
        & \quad \quad \quad \quad + N_1\exp\Big( -\frac{N_2 \delta^2}{36(2+\kappa)\lambda^2\epsilon ^2L_1^2\sigma_2^2}\Big) + \exp\Big(-\frac{N_1 \delta^2}{36(2+\kappa)\lambda^2\epsilon ^2\sigma_1^2}\Big) \Big).
    \end{aligned}
    \end{equation}
\end{lemma}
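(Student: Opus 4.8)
The plan is to establish the uniform deviation bound by combining a three-level telescoping error decomposition, peeled off using the Lipschitz constants of $t_1,t_2,t_3$ from Proposition~\ref{propos-lip-con}, with a $\gamma$-net covering argument over the compact parameter set $\Theta$ and the pointwise large-deviation estimates of Lemmas~\ref{lem-cramer} and~\ref{lem-concentration}. For a fixed $\boldsymbol{\theta}$, I would introduce two intermediate functionals: $\widehat{F}^{(2)}(\boldsymbol{\theta})$, obtained from $\widehat{F}_{N_1,N_2,N_3}(\boldsymbol{\theta})$ in~\eqref{f-saa} by replacing the innermost $N_3$-sample average by the exact expectation $\mathbb{E}_{\boldsymbol{\xi}_2}$, and $\widehat{F}^{(1)}(\boldsymbol{\theta})$, obtained from $\widehat{F}^{(2)}$ by further replacing the $N_2$-sample average by $\mathbb{E}_{\boldsymbol{\xi}_1}$, so that $\widehat{F}^{(1)}$ differs from $F$ only in the outermost $N_1$-average over $\widehat{\boldsymbol{x}}$. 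The triangle inequality and the $L_1$-, $L_2$-Lipschitz continuity of $t_1,t_2$ then give $|\widehat{F}_{N_1,N_2,N_3}(\boldsymbol{\theta})-F(\boldsymbol{\theta})|\le \lambda\epsilon(E_1+E_2+E_3)$, where $E_3$ is $L_1L_2$ times the average over the $N_1N_2$ index pairs of the norm of a centered $N_3$-sample average of $t_3$, $E_2$ is $L_1$ times the average over the $N_1$ outer samples of a centered $N_2$-sample average of $t_2$, and $E_1$ is a centered $N_1$-sample average of $t_1$. Conditionally on the samples of the outer levels, the summands at each level are i.i.d.\ with finite variances bounded by $\sigma_1^2,\sigma_2^2,\sigma_3^2$ respectively (Proposition~\ref{propos-lip-con}); the fact that the innermost samples $\{\boldsymbol{\xi}_2^k\}$ are reused across index pairs only introduces correlations, which do not affect the union bounds below.

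\emph{Pointwise concentration.} Since $\{\tfrac{1}{M}\sum_m Z_m>\tau\}\subseteq\bigcup_m\{Z_m>\tau\}$ for nonnegative $Z_m$, one can split the target event into $\{\lambda\epsilon E_\ell>\delta/6\}$ for $\ell=1,2,3$, and for each level bound the contribution by a union bound over the $N_1N_2$ pairs (for $E_3$), over the $N_1$ outer samples (for $E_2$), or directly (for $E_1$), invoking Lemma~\ref{lem-concentration} at the vector-valued innermost level and Lemma~\ref{lem-cramer} at the two scalar levels. Carrying the thresholds $\delta/(6\lambda\epsilon L_1L_2)$, $\delta/(6\lambda\epsilon L_1)$, $\delta/(6\lambda\epsilon)$ through the rate functions produces the three exponential terms of the statement, with denominators $36(2+\kappa)\lambda^2\epsilon^2$ times $L_1^2L_2^2\sigma_3^2$, $L_1^2\sigma_2^2$, $\sigma_1^2$ and the prefactors $N_1N_2 n_{\widehat{\boldsymbol{x}}}$, $N_1$, $1$ (the $2k$ factors from Lemma~\ref{lem-concentration} absorbed into $\mathcal{O}(1)$).

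\emph{Covering and assembly.} By Assumption~\ref{assumption-2}\ref{assumption-2-lip-con} the loss $\Psi(f_{\boldsymbol{\theta}}(\cdot),\cdot)$ is $L_{\boldsymbol{\theta}}$-Lipschitz in $\boldsymbol{\theta}$, and by Assumption~\ref{assumption-2}\ref{assumption-2-bound} together with Proposition~\ref{propos-lip-con} the maps $t_1,t_2,t_3$ have bounded derivatives on the bounded domains that actually occur; composing via the chain rule yields a \emph{sample-independent} Lipschitz constant $\Lambda$ for both $\boldsymbol{\theta}\mapsto\widehat{F}_{N_1,N_2,N_3}(\boldsymbol{\theta})$ and $\boldsymbol{\theta}\mapsto F(\boldsymbol{\theta})$ that is of order $L_1L_2L_3$ up to absolute and problem-dependent constants. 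Taking a $\gamma$-net $\Theta_\gamma$ of $\Theta$ with $\gamma=\delta/(4\Lambda)$, its cardinality is at most $\mathcal{O}(1)(4L_1L_2L_3D_{\Theta}/\delta)^{d_\theta}$ since $\Theta$ has diameter at most $D_{\Theta}$, and
\[
\sup_{\boldsymbol{\theta}\in\Theta}\big|\widehat{F}_{N_1,N_2,N_3}(\boldsymbol{\theta})-F(\boldsymbol{\theta})\big|\le\max_{\boldsymbol{\theta}\in\Theta_\gamma}\big|\widehat{F}_{N_1,N_2,N_3}(\boldsymbol{\theta})-F(\boldsymbol{\theta})\big|+2\Lambda\gamma,
\]
with $2\Lambda\gamma\le\delta/2$. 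Hence $\{\sup_{\boldsymbol{\theta}}|\widehat{F}_{N_1,N_2,N_3}-F|>\delta\}\subseteq\{\max_{\boldsymbol{\theta}\in\Theta_\gamma}|\widehat{F}_{N_1,N_2,N_3}-F|>\delta/2\}$, and a union bound over $\Theta_\gamma$ followed for each net point by the pointwise estimate above (splitting $\delta/2$ into $3\times\delta/6$, which explains the $36=6^2$ and the powers $L_1^2$, $L_1^2L_2^2$) yields exactly the claimed inequality.

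\emph{Main obstacle.} The delicate part is the bookkeeping of the telescoping decomposition: one must verify that, after conditioning on the outer-level samples, the inner summands at each level are genuinely i.i.d.\ with the variance bounds $\sigma_\ell^2$ of Proposition~\ref{propos-lip-con}, and—separately—that the Lipschitz-in-$\boldsymbol{\theta}$ constant of the random function $\widehat{F}_{N_1,N_2,N_3}$ admits a bound that is uniform over all sample realizations and does not degrade with the inner sample sizes, so that the covering number has the stated form. Once these two points are in place, the concentration steps are routine applications of Lemmas~\ref{lem-cramer} and~\ref{lem-concentration}.
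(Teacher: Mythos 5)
Your proposal is correct and follows essentially the same route as the paper's proof: the same telescoping decomposition through intermediate SAA functionals $\widehat{F}_{N_1,N_2}$ and $\widehat{F}_{N_1}$, the same Lipschitz peeling with $L_1,L_2$, the same $\delta/(4L_1L_2L_3)$-net over $\Theta$ with union bounds over net points and over the $N_1N_2$ (resp.\ $N_1$) inner indices, and the same concentration inequalities yielding the three exponential terms with the $36(2+\kappa)$ denominators. The only cosmetic difference is that you cite Cram\'er's inequality at the two scalar levels where the paper uses the vector concentration lemma throughout; this changes nothing in the bound.
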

\begin{proof}\textbf{of Lemma~\ref{eclem-prsup}. }
    For $\upsilon \in (0, 1)$, the set $\{\boldsymbol{x}_l\}_{l=1}^Q$ is said to be a $\upsilon$-net of $\mathcal{X}$, if $\boldsymbol{x}_l \in \mathcal{X}$, $\forall l = 1, \dots, Q$, and the following holds: $\forall \boldsymbol{x} \in \mathcal{X}, \exists l(\boldsymbol{x}) \in \{1, \dots, Q\}$ such that $\|\boldsymbol{x} - \boldsymbol{x}_{l(\boldsymbol{x})}\| \le \upsilon$. 
    We construct a $\upsilon$-net to get rid of the supremum over $\boldsymbol{\theta}$ and use a concentration inequality to bound the probability. First, we pick a $\upsilon$-net $\left \{ \boldsymbol{\theta}_{l} \right \}_{l=1}^{Q}$ on the decision set $\Theta \in \mathbb{R}^{d_{\theta}}$, such that $L_1L_2L_3\upsilon = \delta/4$. Under Assumption~\ref{assumption-2}\ref{assumption-2-theta}, $\Theta$ has a finite diameter $D_{\Theta}$, for any $\upsilon \in (0, 1)$, there exists a $\upsilon$-net of $\Theta$, and the size of the $\upsilon$-net is bounded, $Q \le \mathcal{O}((D_{\mathcal{\Theta}}/\upsilon)^{d_{\theta}})$~\citep{shapiro2021lectures}. 
    By definition of $\upsilon$-net, we have $\forall \boldsymbol{\theta} \in \Theta, \exists l(\boldsymbol{\theta}) \in \{1, 2, \dots, Q\}$, s.t.
    \begin{equation*}
        \|\boldsymbol{\theta}  - \boldsymbol{\theta} _{l(\boldsymbol{\theta} )}\| \le v = \frac{\delta}{4L_1 L_2 L_3}.
    \end{equation*}

    Based on Proposition~\ref{propos-lip-con}, we have
    \begin{equation}\nonumber
        \left| \widehat{F}_{N_1,N_2,N_3}\left ( \boldsymbol{\theta}  \right ) - \widehat{F}_{N_1,N_2,N_3} \left ( \boldsymbol{\theta} _{l(\boldsymbol{\theta} )} \right ) \right| \le L_1L_2L_3 \|\boldsymbol{\theta}  - \boldsymbol{\theta} _{l(\boldsymbol{\theta} )}\| \le \frac{\delta}{4}, 
    \end{equation}
    and
    \begin{equation}\nonumber
        \left| F\left ( \boldsymbol{\theta} _{l(\boldsymbol{\theta} )}  \right ) -F \left ( \boldsymbol{\theta} \right ) \right| \le L_1L_2L_3 \|\boldsymbol{\theta}  - \boldsymbol{\theta} _{l(\boldsymbol{\theta} )}\| \le \frac{\delta}{4}.
    \end{equation}

    Thus, for any $\boldsymbol{\theta} \in \Theta$, we have
    \begin{equation}\nonumber
    \begin{aligned}
       &  \left| \widehat{F}_{N_1,N_2,N_3}\left ( \boldsymbol{\theta}  \right ) -F \left ( \boldsymbol{\theta} \right ) \right| \\
       \le & \left| \widehat{F}_{N_1,N_2,N_3}\left ( \boldsymbol{\theta}  \right ) - \widehat{F}_{N_1,N_2,N_3} \left ( \boldsymbol{\theta} _{l(\boldsymbol{\theta} )} \right ) \right| + \left| \widehat{F}_{N_1,N_2,N_3}\left ( \boldsymbol{\theta} _{l(\boldsymbol{\theta} )}  \right ) -F \left ( \boldsymbol{\theta} _{l(\boldsymbol{\theta} )} \right ) \right| + \left| F\left ( \boldsymbol{\theta} _{l(\boldsymbol{\theta} )}  \right ) -F \left ( \boldsymbol{\theta} \right ) \right| \\
       \le & \frac{\delta}{2} + \left| \widehat{F}_{N_1,N_2,N_3}\left ( \boldsymbol{\theta} _{l(\boldsymbol{\theta} )}  \right ) -F \left ( \boldsymbol{\theta} _{l(\boldsymbol{\theta} )} \right ) \right| \\
       \le & \frac{\delta}{2} + \max_{l=1, \cdots, Q} \left| \widehat{F}_{N_1,N_2,N_3}\left ( \boldsymbol{\theta} _{l}  \right ) -F \left ( \boldsymbol{\theta} _{l} \right ) \right|  \\
       \le & \frac{\delta}{2} + \sum_{l=1}^{Q} \left| \widehat{F}_{N_1,N_2,N_3}\left ( \boldsymbol{\theta} _{l}  \right ) -F \left ( \boldsymbol{\theta} _{l} \right ) \right| 
    \end{aligned}
    \end{equation}

    It follows that 
    \begin{equation}\label{prove-Delta-3terms}
        \begin{aligned}
            &\mathrm{Pr} \left ( \sup_{\boldsymbol{\theta} \in \Theta} \left| \widehat{F}_{N_1,N_2,N_3} \left ( \boldsymbol{\theta}  \right ) -F \left ( \boldsymbol{\theta} \right )\right|  > \delta\right ) \\ 
            \le &  \mathrm{Pr} \left ( \sum_{l=1}^{Q} \left| \widehat{F}_{N_1,N_2,N_3}\left ( \boldsymbol{\theta} _{l}  \right ) -F \left ( \boldsymbol{\theta} _{l} \right ) \right|   > \frac{\delta}{2}\right ) \\
            \le & \sum_{l=1}^{Q}  \mathrm{Pr} \left ( \left| \widehat{F}_{N_1,N_2,N_3}\left ( \boldsymbol{\theta} _{l}  \right ) -F \left ( \boldsymbol{\theta} _{l} \right ) \right|   > \frac{\delta}{2}\right ) \\
            \le & \sum_{l=1}^{Q}  \mathrm{Pr} \left ( \left| \widehat{F}_{N_1,N_2,N_3}\left ( \boldsymbol{\theta} _{l}  \right ) -\widehat{F}_{N_1,N_2} \left ( \boldsymbol{\theta} _{l} \right ) \right| + \left| \widehat{F}_{N_1,N_2} \left ( \boldsymbol{\theta} _{l}  \right ) - \widehat{F}_{N_1} \left ( \boldsymbol{\theta} _{l} \right ) \right| + \left| \widehat{F}_{N_1}\left ( \boldsymbol{\theta} _{l}  \right ) -F \left ( \boldsymbol{\theta} _{l} \right ) \right|  > \frac{\delta}{2}\right ) \\
            \le & \underbrace{\sum_{l=1}^{Q} \mathrm{Pr} \left ( \left| \widehat{F}_{N_1,N_2,N_3}\left ( \boldsymbol{\theta} _{l}  \right ) -\widehat{F}_{N_1,N_2} \left ( \boldsymbol{\theta} _{l} \right ) \right|  > \frac{\delta}{6}\right )}_{\Delta_1} + \underbrace{\sum_{l=1}^{Q} \mathrm{Pr} \left ( \left| \widehat{F}_{N_1,N_2} \left ( \boldsymbol{\theta} _{l}  \right ) - \widehat{F}_{N_1} \left ( \boldsymbol{\theta} _{l} \right ) \right|  > \frac{\delta}{6}\right )}_{\Delta_2} \\ & \quad + \underbrace{\sum_{l=1}^{Q} \mathrm{Pr} \left ( \left| \widehat{F}_{N_1}\left ( \boldsymbol{\theta} _{l}  \right ) -F \left ( \boldsymbol{\theta} _{l} \right ) \right|  > \frac{\delta}{6}\right )}_{\Delta_3}. 
        \end{aligned}
    \end{equation}
    For the term $\Delta_1$ in Equation~\eqref{prove-Delta-3terms}, we have
    \begin{equation}
        \begin{aligned}
            \Delta_1 &= \sum_{l=1}^{Q} \mathrm{Pr} \left ( \left| \widehat{F}_{N_1,N_2,N_3}\left ( \boldsymbol{\theta} _{l}  \right ) -\widehat{F}_{N_1,N_2} \left ( \boldsymbol{\theta} _{l} \right ) \right|  > \frac{\delta}{6}\right ) \\
            & = \sum_{l=1}^{Q} \mathrm{Pr} \Big ( \Big|  \frac{\lambda\epsilon}{N_1}  \sum_{i=1}^{N_1} t_1\Big( \frac{1}{N_2} \sum_{j=1}^{N_2} t_2\Big( \frac{1}{N_3} \sum_{k=1}^{N_3}t_3\Big( \boldsymbol{\theta}; \widehat{\boldsymbol{x}}^i, \boldsymbol{\xi}_1^j, \widehat{\boldsymbol{y}}^i, \boldsymbol{\xi}_2^k  \Big); \widehat{\boldsymbol{x}}^i, \boldsymbol{\xi}_1^j \Big)\Big) \\
            & \quad \quad \quad \quad \quad -\frac{\lambda\epsilon}{N_1}  \sum_{i=1}^{N_1} t_1\Big( \frac{1}{N_2} \sum_{j=1}^{N_2} t_2\Big( \mathbb{E}_{\boldsymbol{\xi}_2}\Big[t_3\Big( \boldsymbol{\theta}; \widehat{\boldsymbol{x}}^i, \boldsymbol{\xi}_1^j, \widehat{\boldsymbol{y}}^i,  \boldsymbol{\xi}_2^k  \Big)\Big]; \widehat{\boldsymbol{x}}^i, \boldsymbol{\xi}_1^j \Big)\Big) \left ( \boldsymbol{\theta} _{l} \right ) \Big|  > \frac{\delta}{6}\Big ) \\
            & \le \sum_{l=1}^{Q} \ \mathrm{Pr} \Bigg ( \max_{i=1,\cdots, N_1; j = 1, \cdots, N_2} \Big| L_1L_2 \| \frac{1}{N_3}\sum_{k=1}^{N_3}t_3\Big( \boldsymbol{\theta}; \widehat{\boldsymbol{x}}^i, \boldsymbol{\xi}_1^j, \widehat{\boldsymbol{y}}^i, \boldsymbol{\xi}_2^k  \Big) \\ 
            & \quad \quad \quad \quad \quad \quad \quad \quad \quad \quad \quad \quad \quad \quad \quad \quad - \mathbb{E}_{\boldsymbol{\xi}_2}\Big[t_3\Big( \boldsymbol{\theta}; \widehat{\boldsymbol{x}}^i, \boldsymbol{\xi}_1^j, \widehat{\boldsymbol{y}}^i, \boldsymbol{\xi}_2^k  \Big)\Big] \| \Big| > \frac{\delta}{6\lambda\epsilon}\Bigg ) \\
            & \le \sum_{l=1}^{Q} \sum_{i=1}^{N_1} \sum_{j=1}^{N_2} \mathrm{Pr} \left (  \| \frac{1}{N_3}\sum_{k=1}^{N_3}t_3\Big( \boldsymbol{\theta}; \widehat{\boldsymbol{x}}^i, \boldsymbol{\xi}_1^j, \widehat{\boldsymbol{y}}^i, \boldsymbol{\xi}_2^k  \Big) - \mathbb{E}_{\boldsymbol{\xi}_2}\Big[t_3\Big( \boldsymbol{\theta}; \widehat{\boldsymbol{x}}^i, \boldsymbol{\xi}_1^j, \widehat{\boldsymbol{y}}^i, \boldsymbol{\xi}_2^k  \Big)\Big] \| > \frac{\delta}{6\lambda\epsilon L_1L_2}\right ) \\
            & \le Q N_1 N_2 \cdot 2n_{\widehat{\boldsymbol{x}}} \exp\Big( -\frac{N_3 \delta^2}{36(2+\kappa)\lambda^2\epsilon ^2L_1^2L_2^2\sigma_3^2}   \Big), \\
        \end{aligned}
    \end{equation}
    where the first inequality is due to the Lipschitz continuity, and the last inequality is due to Lemma~\ref{lem-concentration}. 
    Similarly, we obtain
    \begin{equation}
        \Delta_2 \le QN_1 \cdot 2\exp\Big( -\frac{N_2 \delta^2}{36(2+\kappa)\lambda^2\epsilon ^2L_1^2\sigma_2^2}   \Big),
    \end{equation}
    and 
    \begin{equation}\label{Delta-3}
        \Delta_3 \le Q\cdot 2\exp\Big(-\frac{N_1 \delta^2}{36(2+\kappa)\lambda^2\epsilon ^2\sigma_1^2}  \Big).
    \end{equation}
    Combining with Equations~\eqref{prove-Delta-3terms}-~\eqref{Delta-3}, and the fact that $Q \le \mathcal{O}\left ( 1 \right )  \left (4L_1L_2L_3D_{\Theta}/\delta \right )^{d_{\theta}} $, we can obtain the desired result of Lemma~\ref{eclem-prsup}. 
\end{proof}

In the following, we prove the results in Theorem~\ref{theo-saa-sample}. 
\begin{proof}\textbf{of Theorem~\ref{theo-saa-sample}. }
\begin{enumerate}
    \item For $\mathrm{Pr} \left ( F\left ( \widehat{\boldsymbol{\theta}}_{N_1,N_2,N_3}  \right ) -F \left ( \boldsymbol{\theta}^* \right ) \le \delta \right )$ in Theorem~\ref{theo-saa-sample}\ref{theorem-saa-2}, we have 
    \begin{equation}\label{prove-theo-saa-2}
        \begin{aligned}
            & \mathrm{Pr} \left ( F\left ( \widehat{\boldsymbol{\theta}}_{N_1,N_2,N_3}  \right ) -F \left ( \boldsymbol{\theta}^* \right ) > \delta \right ) \\
            = & \mathrm{Pr} \Big ( \Big[F\left ( \widehat{\boldsymbol{\theta}}_{N_1,N_2,N_3}  \right ) - \widehat{F}_{N_1,N_2,N_3}\left ( \widehat{\boldsymbol{\theta}}_{N_1,N_2,N_3}  \right ) \Big] \\ 
            & \quad + \Big[\widehat{F}_{N_1,N_2,N_3}\left ( \widehat{\boldsymbol{\theta}}_{N_1,N_2,N_3}  \right ) - \widehat{F}_{N_1,N_2,N_3}\left ( \boldsymbol{\theta}^*  \right ) \Big] + \Big[  \widehat{F}_{N_1,N_2,N_3}\left ( \boldsymbol{\theta}^*  \right ) - F \left ( \boldsymbol{\theta}^* \right ) \Big] > \delta \Big ) \\
            \le & \mathrm{Pr} \left ( F\left ( \widehat{\boldsymbol{\theta}}_{N_1,N_2,N_3}  \right ) - \widehat{F}_{N_1,N_2,N_3}\left ( \widehat{\boldsymbol{\theta}}_{N_1,N_2,N_3}  \right ) > \frac{\delta}{2} \right ) + \mathrm{Pr} \left ( \widehat{F}_{N_1,N_2,N_3}\left ( \boldsymbol{\theta}^*  \right ) - F \left ( \boldsymbol{\theta}^* \right ) > \frac{\delta}{2} \right ) \\
            \le & \mathrm{Pr} \left ( \Big| F\left ( \widehat{\boldsymbol{\theta}}_{N_1,N_2,N_3}  \right ) - \widehat{F}_{N_1,N_2,N_3}\left ( \widehat{\boldsymbol{\theta}}_{N_1,N_2,N_3} \right ) \Big| > \frac{\delta}{2} \right ) + \mathrm{Pr} \left ( \Big|\widehat{F}_{N_1,N_2,N_3}\left ( \boldsymbol{\theta}^*  \right ) - F \left ( \boldsymbol{\theta}^* \right )\Big| > \frac{\delta}{2} \right ) \\
        \end{aligned}
    \end{equation}
    where the first inequality is due to $\widehat{F}_{N_1,N_2,N_3}\left ( \widehat{\boldsymbol{\theta}}_{N_1,N_2,N_3}  \right ) - \widehat{F}_{N_1,N_2,N_3}\left ( \boldsymbol{\theta}^*  \right ) \le 0$. Using the result of Lemma~\ref{eclem-prsup}, we can obtain the desired result of Theorem~\ref{theo-saa-sample}\ref{theorem-saa-2} using Equation~\eqref{prove-theo-saa-2}. 

    \item For Theorem~\ref{theo-saa-sample}\ref{theorem-saa-3}, to analyze 
    \begin{equation}\nonumber
         \mathrm{Pr} \left ( F\left ( \widehat{\boldsymbol{\theta}}_{N_1,N_2,N_3}  \right ) -F \left ( \boldsymbol{\theta}^* \right ) \le \delta \right ) \ge 1-\alpha,
    \end{equation}
    it suffices to study 
    \begin{equation}\nonumber
        \mathrm{Pr} \left ( F\left ( \widehat{\boldsymbol{\theta}}_{N_1,N_2,N_3}  \right ) -F \left ( \boldsymbol{\theta}^* \right ) > \delta \right ) < \alpha.
    \end{equation}
    Let each of the three terms on the right-hand side (RHS) of the inequality in Theorem~\ref{theo-saa-sample}\ref{theorem-saa-2} be no more than $\alpha/3$. This leads to 
    \begin{equation}\nonumber
        \mathcal{O}\left ( 1 \right )  \left ( \frac{8L_1L_2L_3D_{\Theta}}{\delta} \right )^{d_{\theta}}\exp\Big(-\frac{N_1 \delta^2}{144(2+\kappa)\lambda^2\epsilon ^2\sigma_1^2}\Big) \Big) < \frac{\alpha}{3},
    \end{equation}
    and then we obtain the necessary sample size from distribution $\widehat{\mathbb{P}}_{\widehat{\boldsymbol{X}}}$
    \begin{equation}\nonumber
        N_1 > \frac{\mathcal{O}\left ( 1 \right ) \sigma_1^2}{\delta^2} \Big[ d_{\theta}\log\, \left ( \frac{8L_1L_2L_3D_{\Theta}}{\delta}\right ) +\log\, \left ( \frac{1}{\alpha}  \right )    \Big ] .
    \end{equation}
    Similarly, we can obtain the desired result of $N_2$ and $N_3$. 

    Ignoring the log factors, 
    the required sample sizes $N_1$, $N_2$, and $N_3$  are all of order $\mathcal{O}\Big( d_{\theta} / \delta^2 \Big)$. Therefore, the total sample complexity of the problem~\eqref{f-saa} for achieving a $\delta$-optimal solution is $T=N_1+N_2+N_3 = \mathcal{O}\Big( d_{\theta} / \delta^2 \Big)$. 
\end{enumerate}
\end{proof}

\subsection{Proof of Theorem~\ref{theo-scsc-conver} in Section~\ref{subsec-algo-sco}} \label{ecsec-theo-scsc}

\begin{proof} \textbf{of Theorem~\ref{theo-scsc-conver}. }
    According to the Theorem 3 in~\citet{chen2021solving}, Theorem~\ref{theo-scsc-conver}\ref{theo-scsc-conver-1} holds. 

    For Theorem~\ref{theo-scsc-conver}\ref{theo-scsc-conver-2}, to ensure that $\mathbb{E}\Big[\|\nabla F(\widehat{\boldsymbol{\theta}})\|^2\Big] \le \varepsilon^2$, according to~\citet{chen2021solving}, it follows that
    \begin{equation}\nonumber
        \frac{\sum_{k=0}^{K-1} \mathbb{E}\Big[\|\nabla F(\boldsymbol{\theta}^k)\|^2\Big]}{K} \le \frac{C_{\textnormal{const}}}{\sqrt{K}} \le \varepsilon^2,
    \end{equation}
    where $C_{\textnormal{const}}$ is a constant that depends on the initial setting of the algorithm and constants $C_1, C_2, C_3, S_1, S_2, S_3$. 
    This implies that the number of iterations required satisfies 
    \begin{equation}\nonumber
        K \ge \frac{C_{\textnormal{const}}^2}{\varepsilon^4} = \mathcal{O}(\varepsilon^{-4}).
    \end{equation}
    
    Since one sample is drawn from each of the three distributions in each iteration, the total number of samples is $3\cdot K$, which implies that the sampling complexity of the SCSC method is also at the order of  $\mathcal{O}(\varepsilon^{-4})$. 

    In each iteration, we perform one gradient calculation on each of the functions $t_1$, $t_2$, and $t_3$. Thus, each function performs a total of $K$ gradient calculations, which implies that their gradient complexities are the same at the order of $\mathcal{O}(\varepsilon^{-4})$. 

    For the classical stochastic nonconvex optimization problem, the complexity bounds of SCSC match the existing lower bounds by~\citet{arjevani2023lower}, that is, $\mathcal{O}(\varepsilon^{-4})$. 
\end{proof}

\section{ Compared DRO Models } \label{ecsec-wc-distribution}

In this section, we show tractable formulations and worst-case distributions of our compared DRO models, including Sinkhorn DRO (SDRO), causal Wasserstein DRO (Causal-WDRO), and KL-divergence-based DRO (KL-DRO), in soft-constrained and contextual settings. 

\subsection{SDRO}

Based on~\citet{wang2025sinkhorn}, the soft-constrained SDRO without causal consideration for contextual DRO is defined as 
\begin{equation}\label{non-causal-sdro}
\begin{aligned}
\inf _{f \in \mathcal{F}} \max _{\mathbb{P} \in \mathcal{P}(\mathcal{X} \times \mathcal{Y})} \quad \mathbb{E}_{(\boldsymbol{x}, \boldsymbol{y}) \sim \mathbb{P}}\Big[\Psi(f(\boldsymbol{x}), \boldsymbol{y})-\lambda \cdot \mathcal{W}_{p}(\widehat{\mathbb{P}}, \mathbb{P})^p \Big], 
\end{aligned}
\tag{\text{SDRO}}
\end{equation}
where
\begin{equation}\nonumber
    \mathcal{W}_p(\mathbb{P}, \mathbb{Q}) := \left( \inf_{\gamma \in \Gamma(\mathbb{P}, \mathbb{Q})} \mathbb{E}_{((\widehat{\boldsymbol{x}},\widehat{\boldsymbol{y}}), (\boldsymbol{x},\boldsymbol{y}))\sim\gamma} \Big[ c_p((\widehat{\boldsymbol{x}},\widehat{\boldsymbol{y}}), (\boldsymbol{x},\boldsymbol{y})) \Big]  + \epsilon \cdot H\left( \gamma \mid \mu \otimes \left ( \nu_{\mathcal{X}}\otimes \nu_{\mathcal{Y} } \right ) \right)\right)^{1/p}.
\end{equation}

We present the worst-case distribution of the problem~\eqref{non-causal-sdro} in the following Lemma \ref{lem-worst-non-causal} by extending the results in~\citet{wang2025sinkhorn}. 
\begin{lemma}\label{lem-worst-non-causal}
    \textnormal{\textbf{\citep[Worst-case Distribution of the SDRO Problem,][]{wang2025sinkhorn}.}} 
    Under Assumption~\ref{assum-1}, the density of worst-case distribution $\mathbb{P}_{\lambda, \textnormal{SDRO}}^*$ of the inner problem of~\eqref{non-causal-sdro} for any $\lambda$ is given by 
    \begin{equation}
         \frac{\mathrm{d}\mathbb{P}_{\lambda,\textnormal{SDRO}}^* (\boldsymbol{x},\boldsymbol{y}) }{\mathrm{d} \nu_{\mathcal{X}}(\boldsymbol{x})\mathrm{d} \nu_{\mathcal{Y}}(\boldsymbol{y})} = \mathbb{E}_{(\widehat{\boldsymbol{x}},\widehat{\boldsymbol{y}})\sim\widehat{\mathbb{P}}}\Big[\tilde{\alpha}_{\widehat{\boldsymbol{x}}, \widehat{\boldsymbol{y}}}(\lambda) \cdot e^{s^{\prime}(\lambda,\widehat{\boldsymbol{x}},\widehat{\boldsymbol{y}}, \boldsymbol{x},\boldsymbol{y})} \Big],
    \end{equation}
    where $\tilde{\alpha}_{\widehat{\boldsymbol{x}}, \widehat{\boldsymbol{y}}} (\lambda) = \Big(\int_{\mathcal{X}\times \mathcal{Y}} e^{s^{\prime}(\lambda,\widehat{\boldsymbol{x}},\widehat{\boldsymbol{y}}, \boldsymbol{x},\boldsymbol{y})} \cdot \mathrm{d} \nu_{\mathcal{X}}\otimes \nu_{\mathcal{Y} } (\boldsymbol{x}, \boldsymbol{y}) \Big)^{-1}$. 
\end{lemma}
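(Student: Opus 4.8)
The plan is to mirror the derivation used for Theorem~\ref{theo-worst-case-distri}, but without the causal transport restriction, which removes the nested conditional structure and shortens the argument considerably. Starting from the inner problem of~\eqref{non-causal-sdro}, I would expand $\mathcal{W}_p(\widehat{\mathbb{P}},\mathbb{P})^p$ through its transport-plan definition and merge the maximization over $\mathbb{P}$ with the infimum over $\gamma\in\Gamma(\widehat{\mathbb{P}},\mathbb{P})$: since $\mathbb{P}$ is free and enters only as the second marginal of $\gamma$, the inner problem is equivalent to
\begin{equation}\nonumber
\sup_{\gamma:\,\gamma_1=\widehat{\mathbb{P}}}\ \mathbb{E}_{\gamma}\Big[\Psi(f(\boldsymbol{x}),\boldsymbol{y})-\lambda c_p((\widehat{\boldsymbol{x}},\widehat{\boldsymbol{y}}),(\boldsymbol{x},\boldsymbol{y}))\Big]-\lambda\epsilon\,H\big(\gamma\mid\widehat{\mathbb{P}}\otimes(\nu_{\mathcal{X}}\otimes\nu_{\mathcal{Y}})\big),
\end{equation}
where $\gamma_1$ denotes the first marginal of $\gamma$.

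Next, using Assumption~\ref{assum-1}\ref{assum-1-4} I would disintegrate $\gamma=\widehat{\mathbb{P}}\otimes\gamma_{(\widehat{\boldsymbol{x}},\widehat{\boldsymbol{y}})}$ and apply the chain rule for relative entropy; because the first marginals coincide, only the conditional entropy term survives, so the objective becomes $\mathbb{E}_{(\widehat{\boldsymbol{x}},\widehat{\boldsymbol{y}})\sim\widehat{\mathbb{P}}}$ of a per-atom functional of $\gamma_{(\widehat{\boldsymbol{x}},\widehat{\boldsymbol{y}})}$. Invoking the interchangeability principle (Section~9.3.4 of~\citet{shapiro2021lectures}), the supremum over $\gamma$ can be passed inside the expectation, so that for $\widehat{\mathbb{P}}$-almost every $(\widehat{\boldsymbol{x}},\widehat{\boldsymbol{y}})$ one is left with the entropy-regularized linear program
\begin{equation}\nonumber
\sup_{\mathbb{Q}\in\mathcal{P}(\mathcal{X}\times\mathcal{Y})}\ \mathbb{E}_{(\boldsymbol{x},\boldsymbol{y})\sim\mathbb{Q}}\Big[\Psi(f(\boldsymbol{x}),\boldsymbol{y})-\lambda c_p((\widehat{\boldsymbol{x}},\widehat{\boldsymbol{y}}),(\boldsymbol{x},\boldsymbol{y}))\Big]-\lambda\epsilon\,H\big(\mathbb{Q}\mid\nu_{\mathcal{X}}\otimes\nu_{\mathcal{Y}}\big).
\end{equation}
By the Gibbs variational identity used at the end of the proof of Lemma~\ref{lem-weak-duality} (namely $\sup_{\mathbb{P}}\{\mathbb{E}_{\mathbb{P}}[g]-\epsilon H(\mathbb{P}\mid\mathbb{Q})\}=\epsilon\log\int e^{g/\epsilon}\,\mathrm{d}\mathbb{Q}$), the unique maximizer is the Gibbs measure with density $e^{s^{\prime}(\lambda,\widehat{\boldsymbol{x}},\widehat{\boldsymbol{y}},\boldsymbol{x},\boldsymbol{y})}$ against $\nu_{\mathcal{X}}\otimes\nu_{\mathcal{Y}}$, normalized by $\tilde{\alpha}_{\widehat{\boldsymbol{x}},\widehat{\boldsymbol{y}}}(\lambda)^{-1}=\int_{\mathcal{X}\times\mathcal{Y}}e^{s^{\prime}(\lambda,\widehat{\boldsymbol{x}},\widehat{\boldsymbol{y}},\boldsymbol{x},\boldsymbol{y})}\,\mathrm{d}(\nu_{\mathcal{X}}\otimes\nu_{\mathcal{Y}})(\boldsymbol{x},\boldsymbol{y})$; Assumptions~\ref{assum-1}\ref{assum-1-1} and~\ref{assum-1}\ref{assum-1-2}, together with a light-tail condition on $\Psi$ analogous to Condition~\ref{condit-1}, guarantee this normalizer is finite. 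Assembling $\gamma^*=\widehat{\mathbb{P}}\otimes\gamma^*_{(\widehat{\boldsymbol{x}},\widehat{\boldsymbol{y}})}$ and reading off its second marginal then yields precisely the claimed density for $\mathbb{P}^*_{\lambda,\textnormal{SDRO}}$.

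The main obstacle will be the same as in the causal case: justifying the interchangeability step, which requires checking joint measurability of the per-atom value function in $(\widehat{\boldsymbol{x}},\widehat{\boldsymbol{y}})$ and the existence of a measurable Gibbs selection of the optimal conditional distribution — this is where one leans on the measurability machinery of Lemma~\ref{lem-meas} and the corresponding results in~\citet{wang2025sinkhorn}. A secondary point is to confirm that the inner supremum is attained and finite under the stated assumptions, so that the Gibbs density is genuinely the worst-case distribution rather than a near-optimizer; this follows from the strict concavity of the entropy-regularized objective, exactly as the uniqueness of $\lambda^*$ was argued in the proof of Theorem~\ref{theo-worst-case-distri}.
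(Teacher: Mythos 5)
Your derivation is correct and is essentially the non-causal specialization of the machinery the paper itself deploys for the causal case: merging the supremum over $\mathbb{P}$ with the coupling, disintegrating along the fixed first marginal $\widehat{\mathbb{P}}$, interchanging sup and expectation, and applying the Gibbs variational identity per atom is exactly the argument underlying Lemma~\ref{lem-weak-duality} and Theorems~\ref{theo-strong-duality}--\ref{theo-worst-case-distri}, minus the causal layer (note the paper gives no proof of this lemma, importing it from \citet{wang2025sinkhorn}). The caveats you raise are the right ones and are minor: the normalizer's finiteness indeed needs a Condition~\ref{condit-1}-type light-tail requirement and $\lambda>0$ (the statement's ``for any $\lambda$'' is loose), while the measurable-selection issue in the interchange is in fact trivial here because $\widehat{\mathbb{P}}$ is the finitely supported empirical distribution.
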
  
According to Lemma~\ref{lem-worst-non-causal}, $\mathbb{P}_{\lambda,\textnormal{SDRO}}^*$ is a mixture of Gibbs distributions. 
Compared with the worst-case distribution of the problem~\eqref{causal-sdro} in Theorem~\ref{theo-worst-case-distri}, $\mathbb{P}_{\lambda,\textnormal{SDRO}}^*$ has a simpler density function structure. 

According to~\citet{wang2025sinkhorn}, the strong dual formulation of the problem~\eqref{non-causal-sdro} can be reformulated as follows when a parametric decision rule $f_{\boldsymbol{\theta}}$ is considered: 
\begin{equation}\label{sdro-dual}
    \min_{\boldsymbol{\theta} \in \Theta}\quad  \mathbb{E}_{(\widehat{\boldsymbol{x}}, \widehat{\boldsymbol{y}}) \sim \widehat{\mathbb{P}}} \left[ \lambda\epsilon \log\, \int_{\mathcal{X}\times \mathcal{Y}} \exp\left( \frac{\Psi(f_{\boldsymbol{\theta}}(\boldsymbol{x}), \boldsymbol{y}) - \lambda c_p((\widehat{\boldsymbol{x}},\widehat{\boldsymbol{y}}), (\boldsymbol{x},\boldsymbol{y}))}{\lambda\epsilon} \right)   \mathrm{d} \nu_{\mathcal{X}} \otimes \nu_{\mathcal{Y}} (\boldsymbol{x}, \boldsymbol{y})  \right]. 
\end{equation}
We define 
\begin{equation} \nonumber
    \mathrm{d} U_{\epsilon}(\boldsymbol{\xi}_3, \boldsymbol{\xi}_4) := \frac{e^{-\frac{\|\boldsymbol{\xi}_3\|^p + \|\boldsymbol{\xi}_4\|^p}{\epsilon} }}{\mathbb{E}_{(\boldsymbol{\xi}_3, \boldsymbol{\xi}_4)\sim \nu_{\mathcal{X}} \otimes \nu_{\mathcal{Y}}}\Big[e^{-\frac{\|\boldsymbol{\xi}_3\|^p + \|\boldsymbol{\xi}_4\|^p}{\epsilon}} \Big]}\cdot \mathrm{d} \nu_{\mathcal{X}} \otimes \nu_{\mathcal{Y}} (\boldsymbol{\xi}_3, \boldsymbol{\xi}_4) , 
\end{equation}
and then Equation~\eqref{sdro-dual} is equivalent to 
\begin{equation} \label{sdro-dual-sco}
    \min_{\boldsymbol{\theta} \in \Theta} \quad C \cdot \mathbb{E}_{(\widehat{\boldsymbol{x}}, \widehat{\boldsymbol{y}}) \sim \widehat{\mathbb{P}}} \Big[ \lambda\epsilon \cdot \log\, \mathbb{E}_{(\boldsymbol{\xi}_3, \boldsymbol{\xi}_4)\sim U_{\epsilon}} \Big[ \exp\left( \frac{\Psi(f_{\boldsymbol{\theta}}(\widehat{\boldsymbol{x}}+\boldsymbol{\xi}_3), \widehat{\boldsymbol{y}}+\boldsymbol{\xi}_4)}{\lambda\epsilon} \right)  \Big] \Big]
\end{equation}
where $C =  \mathbb{E}_{(\boldsymbol{\xi}_3, \boldsymbol{\xi}_4)\sim \nu_{\mathcal{X}} \otimes \nu_{\mathcal{Y}}} \Big[e^{- (\|\boldsymbol{\xi}_3\|^p + \|\boldsymbol{\xi}_4\|^p)/{\epsilon}} \Big]$. 
The problem~\eqref{sdro-dual-sco} is a stochastic compositional optimization problem and can be efficiently solved by the proposed SCSC algorithm.  

\subsection{Causal-WDRO}

Without considering the entropy regularization, the Causal-WDRO model is defined as~\citet{yang2022decision}:
\begin{equation}\label{Causal-WDRO}
 \inf _{f \in \mathcal{F}} \max_{\mathbb{P} \in \mathcal{P}(\mathcal{X} \times \mathcal{Y})} \quad \mathbb{E}_{(\boldsymbol{x}, \boldsymbol{y}) \sim \mathbb{P}}\Big[\Psi(f(\boldsymbol{x}), \boldsymbol{y}) -\lambda \cdot C_{p}(\widehat{\mathbb{P}}, \mathbb{P})^p \Big],
\tag{\text{Causal-WDRO}}
\end{equation}
where the causal transport distance $C_{p}(\widehat{\mathbb{P}}, \mathbb{P})$ is defined in Definition~\ref{def-causal-trans}. 
\citet{yang2022decision} characterize the worst-case distribution of the Causal-WDRO problem. 
In the following, suppose that the empirical distribution $\widehat{\mathbb{P}}$ is grouped into $K$ distinct covariates $\widehat{\boldsymbol{x}}_k$ for any $k \in [K]$. 
For each covariate, there are $n_k$ observations of the uncertain parameter, denoted by $\widehat{\boldsymbol{y}}_{ki}$ for any $i \in [n_k]$. 
Let $\widehat{p}_{ki}$ be the probability mass of the data point $(\widehat{\boldsymbol{x}}_k, \widehat{\boldsymbol{y}}_{ki})$. 
Then, according to~\citet{yang2022decision}, the following lemma holds. 
\begin{lemma}\label{lem-worst-causal}
      \textnormal{\textbf{\citep[Worst-case Distribution of the Causal-WDRO Problem,][]{yang2022decision}.}} If the worst-case distribution of Causal-WDRO problem exists for given $\lambda > 0$, then it has the following form 
      \begin{equation}\label{wcd-cdro}
          \mathbb{P}_{\lambda, \textnormal{Causal-WDRO}}^* = \sum_{k \ne k_0} \sum_{i=1}^{n_k} \widehat{p}_{ki} \widehat{\mathbb{P}}_{(\boldsymbol{x}_k^*(\lambda), \boldsymbol{y}_{ki}^*(\lambda))} + \sum_{i=1}^{n_{k_0}} \widehat{p}_{k_0 i} \left( q \widehat{\mathbb{P}}_{(\bar{\boldsymbol{x}}_{k_0}(\lambda), \bar{\boldsymbol{y}}_{k_0 i}(\lambda))} + (1-q) \widehat{\mathbb{P}}_{(\underline{\boldsymbol{x}}_{k_0}(\lambda), \underline{\boldsymbol{y}}_{k_0 i}(\lambda))} \right),
      \end{equation}
      where $1 \le k_0 \le K$, $0 \le q \le 1, (\boldsymbol{x}_k^*(\lambda), \boldsymbol{y}_{ki}^*(\lambda)) = (\bar{\boldsymbol{x}}_k(\lambda), \bar{\boldsymbol{y}}_{ki}(\lambda))$, and for every $k$ and $i$, 
      \begin{equation}\nonumber
          \bar{\boldsymbol{x}}_k(\lambda), \underline{\boldsymbol{x}}_k(\lambda) \in \arg\max_{\boldsymbol{x} \in \mathcal{X}} \left\{ \mathbb{E}_{\widehat{\mathbb{P}}_{\widehat{\boldsymbol{Y}}|\widehat{\boldsymbol{X}}}} \left[ \sup_{\boldsymbol{y} \in \mathcal{Y}} \Big \{ \Psi(f(\boldsymbol{x}), \boldsymbol{y}) - \lambda ||\boldsymbol{y}-\widehat{\boldsymbol{y}}||^p \Big \} \mid \widehat{\boldsymbol{X}}=\widehat{\boldsymbol{x}}_k \right] - \lambda ||\boldsymbol{x}-\widehat{\boldsymbol{x}}_k||^p \right\},
      \end{equation} and 
      \begin{equation}\nonumber
          \bar{\boldsymbol{y}}_{ki} (\lambda)\in \arg\max_{\boldsymbol{y} \in \mathcal{Y}} \Big\{ \Psi(f(\bar{\boldsymbol{x}}_k), \boldsymbol{y}) - \lambda||\boldsymbol{y}-\widehat{\boldsymbol{y}}_{ki}||^p \Big\}, \underline{\boldsymbol{y}}_{ki}(\lambda) \in \arg\max_{\boldsymbol{y} \in \mathcal{Y}} \Big\{ \Psi(f(\underline{\boldsymbol{x}}_k), \boldsymbol{y}) - \lambda ||\boldsymbol{y}-\widehat{\boldsymbol{y}}_{ki}||^p \Big \}. 
      \end{equation}
\end{lemma}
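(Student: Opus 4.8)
The plan is to reconstruct the worst-case distribution from the strong dual of the causal Wasserstein DRO problem and then read off its structure through an extreme-point analysis of the set of optimal causal transport plans, mirroring the argument of \citet{yang2022decision}. First I would invoke strong duality for problem~\eqref{Causal-WDRO}; this is the $\epsilon\to 0$ specialization of the duality established in Section~\ref{sec-CSDRO} (see Remark~\ref{remark-limit}), and it guarantees that for the prescribed multiplier $\lambda>0$ the inner maximum is attained by some causal plan $\gamma^\star\in\Gamma_c(\widehat{\mathbb{P}},\mathbb{P}^\star)$ with first marginal $\widehat{\mathbb{P}}$ whose second marginal is $\mathbb{P}^\star_{\lambda,\textnormal{Causal-WDRO}}$. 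The existence hypothesis in the statement is exactly what makes the relevant suprema attained along the way.

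Next I would factor $\gamma$ using the causal restriction $\boldsymbol{X}\perp\widehat{\boldsymbol{Y}}\mid\widehat{\boldsymbol{X}}$ (Definition~\ref{def-causal-trans}) into a covariate kernel $\gamma_{\boldsymbol{X}\mid\widehat{\boldsymbol{X}}}$ that depends only on $\widehat{\boldsymbol{x}}$ and a response kernel $\gamma_{\boldsymbol{Y}\mid\widehat{\boldsymbol{X}},\widehat{\boldsymbol{Y}},\boldsymbol{X}}$, exactly as in the proof of Theorem~\ref{theo-strong-duality}, and apply the interchangeability principle to split the maximization into nested pointwise problems. At the inner level, for each $(\widehat{\boldsymbol{x}}_k,\widehat{\boldsymbol{y}}_{ki},\boldsymbol{x})$ any optimal $\gamma_{\boldsymbol{Y}\mid\cdots}$ is supported on the maximizers of $\boldsymbol{y}\mapsto\Psi(f(\boldsymbol{x}),\boldsymbol{y})-\lambda\|\boldsymbol{y}-\widehat{\boldsymbol{y}}_{ki}\|^p$, which identifies the candidate images $\bar{\boldsymbol{y}}_{ki}(\lambda)$ and $\underline{\boldsymbol{y}}_{ki}(\lambda)$. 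At the covariate level, with $G_k(\boldsymbol{x}):=\mathbb{E}_{\widehat{\mathbb{P}}_{\widehat{\boldsymbol{Y}}\mid\widehat{\boldsymbol{X}}=\widehat{\boldsymbol{x}}_k}}[\sup_{\boldsymbol{y}}\{\Psi(f(\boldsymbol{x}),\boldsymbol{y})-\lambda\|\boldsymbol{y}-\widehat{\boldsymbol{y}}\|^p\}]-\lambda\|\boldsymbol{x}-\widehat{\boldsymbol{x}}_k\|^p$, the kernel $\gamma_{\boldsymbol{X}\mid\widehat{\boldsymbol{X}}=\widehat{\boldsymbol{x}}_k}$ must be supported on $\arg\max_{\boldsymbol{x}}G_k(\boldsymbol{x})$, which is precisely the set from which $\bar{\boldsymbol{x}}_k(\lambda)$ and $\underline{\boldsymbol{x}}_k(\lambda)$ are taken.

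The final step is the extreme-point reduction that produces the single splitting group $k_0$. Restricting to couplings whose transport cost equals its optimal value $\rho^p$, the set of optimal causal plans is convex, and by the Bauer maximum / Krein--Milman principle (adapted to this setting as in \citet{yang2022decision}) one may select an extreme point; an exchange argument then shows that a single active scalar budget constraint leaves exactly one degree of freedom, so every covariate group is transported deterministically to $(\boldsymbol{x}_k^\star(\lambda),\boldsymbol{y}_{ki}^\star(\lambda))=(\bar{\boldsymbol{x}}_k(\lambda),\bar{\boldsymbol{y}}_{ki}(\lambda))$ except one distinguished group $k_0$, whose covariate kernel is the two-point mixture $q\,\delta_{\bar{\boldsymbol{x}}_{k_0}(\lambda)}+(1-q)\,\delta_{\underline{\boldsymbol{x}}_{k_0}(\lambda)}$ with $q\in[0,1]$ calibrated to the budget. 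Pushing $\widehat{\mathbb{P}}$ forward through this plan and collecting the atomic masses $\widehat{p}_{ki}$ yields exactly~\eqref{wcd-cdro}. I expect this extreme-point step to be the main obstacle: one has to make the argmax-supported kernels well defined via a measurable selection, and then argue rigorously that binding one scalar constraint forbids more than one group from splitting; the earlier steps are routine given the duality and factorization machinery already assembled in Section~\ref{sec-CSDRO}.
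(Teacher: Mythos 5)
The paper does not actually prove this lemma: it is quoted verbatim from \citet{yang2022decision} (see the surrounding text in Appendix~\ref{ecsec-wc-distribution}, ``according to \citet{yang2022decision}, the following lemma holds''), so there is no internal proof to compare against. Your sketch reconstructs the standard argument of the cited source --- strong duality, causal factorization of the plan into $\gamma_{\boldsymbol{X}\mid\widehat{\boldsymbol{X}}}$ and $\gamma_{\boldsymbol{Y}\mid\widehat{\boldsymbol{X}},\widehat{\boldsymbol{Y}},\boldsymbol{X}}$, interchangeability, support of optimal kernels on the argmax sets, and an extreme-point/exchange step yielding at most one splitting group --- and that is indeed the right skeleton for how such structure results are obtained (it parallels the Gao--Kleywegt-type analysis and the proof machinery in Section~\ref{sec-CSDRO}).

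Two points in your plan would need repair before it is a proof. First, you cannot get strong duality for Causal-WDRO ``as the $\epsilon\to 0$ specialization'' of Theorem~\ref{theo-strong-duality}: Remark~\ref{remark-limit} only shows pointwise convergence of the \emph{dual objective} as $\epsilon\to 0$, and neither the equality $v_{\mathrm{P}}=v_{\mathrm{D}}$ nor attainment is preserved through that limit without an additional argument (the log-sum-exp smoothing is exactly what made the entropic proof work; at $\epsilon=0$ the inner problems become suprema and one needs Yang et al.'s own duality theorem, or a separate envelope/compactness argument, cited directly). Second, your decisive extreme-point step hinges on ``a single active scalar budget constraint,'' which belongs to the ball-constrained formulation; the problem~\eqref{Causal-WDRO} as written in the appendix is the penalized (soft) version, where no budget constraint exists and \emph{any} mixture over the argmax sets is optimal. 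To match the lemma as stated you must either pass to the hard-constrained problem at the dual-optimal $\lambda$ via complementary slackness (which is where the calibration of $q$ and the distinguished group $k_0$ genuinely come from), or read the lemma --- as in the source --- as asserting the \emph{existence} of a worst-case distribution of the displayed form rather than that every worst-case distribution has it; as literally phrased, ``then it has the following form'' would be false for the penalized problem when several groups admit multiple maximizers. Finally, as you anticipate, the argmax-supported kernels require a measurable-selection argument (and attainment of the inner suprema, e.g.\ via compactness of $\mathcal{X},\mathcal{Y}$ and upper semicontinuity), which should be stated rather than assumed.
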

The worst-case distribution of the problem~\eqref{Causal-WDRO} in Equation~\eqref{wcd-cdro} is discrete, while that of Causal-SDRO in Equation~\eqref{wcd-csdro} is continuous. This shows that the introduction of CSD allows a more realistic and smoother representation of the underlying distribution.  

We solve the problem~\eqref{Causal-WDRO} as a contextual stochastic bilevel optimization problem by the Random Truncated Multilevel Monte Carlo (RT-MLMC) approach proposed by~\citet{hu2023contextual}. 

\subsection{KL-DRO}

The contextual KL-divergence-based DRO (KL-DRO) model is defined as 
\begin{equation}\label{kl-dro}
    \inf _{f \in \mathcal{F}} \max _{\mathbb{P} \in \mathcal{P}(\mathcal{X} \times \mathcal{Y})} \quad \mathbb{E}_{(\boldsymbol{x}, \boldsymbol{y}) \sim \mathbb{P}}\Big[\Psi(f(\boldsymbol{x}), \boldsymbol{y})-\lambda \cdot \mathbb{D}_{\mathrm{KL}}\Big(\mathbb{P} || \widehat{\mathbb{P}} \Big) \Big]. 
\tag{\text{KL-DRO}}    
\end{equation}
For the problem~\eqref{kl-dro}, its worst-case distribution is given by the following Lemma~\ref{lem-worst-kl}. 
\begin{lemma}\label{lem-worst-kl}
      \textnormal{\textbf{\citep[Worst-case Distribution of the KL-DRO Problem,][]{hu2013kullback}.}} If the worst-case distribution of KL-DRO problem exists for given $\lambda > 0$, then it has the following form 
      \begin{equation}
          \mathbb{P}^*_{\lambda,\textnormal{KL-DRO}} = \sum_{i=1}^N \frac{\exp\left( \frac{\Psi(f(\boldsymbol{\widehat{x}}_i), \boldsymbol{\widehat{y}}_i)}{\lambda} \right)}{\sum_{j=1}^N \exp\left( \frac{\Psi(f(\boldsymbol{\widehat{x}}_j), \boldsymbol{\widehat{y}}_j)}{\lambda} \right)} \cdot \widehat{\mathbb{P}}_{(\boldsymbol{\widehat{x}}_i, \boldsymbol{\widehat{y}}_i)},
      \end{equation}
      where $N$ is the number of historical observations. 
\end{lemma}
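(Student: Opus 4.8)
The plan is to exploit that $\mathbb{D}_{\mathrm{KL}}(\mathbb{P}\,\|\,\widehat{\mathbb{P}})$ is finite only when $\mathbb{P}\ll\widehat{\mathbb{P}}$, which collapses the infinite-dimensional inner maximization of \eqref{kl-dro} to a finite-dimensional strictly concave program, and then to read off the maximizer from its first-order conditions (equivalently, from the Gibbs variational principle).

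First I would argue that any $\mathbb{P}$ with $\mathbb{D}_{\mathrm{KL}}(\mathbb{P}\,\|\,\widehat{\mathbb{P}})<\infty$ is absolutely continuous with respect to $\widehat{\mathbb{P}}=\sum_{i=1}^{N}\frac{1}{N}\delta_{(\widehat{\boldsymbol{x}}_i,\widehat{\boldsymbol{y}}_i)}$; since the latter has finite support, $\mathbb{P}$ is represented by a probability vector $\boldsymbol{p}=(p_1,\dots,p_N)$ with $p_i\ge 0$ and $\sum_i p_i=1$, i.e.\ $\mathbb{P}=\sum_i p_i\delta_{(\widehat{\boldsymbol{x}}_i,\widehat{\boldsymbol{y}}_i)}$. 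Writing $\Psi_i:=\Psi(f(\widehat{\boldsymbol{x}}_i),\widehat{\boldsymbol{y}}_i)$ (finite on the support of $\widehat{\mathbb{P}}$) and using $\mathbb{D}_{\mathrm{KL}}(\mathbb{P}\,\|\,\widehat{\mathbb{P}})=\sum_i p_i\log(Np_i)$ with the convention $0\log 0=0$, the inner problem becomes
\[
\max_{\boldsymbol{p}\ge 0,\ \sum_i p_i=1}\ \ \sum_{i=1}^{N} p_i\Psi_i-\lambda\sum_{i=1}^{N}p_i\log(Np_i).
\]

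Next I would note that the objective is continuous on the compact simplex and strictly concave (the negative-entropy term is strictly convex), so a maximizer exists and is unique — consistent with the conditional statement of the lemma. Forming the Lagrangian with a multiplier $\mu$ for $\sum_i p_i=1$ and observing that the nonnegativity constraints are inactive (the partial derivative tends to $+\infty$ as $p_i\downarrow 0$), stationarity gives $\Psi_i-\lambda(\log(Np_i)+1)-\mu=0$, hence $p_i\propto\exp(\Psi_i/\lambda)$; enforcing $\sum_i p_i=1$ pins down the normalization and yields
\[
p_i^{*}=\frac{\exp\!\big(\Psi(f(\widehat{\boldsymbol{x}}_i),\widehat{\boldsymbol{y}}_i)/\lambda\big)}{\sum_{j=1}^{N}\exp\!\big(\Psi(f(\widehat{\boldsymbol{x}}_j),\widehat{\boldsymbol{y}}_j)/\lambda\big)},\qquad i\in[N],
\]
which is exactly the claimed form of $\mathbb{P}^{*}_{\lambda,\textnormal{KL-DRO}}$. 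I would also record the equivalent measure-theoretic derivation via the Donsker--Varadhan / Gibbs formula, $\sup_{\mathbb{P}\ll\widehat{\mathbb{P}}}\{\mathbb{E}_{\mathbb{P}}[g]-\lambda\mathbb{D}_{\mathrm{KL}}(\mathbb{P}\,\|\,\widehat{\mathbb{P}})\}=\lambda\log\mathbb{E}_{\widehat{\mathbb{P}}}[e^{g/\lambda}]$ with $g=\Psi(f(\cdot),\cdot)$, whose unique maximizer is the exponential tilt $\mathrm{d}\mathbb{P}^{*}/\mathrm{d}\widehat{\mathbb{P}}=e^{g/\lambda}/\mathbb{E}_{\widehat{\mathbb{P}}}[e^{g/\lambda}]$ — the same object.

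The only delicate point, and the main (mild) obstacle, is justifying existence of the maximizer together with finiteness and strict positivity of the normalizing constant $\sum_{j}\exp(\Psi_j/\lambda)$. Both are immediate here because $\widehat{\mathbb{P}}$ has finite support, so the exponential moment is a finite sum of strictly positive finite terms; for a general (non-discrete) nominal measure this would become a genuine integrability condition, which is precisely why the lemma is stated conditionally on the existence of the worst-case distribution. The resulting distribution coincides with that derived in \citet{hu2013kullback}.
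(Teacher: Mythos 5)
Your proof is correct. The paper offers no proof of this lemma — it is simply quoted from \citet{hu2013kullback} — and your argument (finiteness of the KL term forcing $\mathbb{P}\ll\widehat{\mathbb{P}}$, reduction to a strictly concave program on the simplex, and identification of the exponential tilt via first-order conditions or the Donsker--Varadhan/Gibbs formula) is exactly the standard derivation behind that cited result, with the conditional phrasing correctly absorbing the only delicate point (possible non-existence when $\Psi$ is unbounded or takes the value $+\infty$).
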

In fact, KL-DRO finds the worst-case distribution by changing the likelihood ratios of the empirical distribution rather than changing its support. Therefore, the worst-case distribution of~\eqref{kl-dro} is still discrete. 

When a parametric decision rule $f_{\boldsymbol{\theta}}$ is considered, according to Fenchel duality, the problem \eqref{kl-dro} is equivalent to 
\begin{equation}
    \min_{\boldsymbol{\theta} \in \Theta} \quad \lambda \cdot \log\, \mathbb{E}_{(\boldsymbol{x}, \boldsymbol{y})\sim \widehat{\mathbb{P}}}\Big[ \exp\Big( \frac{\Psi(f_{\boldsymbol{\theta}}(\boldsymbol{x}), \boldsymbol{y})}{\lambda} \Big)\Big], 
\end{equation}
which can be solved by the standard SGD method. 

\section{Interpretability Measures} \label{ecsec-interpretability}

In this section, we analyze the intrinsic interpretability of the SRT by introducing global and local interpretation measures in Sections~\ref{ecsec-interpretability-global} and~\ref{ecsec-interpretability-local}, respectively. 

\subsection{ Global Interpretation Measure } \label{ecsec-interpretability-global}

Global interpretability provides a holistic view of the model by quantifying the contribution of each feature to the overall decision-making process~\citep{dwivedi2023explainable}.
For tree-based methods, standard techniques include impurity-based and permutation-based importance measures~\citep{hastie2009elements, kallus2023stochastic}. 
However, impurity-based measures are designed for trees with hard splits and are inapplicable to SRTs, while permutation-based methods are often computationally expensive. 
Therefore, leveraging the differentiability of the SRT, we define feature importance based on the average marginal sensitivity of the decision with respect to each feature over the training set: 
\begin{equation}
    \mathcal{C}_{j} := \frac{1}{N} \sum_{i=1}^{N} \Big \| \frac{\partial f_{\boldsymbol{\theta}}^{\text{SRF}}(\boldsymbol{x}^i)}{\partial x^i_j}\Big \|_1 = \frac{1}{N\cdot T}  \sum_{i=1}^{N} \sum_{t=1}^{T} \sum_{l=1}^{2^{D(t)}} \Big \| \frac{\partial p_{l, t} (\boldsymbol{x}^i)}{\partial x^i_j} \boldsymbol{\pi}_{l,t}\Big \|_1, \quad \forall j \in [d_x], 
\end{equation}
where $\boldsymbol{x}^i$ is the $i$-th training sample and $x^i_j$ is its $j$-th element, and the partial derivatives of $p_{l, t} (\boldsymbol{x}^i)$ are computed following Proposition~\ref{prop-srt-derivation}. 
To show the relative importance of features, we normalize the importance scores $\mathcal{C}_{j}$ for each $j \in [d_x]$ such that they sum to 1, and then the relative feature importance is given by 
\begin{equation}\nonumber
    \bar{C}_j := \frac{\mathcal{C}_j}{\sum_{k=1}^{d_x}\mathcal{C}_k}, \quad \forall j \in [d_x]. 
\end{equation}

\subsection{ Local Interpretation Measure }\label{ecsec-interpretability-local}

Local interpretability demonstrates how an individual decision is derived, clarifying the contribution of specific features and their interactions~\citep{dwivedi2023explainable, notz2024explainable}. 
\citet{lundberg2020local} propose SHAP (SHapley Additive exPlanations) as a post-hoc local explainer to characterize feature contributions.  
While SHAP enhances the transparency of inherently uninterpretable models, it can be used only after the decision is made and does not exploit the model's structure. 

In contrast, given that the SRF is intrinsically interpretable and differentiable, we propose a novel metric, the Empirical Integrated Gradient (EIG). Adapted from the Integrated Gradients (IG) method proposed by~\citet{sundararajan2017axiomatic}, EIG derives feature contributions directly from the model's structure. 
We validate the consistency of our intrinsic interpretability by comparing EIG with SHAP values. 

According to~\citet{ancona2017towards} and~\citet{notz2024explainable}, given input covariate $\boldsymbol{x}$, the $k$-th decision of the SRF can be decomposed into the sum of feature contributions relative to a baseline: 
\begin{equation}\nonumber
    \Big [ f_{\boldsymbol{\theta}}^{\text{SRF}}(\boldsymbol{x}) \Big ]_{k}- \text{Baseline}_k =  \sum_{j=1}^{d_x} \varphi_{j, k}(\boldsymbol{\theta}, \boldsymbol{x}). 
\end{equation}
Distinct from~\citet{sundararajan2017axiomatic} and \citet{ancona2017towards}, which typically use a zero baseline, we employ the average decision over the training set as a robust baseline.
Then, we define the EIG for feature $j$ and decision $k$ as
\begin{equation}\nonumber
    \varphi_{j, k}^{\text{EIG}}(\boldsymbol{\theta}, \boldsymbol{x}) := \frac{1}{N} \cdot \sum_{i=1}^{N} (x_j-x_j^i) \cdot \int_{\alpha= 0}^{1} \frac{\partial \Big [ f_{\boldsymbol{\theta}}^{\text{SRF}}\Big(\boldsymbol{x}^i+\alpha(\boldsymbol{x}-\boldsymbol{x}^i)\Big) \Big ]_k}{\partial x_j } \mathrm{d}\alpha, \quad \forall j \in [d_x], k \in [d_z],
\end{equation}
where $\boldsymbol{x}^i$ denotes the $i$-th training sample. 
Proposition~\ref{prop-EIG} shows that the $k$-th output of SRF can be decomposed into the sum of EIG contributions from each feature, with the baseline defined as the average value of the $k$-th output over the data set, that is, 
\begin{equation}\nonumber
    \text{Baseline}_k = \frac{1}{N} \sum_{i=1}^{N} \Big [ f_{\boldsymbol{\theta}}^{\textnormal{SRF}}(\boldsymbol{x}^{i}) \Big ]_{k}. 
\end{equation}

\begin{proposition}\label{prop-EIG}
    Since the decision rule $f_{\boldsymbol{\theta}}^{\textnormal{SRF}}: \mathbb{R}^{d_x} \to \mathbb{R}^{d_z}$ is differentiable, given an input covariate $\boldsymbol{x}$, the EIG value $\varphi_{j, k}^{\textnormal{EIG}}(\boldsymbol{\theta}, \boldsymbol{x})$ exactly quantifies the contribution of feature $j$ to the deviation of the decision from the average training decision baseline:  
    \begin{equation}\nonumber
        \Big [ f_{\boldsymbol{\theta}}^{\textnormal{SRF}}(\boldsymbol{x}) \Big ]_{k} - \frac{1}{N}\sum_{i=1}^{N} \Big [ f_{\boldsymbol{\theta}}^{\textnormal{SRF}}(\boldsymbol{x}^{i}) \Big ]_{k} = \sum_{j=1}^{d_x} \varphi_{j, k}^{\textnormal{EIG}}(\boldsymbol{\theta}, \boldsymbol{x}), \quad \forall k \in [d_z].
    \end{equation}
\end{proposition}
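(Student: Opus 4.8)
The plan is to reduce the statement of Proposition~\ref{prop-EIG} to the fundamental theorem of calculus applied coordinate-wise along the straight-line path from each training sample $\boldsymbol{x}^i$ to the query point $\boldsymbol{x}$, then average over $i$. First I would fix $k \in [d_z]$ and a single training sample $\boldsymbol{x}^i$, and consider the scalar function $\gamma_i(\alpha) := \big[ f_{\boldsymbol{\theta}}^{\textnormal{SRF}}\big(\boldsymbol{x}^i + \alpha(\boldsymbol{x} - \boldsymbol{x}^i)\big)\big]_k$ for $\alpha \in [0,1]$. Since $f_{\boldsymbol{\theta}}^{\textnormal{SRF}}$ is continuously differentiable (it is a finite composition of sigmoids, affine maps, products, and sums, as established in the proof of Proposition~\ref{prop-srf-lip}), $\gamma_i$ is $C^1$ on $[0,1]$, and the chain rule gives
\begin{equation}\nonumber
    \gamma_i'(\alpha) = \sum_{j=1}^{d_x} \frac{\partial \big[ f_{\boldsymbol{\theta}}^{\textnormal{SRF}}\big(\boldsymbol{x}^i + \alpha(\boldsymbol{x} - \boldsymbol{x}^i)\big)\big]_k}{\partial x_j}\,(x_j - x_j^i).
\end{equation}
Integrating from $0$ to $1$ and applying the fundamental theorem of calculus yields
\begin{equation}\nonumber
    \big[ f_{\boldsymbol{\theta}}^{\textnormal{SRF}}(\boldsymbol{x})\big]_k - \big[ f_{\boldsymbol{\theta}}^{\textnormal{SRF}}(\boldsymbol{x}^i)\big]_k = \gamma_i(1) - \gamma_i(0) = \sum_{j=1}^{d_x} (x_j - x_j^i) \int_0^1 \frac{\partial \big[ f_{\boldsymbol{\theta}}^{\textnormal{SRF}}\big(\boldsymbol{x}^i + \alpha(\boldsymbol{x} - \boldsymbol{x}^i)\big)\big]_k}{\partial x_j}\,\mathrm{d}\alpha.
\end{equation}

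Next I would average this identity over $i = 1, \dots, N$. The left-hand side becomes $\big[ f_{\boldsymbol{\theta}}^{\textnormal{SRF}}(\boldsymbol{x})\big]_k - \frac{1}{N}\sum_{i=1}^N \big[ f_{\boldsymbol{\theta}}^{\textnormal{SRF}}(\boldsymbol{x}^i)\big]_k$, which is exactly the quantity on the left of the claimed decomposition. The right-hand side becomes $\frac{1}{N}\sum_{i=1}^N \sum_{j=1}^{d_x} (x_j - x_j^i)\int_0^1 (\cdots)\,\mathrm{d}\alpha$; interchanging the two finite sums and recognizing the inner expression as $\varphi_{j,k}^{\textnormal{EIG}}(\boldsymbol{\theta}, \boldsymbol{x})$ (per its definition in Appendix~\ref{ecsec-interpretability-local}) gives $\sum_{j=1}^{d_x} \varphi_{j,k}^{\textnormal{EIG}}(\boldsymbol{\theta}, \boldsymbol{x})$, completing the proof. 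Since $k$ was arbitrary, the statement holds for all $k \in [d_z]$.

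The only technical point worth checking carefully — and the closest thing to an obstacle — is the justification for differentiating through the integral and for the interchange of integration and differentiation implicit in the chain-rule step: one needs the partial derivatives $\partial_{x_j}[f_{\boldsymbol{\theta}}^{\textnormal{SRF}}]_k$ to be continuous (hence bounded on the compact segment $[\boldsymbol{x}^i, \boldsymbol{x}]$), so that $\gamma_i'$ is continuous and the integral is well-defined and finite. This follows directly from the explicit smoothness already proved in Proposition~\ref{prop-srf-lip} (indeed $f_{\boldsymbol{\theta}}^{\textnormal{SRF}}$ is $C^\infty$ in $\boldsymbol{x}$), so no extra hypotheses are required; everything else is a routine application of the fundamental theorem of calculus and the linearity of finite sums and integrals. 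I would state this regularity remark explicitly at the start of the proof and then proceed with the computation above.
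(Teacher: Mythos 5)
Your proposal is correct and follows essentially the same route as the paper: the paper invokes the completeness property of integrated gradients (Proposition 1 of Sundararajan et al., 2017) for each baseline $\boldsymbol{x}^i$ and then averages over the training set, whereas you simply prove that completeness identity directly via the fundamental theorem of calculus along the straight-line path before averaging. The extra regularity remark you flag is harmless but already covered by the smoothness established in Proposition~\ref{prop-srf-lip}.
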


\begin{proof}\textbf{of Proposition~\ref{prop-EIG}. }
    According to the Proposition 1 in~\citet{sundararajan2017axiomatic}, for any $\boldsymbol{x}^{\prime} \in \mathbb{R}^{d_x}$, we have 
    \begin{equation}\nonumber
        \begin{aligned}
            \Big [ f_{\boldsymbol{\theta}}^{\text{SRF}}(\boldsymbol{x}) \Big ]_{k} & - \Big [ f_{\boldsymbol{\theta}}^{\text{SRF}}(\boldsymbol{x}^{\prime}) \Big ]_{k} \\ & = \sum_{j=1}^{d_x} (x_j-x_j^{\prime}) \cdot \int_{\alpha= 0}^{1} \frac{\partial \Big [ f_{\boldsymbol{\theta}}^{\text{SRF}}\Big(\boldsymbol{x}^{\prime}+\alpha(\boldsymbol{x}-\boldsymbol{x}^{\prime})\Big) \Big ]_k}{\partial x_j } \mathrm{d}\alpha, \quad \forall k \in [d_z]. 
        \end{aligned}
    \end{equation}
    By setting the baseline as the average decision over the training set, we obtain 
    \begin{equation}\nonumber
    \begin{aligned}
        \Big [ f_{\boldsymbol{\theta}}^{\text{SRF}}(\boldsymbol{x}) \Big ]_{k} & - \frac{1}{N}\sum_{i=1}^{N} \Big [ f_{\boldsymbol{\theta}}^{\text{SRF}}(\boldsymbol{x}^{i}) \Big ]_{k}  \\ 
        & = \frac{1}{N}\sum_{i=1}^{N} \sum_{j=1}^{d_x} (x_j-x_j^{i}) \cdot \int_{\alpha= 0}^{1} \frac{\partial \Big [ f_{\boldsymbol{\theta}}^{\text{SRF}}\Big(\boldsymbol{x}^{i}+\alpha(\boldsymbol{x}-\boldsymbol{x}^{i})\Big) \Big ]_k}{\partial x_j } \mathrm{d}\alpha \\
        & = \sum_{j=1}^{d_x} \varphi_{j, k}^{\text{EIG}}(\boldsymbol{\theta}, \boldsymbol{x}), \quad \quad \forall k \in [d_z]. 
    \end{aligned}
    \end{equation}
\end{proof}

Regarding local feature interactions, the differentiable nature of the SRT allows us to explicitly characterize interaction effects by computing the Hessian matrix, as detailed in Proposition~\ref{prop-srt-derivation}. 

\section{Equivalent Reformulation for Feature-based Inventory Substitution Problem in Section~\ref{subsec-results-inventory}} \label{ecsec-prop-inventory}

    The feature-based inventory substitution problem with soft CSD constraint is given by 
    \begin{equation}\label{scs-problem}
        \inf _{f \in \mathcal{F}}\,\,  \max _{\mathbb{P} \in \, \Re\, ({\widehat{\mathbb{P}}})} \quad \boldsymbol{c}^{\top}f(\boldsymbol{x}) + \mathbb{E}_{(\boldsymbol{x}, \boldsymbol{y}) \sim \mathbb{P}}\Big[\Psi_{\text{Inventory}}(f(\boldsymbol{x}), \boldsymbol{y})\Big]-\lambda \cdot R_{p}(\widehat{\mathbb{P}}, \mathbb{P})^p
    \end{equation}  
    where 
    \begin{equation}\label{psi-scs-problem}
        \begin{aligned}
           \Psi_{\text{Inventory}}(\boldsymbol{z}, \boldsymbol{y}) =  \min & \quad \sum_{j=1}^{d_y} \sum_{i=1}^{j} s_{i,j}w_{i,j} + \sum_{i=1}^{d_z}  h_i u_i + \sum_{j=1}^{d_y} b_j u_j^{\prime} \\ 
           \text{s.t. } & \quad  \sum_{j=i}^{d_y}w_{i,j} +u_i = z_{i}, & \forall i \in \left[d_z\right], \\
            & \quad  \sum_{i=1}^{j}w_{i,j} +u_j^{\prime} = y_j, & \forall j \in \left[d_y\right], \\
            & \quad u_i , u_j^{\prime}, w_{i,j} \ge 0, & \forall i \in \left[d_z\right], j \in \left[d_y\right]. 
        \end{aligned}
    \end{equation}
    The dual problem of the linear programming problem $\Psi_{\text{Inventory}}(f(\boldsymbol{x}), \boldsymbol{y})$ is given by 
    \begin{equation}
        \begin{aligned}
           \max_{\boldsymbol{\eta} \in \mathbb{R}^{d_z}, \boldsymbol{\upsilon} \in \mathbb{R}^{d_y}} & \quad \sum_{i=1}^{d_z} z_{i} \eta_i + \sum_{j=1}^{d_y} y_j \upsilon_j \\ 
           \text{s.t. } \quad & \quad  \eta_i \le h_i, & \forall i \in \left[d_z\right], \\
            & \quad  \upsilon_j \le b_j, & \forall j \in \left[d_y\right],  \\
            & \quad \eta_i+\upsilon_j \le s_{i,j}, & \forall  j \in \left\{i, i+1, \cdots, d_y\right\}, i \in \left[d_z\right]. 
        \end{aligned}
    \end{equation}
    where $\eta_i \in \mathbb{R}$ for each $i \in \left[d_z\right]$ represents the dual variable of the $i$-th constraint in the first constraint set of the problem~\eqref{psi-scs-problem}, while $\upsilon_j \in \mathbb{R}$ for each $j \in \left[d_y\right]$ represents the dual variable of the $j$-th constraint in the second constraint set of the problem~\eqref{psi-scs-problem}. According to the duality theorem, the strong duality holds. 

    Define that
    \begin{equation}\nonumber
    \begin{aligned}
    \Psi_{\text{Inventory}}^*(f(\boldsymbol{x}), \boldsymbol{y}) := \max_{\boldsymbol{\eta} \in \mathbb{R}^{d_z}, \boldsymbol{\upsilon} \in \mathbb{R}^{d_y}} 
     \Bigg\{ 
    \sum_{i=1}^{d_z} &[f(\boldsymbol{x})]_i (\eta_i + c_i)   + \sum_{j=1}^{d_y} y_j \nu_j \, \quad \Bigg| \quad \\
     & 
    \, \begin{array}{lr}
        \eta_i \le h_i, & \forall i \in [d_z], \\
        \nu_j \le b_j, & \forall j \in [d_y], \\
        \eta_i + \nu_j \le s_{i,j}, & \forall  j \in \left\{i, i+1, \cdots, d_y\right\}, i \in \left[d_z\right]
    \end{array}
   \Bigg\}, 
   \end{aligned}
    \end{equation}
    and then the problem~\eqref{scs-problem} can be rewritten as 
    \begin{equation}\nonumber
    \inf_{f \in \mathcal{F}}\, \max_{\mathbb{P} \in \, \Re\, ({\widehat{\mathbb{P}}})} \quad  \mathbb{E}_{(\boldsymbol{x}, \boldsymbol{y}) \sim \mathbb{P}}\Big[\Psi_{\text{Inventory}}^*(f(\boldsymbol{x}), \boldsymbol{y})\Big] -\lambda \cdot R_{p}(\widehat{\mathbb{P}}, \mathbb{P})^p,
    \end{equation}
    which is the same as~\eqref{soft-causal-sdro}. Therefore, following the same reformulation processes as in Theorem~\ref{theo-strong-duality}, its dual formulation is given by 
    \begin{equation}\nonumber
        \inf_{f\in \mathcal{F}} \quad \mathbb{E}_{\widehat{\boldsymbol{x}} \sim \widehat{\mathbb{P}}_{\widehat{\boldsymbol{X}}}} \left[ \lambda\epsilon \log\, \mathbb{E}_{ \boldsymbol{\xi}_1 \sim Q_{ \epsilon }}\left[ \exp\left( \frac{g^{*}(\widehat{\boldsymbol{x}},  \boldsymbol{\xi}_1, \lambda)}{\lambda\epsilon} \right) \right]\right],
    \end{equation}
    where 
    \begin{equation}\nonumber
        g^{*}(\widehat{\boldsymbol{x}},  \boldsymbol{\xi}_1, \lambda) := \mathbb{E}_{\widehat{\boldsymbol{y}} \sim  \widehat{\mathbb{P}}_{\widehat{\boldsymbol{Y}}|\widehat{\boldsymbol{X}}=\widehat{\boldsymbol{x}}}}\left[ \lambda\epsilon \log\, \mathbb{E}_{ \boldsymbol{\xi}_2 \sim W_{\epsilon}}\left[ \exp\left( \frac{\Psi_{\text{Inventory}}^*(f(\widehat{\boldsymbol{x}}+ \boldsymbol{\xi}_1),\widehat{\boldsymbol{y}}+ \boldsymbol{\xi}_2)}{\lambda\epsilon} \right) \right] \right].
    \end{equation}
    For a decision rule parameterized by $\boldsymbol{\theta} \in \Theta$, following the same reformulation processes as in Section~\ref{subsec-algo-reform}, this problem can be solved as the following stochastic compositional optimization problem
    \begin{equation}\nonumber
        \min_{\boldsymbol{\theta} \in \Theta}\quad F \left(\boldsymbol{\theta}\right)= \lambda\epsilon \cdot \mathbb{E}_{\widehat{\boldsymbol{x}} \sim \widehat{\mathbb{P}}_{\widehat{\boldsymbol{X}}}} \Big[ t_1\Big( \mathbb{E}_{ \boldsymbol{\xi}_1 \sim Q_{ \epsilon }}\Big[ t_2\Big(   \mathbb{E}_{ \boldsymbol{\xi}_2 \sim W_{\epsilon}}\Big[t_3^{\prime}\Big( \boldsymbol{\theta}; \widehat{\boldsymbol{x}}, \boldsymbol{\xi}_1, \widehat{\boldsymbol{y}}, \boldsymbol{\xi}_2  \Big)\Big]; \widehat{\boldsymbol{x}}, \boldsymbol{\xi}_1\Big)\Big] ; \widehat{\boldsymbol{x}} \Big) \Big] 
    \end{equation}
    where
    \begin{equation}\nonumber
        \Big[t_3^{\prime} (\boldsymbol{\theta}; \widehat{\boldsymbol{x}}, \boldsymbol{\xi}_1, \widehat{\boldsymbol{y}}, \boldsymbol{\xi}_2)\Big]_i = \exp\left( \frac{\Psi_{\text{Inventory}} ^*(f_{\boldsymbol{\theta}}(\widehat{\boldsymbol{x}}+ \boldsymbol{\xi}_1),\widehat{\boldsymbol{y}}_i+ \boldsymbol{\xi}_2)}{\lambda\epsilon} \right), \quad \forall i \in \left[n_{\widehat{\boldsymbol{x}}}\right]. 
    \end{equation}

\section{Additional Validation Experiments} \label{ecsec-cv-figures}

In this section, we evaluate the out-of-sample performance of Causal-SDRO across various parameter combinations and compare it with existing DRO benchmarks for both the inventory substitution and portfolio selection problems.

\subsection{Out-of-sample Performance across Different Parameter Combinations}

For the feature-based inventory substitution problem, Figure~\ref{fig:is-cv} illustrates the out-of-sample performance of the proposed Causal-SDRO method under different combinations of the regularization hyperparameters.

\begin{figure}[!htb]
  \centering
  \begin{subfigure}{.48\linewidth}
    \centering
    \includegraphics[width=\linewidth]
      {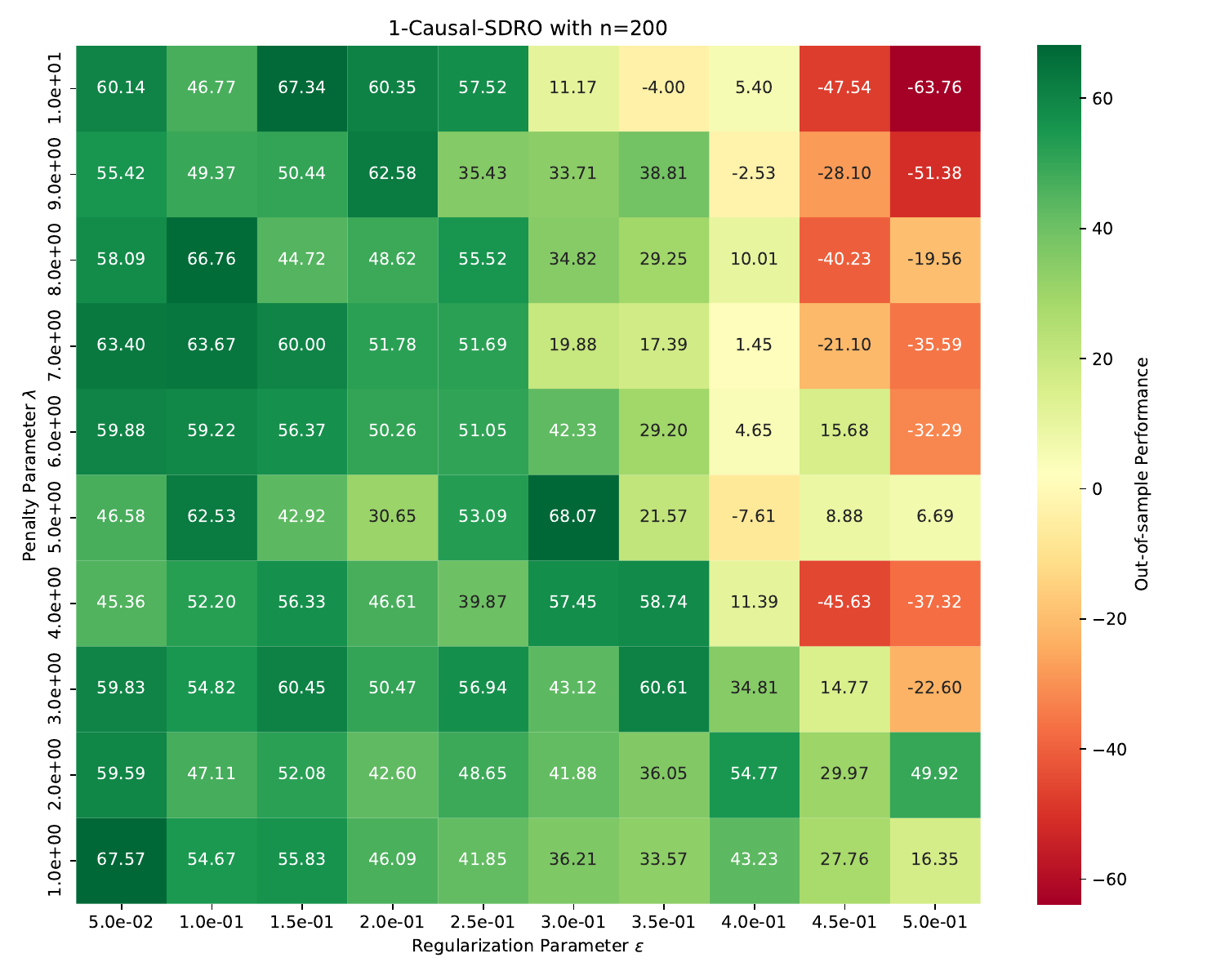}
    \caption{1-Causal-SDRO}
    \label{fig:is-cv-plot-p1}
  \end{subfigure}
  \hfill
    \begin{subfigure}{.48\linewidth}
    \centering
    \includegraphics[width=\linewidth]
      {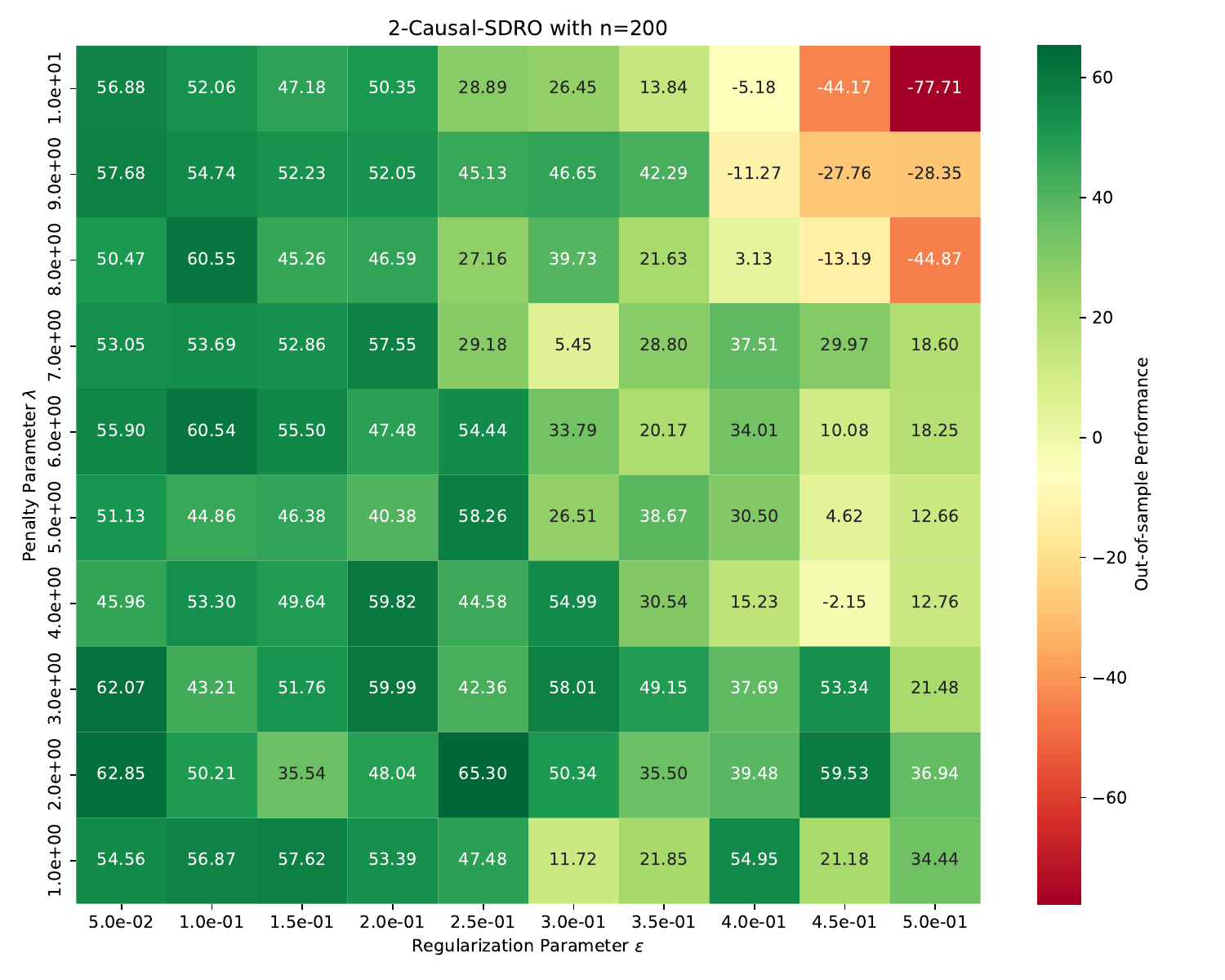}
    \caption{2-Causal-SDRO}
    \label{fig:is-cv-plot-p2}
  \end{subfigure}
  \caption{Out-of-sample Performance of the inventory substitution problem with different parameters ($N = 200, d_x = 10$)}
  \label{fig:is-cv}
\end{figure}

Similarly, for the data-driven portfolio selection problem, the parameter sensitivity heatmaps are shown in Figure~\ref{fig:port-cv}. 
As observed in both settings, Causal-SDRO demonstrates robust performance across a relatively wide range of hyperparameter choices.

\begin{figure}[!htb]
  \centering
  \begin{subfigure}{.48\linewidth}
    \centering
    \includegraphics[width=\linewidth]
      {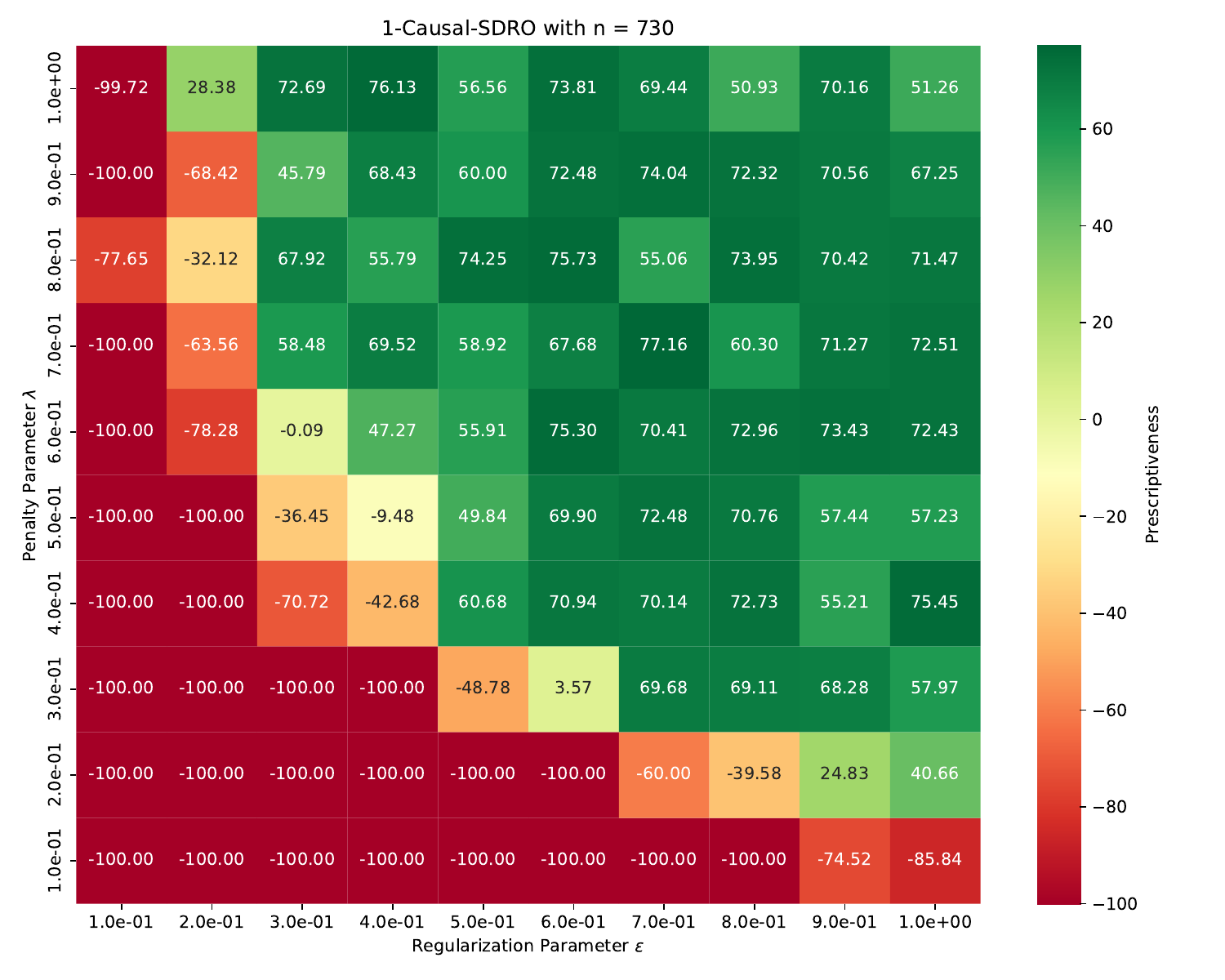}
    \caption{1-Causal-SDRO}
    \label{fig:port-cv-plot-p1}
  \end{subfigure}
  \hfill
    \begin{subfigure}{.48\linewidth}
    \centering
    \includegraphics[width=\linewidth]
      {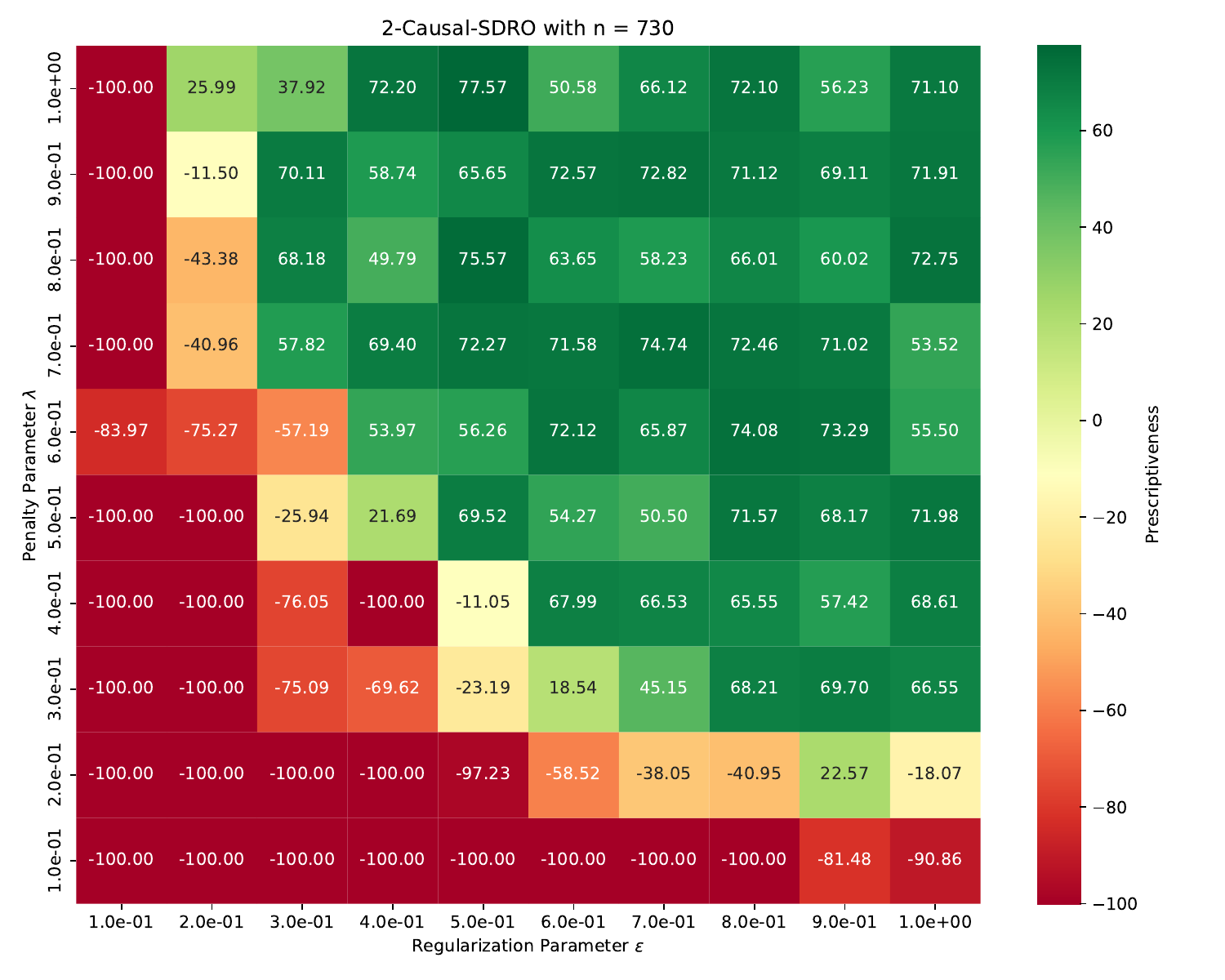}
    \caption{2-Causal-SDRO}
    \label{fig:port-cv-plot-p2}
  \end{subfigure}
  \caption{Out-of-sample Performance of the portfolio problem with different parameters ($\omega = 5$)}
  \label{fig:port-cv}
\end{figure}

\subsection{Comparison with DRO Benchmarks}

For the inventory substitution problem, Figure~\ref{fig:inv-dro-box} demonstrates that the proposed Causal-SDRO models exhibit greater stability and consistently outperform the other DRO benchmarks across almost all instances. 
Furthermore, Figure~\ref{fig:portfolio-dro-box} illustrates that Causal-SDRO achieves the highest average out-of-sample performance on the portfolio problem when compared to the DRO benchmarks across different values of $\omega$. 
The parameters of these DRO models are determined by cross-validation. 

\begin{figure}[!htb]
    \centering
    \includegraphics[width=\linewidth]{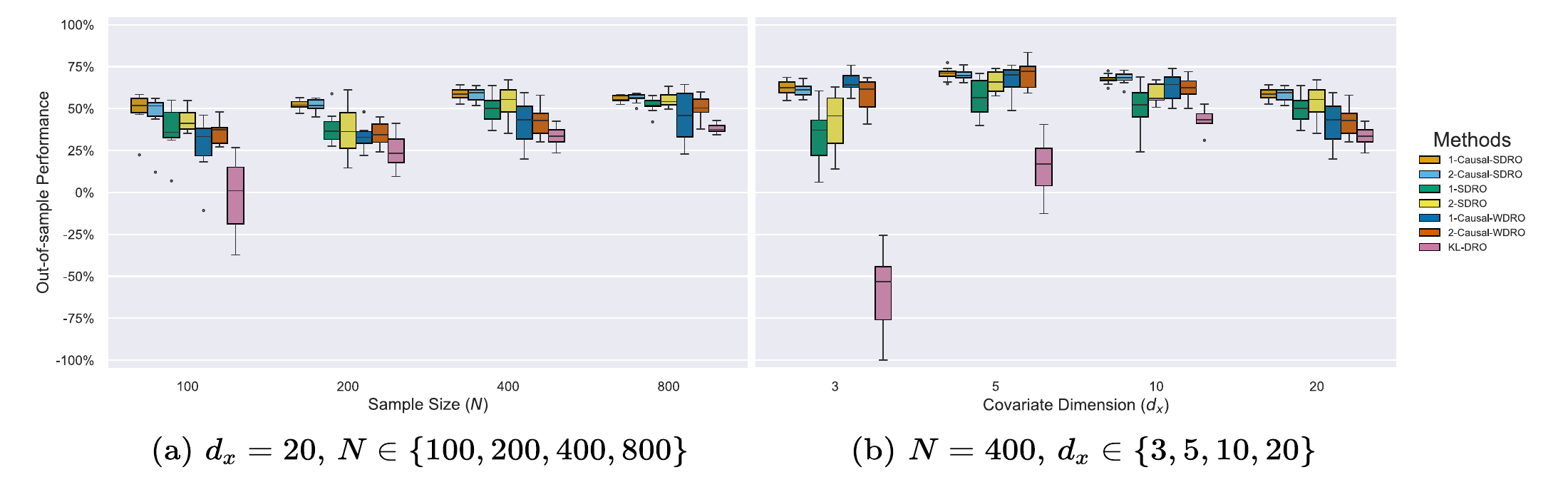}
  \caption{Comparison of different DRO models on the inventory substitute problem }
  \label{fig:inv-dro-box} 
\end{figure}

\begin{figure}[!htb]
  \centering
    \includegraphics[width=0.9\linewidth]{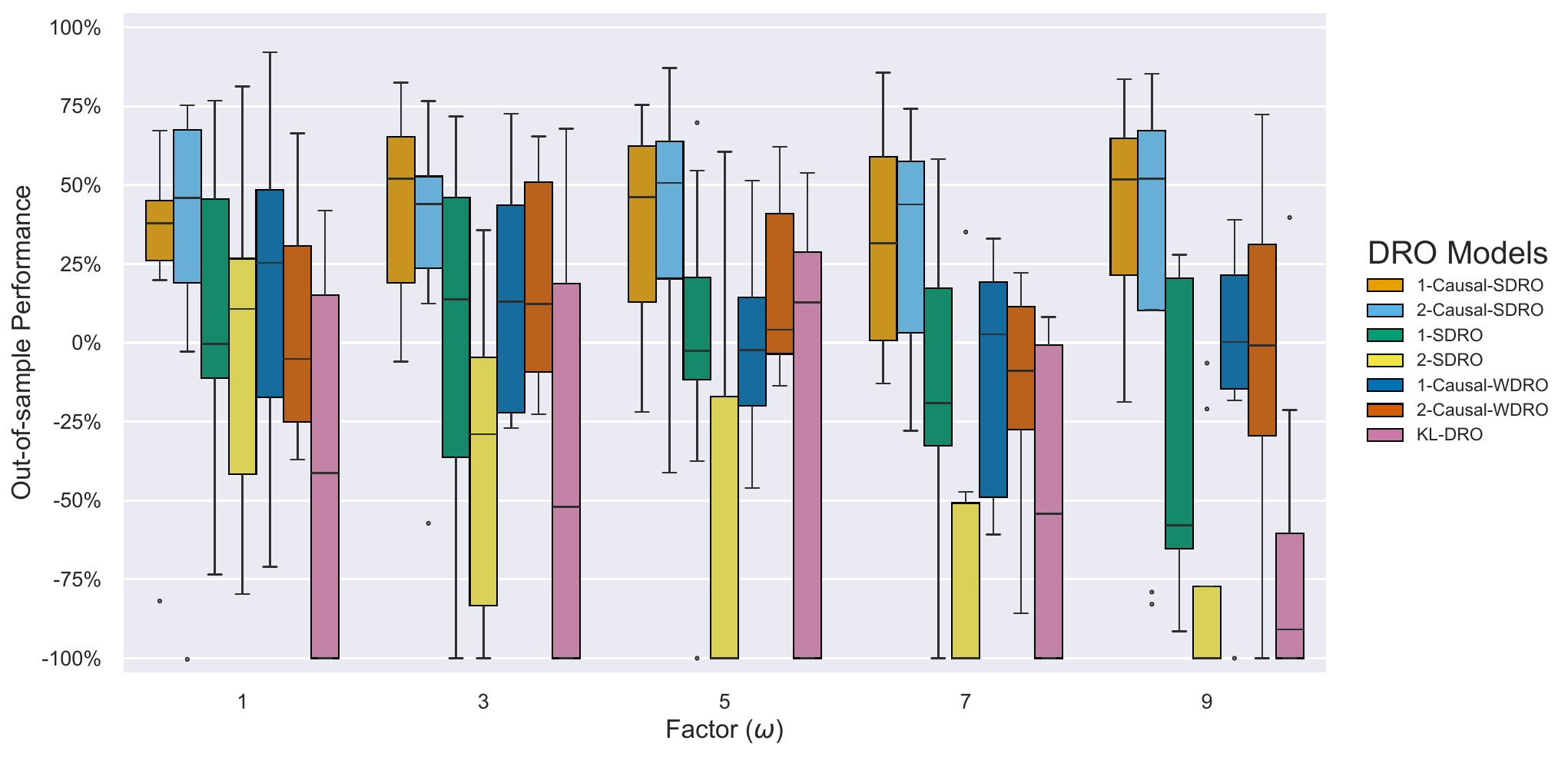}
  \caption{Comparison of different DRO models on the portfolio problem }
  \label{fig:portfolio-dro-box} 
\end{figure}


\end{document}